\newtheorem{theorem}{Theorem}
\newtheorem{lemma}{Lemma}
\newtheorem{proposition}{Proposition}
\newtheorem{corollary}{Corollary}
\newenvironment{proof-of-corollary}[1][{}]{\noindent{\bf
    Proof of Corollary {#1}}
  \hspace*{1em}}{\qed\smallskip\\}
\newenvironment{proof-of-proposition}[1][{}]{\noindent{\bf
    Proof of Proposition {#1}}
  \hspace*{1em}}{\qed\smallskip\\}
\newenvironment{proof-of-theorem}[1][{}]{\noindent{\bf
    Proof of Theorem {#1}}
  \hspace*{1em}}{\qed\smallskip\\}
\newenvironment{proof-of-lemma}[1][{}]{\noindent{\bf Proof of Lemma
    {#1}}
  \hspace*{1em}}{\qed\smallskip\\}
\newcommand{\pushright}[1]{\ifmeasuring@#1\else\omit\hfill$\displaystyle#1$\fi\ignorespaces}
\newcommand{\pushleft}[1]{\ifmeasuring@#1\else\omit$\displaystyle#1$\hfill\fi\ignorespaces}
\def\bx{\mathbf x}
\def\blambda{\boldsymbol{\lambda}}
\def\bzero{\mathbf 0}
\def\bbmone{\mathds{1}}
\def\bq{\mathbf q}
\def\dev{\mbox{dev}}
\def\order{\ensuremath{\mathcal{O}}}
\renewcommand{\H}{\ensuremath{\mathcal{H}}}
\newcommand{\ztil}{\ensuremath{\tilde{Z}}}
\newcommand{\err}{\ensuremath{\mathrm{err}}}
\newcommand{\reg}{\ensuremath{\mathrm{reg}}}
\newcommand{\erri}[1]{\ensuremath{\err_{#1}}}
\newcommand{\erravg}[1]{\ensuremath{\overline{\err}_{#1}}}
\newcommand{\regi}[1]{\ensuremath{\reg_{#1}}}
\newcommand{\regbias}[1]{\ensuremath{\widetilde{\regi{#1}}}}
\newcommand{\regbiasi}[1]{\ensuremath{\reg^{\ddag}_{#1}}}
\def\dis{\ensuremath{\mathrm{DIS}}}
\def\prob{\ensuremath{\mathrm{Pr}}}
\def\query{\ensuremath{\mathrm{query}}}
\def\error{\ensuremath{\mathrm{error}}}
\def\auc{\ensuremath{\mathrm{AUC}}}
\def\aucg{\ensuremath{\mathrm{AUC\mbox{-}GAIN}}}
\def\median{\ensuremath{\mathrm{median}}}
\def\mean{\ensuremath{\mathrm{mean}}}
\newcommand{\alg}{\textsc{AC}\xspace}
\newcommand{\alglong}{\textsc{Active Cover}\xspace}
\newcommand{\cover}{\textsc{oac}\xspace}
\newcommand{\coverlong}{\textsc{Online Active Cover}\xspace}
\newcommand{\iwal}{\textsc{iwal}$_1$\xspace}
\newcommand{\iwalzero}{\textsc{iwal}$_0$\xspace}
\newcommand{\iwalora}{\textsc{ora-iwal}$_1$\xspace}
\newcommand{\iwalorazero}{\textsc{ora-iwal}$_0$\xspace}
\newcommand{\algrand}{\textsc{passive}\xspace}
\newcommand{\E}{\ensuremath{\mathbb{E}}}
\newcommand\optprob{\textsc{op}}
\newcommand{\pmin}[1][\epoch]{\ensuremath{P_{\min,{#1}}}}
\newcommand{\disind}[1]{\ensuremath{\mathcal{I}_{{#1}}^{\epoch}}}
\newcommand{\minp}{\ensuremath{\mu}}
\newcommand{\event}{\ensuremath{\mathcal{E}}}
\newcommand{\term}{\ensuremath{\mathcal{T}}}
\newcommand{\regconst}{\ensuremath{\eta}}
\newcommand{\slackconst}{\ensuremath{\xi}}
\newcommand{\sign}{\ensuremath{\mathrm{sign}}}
\newcommand{\coversize}{\ensuremath{l}}
\def\bc{\mathbf c}
\newcommand{\epoch}{\ensuremath{m}}
\newcommand{\epochi}[1][i]{\ensuremath{\epoch(#1)}}
\newcommand{\epend}[1][\epoch]{\ensuremath{\tau_{#1}}}
\newcommand{\epmax}{\ensuremath{M}}
\newcommand\parens[1]{(#1)}
\newcommand\braces[1]{\{#1\}}
\newcommand\brackets[1]{[#1]}
\newcommand\ind[1]{\ensuremath{\mathds{1}(#1)}}
\newcommand\dotp[1]{\langle #1 \rangle}
\newcommand{\otil}{\ensuremath{\tilde{\mathcal{O}}}}
\newcommand\Parens[1]{\left(#1\right)}
\newcommand\Braces[1]{\left\{#1\right\}}
\newcommand\Brackets[1]{\left[#1\right]}
\newcommand\Abs[1]{\left|#1\right|}
\def\ddefloop#1{\ifx\ddefloop#1\else\ddef{#1}\expandafter\ddefloop\fi}
\def\ddef#1{\expandafter\def\csname bb#1\endcsname{\ensuremath{\mathbb{#1}}}}
\def\ddef#1{\expandafter\def\csname c#1\endcsname{\ensuremath{\mathcal{#1}}}}
\newcommand\samperr{{\ensuremath{\varepsilon}}}
\newcommand{\dualobj}{{\cal D}}
\newcommand{\Psoln}{{P^*}}
\newcommand{\Psolnhatapx}{{\widehat{P}_\samperr^*}}
\newcommand{\lone}[1]{\ensuremath{\|{#1}\|_1}}
\newcommand\dataspace{\ensuremath{\mathcal{X}}}
\newcommand\dist{\ensuremath{\mathbb{P}}}
\newcommand\distx{\ensuremath{\dist_{\dataspace}}}
\newcommand{\Lset}{\ensuremath{\Lambda_{\varepsilon}}}
\newcommand{\Lsetbig}{\ensuremath{\Lambda_{\samperr/2}}}
\newcommand{\tsybamult}{\ensuremath{\zeta}}
\newcommand{\tsybaexp}{\ensuremath{\omega}}
\newcommand{\tsybatol}{\ensuremath{\varepsilon_0}}
\newcommand{\unlab}{\ensuremath{u}}
\DeclareMathOperator*{\argmin}{arg\,min}
\newcolumntype{C}[1]{>{\centering\let\newline\\\arraybackslash\hspace{0pt}}m{#1}}
\begin{document}

\begin{center}
{\LARGE{{\bf{Efficient and Parsimonious Agnostic Active Learning}}}}

\vspace*{.2in}

\begin{tabular}{ccc}
  Tzu-Kuo Huang$^\dagger$
  &
  Alekh Agarwal$^\dagger$
  &
  Daniel J. Hsu$^\ddagger$\\
  \texttt{tkhuang@microsoft.com} &
  \texttt{alekha@microsoft.com} &
  \texttt{djhsu@cs.columbia.edu}
  \vspace*{.1in}\\
  \multicolumn{3}{c}{
  John Langford$^\dagger$
  \qquad \qquad \quad
  Robert E. Schapire$^\dagger$}\\
    \multicolumn{3}{c}{
  \texttt{jcl@microsoft.com} \qquad
  \texttt{schapire@microsoft.com}}
\end{tabular}

\vspace*{.2in}

\begin{tabular}{cc}
  Microsoft Research$^\dagger$ & Department of Computer Science$^\ddagger$\\
  New York, NY & Columbia University, New York, NY  
\end{tabular}
\end{center}

\begin{abstract}
We develop a new active learning algorithm for the streaming setting
satisfying three important properties: 1) It provably works for any
classifier representation and classification problem including those
with severe noise. 2) It is efficiently implementable with an ERM
oracle.  3) It is more aggressive than all previous approaches
satisfying 1 and 2. To do this we create an algorithm based on a newly
defined optimization problem and analyze it. We also conduct the first
experimental analysis of all efficient agnostic active learning
algorithms, evaluating their strengths and weaknesses in different
settings.
\end{abstract}

\section{Introduction}
\label{sec:intro}
How can you best learn a classifier given a label budget?  

Active learning approaches are known to yield exponential improvements
over supervised learning under strong assumptions~\citep{CohnAL94}.
Under much weaker assumptions, streaming-based agnostic active
learning~\citep{balcan2006agnostic,BeygelDL09,BeygelHLZ10,DasguptaHM07,zhang2014beyond}
is particularly appealing since it is known to work for \emph{any}
classifier representation and \emph{any} label noise distribution with
an i.i.d.~data source.\footnote{See the monograph
  of~\citet{Hanneke2014} for an overview of the existing literature,
  including alternative settings where additional assumptions are
  placed on the data source (e.g., \emph{separability}) as is common
  in other
  works~\citep{dasgupta2005coarse,balcan2007margin,balcan2013active}.%
} Here, a learning algorithm decides for each unlabeled example in
sequence whether or not to request a label, never revisiting this
decision.  Restated then: What is the best possible active learning
algorithm which works for \emph{any} classifier representation,
\emph{any} label noise distribution, and is computationally tractable?

Computational tractability is a critical concern, because most known
algorithms for this
setting~\citep[e.g.,][]{balcan2006agnostic,koltchinskii2010rademacher,zhang2014beyond}
require explicit enumeration of classifiers, implying
exponentially-worse computational complexity compared to typical
supervised learning algorithms.  Active learning algorithms based on
empirical risk minimization (ERM)
oracles~\citep{BeygelDL09,BeygelHLZ10,hsu2010algorithms} can overcome
this intractability by using passive classification algorithms as the
oracle to achieve a computationally acceptable solution.

Achieving generality, robustness, and acceptable computation has a
cost. For the above
methods~\citep{BeygelDL09,BeygelHLZ10,hsu2010algorithms}, a label is
requested on nearly \emph{every} unlabeled example where two
empirically good classifiers disagree.  This results in a poor label
complexity, well short of information-theoretic
limits~\citep{Castro2008} even for general robust
solutions~\citep{zhang2014beyond}.  Until now.

In Section~\ref{sec:algorithm}, we design a new algorithm \alglong
(\alg) for constructing query probability functions that minimize the
probability of querying inside the \emph{disagreement region}---the
set of points where good classifiers disagree---and never query
otherwise.  This requires a new algorithm that maintains a
parsimonious cover of the set of empirically good classifiers.  The
cover is a result of solving an optimization problem (in
Section~\ref{sec:optimization}) specifying the properties of a
desirable query probability function.  The cover size provides a
practical knob between computation and label complexity, as
demonstrated by the complexity analysis we present in
Section~\ref{sec:optimization}.

In Section~\ref{sec:results}, we provide our main results which
demonstrate that \alg effectively maintains a set of good classifiers,
achieves good generalization error, and has a label complexity bound
tighter than previous approaches. The label complexity bound depends
on the disagreement coefficient~\citep{Hanneke09}, which does not
completely capture the advantage of the algorithm. In
Section~\ref{sec:example}, we provide an example of a hard active
learning problem where \alg is substantially superior to previous
tractable approaches. Together, these results show that \alg is better
and sometimes substantially better in theory. The key aspects in the
proof of our generalization results are presented in
Section~\ref{proofs-regret}, with more technical details and label
complexity analysis presented in the appendix.

\emph{Do agnostic active learning algorithms work in practice?}  No
previous works have addressed this question empirically. Doing so is
important because analysis cannot reveal the degree to which existing
classification algorithms effectively provide an ERM oracle.  We
conduct an extensive study in Section~\ref{sec:experiments} by
simulating the interaction of the active learning algorithm with a
streaming supervised dataset. Results on a wide array of datasets 
show that agnostic active learning typically outperforms passive learning, 
and the magnitude of improvement depends on how carefully the active learning hyper-parameters 
are chosen. 
%The results show that an online variant
%of \alg (called \cover) is typically superior across a wide array of
%datasets. 
%A summary of our results is presented in
%Figure~\ref{fig:win} which shows the fraction of datasets where
%an algorithm has a better test error than a random sub-sampling at
%different query rates across 23 datasets, with details in
%Section~\ref{sec:experiments} and
%Appendix~\ref{appendix:exp_details}. 

%\begin{figure}[t]
%\centering
%\includegraphics[width=0.5\textwidth,height=0.375\textwidth]{./}
%%\addtolength{}{•}
%\caption{%
%Fraction of datasets where different methods beat random sub-sampling.
%Unlike previous approaches, and online variant of \alg
%(\cover) competes well over all regimes of label complexity.  
%}
%\label{fig:win}
%\end{figure}
\section{Preliminaries}

%\subsection{Setup and Algorithm}
Let $\H \subseteq \{\pm1\}^{\dataspace}$ be a set of binary
classifiers, which we assume is finite for simplicity.\footnote{%
  The assumption that $\H$ is finite can be relaxed to VC-classes
  using standard arguments.%
}
Let $\E_X[\cdot]$ denote expectation with respect to $X \sim \distx$,
the marginal of $\dist$ over $\dataspace$.
The \emph{expected error} of a classifier $h \in \H$ is \mbox{$\err(h) :=
  \Pr_{(X,Y) \sim \dist}(h(X) \neq Y)$}, and the error minimizer is
denoted by $h^* := \argmin_{h \in \H} \err(h)$.
The \emph{(importance weighted) empirical error} of $h \in \H$ on a
multiset $S$ of importance weighted and labeled examples drawn from
$\dataspace \times \{\pm1\} \times \bbR_+$ is \mbox{$\err(h,S) := \sum_{(x,y,w) \in S} w \cdot
\ind{h(x) \neq y} / |S|$}.
The \emph{disagreement region} for a subset of classifiers $A \subseteq
\H$ is
%\begin{equation*}
  $\dis(A)
  :=
  \braces{
    x \in \dataspace \mid
    \exists h,h' \in A \ \text{such that} \ h(x) \neq h'(x)
  }
  $.
%\end{equation*}
The \emph{regret} of a classifier $h \in \H$ relative to another $h'
\in \H$ is $\reg(h,h') := \err(h) - \err(h')$, and the analogous
empirical regret on $S$ is $\reg(h,h',S) := \err(h,S) - \err(h',S)$.
When the second classifier $h'$ in (empirical) regret is omitted, it
is taken to be the (empirical) error minimizer in $\H$.

A streaming-based active learner receives i.i.d.~labeled examples
$(X_1,Y_1), (X_2,Y_2), \dotsc$ from $\dist$ one at a time; each label
$Y_i$ is hidden unless the learner decides on the spot to query it.
The goal is to produce a classifier $h \in \H$ with low error
$\err(h)$, while querying as few labels as possible.

In the IWAL framework~\citep{BeygelDL09}, a decision whether or not to
query a label is made \emph{randomly}: the learner picks a probability
$p \in [0,1]$, and queries the label with that probability. Whenever
$p>0$, an unbiased error estimate can be produced using inverse
probability weighting~\citep{horvitz1952generalization}.
Specifically, for any classifier $h$, an unbiased estimator $E$ of
$\err(h)$ based on $(X,Y) \sim \dist$ and $p$ is as follows: if $Y$ is
queried, then $E = \ind{h(X) \neq Y} / p$; else, $E = 0$.  It is easy
to check that $\E(E) = \err(h)$.
%\[
%  \widehat{E} =
%%  \left\{
%%    \begin{array}{cl}
%%      0 & \text{if $Y$ is not queried} \\
%%      \frac1p \cdot
%%      \ind{h(X) \neq Y}
%%      & \text{if $Y$ is queried}
%%    \end{array}
%%  \right.
%  \begin{cases}
%    0 & \text{if $Y$ is not queried} ; \\
%    \frac1p \cdot \ind{h(X) \neq Y} & \text{if $Y$ is queried} ;
%  \end{cases}
%  \qquad
%  \text{so that}
%  \quad
%  \E(\widehat{E}) = \err(h) .
%\]
Thus, when the label is queried, we produce the importance weighted
labeled example $(X,Y,1/p)$.\footnote{%
  If the label is not queried, we produce an ignored example of weight
  zero; its only purpose is to maintain the correct count of querying
  opportunities.
  This ensures that $1/|S|$ is the correct normalization in
  $\err(h,S)$.%
  \label{footnote:noquery}%
}

\section{Algorithm}
\label{sec:algorithm}

\begin{algorithm}[t!]
  \caption{\alglong (\alg)}
  \label{alg:main}	
  \begin{algorithmic}[1]
    \renewcommand{\algorithmicrequire}{\textbf{input:}}
    \REQUIRE Constants $c_1, c_2, c_3$, confidence $\delta$, error
    radius $\gamma$, parameters $\alpha, \beta, \slackconst$ for
    $(\optprob)$, epoch schedule $0 = \epend[0] < 3 = \epend[1] <
    \epend[2] < \epend[3] < \ldots < \epend[\epmax]$ satisfying
    $\epend[\epoch+1] \leq 2\epend[\epoch]$ for $\epoch\geq1$.
    \renewcommand{\algorithmicrequire}{\textbf{initialize:}}
    \REQUIRE epoch $m = 0$, $\tilde{Z}_0 := \emptyset$, $\Delta_0 := c_1
    \sqrt{\epsilon_1} + c_2 \epsilon_1 \log 3$, where 
    \begin{equation*}
      \epsilon_\epoch \;:=\; \frac{32(\log(|\mathcal{H}|/\delta) + \log \epend)}{\epend}.
    \end{equation*}
    \FOR{$i = 4,\ldots,n,$} \IF{$i = \epend+1$} \STATE Set
    $\ztil_\epoch = \ztil_{\epoch-1} \cup S$, and $S = \emptyset$.
    \STATE Let
    \begin{eqnarray}
      h_{\epoch+1} &:=& \arg\min_{h \in \mathcal{H}} \; \mbox{err}(h,
      \tilde{Z}_{\epoch}),\label{alg:main:erm_update}\\ 
      \Delta_{\epoch} &:=& c_1 \sqrt{\epsilon_{\epoch}
        \err(h_{\epoch+1},\tilde{Z}_{\epoch})} + c_2 \epsilon_{\epoch} \log
      \epend, \label{eq:Delta}\\ 
      A_{\epoch+1} &:=& \{h \mid \mbox{err}(h,\tilde{Z}_{\epoch})  -
      \mbox{err}(h_{\epoch+1},\tilde{Z}_{\epoch}) \leq \gamma
      \Delta_{\epoch}\}. \label{eq:A} 
    \end{eqnarray}
    \STATE Compute the solution $P_{\epoch+1}(\cdot)$ to the
    optimization  \label{alg:main:P_update}
    problem \eqref{eq:queryp}.
    \STATE $\epoch := \epoch + 1$.
    \ENDIF
    \STATE Receive unlabeled data point $X_i$.
    \IF{$X_i \in D_{\epoch} := \mbox{DIS}(A_{\epoch}$),} \label{alg:main:query}
      \STATE Draw $Q_i \sim \mbox{Bernoulli}(P_\epoch(X_i))$. 
      \STATE Update the set of examples:\footnotemark
      \begin{eqnarray*}
        S :=& 
        \begin{cases}
          S \cup \{(X_i,Y_i, 1/P_\epoch(X_i))\}, & Q_i = 1\\
          S \cup \{X_i, 1, 0\},  & \mbox{otherwise}.
        \end{cases}
      \end{eqnarray*}
      \ELSE
      \STATE
      \centerline{$S \;\;:=\;\; S \cup \{(X_i, h_\epoch(X_i),1)\}.$}
      \ENDIF \label{alg:main:query_end}
      \ENDFOR
      \STATE $h_{\epmax+1} := \arg \min_{h \in \mathcal{H}} \; \mbox{err}(h, \tilde{Z}_{\epmax})$.
  \end{algorithmic}
\end{algorithm}
\footnotetext{%
  See Footnote~\ref{footnote:noquery}. Adding an example of importance
  weight zero simply increments $|S|$ without updating other state of
  the algorithm, hence the label used does not matter.%  
}

Our new algorithm, shown in Algorithm~\ref{alg:main}, breaks the
example stream into epochs. 
The algorithm admits any epoch schedule so
long as the epoch lengths satisfy $\epend[\epoch-1] \leq 2\epend$. For
technical reasons, we always query the first 3 labels to kick-start
the algorithm. 
At the start of epoch~$\epoch$, \alg\ computes a
\emph{query probability function} $P_\epoch \colon \cX \to [0,1]$
which will be used for sampling the data points to query during the
epoch. This is done by maintaining a few objects of interest during
each epoch:

\begin{enumerate}
\item In step~\ref{alg:main:erm_update}, we compute the best
  classifier on the sample $\ztil_\epoch$ that we have collected so
  far. Note that the sample consists of the queried, true labels on some
  examples, while predicted labels for the others.
\item A radius $\Delta_\epoch$
is computed in step~\ref{eq:Delta} 
based on the desired level of concentration 
we want the various empirical quantities to satisfy.
  %which controls the accuracy of this
  %best hypothesis on epoch $\epoch$ 
\item The set $A_{\epoch+1}$ in step~\ref{eq:A} consists of all the
  hypotheses which are good according to our sample $\ztil_\epoch$,
  with the notion of good being measured as empirical regret being at
  most $\Delta_\epoch$.
\end{enumerate}
Within the epoch, $P_\epoch$ determines the probability of querying an
example in the disagreement region for this set $A_{\epoch}$ of ``good''
classifiers; examples outside this region are not queried but given
labels predicted by $h_\epoch$. Consequently, the sample is not
unbiased unlike some of the predecessors of our work. The various
constants in Algorithm~\ref{alg:main} must satisfy:

%\begin{small}
\begin{align}
  \alpha &\geq 1, ~~\regconst \geq 864,~~ \slackconst \leq \frac{1}{8
    n\epsilon_{\epmax}\log n}, ~~ \beta^2 \leq \frac{\eta}{864 \gamma
    n \epsilon_{\epmax} \log n}, ~~ \gamma \geq
  \regconst/4,\nonumber\\
  &\qquad c_1 \geq 2 \alpha \sqrt{6}, ~~ c_2
  \geq \regconst c_1^2 / 4, ~~ c_3 \geq 1.
  \label{eqn:constants}
\end{align}
%\end{small}
%

\paragraph{Epoch Schedules:} The algorithm as stated takes an arbitrary
epoch schedule subject to \mbox{$\epend[\epoch] < \epend[\epoch+1]
  \leq 2\epend[\epoch]$}. Two natural extremes are unit-length epochs,
$\epend[\epoch] = \epoch$, and doubling epochs, $\epend[\epoch+1] =
2\epend[\epoch]$. The main difference comes in the number of times
$(\optprob)$ is solved, which is a substantial computational
consideration. Unless otherwise stated, we assume the doubling epoch
schedule so that the query probability and ERM classifier are
recomputed only $\order(\log n)$ times.

\paragraph{Optimization problem (\optprob) to obtain $P_\epoch$:}
\alg\ computes $P_\epoch$ as the solution to the optimization problem 
(\optprob). In essence, the problem encodes the properties of a query
probability function that are essential to ensure good generalization,
while maintaining a low label complexity. As we will discuss later,
some of the previous works can be seen as specific ways of
constructing feasible solutions to this optimization problem. The
objective function of $(\optprob)$ encourages small query
probabilities in order to minimize the label complexity. It might
appear odd that we do not use the more obvious choice for objective
which would be $\E_X[P(X)]$, however our choice simultaneously
encourages low query probabilities and also provides a barrier for the
constraint $P(X) \leq 1$--an important algorithmic aspect as we will
discuss in Section~\ref{sec:optimization}.

The constraints~\eqref{eq:queryp} in (\optprob) bound the variance in
our importance-weighted regret estimates for every $h \in \H$. This is
key to ensuring good generalization as we will later use Bernstein-style
bounds which rely on our random variables having a small variance. Let
us examine these constraints in more detail. The LHS of the
constraints measures the variance in our empirical regret estimates
for $h$, measured only on the examples in the disagreement region
$D_\epoch$. This is because the importance weights in the form of
$1/P_\epoch(X)$ are only applied to these examples; outside this
region we use the predicted labels with an importance weight of 1. The
RHS of the constraint consists of three terms. The first term ensures
the feasibility of the problem, as $P(X) \equiv 1/(2\alpha^2)$ for $X \in
D_\epoch$ will always satisfy the constraints. The second empirical
regret term makes the constraints easy to satisfy for bad
hypotheses--this is crucial to rule out large label complexities in
case there are bad hypotheses that disagree very often with
$h_\epoch$. A benefit of this is easily seen when $-h_\epoch \in \H$,
which might have a terrible regret, but would force a near-constant query
probability on the disagreement region if $\beta = 0$. Finally, the
third term will be on the same order as the second one for hypotheses
in $A_\epoch$, and is only included to capture the allowed level of
slack in our constraints which will be exploited for the efficient
implementation in Section~\ref{sec:optimization}.

Of course, variance alone is not adequate to ensure concentration, and
we also require the random variables of interest to be appropriately
bounded. This is ensured through the
constraints~\eqref{eq:minbndcons}, which impose a minimum query
probability on the disagreement region. Outside the disagreement
region, we use the predicted label with an importance weight of 1, so that our estimates
will always be bounded (albeit biased) in this region. Note that this
optimization problem is written with respect to the marginal
distribution of the data points $\dist_X$, meaning that we might have
infinite number of the latter constraints. In Section~\ref{sec:optimization},
we describe how to solve this optimization problem efficiently, and
using access to only unlabeled examples drawn from $\dist_X$.

Finally we verify that the choices for $P_\epoch$ according to some of
the previous methods are indeed feasible in (\optprob). This is most
easily seen for Oracular CAL~\citep{hsu2010algorithms} which queries
with probability 1 if $X \in D_\epoch$ and 0 otherwise. Since $\alpha
\geq 1$~\eqref{eqn:constants} in the variance
constraints~\eqref{eq:queryp}, the choice $P(X) \equiv 1$ for $X \in
D_\epoch$ is feasible for (\optprob), and consequently Oracular CAL
always queries more often than the optimal distribution $P_\epoch$ at
each epoch. A similar argument can also be made for the IWAL
method~\citep{BeygelHLZ10}, which also queries in the disagreement
region with probability 1, and hence suffers from the same
sub-optimality compared to our choice.

\begin{figure}[t]
\begin{center} \fbox{\begin{minipage}{0.95\columnwidth} 
\centerline{%
  {\bf Optimization Problem} (\optprob) to compute $P_\epoch$%
}

%\noindent \textbf{Definition of $\mathbf{\mathcal{C}_i}$}: 
	%&\mathcal{C}_i \nonumber \\
\begin{eqnarray}	
  \min_{P} &&  \E_X \left[ \frac{1}
                                         {1 - P(X)}
                           \right]
\nonumber \\ 
  \mbox{s.t.} && \forall h \in \H\;\; \E_X \left[
    \frac{\bbmone(h(x) \neq h_\epoch(x) \wedge x
  \in D_\epoch)}{P(X)}\right] \leq b_\epoch(h),
\label{eq:queryp} \\ 
  && \forall x \in \dataspace \;\; 0 \leq P(x) \leq 1, \quad \mbox{and} \quad \forall x \in D_\epoch \;\; P(x) \geq P_{\min,\epoch}
\label{eq:minbndcons}
\end{eqnarray}

\begin{align}
  \mbox{where}\quad \disind{h}( X) &= \bbmone(h(x) \ne h_{\epoch}(x)
  \wedge x \in D_\epoch),\nonumber\\
  b_\epoch(h) &= 2 \alpha^2  \E_X[\disind{h}( X)] + 2
  \beta^2 \gamma \reg(h,h_\epoch,\tilde{Z}_{\epoch-1})\epend[\epoch-1] 
  \Delta_{\epoch-1} + \slackconst \epend[\epoch-1]\Delta_{\epoch-1}^2
  ,~~\mbox{and}\nonumber \\       
  P_{\min,\epoch} &=
  \min\left(\frac{c_3}{\sqrt{\frac{\epend[\epoch-1]\err(h_{\epoch},\tilde{Z}_{\epoch-1})}{n\epsilon_\epmax}}
    + \log\epend[\epoch-1]},\frac{1}{2}\right). 
\label{eq:pmin}
\end{align}
\end{minipage}}
\end{center}
\end{figure}

\section{Generalization and Label Complexity}
\label{sec:results}
We now present guarantees on the generalization error and label
complexity of Algorithm~\ref{alg:main} assuming a solver for
$(\optprob)$, which we provide in the next section.

\subsection{Generalization guarantees}
\label{sec:regret-result}

Our first theorem provides a bound on generalization error.  Define
\begin{align*}
  \erravg{\epoch}(h) &:= \frac{1}{\epend}
  \sum_{j=1}^\epoch(\epend[j] - \epend[j-1])
  \E_{(X,Y)\sim\dist}[\bbmone(h(X) \neq Y \wedge X \in
    D_j)], \\ %\quad \mbox{and}\\ 
  \Delta^*_0 &:= \Delta_0 ~~\mbox{and}~~ \Delta_{\epoch}^* :=
  c_1 \sqrt{\epsilon_\epoch \erravg{\epoch}(h^*) } + c_2
  \epsilon_\epoch \log \epend ~~\mbox{for}~~ \epoch \geq 1.
\end{align*}
Essentially $\Delta_{\epoch}^*$ is a population counterpart of the
quantity $\Delta_{\epoch}$ used in Algorithm~\ref{alg:main}, and
crucially relies on $\erravg{\epoch}(h^*)$, the true error of $h^*$
restricted to the disagreement region instead of the empirical error
of the ERM at epoch $\epoch$. This quantity captures the inherent
noisiness of the problem, and modulates the transition between
$\order(1/\sqrt{n})$ to $\order(1/n)$ type error bounds as we see
next.
\begin{theorem}
  Pick any $0 < \delta < 1/e$ such that $|\H|/\delta >
  \sqrt{192}$. Then recalling that $h^* = \argmin_{h \in \H} \err(h)$,
  we have for all epochs $\epoch = 1,2,\ldots, \epmax$, with
  probability at least $1 - \delta$
  \begin{align}
    \reg(h,h^*) & \ \leq\ 16 \gamma \Delta_\epoch^* \quad \mbox{for all}~ h
    \in A_{\epoch+1}, \quad \mbox{and}\label{eqn:regret}\\
    \reg(h^*, h_{\epoch+1}, \ztil_\epoch) & \ \leq\
    \regconst \Delta_\epoch / 4 . \label{eq:h_best}
  \end{align}
  \label{thm:regret}
\end{theorem}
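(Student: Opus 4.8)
The plan is a joint induction over the epochs $\epoch=1,\dots,\epmax$, establishing \eqref{eqn:regret} and \eqref{eq:h_best} together and carrying along the auxiliary facts that $h^*\in A_j$ and that the version spaces are monotone ($A_{j+1}\subseteq A_j$) for all earlier $j$. Everything is conditioned on a single high-probability event $\event$, which I would first isolate as a deviation lemma. Write $\reg(h,h^*,\ztil_\epoch)$ as $\frac{1}{\epend}$ times a sum over all examples collected through epoch $\epoch$; conditioning on the history, the example $X_i$ of epoch $j$ contributes in expectation $\E_X[(\ind{h(X)\neq Y}-\ind{h^*(X)\neq Y})\ind{X\in D_j}]+\E_X[(\ind{h(X)\neq h_j(X)}-\ind{h^*(X)\neq h_j(X)})\ind{X\notin D_j}]$, the first piece from inverse-probability weighting inside $D_j$ and the second the bias from using $h_j$'s predicted labels outside $D_j$. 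The discrepancy between the empirical sum and this conditional mean is a martingale whose per-step conditional variance, using $(\ind{h\neq Y}-\ind{h^*\neq Y})^2=\ind{h(X)\neq h^*(X)}\le\ind{h\neq h_j}+\ind{h^*\neq h_j}$ together with the variance constraint \eqref{eq:queryp} of (\optprob) (which holds for \emph{every} $h\in\H$ because $P_\epoch$ solves (\optprob)), is at most $b_j(h)+b_j(h^*)$, and whose steps are bounded in magnitude by $1/\pmin[j]$ via \eqref{eq:minbndcons}. Freedman's inequality plus a union bound over $h\in\H$ and the $\order(\log n)$ epochs --- the latter absorbed since $\epsilon_\epoch$ already carries $\log\epend$ in its numerator --- gives $\event$: on it, for all $h$ and $\epoch$, $|\reg(h,h^*,\ztil_\epoch)-(\text{conditional mean})|\lesssim\sqrt{\epsilon_\epoch(b_\epoch(h)+b_\epoch(h^*))}+\epsilon_\epoch\log\epend+\epsilon_\epoch/\pmin$.

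The base case is epoch $1$: the first $\epend[1]=3$ labels are queried deterministically, so $\ztil_1$ is an honest i.i.d.\ sample, an ordinary Bernstein bound applies, and the coarse definition $\Delta_0=c_1\sqrt{\epsilon_1}+c_2\epsilon_1\log 3$ together with \eqref{eqn:constants} yields \eqref{eqn:regret}, \eqref{eq:h_best} and $h^*\in A_1$. For the inductive step, assume all claims through epoch $\epoch-1$. First, \eqref{eq:h_best} at epoch $\epoch-1$ reads $\reg(h^*,h_\epoch,\ztil_{\epoch-1})\le\regconst\Delta_{\epoch-1}/4\le\gamma\Delta_{\epoch-1}$ (using $\gamma\ge\regconst/4$), so $h^*\in A_\epoch$. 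The key structural observation is that because $h^*\in A_j$ we have $\ind{h^*(X)\neq h_j(X)}=0$ for $X\notin D_j=\dis(A_j)$, so the bias piece above is $\E_X[\ind{h(X)\neq h_j(X)}\ind{X\notin D_j}]\ge0$; hence the conditional mean equals $\reg(h,h^*)$ plus a \emph{nonnegative} bias, and on $\event$ we get the clean one-sided estimate $\reg(h,h^*)\le\reg(h,h^*,\ztil_\epoch)+\bigl(\text{Bernstein term}\bigr)$ for \emph{every} $h\in\H$. When moreover $h$ lies in every $A_j$, $j\le\epoch$ (true for $h^*$, and for $h_{\epoch+1}$ via the monotonicity being carried), the bias vanishes and the two regrets become two-sided close.

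It remains to turn this into the two displayed bounds, which is where the constant budget of \eqref{eqn:constants} is spent. For \eqref{eq:h_best}: apply the estimate to $h=h^*$ and $h=h_{\epoch+1}$, use ERM optimality $\err(h_{\epoch+1},\ztil_\epoch)\le\err(h^*,\ztil_\epoch)$, and bound the variance proxies $b_\epoch(h^*),b_\epoch(h_{\epoch+1})$ by $\order(\alpha^2\erravg{\epoch}(h^*))$ plus the provably negligible $\beta^2$- and $\slackconst$-terms (using $\beta^2\le\regconst/(864\gamma n\epsilon_\epmax\log n)$, $\slackconst\le1/(8n\epsilon_\epmax\log n)$, the monotonicity to control $\reg(h_{\epoch+1},h_\epoch,\ztil_{\epoch-1})$, and the definition of $\pmin$ in \eqref{eq:pmin} to make the range term $\epsilon_\epoch/\pmin$ no larger than the variance term); since the resulting bound scales like $\sqrt{\epsilon_\epoch\cdot\err}$, converting the empirical $\err(h_{\epoch+1},\ztil_\epoch)$ appearing in $\Delta_\epoch$ to the population $\erravg{\epoch}(h^*)$ appearing in $\Delta_\epoch^*$ is a self-bounding (quadratic-in-$\sqrt{\cdot}$) inequality closed by $c_1\ge2\alpha\sqrt6$, $c_2\ge\regconst c_1^2/4$ and $\regconst\ge864$, giving $\reg(h^*,h_{\epoch+1},\ztil_\epoch)\le\regconst\Delta_\epoch/4$ and hence $h^*\in A_{\epoch+1}$. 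For \eqref{eqn:regret}: any $h\in A_{\epoch+1}$ has $\reg(h,h_{\epoch+1},\ztil_\epoch)\le\gamma\Delta_\epoch$, which with $h^*\in A_{\epoch+1}$ forces $|\reg(h,h^*,\ztil_\epoch)|\le\gamma\Delta_\epoch$; the one-sided estimate then controls $\reg(h,h^*)$, a bootstrap on $b_\epoch(h)$ (whose leading term $\alpha^2\,\E_X[\disind{h}(X)]$ is $\order(\erravg{\epoch}(h^*)+\Delta_\epoch^*)$ for such $h$) shows the Bernstein term is again $\Delta_\epoch^*$-scale, and replacing $\Delta_\epoch$ by $\order(\Delta_\epoch^*)$ via the same empirical-to-population step yields $\reg(h,h^*)\le16\gamma\Delta_\epoch^*$; the analogous comparison applied to $\reg(h_{\epoch+1},h_\epoch,\cdot)$ re-establishes $A_{\epoch+1}\subseteq A_\epoch$, completing the induction.

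The main obstacle is the deviation lemma of the first paragraph together with the constant threading: pinning the martingale's per-step conditional variance to exactly the right-hand sides $b_\epoch(h)$ of the (\optprob) constraints --- so that a biased, importance-weighted, cross-epoch sum still concentrates at the $\Delta_\epoch^*$ scale and, in particular, $\pmin$ keeps the range term from dominating --- and then verifying that the inequalities in \eqref{eqn:constants} are precisely strong enough to close every empirical$\leftrightarrow$population conversion and every benign self-referential loop without the constants degrading across the $\order(\log n)$ epochs. A secondary subtlety is that the two displayed bounds, $h^*\in A_\epoch$, and version-space monotonicity must be proved in the correct interleaved order within each epoch, since each partially relies on the others from the current and previous epochs.
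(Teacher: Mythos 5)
Your overall architecture coincides with the paper's: a Freedman-type martingale deviation bound whose per-step conditional variance is pinned to the right-hand sides $b_j(h)$ of the constraints \eqref{eq:queryp} and whose range is controlled by $\pmin$, the observation that the predicted labels introduce a bias that is nonnegative and hence favorable to $h^*$ (the paper's Lemma~\ref{lemma:favorable_bias}), an epoch-by-epoch induction maintaining $h^*\in A_\epoch$, and a final self-bounding step converting $\Delta_\epoch$ to $\Delta_\epoch^*$ via the error-concentration bound. All of this matches Sections~\ref{proofs-regret}--\ref{sec:proofs-regret-main}.

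The genuine flaw is your reliance on version-space monotonicity $A_{j+1}\subseteq A_j$, which you carry as an inductive invariant and invoke twice: to claim the bias vanishes for $h_{\epoch+1}$ (so that empirical and true regret become ``two-sided close''), and to control the term $\reg(h_{\epoch+1},h_j,\ztil_{j-1})$ appearing in the variance proxy $b_j(h_{\epoch+1})$. This containment is not provable: $A_{j+1}$ and $A_j$ are defined by empirical-regret thresholds on different samples ($\ztil_j$ versus $\ztil_{j-1}$) with different ERM reference points, and a classifier can have small empirical regret on the larger sample while exceeding the threshold on the smaller one. The paper never uses or needs it. In its place: (i) for \eqref{eq:h_best}, the two-sided Freedman concentration of $\reg(h_{\epoch+1},h^*,\ztil_\epoch)$ around its biased mean $\regbias{\epoch}(h_{\epoch+1},h^*)$, combined with $\regbias{\epoch}(h_{\epoch+1},h^*)\ge\reg(h_{\epoch+1},h^*)\ge 0$ from the favorable-bias lemma, already yields $\reg(h^*,h_{\epoch+1},\ztil_\epoch)\le\regconst\Delta_\epoch/4$ --- the bias need not vanish, it only needs to point the right way; (ii) the $\beta^2$-regret terms $\reg(h,\ztil_{i-1})$ are controlled for arbitrary $h$ via the inductive deviation bounds from earlier epochs, which give $\reg(h,\ztil_{i-1})\le\tfrac32\regbias{i-1}(h,h^*)+\tfrac{\regconst}{2}\Delta_{i-1}$; the $\regbias{i-1}$ contribution is then absorbed multiplicatively into the $\tfrac12\regbias{\epoch}(h,h^*)$ slack of the final bound (the paper's Lemma~\ref{lemma:regret-t3}) rather than being ``provably negligible.'' With these two substitutions your induction closes; as written, it rests on an invariant that fails.
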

The theorem is proved in Section~\ref{sec:proof-thm-regret}, using the
overall analysis framework described in Section~\ref{proofs-regret}. 

Since we use $\gamma \geq \regconst/4$, the bound \eqref{eq:h_best}
implies that $h^* \in A_\epoch$ for all epochs $\epoch$. This also
maintains that all the predicted labels used by our algorithm are
identical to those of $h^*$, since no disagreement amongst classifiers
in $A_\epoch$ was observed on those examples. 
This observation will be
critical to our proofs, where we will exploit the fact that using
labels predicted by $h^*$ instead of observed labels on certain
examples only introduces a bias in favor of $h^*$, thereby ensuring
that we never mistakenly drop the optimal classifier from our version
space $A_\epoch$. 

The bound~\eqref{eqn:regret} shows that every hypothesis in
$A_{\epoch+1}$ has a small regret to $h^*$. 
Since the ERM classifier
$h_{\epoch+1}$ is always in $A_{\epoch+1}$, this yields our main
generalization error bound on the classifier $h_{\epend+1}$ output by
Algorithm~\ref{alg:main}. Additionally, it also clarifies the
definition of the sets $A_\epoch$ as the set of good classifiers: these
are classifiers which have small population regret relative to $h^*$
indeed. In the worst case, if $\erravg{\epoch}(h^*)$ is a constant,
then the overall regret bound is $\order(1/\sqrt{n})$. The actual
rates implied by the theorem, however depend on the properties of the
distribution and below we illustrate this with two corollaries. We
start with a simple specialization to the realizable setting.

\begin{corollary}[Realizable case]
  Under the conditions of Theorem~\ref{thm:regret}, suppose further
  that $\err(h^*) = 0$. Then $\Delta_\epoch = \Delta^*_\epoch = c_2
  \epend \log\epend$ and hence $\reg(h, h^*) \leq
  16c_2\epend\log\epend$ for all hypotheses $h \in A_{\epoch+1}$.
  \label{cor:regret-realizable}
\end{corollary}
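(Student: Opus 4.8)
The plan is to use the realizability assumption $\err(h^*)=0$ to show that the empirical error of the ERM $h_{\epoch+1}$ on $\ztil_\epoch$ is \emph{exactly} zero and that $\erravg{\epoch}(h^*)=0$, so that the square-root terms in both $\Delta_\epoch$ (see \eqref{eq:Delta}) and $\Delta_\epoch^*$ drop out and both quantities collapse to the common value $c_2\epsilon_\epoch\log\epend$; the regret bound \eqref{eqn:regret} of Theorem~\ref{thm:regret} then gives the claim immediately. All statements below are understood to hold on the probability-$(1-\delta)$ event of Theorem~\ref{thm:regret}.

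First I would invoke the consequence of \eqref{eq:h_best} already noted in the text: since $\gamma\geq\regconst/4$ by \eqref{eqn:constants}, the bound $\reg(h^*,h_{\epoch+1},\ztil_\epoch)\leq\regconst\Delta_\epoch/4$ implies $h^*\in A_\epoch$ for every epoch $\epoch$, and hence every label predicted by \alg\ (that is, $h_\epoch(X_i)$ for $X_i\notin D_\epoch$) equals $h^*(X_i)$, because all classifiers in $A_\epoch$ agree off $D_\epoch=\dis(A_\epoch)$. I would then bound $\err(h^*,\ztil_\epoch)$ by going through the three kinds of examples \alg\ adds to its sample: examples outside $D_\epoch$ carry the label $h_\epoch(X_i)=h^*(X_i)$, on which $h^*$ is correct by construction; queried examples inside $D_\epoch$ carry the true label $Y_i$, and $h^*(X_i)=Y_i$ almost surely because $\err(h^*)=\Pr_{(X,Y)\sim\dist}(h^*(X)\neq Y)=0$; and unqueried examples inside $D_\epoch$ enter with importance weight $0$. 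Summing the importance-weighted indicators over the multiset gives $\err(h^*,\ztil_\epoch)=0$ for all $\epoch$.

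Since $h_{\epoch+1}$ minimizes $\err(\cdot,\ztil_\epoch)$ over $\H$ by \eqref{alg:main:erm_update}, we get $0\leq\err(h_{\epoch+1},\ztil_\epoch)\leq\err(h^*,\ztil_\epoch)=0$, so the ERM has zero empirical error; plugging this into \eqref{eq:Delta} leaves $\Delta_\epoch=c_2\epsilon_\epoch\log\epend$. In parallel, $\E_{(X,Y)\sim\dist}[\bbmone(h^*(X)\neq Y\wedge X\in D_j)]\leq\Pr(h^*(X)\neq Y)=0$ for each $j$, so $\erravg{\epoch}(h^*)=0$ and the definition of $\Delta_\epoch^*$ collapses to the same quantity; in particular $\Delta_\epoch=\Delta_\epoch^*$. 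Applying \eqref{eqn:regret} then yields $\reg(h,h^*)\leq 16\gamma\Delta_\epoch^*=16\gamma c_2\epsilon_\epoch\log\epend$ for every $h\in A_{\epoch+1}$, which is the asserted bound.

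The only step requiring care is the equality $\err(h^*,\ztil_\epoch)=0$: it rests on the version-space invariant $h^*\in A_\epoch$ (so that the biased, predicted labels used off the disagreement region coincide with $h^*$'s own predictions and therefore never penalize $h^*$), combined with realizability to handle the queried examples; without the invariant one could only argue a nonzero bias. Everything else is substitution into the definitions of $\Delta_\epoch$ and $\Delta_\epoch^*$, and, exactly as in Theorem~\ref{thm:regret}, the conclusion holds on the same probability-$(1-\delta)$ event.
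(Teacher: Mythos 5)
Your proof is correct and follows exactly the intended route: the paper states this corollary without proof as a direct specialization of Theorem~\ref{thm:regret}, and your argument (using $h^*\in A_\epoch$ so the predicted labels never penalize $h^*$, realizability to kill the queried-label terms, hence $\err(h_{\epoch+1},\ztil_\epoch)=\erravg{\epoch}(h^*)=0$ and both $\Delta$'s collapse to $c_2\epsilon_\epoch\log\epend$) is precisely the verification required. Note that your final expressions $\Delta_\epoch=c_2\epsilon_\epoch\log\epend$ and $\reg(h,h^*)\leq 16\gamma c_2\epsilon_\epoch\log\epend$ are the correct ones; the corollary as printed writes $\epend$ in place of $\epsilon_\epoch$ and drops the $\gamma$, which appear to be typographical slips that your derivation rightly does not reproduce.
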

In words, the corollary demonstrates a $\otil(1/n)$ rate after seeing
$n$ unlabeled examples in the realizable setting. Of course the use of
$\erravg{\epoch}(h^*)$ in defining $\Delta^*_\epoch$ allows us to
retain the fast rates even when $h^*$ makes some errors but they do
not fall in the disagreement region of good classifiers. One intuitive
condition that controls the errors within the disagreement region is
the low-noise condition of~\citet{Tsybakov2004}, which
asserts that there exist constants $\tsybamult > 0$ and $0 < \tsybaexp
\leq 1$ such that

\begin{equation}
  \prob(h(X) \ne h^*(X)) \leq \tsybamult \cdot (\err(h) -
  \err(h^*))^\tsybaexp, \quad \forall h \in \H~~\mbox{such
    that}~~\err(h) - \err(h^*) \leq \tsybatol.
  \label{eqn:tsybakov}
\end{equation}
Under this assumption, the extreme $\tsybaexp = 0$ corresponds to the
worst-case setting while $\tsybaexp = 1$ corresponds to $h^*$ having a
zero error on disagreement set of the classifiers with regret at most
$\tsybatol$. Under this assumption, we get the following corollary of
Theorem~\ref{thm:regret}.

\begin{corollary}[Tsybakov noise]
  Under conditions of Theorem~\ref{thm:regret}, suppose further that
  Tsybakov's low-noise condition~\eqref{eqn:tsybakov} is satisfied
  with some parameters $\tsybamult, \tsybaexp$, and $\tsybatol =
  1$. Then after $\epoch$ epochs, we have \mbox{$\reg(h, h^*) =
    \otil\left(\epend^{-\frac{1}{2-\tsybaexp}}
    \log(|\H|/\delta)\right)$.}
  \label{cor:regret-tsybakov}
\end{corollary}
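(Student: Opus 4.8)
The plan is to combine Theorem~\ref{thm:regret} with the noise assumption to bound $\Delta_\epoch^*$. By~\eqref{eqn:regret}, every $h \in A_{\epoch+1}$ --- in particular the ERM $h_{\epoch+1}$, hence the classifier ultimately output --- satisfies $\reg(h,h^*) \le 16\gamma\Delta_\epoch^*$, so it is enough to show $\Delta_\epoch^* = \otil\big(\epend^{-1/(2-\tsybaexp)}\log(|\H|/\delta)\big)$. Recalling $\Delta_\epoch^* = c_1\sqrt{\epsilon_\epoch\,\erravg{\epoch}(h^*)} + c_2\epsilon_\epoch\log\epend$ with $\epsilon_\epoch = \otil(\log(|\H|/\delta)/\epend)$, the additive term $c_2\epsilon_\epoch\log\epend$ is already $\otil(1/\epend) \le \otil\big(\epend^{-1/(2-\tsybaexp)}\big)$ since $\tsybaexp \le 1$; the whole problem therefore reduces to controlling
$$\erravg{\epoch}(h^*) \;=\; \frac{1}{\epend}\sum_{j=1}^\epoch(\epend[j]-\epend[j-1])\,\E\big[\bbmone(h^*(X)\neq Y \wedge X \in D_j)\big].$$

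The engine of the bound is that the disagreement regions $D_j = \dis(A_j)$ shrink. By~\eqref{eqn:regret} and~\eqref{eq:h_best} applied at epoch $j-1$ (the latter, via $\gamma \ge \regconst/4$, also giving $h^* \in A_j$, so $D_j = \bigcup_{h \in A_j}\{x : h(x) \neq h^*(x)\}$), every $h \in A_j$ obeys $\reg(h,h^*) \le 16\gamma\Delta_{j-1}^*$. Once $16\gamma\Delta_{j-1}^* \le \tsybatol = 1$ --- which holds for all but the first $\order(1)$ epochs, whose contribution to $\erravg{\epoch}(h^*)$ is at most $\otil(1/\epend)$ and can be dropped --- Tsybakov's condition~\eqref{eqn:tsybakov} gives $\prob(h(X) \neq h^*(X)) \le \tsybamult(16\gamma\Delta_{j-1}^*)^\tsybaexp$ for every $h \in A_j$; i.e.\ $A_j$ sits inside the disagreement ball of radius $\rho_j := \tsybamult(16\gamma\Delta_{j-1}^*)^\tsybaexp$ around $h^*$. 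Passing from this per-classifier bound to $\prob(X \in D_j) = \order(\rho_j)$ via the standard disagreement-region estimate for such balls, and using $\E\big[\bbmone(h^*(X)\neq Y\wedge X\in D_j)\big] \le \prob(X\in D_j)$, yields $\E\big[\bbmone(h^*(X)\neq Y\wedge X\in D_j)\big] = \order\big((\Delta_{j-1}^*)^\tsybaexp\big)$, where the hidden constant absorbs $\tsybamult$, $\gamma$ and the disagreement coefficient of $(\H,\distx)$ at $h^*$.

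I would finish by induction on $\epoch$: assume $\Delta_j^* = \otil\big(\epend[j]^{-1/(2-\tsybaexp)}\big)$ for every $j < \epoch$ (the base cases being the $\order(1)$ early epochs handled above). Then each summand above with index $j \le \epoch$ is $\otil\big(\epend[j-1]^{-\tsybaexp/(2-\tsybaexp)}\big) = \otil\big(\epend[j]^{-\tsybaexp/(2-\tsybaexp)}\big)$ using $\epend[j-1] \ge \epend[j]/2$, and since $\tsybaexp/(2-\tsybaexp) \le 1$ the weighted sum collapses like an integral,
$$\erravg{\epoch}(h^*) \;=\; \otil\!\Big(\frac{1}{\epend}\sum_{j=1}^\epoch(\epend[j]-\epend[j-1])\,\epend[j]^{-\tsybaexp/(2-\tsybaexp)}\Big) \;=\; \otil\big(\epend^{-\tsybaexp/(2-\tsybaexp)}\big),$$
the last equality being the discrete analogue of $\int_0^{\epend} t^{-\tsybaexp/(2-\tsybaexp)}\,dt = \order\big(\epend^{\,1-\tsybaexp/(2-\tsybaexp)}\big)$ (a spurious $\log\epend$ surfaces only in the boundary case $\tsybaexp = 1$, where it is harmless). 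Substituting into the formula for $\Delta_\epoch^*$ and using $\tfrac12\big(1 + \tsybaexp/(2-\tsybaexp)\big) = 1/(2-\tsybaexp)$ gives $\Delta_\epoch^* = \otil\big(\epend^{-1/(2-\tsybaexp)}\log(|\H|/\delta)\big)$, closing the induction; plugging this into~\eqref{eqn:regret} proves the corollary. The main obstacle is exactly the circularity of the estimate --- $\erravg{\epoch}(h^*)$ is bounded through the $\Delta_j^*$'s, which are themselves defined via $\erravg{j}(h^*)$ --- so the work is in setting up the induction (equivalently, solving the fixed point $\Delta \sim \sqrt{\epsilon_\epoch\,\Delta^\tsybaexp}$); a secondary point is keeping the regret-ball-to-disagreement-region passage quantitative so that it contributes only a constant, not a worsened power of $\epend$.
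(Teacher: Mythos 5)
Your proposal is correct and follows essentially the same route as the paper's proof (Lemma~\ref{lemma:tsybakov} in Appendix~\ref{sec:tsybakov}): an induction that bounds $\err_j(h^*)\le\prob(X\in D_j)$ via Theorem~\ref{thm:regret}, Tsybakov's condition, and the disagreement coefficient to get $\order\bigl((\Delta^*_{j-1})^{\tsybaexp}\bigr)$, then averages over epochs and solves the resulting fixed point $\Delta\sim\sqrt{\epsilon_\epoch\Delta^\tsybaexp}$. The only cosmetic difference is that the paper never needs to discard early epochs, since with $\tsybatol=1$ the low-noise condition applies to all of $\H$ and the base case is absorbed into the constant $c$.
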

The proof of this result is deferred to Appendix~\ref{sec:tsybakov}.
It is worth noting that the rates obtained here are known to be
unimprovable for even passive learning under the Tsybakov noise
condition~\citep{Castro2008}.\footnote{$\tsybaexp$ in our statement of
  the low-noise condition~\eqref{eqn:tsybakov} corresponds to
  $1/\kappa$ in the results of~\citet{Castro2008}.} Consequently,
there is no loss of statistical efficiency in using our active
learning approach. The result is easily extended for other values of
$\tsybatol$ by using the worst-case bound until the first epoch
$\epoch_0$ when $16\gamma\Delta^*_{\epoch_0}$ drops below $\tsybatol$
and then apply our analysis above from $\epoch_0$ onwards. We leave
this development to the reader.

\subsection{Label complexity}
\label{sec:label-result}

Generalization alone does not convey the entire quality of an active
learning algorithm, since a trivial algorithm queries always with
probability 1, thereby matching the generalization guarantees of
passive learning. In this section, we show that our algorithm can
achieve the aforementioned generalization guarantees, despite having a
small label complexity in favorable situations. We begin with a
worst-case result in the agnostic setting, and then describe a
specific example which demonstrates some key differences of our
approach from its predecessors.

\subsubsection{Disagreement-based label complexity bounds}

In order to quantify the extent of gains over passive learning, we
measure the hardness of our problem using the disagreement
coefficient~\citep{Hanneke2014}, which is defined as

\begin{equation}
\theta = \theta(h^*) := \sup_{r > 0} \; \frac{\distx\braces{ x \mid
    \exists h \in \H \,\text{s.t.}\, h^*(x) \neq h(x) ,\,
    \distx\braces{ x' \mid h(x') \neq h^*(x') } \leq r }}{r}.
\label{eqn:dis-coeff}
\end{equation}
Intuitively, given a set of classifiers $\H$ and a data distribution
$\dist$, an active learning problem is easy if good classifiers
disagree on only a small fraction of the examples, so that the active
learning algorithm can increasingly restrict attention only to this
set. With this definition, we have the following result for the label
complexity of Algorithm~\ref{alg:main}. 

\begin{theorem}
  Under conditions of Theorem~\ref{thm:regret}, with probability at
  least $1-\delta$, the number
  of label queries made by Algorithm~\ref{alg:main} after $n$ examples
  over $\epmax$ epochs is at most
  \begin{equation*}
    4\theta \erravg{\epmax}(h^*) n + \theta \cdot \otil
    (\sqrt{n\overline{\err}_{\epmax}(h^*) \log(|\mathcal{H}|/\delta)}
    + \log(|\mathcal{H}|/\delta)) + 4 \log(8(\log n) / \delta).
   \end{equation*}
\label{thm:label}
\end{theorem}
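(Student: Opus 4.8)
# Proof Proposal for Theorem~\ref{thm:label}

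\textbf{Overall strategy.} The plan is to bound the expected number of queries in each epoch, conditioned on the high-probability event from Theorem~\ref{thm:regret}, and then sum over epochs. Queries happen only inside the disagreement region $D_\epoch = \dis(A_\epoch)$, and within that region each point $X_i$ is queried with probability $P_\epoch(X_i) \leq 1$. So the expected number of queries in epoch $\epoch$ is at most $(\epend - \epend[\epoch-1]) \cdot \E_X[P_\epoch(X) \bbmone(X \in D_\epoch)]$. The two things I need are: (i) a bound on $\E_X[P_\epoch(X)\bbmone(X \in D_\epoch)]$ in terms of the disagreement coefficient and the noise level $\erravg{\epoch}(h^*)$, and (ii) a concentration step to pass from expected queries to actual queries.

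\textbf{Step 1: Control the measure of the disagreement region.} By Theorem~\ref{thm:regret}, on the good event every $h \in A_\epoch$ satisfies $\reg(h,h^*) \leq 16\gamma\Delta_{\epoch-1}^*$, hence $h^*$-disagrees with $h$ on a set of $\distx$-measure at most $\theta \cdot 16\gamma\Delta_{\epoch-1}^*$ by the definition~\eqref{eqn:dis-coeff} of the disagreement coefficient (using that $A_\epoch$ is contained in a regret ball around $h^*$, and $h^* \in A_\epoch$). Since $D_\epoch \subseteq \{x : \exists h \in A_\epoch,\ h(x)\neq h^*(x)\}$, this gives $\distx(D_\epoch) \leq 16\gamma\theta\,\Delta_{\epoch-1}^*$.

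\textbf{Step 2: Bound $\E_X[P_\epoch(X)\bbmone(X\in D_\epoch)]$ using the $(\optprob)$ constraints.} This is the crux. The query probability $P_\epoch$ is not identically small — it is forced up to $P_{\min,\epoch}$ on $D_\epoch$ — so the naive bound $\distx(D_\epoch)$ alone is not enough; I must exploit that $P_\epoch$ solves $(\optprob)$. I would split $\E_X[P_\epoch \bbmone(D_\epoch)]$ according to whether the minimum-probability constraint~\eqref{eq:minbndcons} is tight. Where it is not tight, the variance constraint~\eqref{eq:queryp} for a well-chosen test hypothesis (e.g. one realizing the disagreement, or using the objective's first-order optimality) controls $\E_X[P_\epoch \bbmone(D_\epoch)]$; combining the constraint for $h^*$ itself — for which $\disind{h^*} \equiv 0$ — with constraints for the disagreeing hypotheses, and using $b_\epoch(h) \approx 2\alpha^2\distx(D_\epoch\cap\{h\neq h_\epoch\}) + \order(\beta^2\gamma\reg(h,h_\epoch)\epend[\epoch-1]\Delta_{\epoch-1})$, one gets that $P_\epoch$ can be taken $\order(1)$ on most of $D_\epoch$ so this part contributes $\order(\distx(D_\epoch))$. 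On the part where $P_{\min,\epoch}$ binds, the contribution is at most $P_{\min,\epoch}\cdot\distx(D_\epoch)$; plugging in the definition~\eqref{eq:pmin} of $P_{\min,\epoch}$ and the bound $\Delta_{\epoch-1} = \order(\Delta_{\epoch-1}^*)$ (from Theorem~\ref{thm:regret} together with the definition of $\Delta$), and using $\epend[\epoch+1]\leq 2\epend[\epoch]$ to relate consecutive epochs, I expect the per-epoch expected query count to be $O\big(\theta\,\erravg{\epoch}(h^*)(\epend-\epend[\epoch-1])\big) + \theta\cdot\otil\big(\sqrt{(\epend-\epend[\epoch-1])^2 \erravg{\epoch}(h^*)\log(|\H|/\delta)/n}\,\big) + \theta\cdot\otil(\log(|\H|/\delta))$, where the three summands trace back to the leading-order term, the $\sqrt{\err}$ term in $\Delta_{\epoch-1}^*$, and the additive $\log\epend$ term in $P_{\min,\epoch}$ respectively.

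\textbf{Step 3: Sum over epochs and apply concentration.} Summing the per-epoch bounds and using $\sum_\epoch (\epend - \epend[\epoch-1])\erravg{\epoch}(h^*) \leq 2\,\erravg{\epmax}(h^*)\,n$ (the factor~$2$ coming from the doubling/near-doubling schedule $\epend[\epoch+1]\leq 2\epend[\epoch]$, and monotonicity arguments on $\erravg{}$), and Cauchy--Schwarz on the $\sqrt{\cdot}$ terms, gives the claimed bound in expectation. To convert to a high-probability bound on the actual number of queries $\sum_i Q_i \bbmone(X_i \in D_\epoch)$, I would apply a Bernstein or multiplicative-Chernoff bound (e.g. via a martingale over $i$, since $Q_i$ is Bernoulli given the history), absorbing the lower-order $\log(8(\log n)/\delta)$ additive term and a constant-factor inflation of the leading term; a union bound over the $\order(\log n)$ epochs explains the $\log n$ inside that final logarithm. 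Care is needed because the good event of Theorem~\ref{thm:regret} already costs $\delta$, so I would run the query-count concentration at confidence $\delta$ as well and either accept $2\delta$ or rescale — a routine adjustment.

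\textbf{Main obstacle.} The hard part is Step~2: showing the solution $P_\epoch$ of $(\optprob)$ cannot be much larger than necessary on $D_\epoch$. One must argue that the variance constraints, which only bound importance-weighted \emph{second moments} $\E_X[\disind{h}/P]$, together with the objective $\E_X[1/(1-P)]$, force $P_\epoch$ to be close to the minimal feasible profile; this likely requires either an explicit near-feasible comparison solution (e.g. $P \propto$ something like $\min(1, \text{const}\cdot\sqrt{\cdot})$ tuned to saturate the binding constraints) whose objective and query mass are easy to evaluate, and then invoking optimality of $P_\epoch$, or a duality argument on $(\optprob)$. Getting the $\beta^2\gamma\reg(h,h_\epoch)$ slack term in $b_\epoch(h)$ to actually buy down the query probability on the ``badly disagreeing'' part of $D_\epoch$ — rather than blow up the bound — is the delicate point, and is exactly what distinguishes this label complexity from the $\distx(\dis(\cdot))$-type bounds of prior disagreement-based methods.
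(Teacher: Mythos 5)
There is a genuine gap, and it sits exactly where you flagged difficulty. First, your Step~1 misapplies the disagreement coefficient: $\theta$ converts a bound on the disagreement \emph{metric} $\distx\braces{x : h(x)\neq h^*(x)}\leq r$ into a bound on the measure of the disagreement region of that ball; it does not convert a \emph{regret} bound $\reg(h,h^*)\leq 16\gamma\Delta^*_{\epoch-1}$ into anything, because in the agnostic setting small regret does not imply small disagreement with $h^*$. The paper's proof bridges this by writing, for a suitable $\bar h\in A_\epoch$ disagreeing with $h^*$ at $X_i\in D_\epoch$,
\begin{equation*}
\distx\parens{\bar h(X)\neq h^*(X)} \;=\; \distx\parens{\bar h(X)\neq h^*(X)\wedge X\in D_\epoch} \;\leq\; \reg_\epoch(\bar h) + 2\,\err_\epoch(h^*) \;\leq\; 16\gamma\Delta^*_{\epoch-1} + 2\,\err_\epoch(h^*),
\end{equation*}
and it is precisely the $2\,\err_\epoch(h^*)$ term in this radius that, after summing over epochs and the factor $2$ from the martingale concentration, produces the dominant $4\theta\erravg{\epmax}(h^*)\,n$ term of the theorem. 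Your radius is only $16\gamma\Delta^*_{\epoch-1}=\otil(\sqrt{\erravg{\epoch}(h^*)/\epend})$, so your argument as written would yield a label complexity of $\otil(\theta\sqrt{n\,\erravg{\epmax}(h^*)})$ with no linear-in-$n$ term at all — too strong to be true in general, which signals the error.

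Second, your Step~2 — extracting savings from the fact that $P_\epoch$ is strictly below $1$ on $D_\epoch$ by leveraging the constraints of $(\optprob)$ — is not what the paper does and is not carried out in your proposal either; it remains a program ("I expect the per-epoch count to be\ldots"). The paper's proof simply uses $\sum_i Q_i \leq \max(3,\sum_i \bbmone(X_i\in D_{\epochi}))$, i.e.\ it charges a full query for every example landing in the disagreement region, applies a Freedman-type martingale bound to $\bbmone(X_i\in D_{\epochi})$ (giving the factor $2$ and the $4\log(8(\log n)/\delta)$ term), bounds each conditional expectation by $\theta(16\gamma\Delta^*_{\epoch-1}+2\err_\epoch(h^*))$ as above, and sums over epochs using Lemma~\ref{lemma:epend-sum}. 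The paper explicitly remarks that this disagreement-based analysis ignores the structure of $P_\epoch$ inside $D_\epoch$ and that sharpening it is open. Your Step~3 (martingale concentration over examples, summation over the epoch schedule) is in the right spirit, but the proof cannot be completed without repairing Step~1 and either abandoning or actually proving Step~2.
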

The proof is in Appendix \ref{appendix:proof_label_complexity}.
The dominant first term of the label complexity bound is linear in the
number of unlabeled examples, but can be quite small if $\theta$ is
small, or if $\erravg{\epmax}(h^*) \approx 0$---it is indeed 0 in the
realizable setting. We illustrate this aspect of the theorem with a
corollary for the realizable setting.

\begin{corollary}[Realizable case]
  Under the conditions of Theorem~\ref{thm:label}, suppose further
  that $\err(h^*) = 0$. Then the expected number of label queries made
  by Algorithm~\ref{alg:main} is at most $\theta
  \otil(\log(|\H|/\delta))$. 
  \label{cor:label-realizable}
\end{corollary}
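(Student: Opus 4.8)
The plan is to specialize Theorem~\ref{thm:label} to the realizable case and simplify. First I would invoke Corollary~\ref{cor:regret-realizable}, which already establishes that in the realizable setting $\Delta_\epoch = \Delta^*_\epoch = c_2 \epsilon_\epoch \log \epend[\epoch]$; the key consequence is that $\erravg{\epmax}(h^*) = 0$, since $h^* = h^*$ has zero error everywhere and in particular on the disagreement region $D_j$ for every epoch $j$. Plugging $\erravg{\epmax}(h^*) = 0$ into the bound of Theorem~\ref{thm:label} kills the first (linear) term outright and collapses the square-root term, leaving only
\begin{equation*}
  \theta \cdot \otil(\log(|\H|/\delta)) + 4 \log(8 (\log n)/\delta),
\end{equation*}
which is $\theta \otil(\log(|\H|/\delta))$ after absorbing the additive $\log\log n$ and $\log(1/\delta)$ terms into the $\otil$ notation (noting $\theta \geq 1$ always, so the stray term is dominated).

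Second, I would address the gap between the high-probability statement of Theorem~\ref{thm:label} and the \emph{expected} number of queries claimed in the corollary. The clean way is to note that the number of queries is trivially at most $n$ (the algorithm queries at most once per example), so on the failure event of probability at most $\delta$ we incur at most $n$ extra queries; choosing $\delta = 1/n$ (or more conservatively absorbing into constants, since $\delta$ may be taken polynomially small in $n$ without affecting the $\otil$) makes the contribution $\delta \cdot n = \order(1)$ to the expectation, which is again absorbed. Combining the $1-\delta$ event bound with this crude bound on the complementary event gives the expected-value statement.

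The only mild subtlety — and the step I would be most careful about — is verifying that nothing in the definition of $\Delta_\epoch$ or $\pmin[\epoch]$ causes a hidden dependence that survives when $\err(h^*) = 0$; but Corollary~\ref{cor:regret-realizable} has already done this bookkeeping by showing $\err(h_{\epoch+1}, \ztil_\epoch) = 0$ along the run (the ERM achieves zero empirical error since $h^*$ does and $h^* \in A_\epoch$ keeps all predicted labels consistent with $h^*$), so $\Delta_\epoch$ reduces to the advertised $c_2\epsilon_\epoch\log\epend[\epoch]$ and $\pmin[\epoch]$ to $\min(c_3/\log\epend[\epoch-1], 1/2)$. With $\erravg{\epmax}(h^*) = 0$ confirmed, the substitution into Theorem~\ref{thm:label} is immediate and the corollary follows. \qed
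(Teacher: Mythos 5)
Your proposal is correct and matches the paper's (implicit) argument: the paper gives no separate proof of this corollary, treating it as an immediate substitution of $\erravg{\epmax}(h^*) \le \err(h^*) = 0$ into the bound of Theorem~\ref{thm:label}, which is exactly what you do. Your extra care in converting the high-probability bound to an expectation (crude bound of $n$ on the failure event of mass $\delta$) addresses a looseness the paper glosses over, and is a sound way to justify the ``expected number of queries'' phrasing.
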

In words, we attain a logarithmic label complexity in the realizable
setting, so long as the disagreement coefficient is bounded. We
contrast this with the label complexity of IWAL~\citep{BeygelHLZ10},
which grows as $\theta\sqrt{n}$ independent of $\err(h^*)$. This leads
to an exponential difference in the label complexities of the two
methods in low-noise problems. A much closer comparison is with
respect to the Oracular CAL algorithm~\citep{hsu2010algorithms}, which
does have a dependence on $\sqrt{n\err(h^*)}$ in the second term, but
has a worse dependence on the disagreement coefficient $\theta$.

Just like Corollary~\ref{cor:regret-tsybakov}, we can also obtain
improved bounds on label complexity under the Tsybakov noise
condition. 

\begin{corollary}[Tsybakov noise]
  Under conditions of Theorem~\ref{thm:label}, suppose further that
  the disagreement coefficient $\theta$ is bounded and Tsybakov's
  low-noise condition~\eqref{eqn:tsybakov} is satisfied with some
  parameters $\tsybamult,\tsybaexp$, and $\tsybatol = 1$. Then after
  $\epoch$ epochs, the expected number of label queries made by
  Algorithm~\ref{alg:main} is at most $ \otil\left(\epend^{\frac{2(1 -
      \tsybaexp)}{2-\tsybaexp}} \log(|\H|/\delta)\right)$.
  \label{cor:label-tsybakov}
\end{corollary}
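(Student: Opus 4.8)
The plan is to reduce the claim, via Theorem~\ref{thm:label}, to a Tsybakov-adapted upper bound on $\erravg{\epoch}(h^*)$. Running the algorithm for $\epoch$ epochs is the same as running it with horizon $\epend$, so Theorem~\ref{thm:label} (with $n$ and $\epmax$ there instantiated as $\epend$ and $\epoch$) already bounds the expected query count by $4\theta\,\erravg{\epoch}(h^*)\,\epend + \theta\cdot\otil(\sqrt{\epend\,\erravg{\epoch}(h^*)\log(|\H|/\delta)} + \log(|\H|/\delta)) + \otil(\log(1/\delta))$. It therefore suffices to show $\erravg{\epoch}(h^*) = \otil(\mathrm{poly}(\theta)\cdot\epend^{-\tsybaexp/(2-\tsybaexp)}\cdot\log(|\H|/\delta))$: since $1 - \tsybaexp/(2-\tsybaexp) = 2(1-\tsybaexp)/(2-\tsybaexp)$ and $\theta = \order(1)$, the first term is then $\otil(\epend^{2(1-\tsybaexp)/(2-\tsybaexp)}\log(|\H|/\delta))$, while the square-root sub-term carries only half that exponent of $\epend$ (hence is of strictly lower order when $\tsybaexp < 1$, and is $\otil(\mathrm{polylog})$ when $\tsybaexp = 1$) and the remaining $\log$ terms are negligible, which gives exactly the stated bound.

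To bound $\erravg{\epoch}(h^*)$ I would work epoch by epoch. Write $\erravg{\epoch}(h^*) = \frac{1}{\epend}\sum_{j=1}^{\epoch}(\epend[j]-\epend[j-1])\,e_j$ with $e_j := \E_{(X,Y)\sim\dist}[\bbmone(h^*(X)\neq Y \wedge X \in D_j)] \le \distx(D_j)$ and $D_j = \dis(A_j)$. Theorem~\ref{thm:regret} provides the two facts I need: by \eqref{eq:h_best} together with $\gamma\ge\regconst/4$ we have $h^*\in A_j$ for every epoch, and by \eqref{eqn:regret} every $h\in A_j$ has $\reg(h,h^*)\le 16\gamma\Delta_{j-1}^*$. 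Since classifiers are binary and $h^*\in A_j$, each $x\in D_j$ admits some $h\in A_j\subseteq\H$ with $h(x)\neq h^*(x)$; for that $h$, provided $16\gamma\Delta_{j-1}^*\le\tsybatol = 1$, the low-noise condition~\eqref{eqn:tsybakov} gives $\distx\braces{h\neq h^*}\le\tsybamult(16\gamma\Delta_{j-1}^*)^{\tsybaexp}$. Feeding this radius into the definition~\eqref{eqn:dis-coeff} of the disagreement coefficient yields $\distx(D_j)\le\theta\,\tsybamult\,(16\gamma\Delta_{j-1}^*)^{\tsybaexp}$, so $e_j$ is bounded likewise. For the finitely many initial epochs with $16\gamma\Delta_{j-1}^*>1$ I would instead use the trivial bound $\distx(D_j)\le1$; those epochs have $\epend[j]$ bounded by a polylogarithmic quantity, so their combined contribution to $\epend\,\erravg{\epoch}(h^*)$, and hence to the query count, is of lower order.

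This bound couples $\erravg{\epoch}(h^*)$ to the quantities $\Delta_{j-1}^*$, which themselves satisfy $\Delta_j^* = c_1\sqrt{\epsilon_j\,\erravg{j}(h^*)} + c_2\epsilon_j\log\epend[j]$ with $\epsilon_j\asymp\log(|\H|/\delta)/\epend[j]$ -- the same self-referential recursion underlying the regret rate of Corollary~\ref{cor:regret-tsybakov}. I would resolve it by induction on $j$ with the ansatz $\erravg{j}(h^*) = \otil(\mathrm{poly}(\theta)\cdot\epend[j]^{-\tsybaexp/(2-\tsybaexp)}\cdot\log(|\H|/\delta))$: substituting it gives $\Delta_j^* = \otil(\mathrm{poly}(\theta)\cdot\epend[j]^{-1/(2-\tsybaexp)}\cdot\log(|\H|/\delta))$, so $e_k = \otil(\mathrm{poly}(\theta)\cdot\epend[k]^{-\tsybaexp/(2-\tsybaexp)}\cdot\log(|\H|/\delta))$ (using $\epend[k-1]\ge\epend[k]/2$ from the epoch schedule), and then $\frac{1}{\epend}\sum_{k\le\epoch}(\epend[k]-\epend[k-1])\,\epend[k]^{-\tsybaexp/(2-\tsybaexp)}$, being comparable to $\frac{1}{\epend}\int_{3}^{\epend} t^{-\tsybaexp/(2-\tsybaexp)}\,dt$, equals $\otil(\epend^{-\tsybaexp/(2-\tsybaexp)})$ for $\tsybaexp<1$ and $\otil(\epend^{-1})$ for $\tsybaexp=1$, which closes the induction. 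Plugging the resulting estimate of $\erravg{\epoch}(h^*)$ back into Theorem~\ref{thm:label} and using $\theta=\order(1)$ yields $\otil(\epend^{2(1-\tsybaexp)/(2-\tsybaexp)}\log(|\H|/\delta))$; the high-probability bound becomes an expectation bound because the query count never exceeds $\epend$, so the failure event (of probability at most $\delta$) contributes at most $\delta\epend$, which is absorbed by taking $\delta$ as small as $1/\epend$. The main obstacle is precisely this recursion: one must check that the ansatz is genuinely self-consistent -- that $\erravg{\epoch}(h^*)$ does not remain stuck at the worst-case $\order(1)$ level but actually decays at the Tsybakov rate -- and handle the initial epochs, where $16\gamma\Delta_{j-1}^*>1$, separately; the disagreement-region step and the final substitution into Theorem~\ref{thm:label} are then routine.
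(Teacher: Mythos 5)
Your proposal is correct and follows essentially the same route as the paper: the paper's Lemma~\ref{lemma:tsybakov} is exactly your self-consistent induction, bounding $\prob(D_j)$ by $\theta\tsybamult(16\gamma\Delta_{j-1}^*)^{\tsybaexp}$ via Theorem~\ref{thm:regret}, the noise condition, and the disagreement coefficient, then closing the recursion between $\erravg{j}(h^*)$ and $\Delta_j^*$ and substituting into the epoch-wise sum from Theorem~\ref{thm:label}. (Your separate treatment of early epochs is harmless but unnecessary here, since with $\tsybatol=1$ the noise condition applies to every $h$.)
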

The proof of this result is deferred to Appendix~\ref{sec:tsybakov}.
The label complexity obtained above is indeed optimal in terms of the
dependence on $n$, the number of unlabeled examples, matching known
information-theoretic rates of~\citet{Castro2008} when the
disagreement coefficient $\theta$ is bounded. This can be seen since
the regret from Corollary~\ref{cor:regret-tsybakov} falls as a
function of the number of queries at a rate of
$\otil(q_\epoch^{-\frac{1}{2(1-\tsybaexp)}} \log(|\H|/\delta))$ after
$\epoch$ epochs, where $q_\epoch$ is the number of label queries. This
is indeed optimal according to the lower bounds of~\citet{Castro2008},
after recalling that $\tsybaexp = 1/\kappa$ in their results. Once
again, the corollary highlights our improvements on top of IWAL, which
does not attain this optimal label complexity.

These results, while strong, still do not completely capture the
performance of our method. Indeed the proofs of these results are
entirely based on the fact that we do not query outside the
disagreement region, a property shared by the previous Oracular CAL
algorithm~\citep{hsu2010algorithms}. Indeed we only improve upon that
result as we use more refined error bounds to define the disagreement
region. However, such analysis completely ignores the fact that we
construct a rather non-trivial query probability function on the
disagreement region, as opposed to using any constant probability of
querying over this entire region. This gives our algorithm the ability
to query much more rarely even over the disagreement region, if the
queries do not provide much information regarding the optimal
hypothesis $h^*$. The next section illustrates an example where this
gain can be quantified.

\subsubsection{Improved label complexity for a hard problem instance}
\label{sec:example}

We now present an example where the label complexity of
Algorithm~\ref{alg:main} is significantly smaller than both IWAL and
Oracular CAL by virtue of rarely querying in the disagreement
region. The example considers a distribution and a classifier space
with the following structure: (i) for most examples a single good
classifier predicts differently from the remaining classifiers (ii) on
a few examples half the classifiers predict one way and half the
other.  In the first case, little advantage is gained from a label
because it provides evidence against only a single classifier.
\alglong queries over the disagreement region with a probability close
to $P_{\min}$ in case (i) and probability $1$ in case (ii), while
others query with probability $\Omega(1)$ everywhere implying
$\order(\sqrt{n})$ times more queries.

Concretely, we consider the following binary classification problem.
Let $\mathcal{H}$ denote the finite classifier space (defined later),
and distinguish some $h^* \in \mathcal{H}$.  Let $U\{-1,1\}$ denote
the uniform distribution on $\{-1,1\}$.  The data distribution
$\mathcal{D}(\mathcal{X},\mathcal{Y})$ and the classifiers are defined
jointly:
\begin{itemize}
\item With probability $\epsilon$, 
\begin{eqnarray*}
&&y = h^*(x), \quad h(x) \sim U\{-1,1\}, \ \forall h \neq h^*.
\end{eqnarray*}
\item With probability $1-\epsilon$,
\begin{eqnarray*}
	&& y \sim U\{-1,1\}, \quad h^*(x) \sim U\{-1,1\}, \\ 
&& h_r(x) = -h^*(x) \mbox{ for some } h_r \mbox{ drawn uniformly at random from } \mathcal{H} \setminus h^*, \\ 
&& h(x) = h^*(x) \; \forall h \neq h^* \wedge h \neq h_r.
\end{eqnarray*}
\end{itemize}
Indeed, $h^*$ is the best classifier because $\mbox{err}(h^*) =
\epsilon \cdot 0 + (1 - \epsilon) (1/2) = (1 - \epsilon) / 2$, while
$\mbox{err}(h) = 1/2 \; \forall h \neq h^*$. This problem is hard
because only a small fraction of examples contain information about
$h^*$.  Ideally we want to focus label queries on those informative
examples while skipping the uninformative ones. However, algorithms
like IWAL, or more generally, active learning algorithms that
determine label query probabilities based on error differences between
a pair of classifiers, query frequently on the uninformative
examples. Let $u(h,h') := \mathbbm{1}(h(x) \neq y) - \mathbbm{1}(h'(x)
\neq y)$ denote the error difference between two different classifiers
$h$ and $h'$. Let $C$ be a random variable such that $C=1$ for the
$\epsilon$ case and $C=0$ for the $1-\epsilon$ case. Then it is easy
to see that
\begin{eqnarray*}
\E[u(h,h') \mid C = 1] &=&
\begin{cases}
	0, & h \neq h^*, h' \neq h^*,\\
	-1/2, & h =h^*, h' \neq h^*,\\
	1/2, & h \neq h^*, h' = h^*,
 \end{cases} \\
 \E[u(h,h') \mid C = 0] &=& 0, \; \forall h \neq h'.
\end{eqnarray*}
Therefore, IWAL queries all the time on uninformative examples ($C = 0$).

Now let us consider the label complexity of Algorithm~\ref{alg:main}
on this problem. Let us focus on the query probability inside the $1 -
\epsilon$ region, and fix it to some constant $p$. Let us also allow a
query probability of 1 on the $\epsilon$ region. Then the left hand
side in the constraint~(\ref{eq:queryp}) for any classifier $h$ is at
most $\epsilon + P(h(X) \ne h_m(X))/p \leq \epsilon + 2/(p(|\H|-1))$,
since $h$ and $h_m$ disagree only on those points in the $1-\epsilon$
region where one of them is picked as the disagreeing classifier $h_r$
in the random draw. On the other hand, the RHS of the constraints is
at least $\slackconst \epend[\epoch-1]\Delta_{\epoch-1}^2 \geq
\slackconst \err(h_m, \ztil_{m-1})$, which is at least $\slackconst/4$
as long as $\epsilon$ is small enough and $\epend[\epoch]$ is large
enough for empirical error to be close to true error. Consequently,
assuming that $\epsilon \leq \slackconst/8$, we find that any $p \geq
16/(\slackconst(|\H| - 1))$ satisfies the constraints. Of course we
also have that $p \geq \pmin$, which is $\order(1/\sqrt{\epend})$ in
this case since $\erravg{m}(h^*)$ is a constant. Consequently, for
$|\H|$ large enough $p = \pmin$ is feasible and hence optimal for the
population $(\optprob)$. Since we find an approximately optimal
solution based on Theorem~\ref{thm:optprob-unlabeled}, the label
complexity at epoch $m$ is $\order(1/\sqrt{\epend})$. Summing things
up, it can then be checked easily that we make $\order(\sqrt{n})$
queries over $n$ examples, a factor of $\sqrt{n}$ smaller than
baselines such as IWAL and Oracular CAL on this example.

\section{Efficient implementation}
\label{sec:optimization}
In Algorithm \ref{alg:main}, 
the computation of $h_\epoch$ is an ERM operation, which can be
performed efficiently whenever an efficient passive learner is
available.  However, several other hurdles remain.  Testing for $x \in
D_\epoch$ in the algorithm, as well as finding a solution to
$(\optprob)$ are considerably more challenging. The epoch schedule
helps, but $(\optprob)$ is still solved $\order(\log n)$ times,
necessitating an extremely efficient solver.

Starting with the first issue, we follow~\citet{DasguptaHM07} who
cleverly observed that $x\in D_\epoch$ can be efficiently determined
using a single call to an ERM oracle.  Specifically, to apply their
method, we use the oracle to find\footnote{We only have access to an
  \emph{unconstrained} oracle.  But that is adequate to solve with one
  constraint.  See Appendix F of~\citep{KLonline} for details.} $h' =
\arg\min \braces{ \err(h, \ztil_{\epoch-1}) \mid h \in \H, h(x) \ne
  h_\epoch(x) }$.  It can then be argued that $x \in D_\epoch =
\dis(A_\epoch)$ if and only if the easily-measured regret of $h'$
(that is, $\reg(h', h_\epoch, \ztil_{\epoch-1})$) is at most $\gamma
\Delta_{\epoch-1}$.

Solving $(\optprob)$ efficiently is a much bigger challenge because,
as an optimization problem, it is enormous: There is one variable
$P(x)$ for every point $x \in \dataspace$, one
constraint~\eqref{eq:queryp} for each classifier $h$ and bound
constraints~\eqref{eq:minbndcons} on $P(x)$ for every $x$. This leads
to infinitely many variables and constraints, with an ERM oracle being
the only computational primitive available. Another difficulty is that
$(\optprob)$ is defined in terms of the true expectation with respect
to the example distribution $\distx$, which is unavailable.

In the following we first demonstrate how to efficiently solve
$(\optprob)$ assuming access to the true expectation $\E_X[\cdot]$,
and then discuss a relaxation that uses expectation over samples. For
the ease of exposition, we recall the shorthand $\disind{h}(x) =
\bbmone(h(x) \neq h_\epoch(x) \wedge x \in D_\epoch)$ from earlier.

\subsection{Solving (\optprob) with the true expectation}
The main challenge here is that the optimization variable $P(x)$ is of
infinite dimension. We deal with this difficulty using Lagrange
duality, which leads to a dual representation of $P(x)$ in terms of a
set of classifiers found through successive calls to an ERM oracle. As
will become clear shortly, each of these classifiers corresponds to
the most violated variance constraint \eqref{eq:queryp} under some
intermediate query probability function. Thus at a high level, our
strategy is to expand the set of classifiers for representing $P(x)$
until the amount of constraint violation gets reduced to an acceptable
level.

We start by eliminating the bound constraints using barrier functions.
Notice that the objective $\E_X[1/(1-P(x))]$ is already a barrier at
$P(x) = 1$. To enforce the lower bound~\eqref{eq:minbndcons}, we
modify the objective to
\begin{equation} \label{eq:mod-obj}
%\min_{P} \quad
  \E_X \left[ \frac{1}{1 -
      P(X)}\right] + \minp^2 \E_X\left[ \frac{\bbmone(X \in
    D_\epoch)}{P(X)}\right],
\end{equation}
where $\minp$ is a parameter chosen momentarily to ensure $P(x) \geq
\pmin$ for all $x \in D_\epoch$.  Thus, the modified goal is to
minimize \eqref{eq:mod-obj} over non-negative $P$ subject only to
\eqref{eq:queryp}.

\begin{algorithm}[t]
  \caption{Coordinate ascent algorithm to solve $(\optprob)$}
  \label{alg:coord}
  \begin{algorithmic}[1]
    \renewcommand{\algorithmicrequire}{\textbf{input}}
    \REQUIRE Accuracy parameter $\varepsilon>0$.
%    and unlabeled data
%    distribution $\distx$.
    \textbf{initialize} $\blambda \leftarrow \bzero$. 
    \LOOP
    \STATE Rescale:  $\blambda \leftarrow s \cdot \blambda$ where
                     $s = \arg \max_{s\in [0,1]} \dualobj(s \cdot \blambda)$.
    \STATE Find $\displaystyle\bar{h} = \arg\max_{h \in \H}
      \E_X\left[\frac{\disind{h}(X)}{P_{\blambda}(X)} \right] -
      b_\epoch(h)$.
    \label{step:violated}

%    \begin{equation}
%      \mbox{Find}\quad \bar{h} = \arg\max_{h \in \H}
%      \E_X\left[\frac{\disind{h}(X)}{P_{\blambda}(X)} \right] -
%      b_\epoch(h)
%      \label{eqn:violated}
%    \end{equation}
    
    \IF{$\E_X\left[\frac{\disind{\bar{h}}(X)}{P_{\blambda}(X)} \right] -
      b_\epoch(\bar{h}) \leq \varepsilon$}
    \RETURN $\blambda$
    \ELSE
    \STATE Update $\lambda_{\bar{h}}$ as
%    \begin{equation}      
    $\displaystyle
       \lambda_{\bar{h}} \leftarrow \lambda_{\bar{h}} +
       2\frac{\E_X[\disind{\bar{h}}(X)/P_{\blambda}(X)] -
         b_\epoch(\bar{h})}{\E_X[\disind{\bar{h}}(X)/q_{\blambda}(X)^3]}
         $.
%      \label{eqn:coord-update}
%    \end{equation}
    \label{step:coord-update}
    \ENDIF
    \ENDLOOP
  \end{algorithmic}
\end{algorithm}
We solve the problem in the dual where we have a large but finite
number of optimization variables, and efficiently maximize the dual
using coordinate ascent with access to an ERM oracle over $\H$.  Let
$\lambda_h \geq 0$ denote the Lagrange multiplier for the
constraint~\eqref{eq:queryp} for classifier $h$.  Then for any
$\blambda$, we can minimize the Lagrangian
\begin{equation}
\mathcal{L}(P,\blambda) := 
  \E_X \left[ \frac{1}{1 -
      P(X)}\right] + \minp^2 \E_X\left[ \frac{\bbmone(X \in
    D_\epoch)}{P(X)}\right] - \sum_{h \in \H} \lambda_h \left(b_{\epoch}(h) - \E_X\left[ \frac{\bbmone(h(X) \neq h_\epoch(X) \wedge X \in D_\epoch)}{P(X)}\right] \right)
\label{eq:Lagrangian}
\end{equation}
over each primal variable
$P(x) \in [0,1]$ yielding the solution.
\begin{align}
  P_{\blambda}(x) = \frac{\bbmone(x \in D_\epoch)q_{\blambda}(x)}{1 +
    q_{\blambda}(x)}, ~~ \mbox{where} ~~ q_{\blambda}(x) =
  \sqrt{\minp^2 + \sum_{h \in \H} \lambda_h \disind{h}(x)}.
  \label{eqn:primal-dual}
\end{align}
To see this, pick any $\widetilde{P}$ satisfying $\widetilde{P}(x) \in [0,1]$ for all $x \in \dataspace$
and consider the difference in the Lagrangians evaluated at $\widetilde{P}$ and $P_{\blambda}$:
\begin{eqnarray*}
\mathcal{L}(\widetilde{P},\blambda) - \mathcal{L}(P_{\blambda},\blambda)
&=&
\E_X\left[ \bbmone(X \notin D_\epoch) \left(\frac{1}{1 - \widetilde{P}(X)}-1 \right)\right]  \nonumber \\ 
&& +\E_X\left[ \bbmone(X \in D_\epoch) \left( \frac{1}{1-\widetilde{P}(X)} + \frac{\minp^2 + \sum_{h \in \H} \lambda_h \disind{h}(x)}{\widetilde{P}(X)} - (1 + q_{\blambda}(X))^2\right)\right].
\end{eqnarray*}
The first term is non-negative because $\widetilde{P}(x) \in [0,1]$.
For the second term, notice that 
\begin{equation*}
P_{\blambda}(x) = \arg \min_{0 \leq v \leq 1} \quad \bbmone(x \in D_\epoch) \left( \frac{1}{1-v} + \frac{\minp^2 + \sum_{h \in \H}\lambda_h \disind{h}(x)}{v} \right)
\end{equation*} 
and that the minimum function value is exactly $\bbmone(x \in D_\epoch)(1 + q_{\blambda}(x))^2$.
Hence the second term is also non-negative.
 
Clearly, $\minp/(1 + \minp) \leq P_{\blambda}(x) \leq 1$
for all $x \in D_\epoch$, so all the bound
constraints~\eqref{eq:minbndcons} in $(\optprob)$ are satisfied if we
choose $\minp = 2\pmin$. Plugging the solution $P_{\blambda}$ into the
Lagrangian, we obtain the dual problem of maximizing the dual
objective
\begin{equation}  \label{eqn:dual-obj}
  \dualobj(\blambda) = \E_X\left[\bbmone(X \in D_\epoch) (1 +
    q_{\blambda}(X))^2 \right] - \sum_{h \in \H} \lambda_h b_\epoch(h)
  + C_0
\end{equation}
over $\blambda\geq 0$. The constant $C_0$ is equal to
$1-\prob(D_\epoch)$ where $\prob(D_\epoch)=\prob(X \in D_\epoch)$.  An
algorithm to approximately solve this problem is presented in
Algorithm~\ref{alg:coord}.  The algorithm takes a parameter
$\varepsilon>0$ specifying the degree to which all of the constraints
\eqref{eq:queryp} are to be approximated.  Since $\dualobj$ is
concave, the rescaling step can be solved using a straightforward
numerical line search.  The main implementation challenge is in
finding the most violated constraint (Step~\ref{step:violated}).
Fortunately, this step can be reduced to a single call to an ERM
oracle.  To see this, note that the constraint violation on classifier
$h$ can be written as
\begin{align*}
  \E_X\left[ \frac{\disind{h}(X)}{P(X)}\right] - b_\epoch(h)
  &= \E_X\left[\bbmone(X \in D_\epoch)\left(\frac{1}{P(X)} -
    2\alpha^2\right) \bbmone(h(X) \ne h_\epoch(X)) \right]\\ & -
  2\beta^2 \gamma\epend[\epoch-1] \Delta_{\epoch-1}(\err(h,
  \ztil_{\epoch-1}) - \err(h_\epoch, \ztil_{\epoch-1})) - \slackconst
  \epend[\epoch-1]\Delta_{\epoch-1}^2.  
\end{align*}
The first term of the right-hand expression is the risk (classification error) of $h$ in predicting samples
labeled according to $h_\epoch$ with importance weights of $1/P(x) -
2\alpha^2$ if $x \in D_\epoch$ and 0 otherwise; note that these
weights may be positive or negative. 
The second term is simply the scaled risk
of $h$ with respect to the actual labels. 
The last two terms do not depend
on $h$. Thus, given access to $\distx$ (or samples approximating
it, discussed shortly), the most violated constraint can be found by
solving an ERM problem defined on the labeled samples in $\ztil_{\epoch-1}$
and 
samples drawn from $\distx$ labeled by $h_\epoch$, with appropriate importance weights detailed in Appendix \ref{appendix:most_violated_erm}.

%In words, we aim for hypotheses at this step that agree with the
%labels but disagree with the currently best predictor $h_\epoch$.

When all primal constraints are approximately satisfied, the algorithm
stops. Consequently, we can execute each step of
Algorithm~\ref{alg:coord} with one call to an appropriately defined
ERM oracle, and approximate primal feasibility is guaranteed when the
algorithm stops.  More specifically, we can prove the following
guarantee on the convergence of the algorithm.

\begin{theorem}
\label{thm:opt_converge}	
When run on the $\epoch$-th epoch, Algorithm~\ref{alg:coord} 
has the following guarantees.
\begin{enumerate}
\item It halts in at most $\frac{\prob(D_\epoch)}{8 \pmin^3 \varepsilon^2}$ iterations.
\item The solution $\hat{\blambda} \geq \bzero$ it 
outputs has bounded $\ell_1$ norm: $\lone{\hat{\blambda}}\leq
\prob(D_\epoch) / \varepsilon$. 
\item The query probability function $P_{\hat{\blambda}}$ satisfies: 
\begin{itemize}
\item The variance constraints
\eqref{eq:queryp} up to an additive factor of $\varepsilon$, i.e.,
\begin{equation*}
  \forall h \in \H\;\; \E_X \left[
    \frac{\bbmone(h(x) \neq h_\epoch(x) \wedge x
  \in D_\epoch)}{P_{\hat{\blambda}}(X)}\right] \leq b_\epoch(h) + \varepsilon,
\end{equation*} 
\item 
The simple bound constraints
\eqref{eq:minbndcons} exactly, 
\item Approximate primal optimality:
\begin{equation}  \label{eq:prim-obj-bnd}
	\E_X\left[ \frac{1} {1 - P_{\hat{\blambda}}(X)} \right] \leq
        %\E_X \left[\frac{1} {1 - {\Psoln}(X)} \right] +
        f^* + 4 \pmin
        \prob(D_\epoch),
\end{equation}
where $f^*$ denotes the optimal value of $(\optprob)$, i.e, 
\begin{equation}
\begin{split}
 f^* \;:= \;& \inf_P \; \E_X \left[ \frac{1}{1-P(X)} \right] \\ 
\mbox{s.t.} \quad& P \mbox{ satisfying } \eqref{eq:queryp} \mbox{ and } \eqref{eq:minbndcons}
\end{split}
\label{eq:op_value}
\end{equation}
\end{itemize}
\end{enumerate}
\end{theorem}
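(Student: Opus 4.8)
The plan is to view Algorithm~\ref{alg:coord} as coordinate ascent on the concave dual objective $\dualobj$, built around the identity that the partial derivative of $\dualobj$ along $\lambda_h$ equals the $h$-th constraint violation: a direct calculation from $q_\blambda(x)^2 = \minp^2 + \sum_{h}\lambda_h\disind{h}(x)$ and $P_\blambda(x) = q_\blambda(x)/(1+q_\blambda(x))$ on $D_\epoch$ gives
\[
  \frac{\partial \dualobj}{\partial \lambda_h}(\blambda)
  \;=\; \E_X\!\left[\frac{\disind{h}(X)}{P_\blambda(X)}\right] - b_\epoch(h),
\]
so Step~\ref{step:violated} picks the steepest ascent coordinate $\bar{h}$ and the stopping test is exactly ``largest partial derivative $\le\varepsilon$.'' First I would record the elementary facts $q_\blambda\ge\minp=2\pmin$ and $\minp/(1+\minp)\le P_\blambda(x)\le 1$ for $x\in D_\epoch$; together with $\pmin\le 1/2$ this already gives the bound-constraint part of item~3 and keeps $\dualobj$ finite along the whole run. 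I would also note $\reg(h,h_\epoch,\ztil_{\epoch-1})\ge 0$ (as $h_\epoch$ is an ERM on $\ztil_{\epoch-1}$), hence $b_\epoch(h)\ge 2\alpha^2\E_X[\disind{h}(X)]\ge 0$, which makes the primal point $P\equiv\max(1/(2\alpha^2),\pmin)$ on $D_\epoch$ feasible for $(\optprob)$ and yields $f^*\le 1+\prob(D_\epoch)$.

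The heart of the argument is a progress lemma for a single coordinate step. Fixing $\blambda$ and $\bar{h}$, set $\phi(t)=\dualobj(\blambda+t\,e_{\bar{h}})$ and $r(x)=q_\blambda(x)^2$. Then $\phi'(0)=g$, the current violation of $\bar{h}$; $\phi''(t)=-\tfrac12\E_X[\disind{\bar{h}}(X)(r(X)+t)^{-3/2}]$ is $\le 0$ and nondecreasing on $t\ge 0$, so $\phi(t)\ge\phi(0)+gt-\tfrac14 H t^2$ with $H:=\E_X[\disind{\bar{h}}(X)/q_\blambda(X)^3]$. The maximizer of this quadratic lower bound is $t^\star=2g/H$---precisely the update in Step~\ref{step:coord-update}---and it certifies a $\dualobj$-increase of at least $g^2/H$. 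Since the rescaling step never decreases $\dualobj$, every non-terminating iteration raises $\dualobj$ by at least $g^2/H$; on such an iteration $g>\varepsilon$ while $H\le\E_X[\disind{\bar{h}}(X)]/\minp^3\le\prob(D_\epoch)/(8\pmin^3)$, so the per-iteration increase is at least $8\pmin^3\varepsilon^2/\prob(D_\epoch)$.

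For item~1, I would bound the total increase of $\dualobj$ over the run: it starts at $\dualobj(\bzero)$ and, by weak duality applied to the optimal $(\optprob)$-solution $P^*$ (which has $P^*\ge\pmin$ on $D_\epoch$), never exceeds $f^*+\minp^2\prob(D_\epoch)/\pmin = f^*+4\pmin\prob(D_\epoch)$; substituting the value of $\dualobj(\bzero)$, $f^*\le 1+\prob(D_\epoch)$ and $\minp=2\pmin$, the total increase is at most $\prob(D_\epoch)$, and dividing by the per-iteration increase gives the claimed iteration count. For item~2, the update size is $2g/H=(2/g)(g^2/H)\le(2/\varepsilon)\cdot(\text{$\dualobj$-increase on that iteration})$, and since rescaling only shrinks $\lone{\blambda}$, summing over iterations bounds $\lone{\hat\blambda}$ by $(2/\varepsilon)$ times the total increase, i.e.\ $\order(\prob(D_\epoch)/\varepsilon)$. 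Item~3(a) is immediate from the stopping test. For item~3(c) I would use $\dualobj(\hat\blambda)=\mathcal{L}(P_{\hat\blambda},\hat\blambda)$, observe that the barrier term in $\mathcal{L}$ is nonnegative and the Lagrange-penalty term is $\le 0$ because the final rescaling forces $\nabla\dualobj(\hat\blambda)\cdot\hat\blambda\ge 0$, and conclude $\E_X[1/(1-P_{\hat\blambda}(X))]\le\dualobj(\hat\blambda)\le f^*+4\pmin\prob(D_\epoch)$.

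The main obstacle is the progress lemma: verifying that the Newton-type step prescribed in Step~\ref{step:coord-update} coincides with the maximizer of the quadratic lower bound coming from monotonicity of $\phi''$, and that the curvature $H$ is controlled solely through the floor $q_\blambda\ge\minp$. Everything else---the weak-duality range bound, the constant bookkeeping tying $\minp$ to $\pmin$, the line-search (rescaling) optimality condition, and the degenerate cases where rescaling returns $s=0$ or a candidate has $H=0$---is routine once that lemma is in hand.
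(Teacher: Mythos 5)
Your proposal is correct and follows essentially the same route as the paper's proof: a per-iteration progress lemma showing the Newton-style step of Step~\ref{step:coord-update} increases $\dualobj$ by at least $g^2/H \geq \varepsilon^2\minp^3$ (the paper derives the same quadratic lower bound via $\sqrt{1+z}\geq 1+z/2-z^2/8$ rather than monotonicity of $\phi''$), a weak-duality cap on the total increase of $\dualobj$ (the paper uses the feasible point $P\equiv 1/2$; your $f^*+4\pmin\prob(D_\epoch)$ bound gives the same $\leq\prob(D_\epoch)$ budget), and the rescaling first-order condition $\nabla\dualobj(\hat\blambda)\cdot\hat\blambda\geq 0$ to pass from $\dualobj(\hat\blambda)$ to the primal objective. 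The only discrepancy is the factor of $2$ in your $\ell_1$ bound, which traces to the guaranteed increase being $\delta g/2$ rather than $\delta g$ — a constant the paper's own accounting also elides — and is immaterial.
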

That is, we find a solution with small constraint violation to ensure
generalization, and a small objective value to be label efficient. 
If
$\varepsilon$ is set to $\slackconst
\epend[\epoch-1]\Delta_{\epoch-1}^2$, an amount of constraint
violation tolerable in our analysis, the number of iterations in
Theorem \ref{thm:opt_converge} varies between
$\order(\epend[\epoch-1]^{3/2})$ and $\order(\epend[\epoch-1]^2)$ as
the $\err(h_\epoch,\ztil_{\epoch-1})$ varies between a constant and
$\order(1/\epend[\epoch-1])$. The theorem is proved in
Appendix~\ref{sec:opt-pop}.

\subsection{Solving (\optprob) with expectation over samples}
So far we considered solving $(\optprob)$ defined on the unlabeled
data distribution $\distx$, which is not available in practice.
A simple and natural substitute for $\distx$ is an i.i.d. sample drawn from it. 
Here we show that solving a properly-defined sample variant of (\optprob) leads to 
a solution to the original $(\optprob)$ with similar guarantees 
as in Theorem \ref{thm:opt_converge}.

More specifically, we define the following sample variant of $(\optprob)$.
Let $S$ be a large sample drawn i.i.d. from $\distx$, 
and $(\optprob_S)$ be the
same as $(\optprob)$ except with all population expectations
replaced by empirical expectations taken with
respect to $S$. Now for any $\samperr\geq0$, define
$(\optprob_{S,\samperr})$ to be the same as $(\optprob_S)$ except 
that the variance constraints~\eqref{eq:queryp} are relaxed by an additive slack of $\samperr$.

Every time \alglong needs to solve $(\optprob)$ (Step \ref{alg:main:P_update} of Algorithm \ref{alg:main}), 
it draws a fresh unlabeled i.i.d. sample $S$ of size $\unlab$ from $\distx$, which can be done easily  
in a streaming setting by collecting the next $u$ examples. It then 
applies Algorithm~\ref{alg:coord} to solve $(\optprob_{S,\samperr})$
with accuracy parameter $\samperr$. Note that this is different from solving $(\optprob_S)$  
with accuracy parameter $2\varepsilon$. 
We establish the following convergence guarantees.
%and establish the following convergence 
%guarantee.
\begin{theorem}
  \label{thm:optprob-unlabeled}
%  There exists an absolute constant $C>0$ such that the following
%  holds.
  Let $S$ be an i.i.d.~sample of size $\unlab$ from $\distx$.  
When run on the $\epoch$-th epoch for solving $(\optprob_{S,\samperr})$ with accuracy parameter
  $\samperr$, Algorithm~\ref{alg:coord} satisfies the following. 
\begin{enumerate}
\item It halts in at most $\frac{\widehat{\prob}(D_\epoch)}{8 \pmin^3 \varepsilon^2}$ iterations, where $\widehat{\prob}(D_\epoch) := \sum_{X \in S} \bbmone(X \in D_\epoch) / \unlab$.
\item The solution $\hat{\blambda} \geq \bzero$ it 
outputs has bounded $\ell_1$ norm: $\lone{\hat{\blambda}}\leq
\widehat{\prob}(D_\epoch) / \varepsilon$. 
\item If $\unlab \geq \order ((1/(\pmin\samperr)^4 +
  \alpha^4/\samperr^2) \log(|\H|/\delta))$, then with probability
  $\geq~1-\delta$, the query probability function $P_{\hat{\blambda}}$ satisfies: 
\begin{itemize}
\item All constraints of $(\optprob)$ except with an
  additive slack of $2.5\samperr$ in the variance constraints~\eqref{eq:queryp}, 
\item Approximate primal optimality:
  \begin{equation*}
    \E_X \Brackets{ \frac{1}{1 - P_{\hat{\blambda}}(X)} } \ \leq
    \ %\E_X \Brackets{ \frac{1}{1 - {\Psoln}(X)} } 
    f^* + 8 \pmin
    \prob(D_\epoch) + \parens{ 2 + 4 \pmin} \samperr, 
  \end{equation*}
%where $\Psoln$ is the solution to $(\optprob)$.
where $f^*$ is the optimal value of $(\optprob)$ defined in \eqref{eq:op_value}.
\end{itemize}
\end{enumerate}
\end{theorem}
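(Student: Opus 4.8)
The plan is to treat $(\optprob_{S,\samperr})$ as a relabeled instance of $(\optprob)$ and then perform a statistical transfer from the sample to the population. Observe that $(\optprob_{S,\samperr})$ is exactly $(\optprob)$ with two substitutions: the marginal $\distx$ is replaced by the empirical distribution $\widehat\dist_S$ putting mass $1/\unlab$ on each point of $S$, and each bound $b_\epoch(h)$ is replaced by $b_\epoch(h)+\samperr$. All remaining ingredients --- $\pmin$, $\minp=2\pmin$, the last two terms of $b_\epoch(h)$, the objective, the dual~\eqref{eqn:dual-obj}, and the updates of Algorithm~\ref{alg:coord} --- depend only on $\ztil_{\epoch-1}$ and the fixed constants, hence are unchanged. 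So Theorem~\ref{thm:opt_converge} applies verbatim to this instance with $\prob(D_\epoch)$ replaced by $\widehat\prob(D_\epoch)$. This yields at once the first two claims, and, for the output $P_{\hat\blambda}$, the following \emph{empirical} facts: the bound constraints~\eqref{eq:minbndcons} hold exactly and pointwise on all of $\dataspace$ (because $P_{\hat\blambda}$ has the closed form~\eqref{eqn:primal-dual} with $q_{\hat\blambda}(x)\ge\minp=2\pmin$, regardless of $S$); for every $h\in\H$, $\widehat\E_S[\disind{h}(X)/P_{\hat\blambda}(X)]\le b^S_\epoch(h)+2\samperr$, where $b^S_\epoch(h)$ denotes $b_\epoch(h)$ with $\E_X[\disind{h}]$ replaced by $\widehat\E_S[\disind{h}]$; and $\widehat\E_S[1/(1-P_{\hat\blambda}(X))]\le f^*_{S,\samperr}+4\pmin\widehat\prob(D_\epoch)$, where $f^*_{S,\samperr}$ is the optimal value of $(\optprob_{S,\samperr})$.

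The heart of the argument is a uniform deviation bound over the restricted family $\cP:=\{P_\blambda:\blambda\ge\bzero,\ \lone{\blambda}\le 1/\samperr\}$, which contains $P_{\hat\blambda}$ by the second claim. Functions in $\cP$ are tame: writing $P_\blambda(x)=\bbmone(x\in D_\epoch)\,q_\blambda(x)/(1+q_\blambda(x))$ with $q_\blambda(x)=\sqrt{\minp^2+\sum_h\lambda_h\disind{h}(x)}$, the inner sum is a nonnegative combination of the $|\H|$ Boolean functions $\disind{h}$ of $\ell_1$ mass at most $1/\samperr$, and the two statistics that matter --- $1/(1-P_\blambda)=1+q_\blambda$ and $\disind{h}/P_\blambda=\disind{h}(1+1/q_\blambda)$ --- are images of that sum under maps that are Lipschitz (constants $\order(\pmin^{-1})$ and $\order(\pmin^{-3})$) and bounded ($\order(\samperr^{-1/2})$ and $\order(\pmin^{-1})$) on the range they can reach, the latter precisely because $q_\blambda\ge2\pmin$. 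A Maurey-type covering bound for the scaled $\ell_1$-hull of the $\disind{h}$, composed with these maps, bounds the metric entropy of the relevant classes by $\otil(\log|\H|)$, and a Bernstein/Hoeffding uniform-convergence bound then gives: with probability $\ge1-\delta$, for all $P\in\cP$ and $h\in\H$ simultaneously, both $|\E_X[\disind{h}/P]-\widehat\E_S[\disind{h}/P]|\le\samperr/4$ and $|\E_X[1/(1-P)]-\widehat\E_S[1/(1-P)]|\le\samperr/4$; this is the source of the $\order(\pmin^{-4}\samperr^{-4}\log(|\H|/\delta))$ term in $\unlab$. A separate elementary Hoeffding-and-union-bound over the fixed Boolean functions gives $|\E_X[\disind{h}]-\widehat\E_S[\disind{h}]|\le\samperr/(8\alpha^2)$ for all $h$ and $\widehat\prob(D_\epoch)\le2\prob(D_\epoch)$, needing $\order(\alpha^4\samperr^{-2}\log(|\H|/\delta))$ samples and contributing the second term.

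Granting these, both population conclusions of the third claim follow by bookkeeping of the additive slacks. For the variance constraints: $\E_X[\disind{h}/P_{\hat\blambda}]\le\widehat\E_S[\disind{h}/P_{\hat\blambda}]+\samperr/4\le b^S_\epoch(h)+2\samperr+\samperr/4$, and since the two versions of $b_\epoch$ differ only through $2\alpha^2\E_X[\disind{h}]$ versus $2\alpha^2\widehat\E_S[\disind{h}]$, $b^S_\epoch(h)\le b_\epoch(h)+\samperr/4$; hence $\E_X[\disind{h}/P_{\hat\blambda}]\le b_\epoch(h)+2.5\samperr$, while the bound constraints hold exactly. For primal optimality, note first that $(\optprob)$ admits a near-uniform feasible point, so $f^*\le3$; let $P^\dagger$ be a mild truncation of a near-optimal feasible $P$ of $(\optprob)$, capped away from $1$ at level $1-\kappa$ with $\kappa\asymp\sqrt{\samperr}$ --- this truncation cannot increase the objective and, using $f^*\le3$ and Markov's inequality, increases the left side of each variance constraint by only $\order(\samperr)$, while making $1/(1-P^\dagger)\le\order(\samperr^{-1/2})$ everywhere. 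Since $P^\dagger$ is a fixed, bounded function, its empirical statistics concentrate (single-function Hoeffding, union over $\H$), so $P^\dagger$ is feasible for $(\optprob_{S,\samperr})$ and $\widehat\E_S[1/(1-P^\dagger)]\le f^*+\order(\samperr)$; this is exactly why \alglong solves the $\samperr$-relaxed problem rather than $(\optprob_S)$, for which no such feasible near-optimal point would be available. Therefore $f^*_{S,\samperr}\le f^*+\order(\samperr)$, and combining with the empirical primal bound, the uniform bound on the objective, and $\widehat\prob(D_\epoch)\le2\prob(D_\epoch)$, $\E_X[1/(1-P_{\hat\blambda})]\le\widehat\E_S[1/(1-P_{\hat\blambda})]+\samperr/4\le f^*_{S,\samperr}+4\pmin\widehat\prob(D_\epoch)+\samperr/4\le f^*+8\pmin\prob(D_\epoch)+(2+4\pmin)\samperr$, with slack to spare.

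The main obstacle is the second paragraph: getting the complexity of the query-function family $\cP$ small enough to yield the claimed $\pmin^{-4}\samperr^{-4}$ sample-size dependence rather than something worse. This is the only step where both parts of Theorem~\ref{thm:opt_converge} are genuinely used --- the $\ell_1$ bound restricts $\cP$, and the uniform lower bound $q_{\hat\blambda}\ge2\pmin$ keeps the nonlinearities $t\mapsto\sqrt{\minp^2+t}$ and $t\mapsto1/\sqrt{\minp^2+t}$ Lipschitz and bounded over the relevant range. Everything else --- the single-function Hoeffding bounds, the truncation estimate for $P^\dagger$, and the arithmetic of combining $\order(\samperr)$ slacks --- is routine.
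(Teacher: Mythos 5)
Your proposal is correct and follows the paper's overall strategy: invoke Theorem~\ref{thm:opt_converge} verbatim on the sample instance to get parts 1--2 and the empirical guarantees, then prove a uniform deviation bound over the dual family $\{P_{\blambda} : \blambda \geq \bzero,\ \lone{\blambda} \leq \order(1/\samperr)\}$ (the paper does this via Rademacher complexity of the $\ell_1$-ball composed with the Lipschitz maps $z \mapsto \sqrt{\minp^2+z}$ and $z\mapsto (\minp^2+z)^{-1/2}$, Lemmas~\ref{lem:uniform} and~\ref{lem:optprob-uniform}; your Maurey-type covering argument is an interchangeable substitute), and finally transfer constraints and objective between sample and population with $\order(\samperr)$ slacks. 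The one genuinely different ingredient is the witness used to upper-bound the optimal value of $(\optprob_{S,\samperr})$. You build it by truncating a near-optimal population-feasible $P$ at level $1-\kappa$ with $\kappa \asymp \sqrt{\samperr}$ and appealing to $f^* \leq 3$, Markov's inequality, and single-function Hoeffding. The paper instead takes $\blambda^*$ to be the output of Algorithm~\ref{alg:coord} run on the \emph{population} problem with accuracy $\samperr/2$: by Theorem~\ref{thm:opt_converge} this witness automatically lies in $\Lsetbig$, satisfies the population constraints with slack $\samperr/2$, and has objective at most $f^* + 4\pmin\prob(D_\epoch)$, so its transfer to the sample is already covered by the same uniform deviation lemma --- no truncation, no Markov step, and no separate concentration argument are needed. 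Your route works but is heavier; it also buys a slightly different intermediate bound ($f^*_{S,\samperr} \leq f^* + \order(\samperr)$ versus the paper's $f^* + 4\pmin\E_X[\ind{X\in D_\epoch}] + \samperr$). Two small points to tighten: the multiplicative claim $\widehat{\prob}(D_\epoch) \leq 2\prob(D_\epoch)$ does not follow from an additive Hoeffding bound when $\prob(D_\epoch)$ is small --- use $\widehat{\prob}(D_\epoch) \leq \prob(D_\epoch) + \samperr$ as the paper does, which still yields the stated $(2+4\pmin)\samperr$ term; and your feasibility bookkeeping for $P^\dagger$ in $(\optprob_{S,\samperr})$ must keep the total of the truncation and concentration slacks at or below $\samperr$, not merely $\order(\samperr)$, which requires fixing the constants in $\kappa$ and in the Hoeffding step rather than leaving them implicit.
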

The proof is in Appendix \ref{sec:opt-samp}. Intuitively, the optimal
solution $\Psoln$ to $(\optprob)$ is also feasible in $(\optprob_{S,
  \samperr})$ since satisfying the population constraints leads to
approximate satisfaction of sample constraints. Since our solution
$P_{\hat{\blambda}}$ is approximately optimal for $(\optprob_{S,
  \varepsilon})$ (this is essentially due to
Theorem~\ref{thm:opt_converge}), this means that the sample objective
at $P_{\hat{\blambda}}$ is not much larger than $\Psoln$. We now use a
concentration argument to show that this guarantee holds also for the
population objective with slightly worse constants. The approximate
constraint satisfaction in $(\optprob)$ follows by a similar
concentration argument. Our proofs use standard concentration
inequalities along with Rademacher complexity to provide uniform
guarantees for all vectors $\blambda$ with bounded $\ell_1$ norm.

The first two statements, finite convergence and boundedness of
$\lone{\hat{\blambda}}$, are identical to Theorem
\ref{thm:opt_converge} except $\prob(D_\epoch)$ is replaced by
$\widehat{\prob}(D_\epoch)$.  When $\varepsilon$ is set properly, i.e,
to be $\slackconst^2 \epend[\epoch-1]\Delta_{\epoch-1}^2$, the number
of unlabeled examples $\unlab$ in the third statement varies between
$\order(\epend[\epoch-1]^{2})$ and $\order(\epend[\epoch-1]^4)$ as the
$\err(h_\epoch,\ztil_{\epoch-1})$ varies between a constant and
$\order(1/\epend[\epoch-1])$. The third statement shows that with
enough unlabeled examples, we can get a query probability function
almost as good as the solution to the population problem $(\optprob)$.

\section{Experiments with Agnostic Active Learning}
\label{sec:experiments}
\begin{algorithm}
\caption{\coverlong}	
\label{alg:cover}
\begin{algorithmic}[1]
\renewcommand{\algorithmicrequire}{\textbf{input:}}
\REQUIRE cover size $\coversize$, parameters $c_0, \alpha$ and $\beta_{scale}$. 
\STATE Initialize online importance weighted minimization oracles $\{O_t\}_{t=0}^\coversize$, each controlling a classifier and some associated weights $\{(h_{t}, \lambda_{t},\nu_{t}, \omega_{t})\}_{t=1}^\coversize$ with all  
weights initialized to 0.
\STATE For the first three examples $\{X_i\}_{i=1}^3$, query the labels $\{Y_i\}_{i=1}^3$. 
\STATE Let $h := O_0(\{(X_i,Y_i,1)\}_{i=1}^2)$.
\STATE Get error estimate $e_2$ from $O_0$ and compute $P_{\min,3}$. 
\STATE Let $(X, Y^*, \tilde{Y}, W) := (X_3, Y_3, h(X_3), 1)$. Set $\beta := (\sqrt{\alpha/c_0})/ \beta_{scale}$. 
\FOR{$i = 4,\ldots,n,$}
\STATE Update the ERM, the error estimate and the threshold \label{alg:cover:erm_update} 
\begin{eqnarray*}
	h &:=& O_0((X,Y^*,W)),\\
	e_{i-1} &:=& \frac{(i-2) e_{i-2} + \bbmone(\tilde{Y} \neq Y^*)W}{i-1},\\
        \widehat{\Delta}_{i-1} &:=& \sqrt{c_0e_{i-1}/(i-1)} + \max(2\alpha,4)c_0 \log(i-1)/(i-1).
\end{eqnarray*}
\FOR{$t = 1,\ldots,\coversize$} \label{alg:cover:update_begin}
\STATE Compute $p_t := q_t / (1 + q_t)$, where $q_t := \sqrt{(2P_{\min,i-1})^2 + \sum_{t' < t} \lambda_t \bbmone(h_t(X) \neq \tilde{Y}))}$.
%$\blambda_{t} := \sum_{t' < t} \lambda_{(t')} \bone_{h_{(t')}}$.
%\STATE $S_t := 2 \alpha^2 - 1 / P_{\blambda_t,i-1}(X_{i-1})$. \label{alg:main_online:s}
%\STATE $s := 2 \alpha^2 - (1+q) / q$. \label{alg:main_online:s}
%\STATE $r := 2 \beta^2 (i-2)\widehat{\Delta}_{i-2}$.
%\STATE Set up a vector $\bc$ \label{alg:main_online:cost}
%\begin{align}
%	& \bc_y  := 
%s \bbmone(X_{i-1} \in D_{i-1})\bbmone(y \neq \tilde{Y}_{i-1}) + 
%	  r \left( \bbmone(X_{i-1} \notin D_{i-1})  \bbmone(y \neq \widetilde{Y}_{i-1})+ \bbmone(X_{i-1} \in D_{i-1}) \frac{Q_{i-1} \bbmone(y \neq Y_{i-1})}{P_{i-1}} \right). \label{eq:cover_cost} 
%\end{align}
%\STATE Compute the costs of predicting $\tilde{Y}_{i-1}$ and $-\tilde{Y}_{i-1}$:
%\begin{equation}
%\begin{split}
%	\bc_{\tilde{Y}_{i-1}}  &:=  r \bbmone(X_{i-1} \in D_{i-1}) \frac{Q_{i-1} \bbmone(\tilde{Y}_{i-1} \neq Y_{i-1})}{P_{i-1}}, \\
%	%& \bc_{\tilde{Y}_{i-1}} - \bc_{-\tilde{Y}_{i-1}}  :=  2 \bc_{\tilde{Y}_{i-1}} - s\bbmone(X_{i-1} \in D_{i-1}) - r\left( \bbmone(X_{i-1} \notin D_{i-1})  + \bbmone(X_{i-1} \in D_{i-1}) \frac{Q_{i-1}}{P_{i-1}} \right)
%	\bc_{-\tilde{Y}_{i-1}} &:=  r \Big(\bbmone(X_{i-1} \in D_{i-1}) \frac{Q_{i-1} \bbmone(-\tilde{Y}_{i-1} \neq Y_{i-1})}{P_{i-1}}  + \bbmone(X_{i-1} \notin D_{i-1})\Big) + (2\alpha^2 - (1+q_t)/q_t) \bbmone(X_{i-1} \in D_{i-1}). 
%\end{split}
%\label{eq:cover_cost} 
%\end{equation}
\STATE Set up the cost of predicting $y \in \{1,-1\}$, the target label and the importance weight:
\begin{eqnarray}
\bc_y &:=& 2 \beta^2 (i-2)\widehat{\Delta}_{i-2} \bbmone(y \neq Y^*)W + \left(2 \alpha^2 - \frac{1}{p_t}\right) \bbmone(X \in D_{i-1} \wedge y \neq \tilde{Y}),
\label{eq:cover_cost} \\
Y_{t} &:=& \arg \min_y \bc_y, \nonumber \\
W_{t} &:=& |\bc_{1} - \bc_{-1}|.\nonumber
\end{eqnarray}
%\STATE Set the target label and the importance weight:
%\begin{eqnarray*}
%\end{eqnarray*}
\STATE Update the $t$-th classifier in the cover and its associated weights: 
\begin{eqnarray}
	h_{t} &:=& O_t((X,Y_{t},W_{t})),\nonumber \\
	%\widetilde{Y}_t &:=& h_{t}(X_i), \nonumber \\ 	
	%\nu_{t} &:=& \max\left( \nu_{t} - 2 \Big( \bc_{h_{t}(X_{i-1})} - r \bbmone(X_{i-1} \in D_{i-1}) \frac{Q_{i-1} \bbmone(\widetilde{Y}_{i-1} \neq Y_{i-1})}{P_{i-1}} \Big) , 0 \right), \label{eq:cover:update_num}\\
	\nu_{t} &:=& \max\left( \nu_{t} + 2 \Big( \bc_{\tilde{Y}} - \bc_{h_{t}(X)}\Big) , 0 \right), \label{eq:cover:update_num}\\
	\omega_{t}&:=& \omega_{t} + \bbmone(h_{t}(X) \neq \widetilde{Y} \wedge X \in D_{i-1}) / q_t^3, \label{eq:cover:update_den}\\
	\lambda_{t} &:=& \frac{\nu_{t}}{\omega_{t}} \bbmone\big((\nu_{t},\omega_{t}) \neq (0,0)\big). \label{eq:cover:update_lambda}
\end{eqnarray}
\ENDFOR \label{alg:cover:update_end}
\STATE Receive new data point $X_i$ and let $\tilde{Y} := h(X_i)$. \label{alg:cover:query} 
\STATE Compute $P_{\min,i} := \min\big( (\sqrt{(i-1)e_{i-1}}+\log(i-1))^{-1}, 1/2\big)$.
%\STATE Set $h_{i+1} := h_i, e_{i} := e_{i-1}.$
\IF{$X_i \in D_i:= \mbox{DIS}(A_i)$,} \label{alg:cover:dis_test} 
\STATE Compute $P_i := q /(1 + q)$, where $q := \sqrt{(2P_{\min,i})^2 + \sum_{t = 1}^l \lambda_t \bbmone(h_t(X_i) \neq \tilde{Y}))}$. 
\STATE Draw $Q \sim \mbox{Bernoulli}(P)$.  
\IF{$Q = 1$}
\STATE Query $Y_i$ and set $(X, Y^*,W) := (X_i, Y_i, 1/P_i)$.
\ELSE
\STATE Set $(X, Y^*,W) := (X_i, 1,0)$.
\ENDIF
\ELSE
\STATE
Set $(X, Y^*,W) := (X_i, h(X_i), 1)$.
\ENDIF \label{alg:cover:query_end}
\ENDFOR
\end{algorithmic}
\end{algorithm}
While \alg is efficient in the number of ERM oracle calls, it needs to
store all past examples, resulting in large space complexity. As
Theorem \ref{thm:opt_converge} suggests, the query probability
function \eqref{eqn:primal-dual} may need as many as
$\order(\epend[i]^2)$ classifiers, further increasing storage
demand.  In Section \ref{sec:online_cover} we discuss 
%Aiming at a scalable implementation, we consider an online
%approximation of \alg, given in Appendix \ref{appendix:cover}.  
a scalable online approximation to \alglong, \coverlong (\cover), 
which we implemented and tested empirically with the setup in Section \ref{sec:exp_setup}. 
Experimental results and discussions are in Section \ref{sec:exp_results}. 

\subsection{Online Active Cover (\cover)}
\label{sec:online_cover}
Algorithm \ref{alg:cover} gives the online approximation that we implemented, 
which uses an epoch schedule of $\epend[i] = i$, assigning every new example to a new epoch.
%It involves some new notations which we explain below. To make explicit the 
%dependence of a query probability function on both a weight vector $\blambda$ over classifiers and the 
%current epoch, we write them as subscripts: 
%\begin{eqnarray}
%	\bq_{\blambda,i}(x) &:=& \sqrt{(2P_{\min,i})^2 + \sum_{h} \lambda_h \bbmone(h(x) \neq h_i(x))}, \\\
%	P_{\blambda,i}(x) &:=& \bbmone(x \in D_i) \frac{\bq_{\blambda,i}(x)}{ 1 + \bq_{\blambda,i}(x)}. 
%\end{eqnarray}
%We use $\bone_{h}$ for a $|\H|$-dimensional binary vector with 1 in the entry corresponding to the classifier $h$ and 0 elsewhere.

To explain the connections between Algorithms \ref{alg:main} (\alg) and \ref{alg:cover} (\cover), we start with
the update of the ERM classifier and thresholds, corresponding to Step \ref{alg:main:erm_update} of \alg 
and Step \ref{alg:cover:erm_update}  of \cover. 
Instead of batch ERM oracles, \cover invokes online importance weighted ERM oracles that are stateful 
and process examples in a streaming fashion without the need to store them.
The specific importance weighted oracle we use is a reduction to online importance-weighted logistic regression \citep{KLonline} implemented in Vowpal Wabbit (VW).
$Y^*$ denotes the actual label that is used to update the ERM classifier and, depending on the query decision (Steps \ref{alg:main:query} to \ref{alg:main:query_end}), 
can be a queried label, a predicted label by the previous ERM classifier, or a dummy label of 1 associated with an importance weight of zero.
The error variable $e_{i-1}$ keeps track of the \emph{progressive validation loss}, which is a better estimate of the true classification error
than the training error \citep{blum1999beating,cesa2004generalization}. 

Instead of computing the query probability function by solving a batch optimization problem as in Step \ref{alg:main:P_update} of \alg,  
\cover maintains a fixed number $\coversize$ of classifiers that are intended to be a cover of the set of good classifiers. 
On every new example, this cover undergoes a sequence of online, importance weighted 
updates (Steps \ref{alg:cover:update_begin} to \ref{alg:cover:update_end} of \cover),
which are meant to approximate the coordinate ascent steps in Algorithm \ref{alg:coord}.
The importance structure \eqref{eq:cover_cost} is derived from \eqref{eq:cost},  
accounting for the fact that 
the algorithm simply uses the incoming stream of examples to estimate
$\bbE_X[\cdot]$ rather than a separate unlabeled sample.
The same approximation is also present in the updates \eqref{eq:cover:update_num} and  \eqref{eq:cover:update_den}, 
which are online estimates of the numerator and the denominator of the additive coordinate update
in Step \ref{step:coord-update} of Algorithm \ref{alg:coord}. Because \eqref{eq:cover:update_num}
is an online estimate, we need to explicitly enforce non-negativity. Note that \eqref{eq:cover:update_num}
has the following straightforward interpretation: if the prediction of $h_t$, the $t$-th classifier in the cover, 
is the same as that of the ERM, the weight associated with $h_t$ will not change. Otherwise, 
the weight of $h_t$ increases/decreases when its prediction has a smaller/larger cost than the prediction of the ERM.

To further clarify the effect of \eqref{eq:cover_cost}, we perform the following case analysis:
\begin{itemize}
\item If $X_{i-1} \notin D_{i-1}$, then for all $t \in \{1,\ldots,l\}$, 
\begin{equation*}
(\bc_{\tilde{Y}},\bc_{-\tilde{Y}}) \;=\; (0, 2 \beta^2 (i-2) \widehat{\Delta}_{i-2}),
\end{equation*}
so $Y_t = \tilde{Y}$. This means that all the classifiers in the cover are trained with the predicted label when the example is outside of the disagreement region.
\item Otherwise, the costs for the $t$-th classifier in the cover are: 
\begin{eqnarray*}
(\bc_{\tilde{Y}},\bc_{-\tilde{Y}}) &=& 
\begin{cases}
(0,2\alpha^2 - 1 / p_t), & Q = 0, \mbox{i.e., the true label was not queried},\\
(0, 2\alpha^2 - 1 / p_t + 2 \beta^2 (i-2) \widehat{\Delta}_{i-2}/P_{i-1}), & Q = 1, \tilde{Y} = Y_{i-1},\\
(2 \beta^2 (i-2) \widehat{\Delta}_{i-2}/P_{i-1}, 2 \alpha^2 - 1/p_t), & Q = 1, \tilde{Y} \neq Y_{i-1}. 
\end{cases}
\end{eqnarray*}
In the first case, if  $p_t > 1/(2 \alpha^2)$, i.e., 
the query probability based on the previous $t-1$ classifiers in the cover is large enough, then $\bc_{-\tilde{Y}} > 0$ and the $t$-th classifier will be trained to agree with the predicted label. 
Otherwise, the $t$-th classifier will be trained to disagree with the predicted label, thereby increasing 
the query probability. In the second case, the true label $Y_{i-1}$ was queried and found to be the same as the predicted label, so unless 
$p_t$ is very small, the $t$-th classifier will not be trained to disagree with the ERM $h$. In the third case, the cost associated with the predicted 
label $\tilde{Y}$ is always positive, so the true label $Y_{i-1}$ will be preferred unless $p_t$ or $\alpha$ is fairly large.
\end{itemize}

Finally, Steps \ref{alg:main:query} to \ref{alg:main:query_end} of \alg 
and Steps \ref{alg:cover:query} to \ref{alg:cover:query_end} of \cover
perform the querying of labels. 
As pointed out in Section \ref{sec:optimization}, the test in Step \ref{alg:cover:dis_test} of \cover
is done via an {\em online technique} detailed in Appendix F of \citet{KLonline}.
%The
%main differences from \alg are: 1) instead of a batch ERM oracle, it
%invokes an online oracle, and 2) instead of repeatedly solving
%(\optprob) from scratch, it maintains a fixed number of classifiers
%(and hence non-zero dual variables), referred to as the cover, for
%representing the query probability function and updates the cover with
%every new example in a manner similar to the coordinate ascent
%algorithm for solving (\optprob).

\subsection{Experiment Setting}
\label{sec:exp_setup}
We conduct an empirical comparison of $\cover$ with the following active learning algorithms.
\begin{itemize}
\item \iwalzero: Algorithm 1 of \citet{BeygelHLZ10}, which performs importance-weighted 
sampling of labels and maintains an unbiased estimate of classification error. On every new example, it queries the true label with probability 1 if  
the error difference $G_k$ (Step 2 in Algorithm 1 of \citet{BeygelHLZ10}) is smaller than the threshold 
\begin{equation}
\sqrt{\frac{C_0 \log k}{k-1}} + \frac{C_0 \log k}{k-1},
\label{eq:iwal0_threshold}
\end{equation}
where $C_0$ is a hyper-parameter. Otherwise, the query probability is a decreasing function of $G_k$.

\item \iwal: A slight modification of \iwalzero that uses 
%In computing the query probability, rather than using a conservative, problem-independent threshold as \iwalzero does, \iwal
%uses the following error-dependent quantity:  
a more aggressive, error-dependent threshold:
\begin{equation}
\sqrt{\frac{C_0 \log k}{k-1} e_{k-1}} + \frac{C_0 \log k}{k-1},
\label{eq:iwal_threshold}
\end{equation}
where $e_{k-1}$ is the importance-weighted error estimate after the algorithm processes $k-1$ examples.

%\item \coverora: An Oracular-CAL \citep{hsu2010algorithms} style variant of Algorithm \ref{alg:cover}. If the test in Step \ref{alg:cover:dis_test}
%of Algorithm \ref{alg:cover} returns true, meaning the new example $X_i$ is in the current disagreement region, the query probability $P_i$ is set to 1.
%This algorithm does not need to maintain a cover, but still uses two tuning parameters $c_0$ and $\alpha$ to compute 
%the threshold \eqref{eq:cover_threshold}. 
%Note that both the generalization and label complexity guarantees (Theorems \ref{thm:regret} and \ref{thm:label}) 
%apply to this variant if a batch ERM oracle is used. 
\item \iwalorazero: An Oracular-CAL \citep{hsu2010algorithms} style variant of \iwalzero that queries the label 
of a new example with probability 1 if the error difference $G_k$ (see \iwalzero above) is smaller than the threshold
\eqref{eq:iwal0_threshold}. Otherwise, it uses the predicted label by the current ERM classifier.
\item \iwalora: An Oracular-CAL \citep{hsu2010algorithms} style variant of \iwal
that resembles \iwalorazero except that it uses the error-dependent threshold \eqref{eq:iwal_threshold}.
Note that the error estimate $e_{k-1}$ now uses 
both the queried labels and predicted labels, and is no longer unbiased. 
We remark that a theoretical analysis of this algorithm has recently
been given by~\citet{zhang2015oracular}. In fact, it is almost identical to an Oracular-CAL \citep{hsu2010algorithms} style variant of Algorithm \ref{alg:cover} 
that uses a query probability $P_i$ of 1 whenever the disagreement test in Step \ref{alg:cover:dis_test}
of Algorithm \ref{alg:cover} returns true, except that its threshold \eqref{eq:iwal_threshold} is slightly different from 
the one used by Algorithm \ref{alg:cover} (Step \ref{alg:cover:erm_update}).
%the query probability $P_i$ is set to 1.
%This algorithm does not need to maintain a cover, but still uses two tuning parameters $c_0$ and $\alpha$ to compute 
%the threshold \eqref{eq:cover_threshold}. 
%Note that both the generalization and label complexity guarantees (Theorems \ref{thm:regret} and \ref{thm:label}) 
%apply to this variant if a batch ERM oracle is used. 
%Like \iwal, it only has one hyper-parameter $C_0$.
\item \algrand: Passive learning using all the labels of incoming examples up to some label budget.
\end{itemize}

We implemented these algorithms
in Vowpal Wabbit\footnote{\url{http://hunch.net/~vw/}.} (VW), 
a fast learning system using online convex optimization, which 
fits nicely with the streaming active learning setting.
We performed experiments on 22 binary classification datasets with
varying sizes ($10^3$ to $10^6$) and diverse feature
characteristics. Details about the datasets are in Appendix
\ref{appendix:datasets}. Our goals are:
\begin{enumerate}
\item Investigating the maximal test error improvement per label query achievable by different algorithms;
\item Comparing different algorithms when each uses the best fixed hyper-parameter setting.
\end{enumerate}
We thus consider the following experiment setting. To simulate the streaming setting, 
we randomly permuted the datasets, ran the active learning algorithms 
through the first 80$\%$ of data, and evaluated the learned classifiers 
on the remaining 20$\%$. We repeated this process 9 times to reduce variance 
due to random permutation. 
For each active learning algorithm, we obtain the test error rates of classifiers trained at doubling numbers of label queries 
starting from 10 to 10240. Formally, let $\error_{a,p}(d,j,q)$ denote the test error of the classifier returned by algorithm $a$ 
using hyper-parameter setting $p$ on the $j$-th permutation of dataset $d$ under a label budget of $10\cdot 2^{(q-1)}$, $1 \leq q \leq 11$, 
and $\query_{a,p}(d,j,q)$ denote the actual number of label queries made.  
Note that under the same label budget, \cover and the Oracular-CAL variants may use more example-label pairs for learning 
than \iwalzero and \iwal because the former algorithms use predicted labels.
Also note that $\query_{a,p}(d,j,q) < 10\cdot 2^{(q-1)}$ when algorithm $a$ reaches the end of the training data before 
hitting the $q$-th label budget.
To evaluate the overall performance of an algorithm, we consider the area under its curve of test error against 
$\log$ number of label queries:
\begin{equation}
\auc_{a,p}(d,j) = \frac{1}{2} \sum_{q = 1}^{10} \Big(\error_{a,p}(d,j,q+1) +  \error_{a,p}(d,j,q)\Big) \cdot \left(\log_2 \frac{\query_{a,p}(d,j,q+1)}{\query_{a,p}(d,j,q)}\right).
\end{equation}
A good active learning algorithm has a small value of \auc, which indicates that the test error decreases quickly as the number of label queries increases.
We use a logarithmic scale for the number of label queries to focus on the performance under few label queries where active learning 
is the most relevant. More details about hyper-parameters are in
Appendix \ref{appendix:hyper-parameters}.

For the first goal, we compare the performances of different algorithms optimized on a per dataset basis. 
More specifically, we 
%For algorithm $a$, we pick the best hyper-parameter setting on dataset $d$ by maximizing the median of relative 
%test error improvements over the baseline \algrand with a default hyper-parameter setting, corresponding to a learning rate of $0.4$ (see Appendix \ref{appendix:hyper-parameters} for more details):
measure of the performance of algorithm $a$ by the following aggregated metric:
\begin{equation}
\displaystyle \aucg^*(a) := \underset{d}{\mean} \max_{p} \underset{1 \leq j \leq 9}{\median} \left\{ \frac{\auc_{base}(d,j) - \auc_{a,p}(d,j)}{\auc_{base}(d,j)} \right\},
\label{eq:auc-gain-per-data}
\end{equation}
where $\auc_{base}$ denotes the \auc \; of \algrand using a default hyper-parameter setting, corresponding to a learning rate of 0.4 (see Appendix \ref{appendix:hyper-parameters} for more details).
In this metric, we first take the median of the relative test error improvements over the \algrand baseline, which gives 
a representative performance among the 9 random permutations, and then take the maximum of the medians over hyper-parameters, and finally 
average over datasets. This metric shows the maximal gain each algorithm achieves with the best hyper-parameter setting for
each dataset. 

In practice it is difficult to select active learning hyper-parameters on a per-dataset
basis because labeled validation data are not available. With
a variety of classification datasets, a reasonable alternative might be to
look for the single hyper-parameter setting that performs the best on
average across datasets, thereby reducing over-fitting to any
individual dataset, and compare different algorithms under such fixed
parameter settings. We thus consider the following metric: 
\begin{equation}
\displaystyle \aucg(a) := \max_{p} \underset{d}{\mean} \; \underset{1 \leq j \leq 9}{\median} \left\{ \frac{\auc_{base}(d,j) - \auc_{a,p}(d,j)}{\auc_{base}(d,j)} \right\},
\label{eq:auc-gain}
\end{equation}
which first averages the median improvements over datasets and then maximizes over hyper-parameter settings.

%\begin{figure}[t]
%\centering
%\subfigure[Fraction of datasets where different methods beat random sub-sampling.]{\label{fig:win:exp}\includegraphics[width=0.45\textwidth,height=0.3375\textwidth]{./fig/win.pdf}} \quad
%\subfigure[Improvement in relative test error of \cover and \iwal from \algrand.]{\label{fig:rel_test_err_reduction}\includegraphics[width=0.45\textwidth,height=0.3375\textwidth]{./}}
%\caption{%
%Summary of results on 23 datasets. In the right figure, 
%the medians over datasets are connected across query rates, while the bars extend 
%from the 25-th to the 75-th quantile. 
%\cover outperforms \algrand on most datasets at most query rates (see
%text for more details). 
%}
%\label{fig:summary}
%\end{figure}
\subsection{Results and Discussions}
\label{sec:exp_results}
Table \ref{tbl:summary} gives a
summary of the performances of different algorithms, measured 
by the two metrics $\aucg^*$ \eqref{eq:auc-gain-per-data} and $\aucg$ \eqref{eq:auc-gain}.
When using hyper-parameters optimized on a per-dataset basis (top row in Table \ref{tbl:summary}), \cover achieves the largest improvement over the \algrand baseline, 
with \iwalorazero achieving almost the same improvement and 
other active learning algorithms improving slightly less.
%Oracular-CAL variants perform worse, but still do better than \algrand
%with the best learning rate for each dataset, which leads to an average of $9.5$\% improvement in \auc \; over the default learning rate.
When using the best fixed hyper-parameter setting across all datasets (bottom row in Table \ref{tbl:summary}), all active learning algorithms
achieve less improvement compared with \algrand, which achieves a $7$\% improvement with the best fixed learning rate.
\iwalorazero performs the best, achieving a 9\% improvement, while \iwalzero and \iwalora 
achieve more than 8\%. Both \iwal and \cover achieve around $7.5$\% improvements, slightly better than 
\algrand. This suggests that careful tuning of hyper-parameters is critical for \cover and an important direction for future work.

\begin{table}[t]
\centering
\caption{Summary of performance metrics}
\setlength{\tabcolsep}{1pt}
\begin{tabular}{l||C{50pt}|C{50pt}|C{50pt}|C{50pt}|C{50pt}|C{50pt}|C{50pt}}
  & \cover & \iwalzero & \iwal & \iwalorazero & \iwalora & \algrand \\ \hline \hline
$\aucg^*$ &\textbf{0.1611} & 0.1466 & 0.1552 & 0.1586 & 0.1549  & 0.0950\\ \hline
$\aucg$   & 0.0722 & 0.0863 & 0.0755 & \textbf{0.0945} & 0.0807 & 0.0718
\end{tabular}
\label{tbl:summary}
\end{table}
\begin{figure}[t]
\centering
\subfigure[Median over permutations]{\label{fig:rel-err-gain-median}\includegraphics[width=0.33\textwidth,height=0.25\textwidth]{./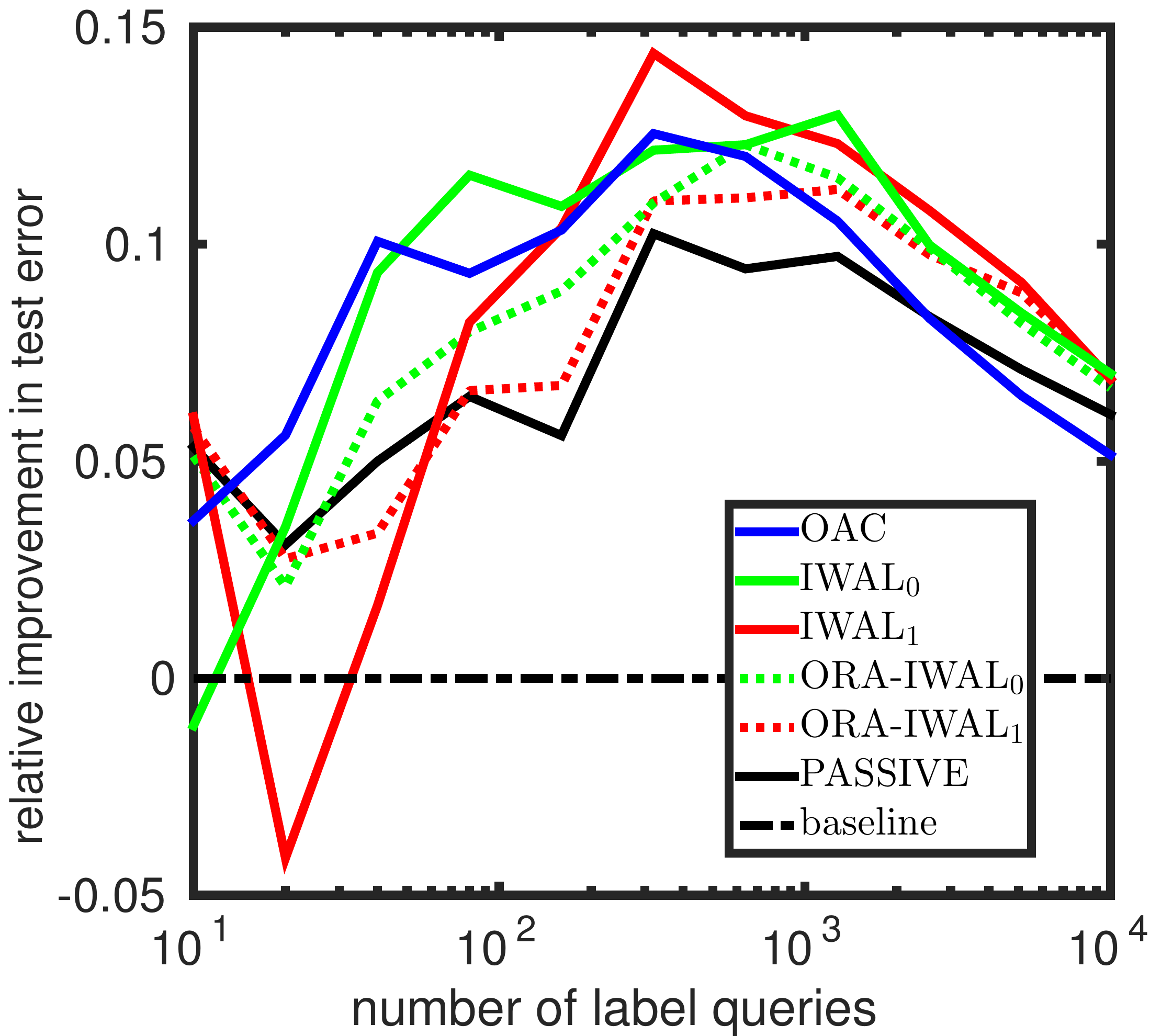}}  
\subfigure[The first permutation]{\label{fig:rel-err-gain-single}\includegraphics[width=0.33\textwidth,height=0.25\textwidth]{./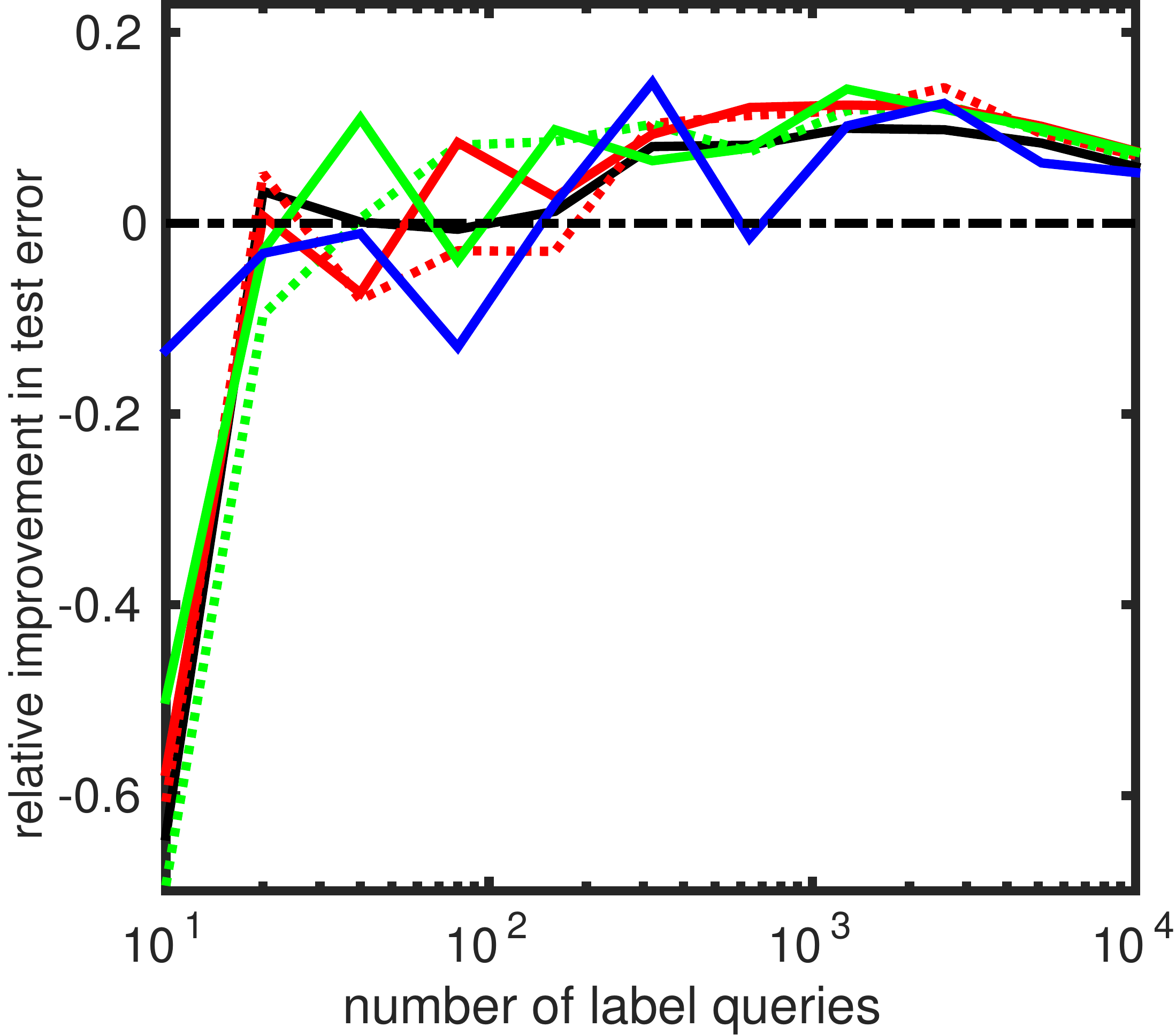}} 
\subfigure[Three quartiles over permutations]{\label{fig:rel-err-gain-quartiles}\includegraphics[width=0.33\textwidth,height=0.25\textwidth]{./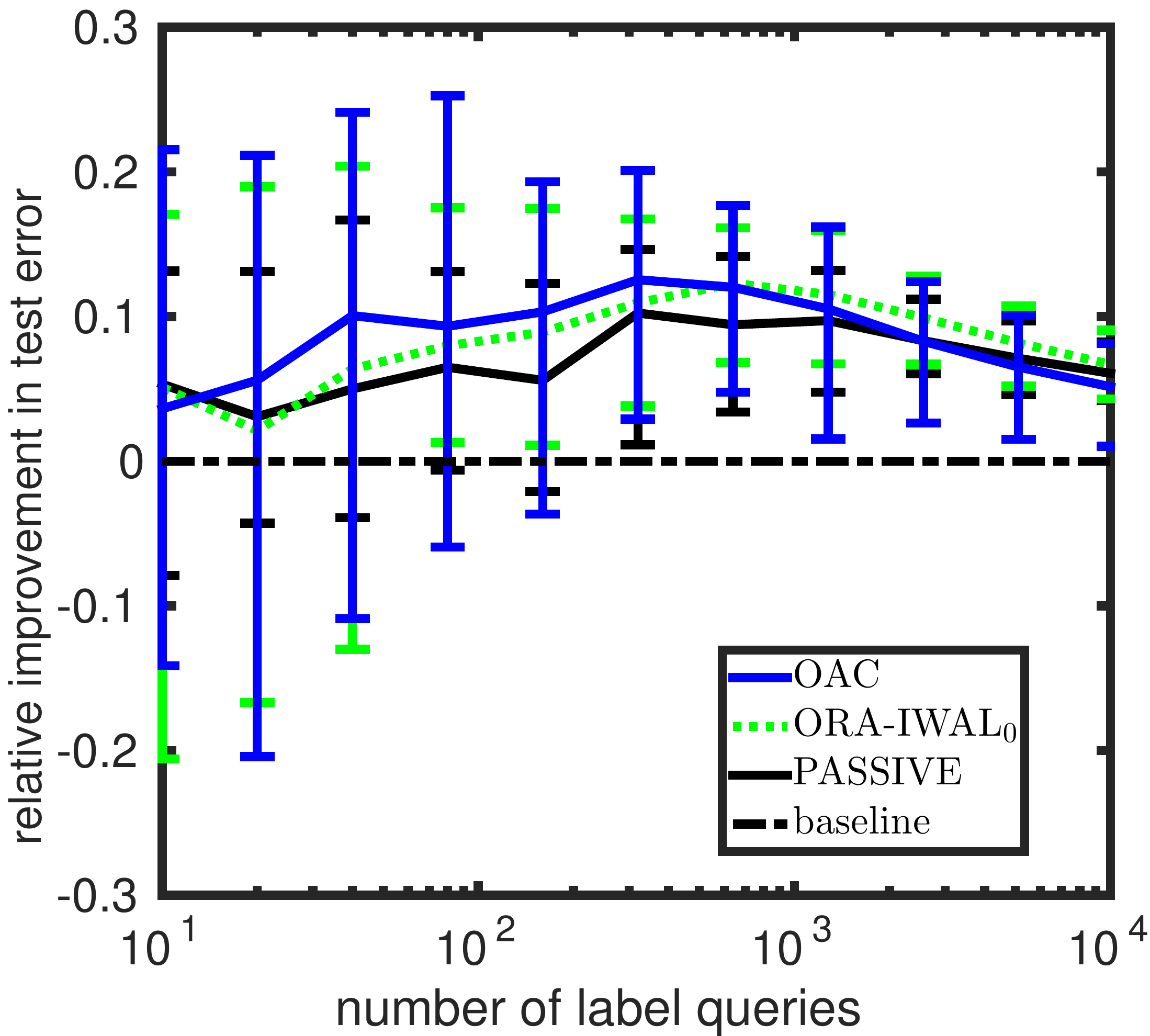}} 
\caption{Relative improvement in test error v.s. number of label queries under the best fixed hyper-parameter setting across datasets. Results are averaged over all datasets.}
\label{fig:rel-err-gain}
\end{figure}
To describe the behaviors of different algorithms in more details, we plot the relative improvement in test error
against number of label queries.
In Figure \ref{fig:rel-err-gain-median}, for each algorithm $a$ we identify the best fixed hyper-parameter setting
\begin{equation}
p^* := \arg \max_{p} \underset{d}{\mean} \; \underset{1 \leq j \leq 9}{\median} \left\{ \frac{\auc_{base}(d,j) - \auc_{a,p}(d,j)}{\auc_{base}(d,j)} \right\},
\end{equation} 
and plot the relative test error improvement by $a$ using $p^*$ averaged across all datasets at the 11 label budgets:
\begin{equation}
\left \{\left(10 \cdot 2^{(q-1)}, \underset{d}{\mean} \; \underset{1 \leq j \leq 9} \median \left\{\frac{\error_{base}(d,j,q) - \error_{a,p^*}(d,j,q)}{\error_{base}(d,j,5)}\right\}\right)\right\}_{q=1}^{11}.
\label{eq:rel-err-gain}
\end{equation}
%All algorithms, including \algrand, perform similarly during the first few hundreds of label queries.
%\iwalzero performs the best at label budgets larger than 80, while \iwal 
%does almost as well. \coverora is the next best, followed by \iwalora and 
%The latter three active learning algorithms
%query all the labels of the first few hundreds of examples, hence resulting in the same test error as \algrand.
%\iwalorazero. \cover performs worse than \algrand except at label budgets between 320 and 1280.
The two IWAL algorithms start off badly at small numbers of label queries, but 
outperform other algorithms after 100-or-so label queries.
\cover performs better than the two Oracular-CAL algorithms until a few hundred label queries, 
but becomes worse afterwards. 
%Note that the test error of the baseline (\algrand using the default learning rate) at a large number of label queries 
%is usually fairly small, so the differences in the relative improvement usually correspond to 
%very small differences in the actual test error.

To give a sense of the variation due to random permutation, we plot in Figure \ref{fig:rel-err-gain-single} average results on the first permutation 
of each dataset, i.e., instead of taking the median in \eqref{eq:rel-err-gain}, we simply took results from the first permutation. 
Figures \ref{fig:rel-err-gain} and \ref{fig:rel-err-gain-single} suggest that variation due to permuting the data is quite large,
especially for the two Oracular-CAL algorithms and \iwal. Figure \ref{fig:rel-err-gain-quartiles} gives another view that shows variation for \cover, \iwalorazero, and \algrand: 
in addition to the median improvement, we also plot error bars corresponding to the first and the third quartiles of the relative improvement over random permutations, i.e., \eqref{eq:rel-err-gain} with median replaced by the two quartiles, respectively.

\begin{figure}[t]
\centering
\subfigure[Median over permutations]{\label{fig:rel-err-gain-per-data-median}\includegraphics[width=0.33\textwidth,height=0.25\textwidth]{./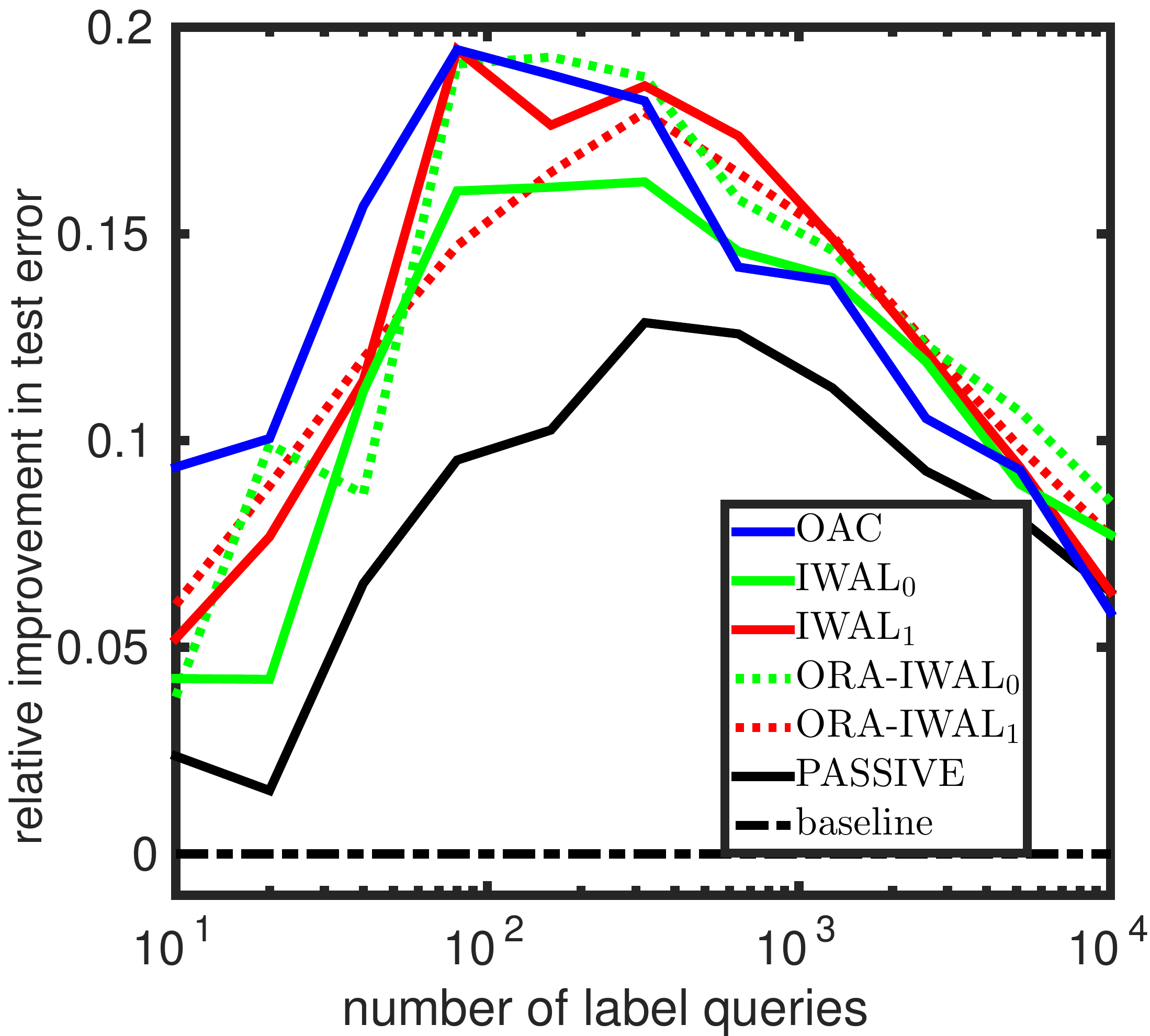}}  
\subfigure[The first permutation]{\label{fig:rel-err-gain-per-data-single}\includegraphics[width=0.33\textwidth,height=0.25\textwidth]{./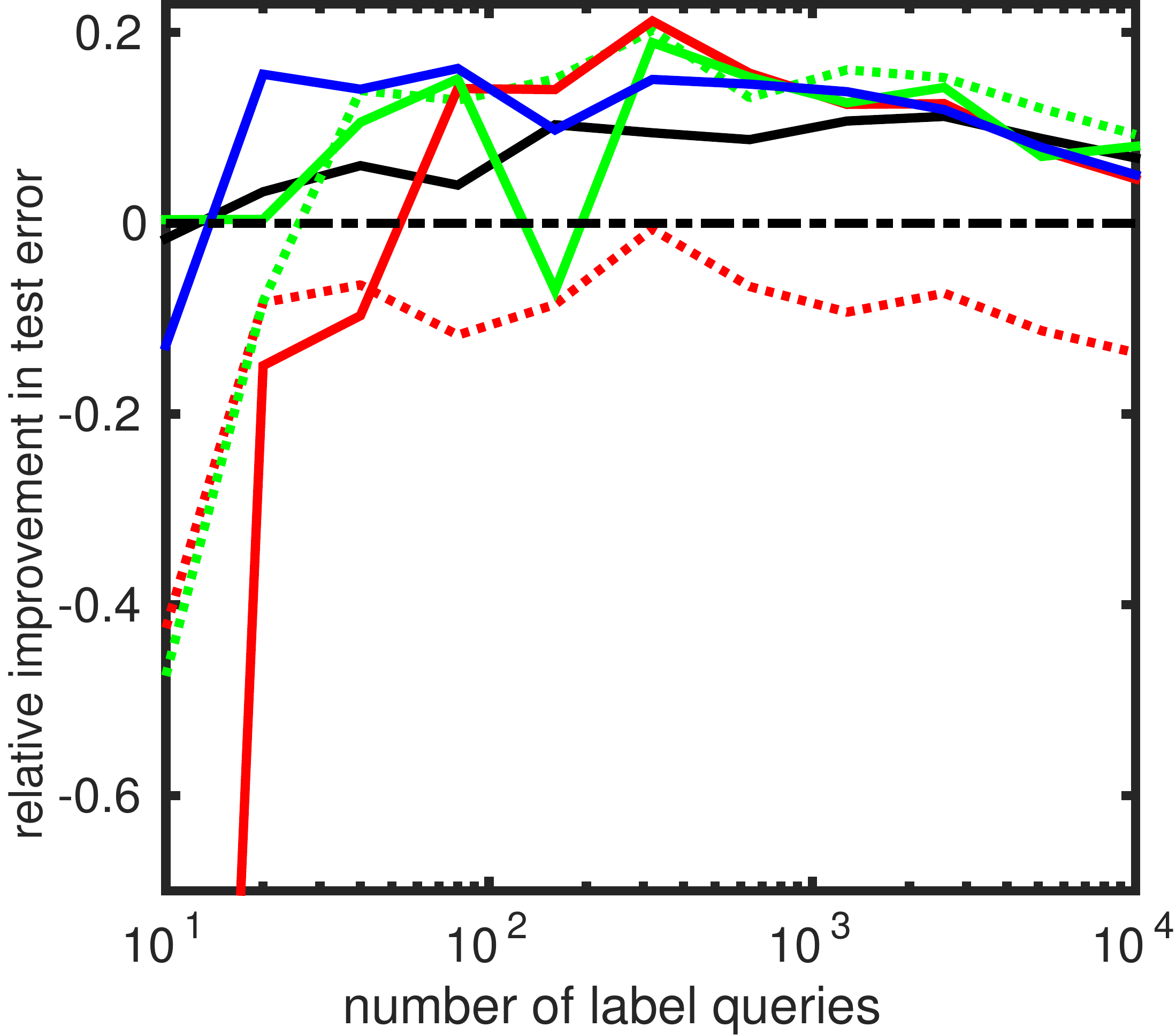}} 
\subfigure[Three quartiles over permutations]{\label{fig:rel-err-gain-per-data-quartiles}\includegraphics[width=0.33\textwidth,height=0.25\textwidth]{./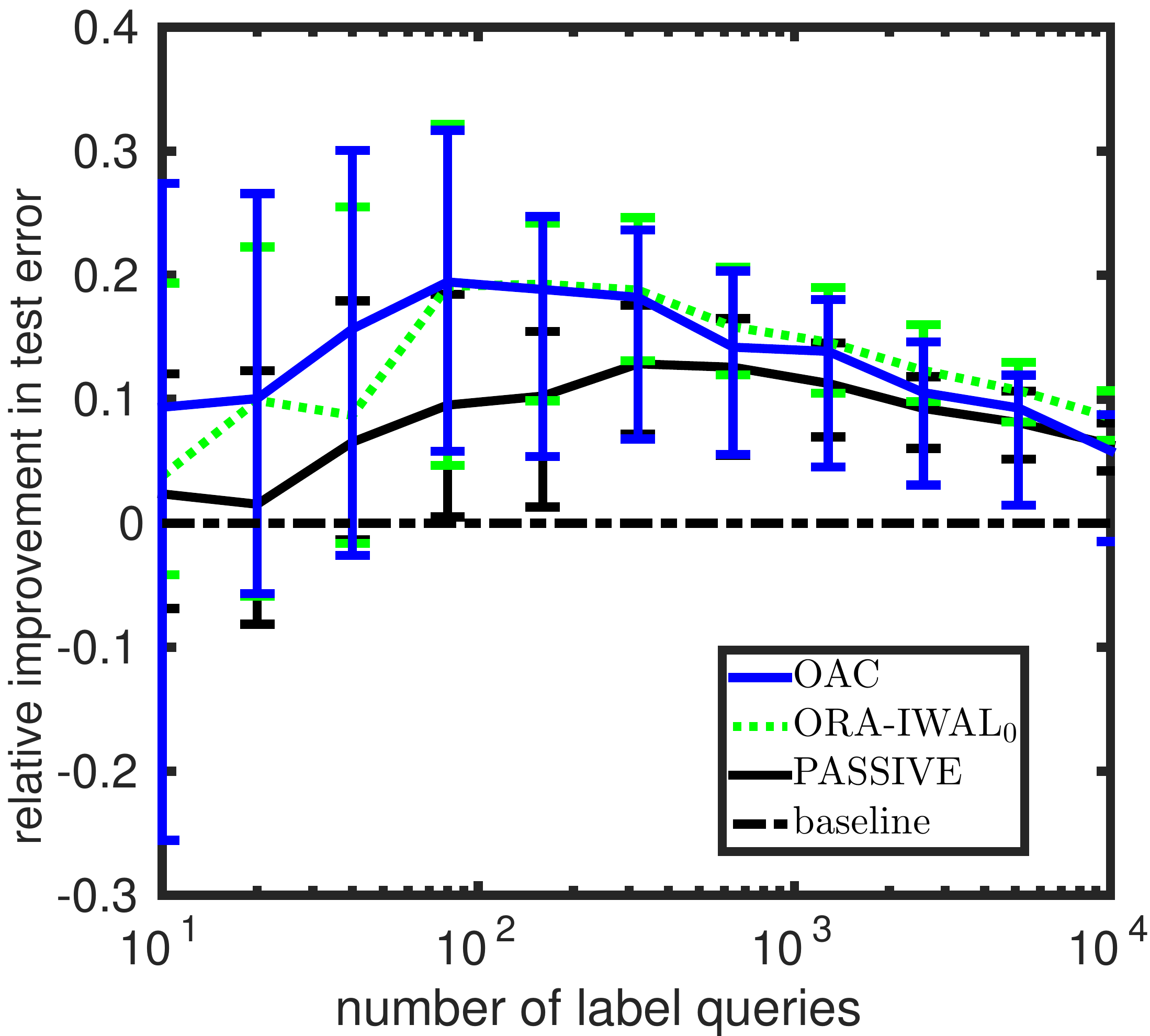}} 
\caption{Relative improvement in test error v.s. number of label queries under the hyper-parameter settings optimized on a per dataset basis. Results are averaged over all datasets.}
\label{fig:rel-err-gain-per-data}
\end{figure}
In Figures \ref{fig:rel-err-gain-per-data-median} to \ref{fig:rel-err-gain-per-data-quartiles}, 
we plot results obtained by each algorithm $a$ using the best hyper-parameter setting for each dataset $d$:
\begin{equation}
p^*_d := \arg \max_{p} \; \underset{1 \leq j \leq 9}{\median} \left\{ \frac{\auc_{base}(d,j) - \auc_{a,p}(d,j)}{\auc_{base}(d,j)} \right\}.
\end{equation} 
%and plot the relative test error improvement by $a$ using $p^*_d$ averaged across all datasets, i.e., 
%\eqref{eq:rel-err-gain} with $p^*$ and $base^*$ replaced by $p^*_d$ and $base^*_d$, which denotes the test error
%by \algrand with the best learning rate for dataset $d$. 
%\cover dominates all other algorithms at label budgets
%between 320 and 5120. At other label budgets, \cover is comparable with other algorithms. At label budgets lower than 320,
%\cover, \iwalzero and \iwal perform better than their Oracular-CAL counter-parts, but the gap closes 
%at larger label budgets. 
As expected, all algorithms perform better by using the best hyper-parameter setting for each dataset. 
Note that \cover performs the best at small numbers of label queries, but 
after a few hundred label queries all active learning algorithms perform quite similarly. 
%
%Again, because the test error of the baseline 
%becomes quite small at large numbers of label queries, by performing well at few label queries \cover 
%still achieves the best AUC gain.
%, but \cover benefits the most from that. 
\begin{figure}[t]
\centering
\includegraphics[width=0.45\textwidth,height=0.34\textwidth]{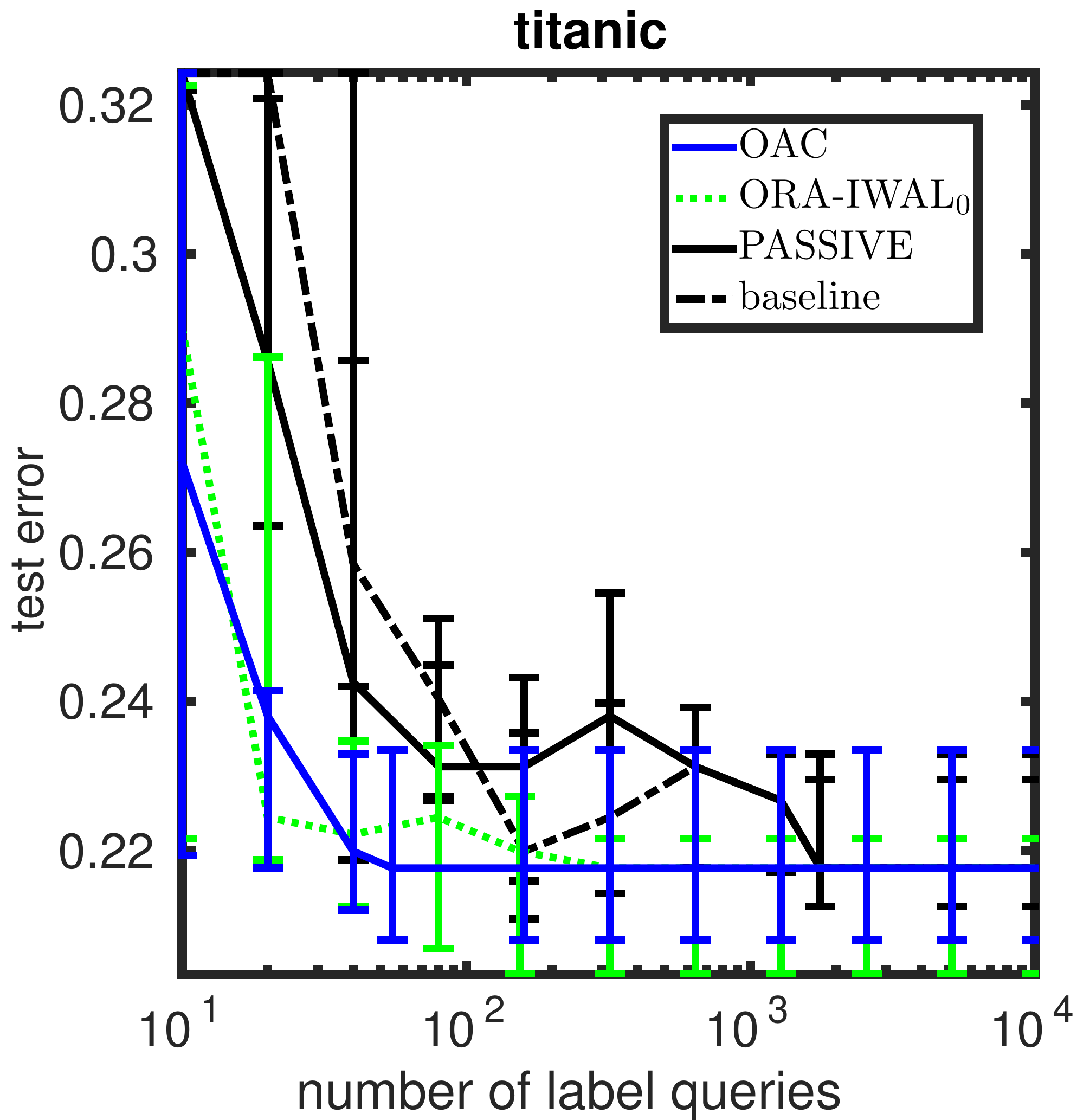} \qquad
\includegraphics[width=0.45\textwidth,height=0.34\textwidth]{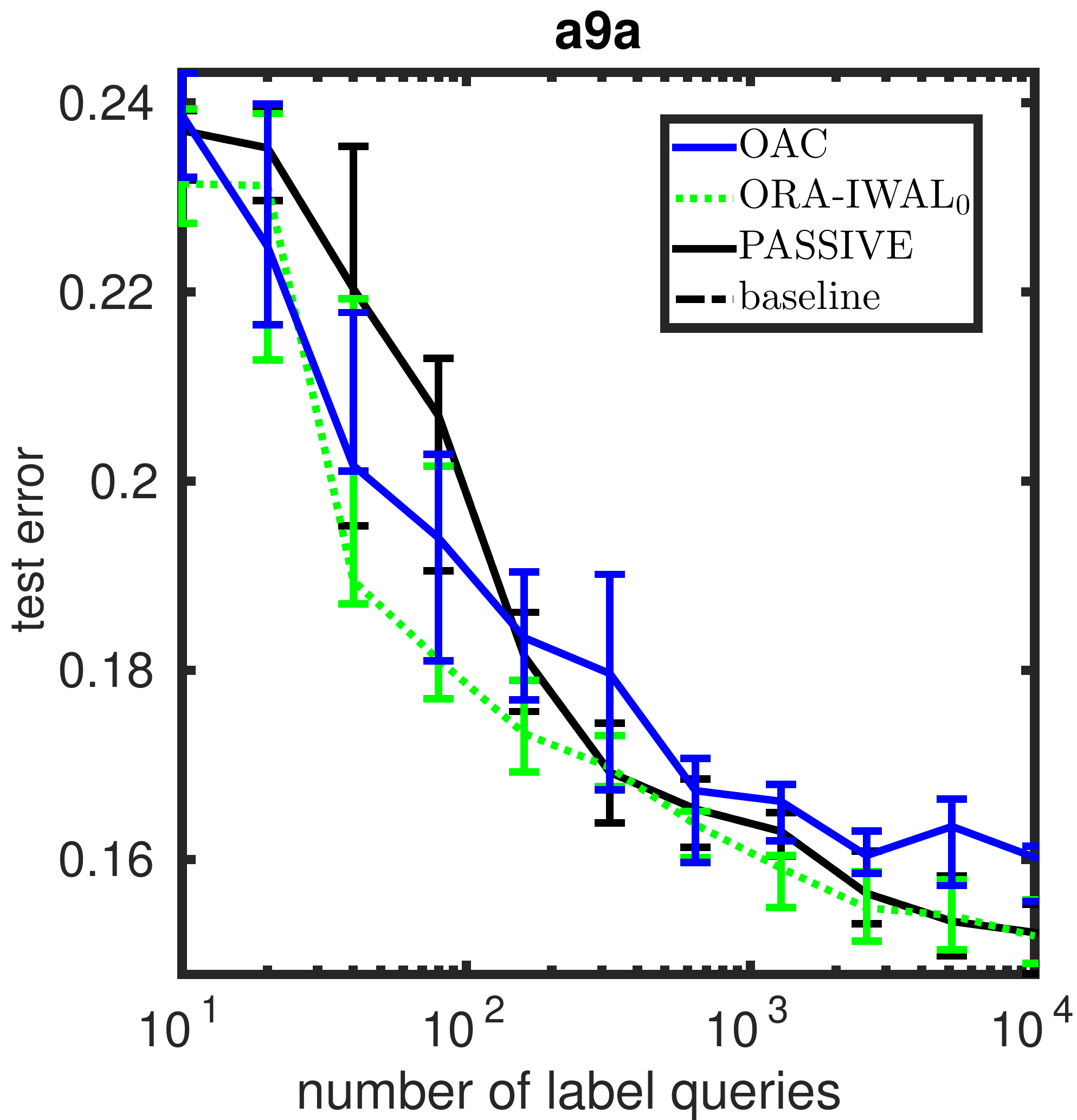}
\caption{Test error under the best hyper-parameter setting for each dataset v.s. number of label queries}
\label{fig:test-err-examples}
\end{figure}

Finally in Figure \ref{fig:test-err-examples}, we show the test error rates obtained by \cover, \iwalorazero, and \algrand 
against number of label queries for 2 of the 22 datasets, using the best hyper-parameter setting for each dataset. 
Results for all datasets and all algorithms are in Appendix \ref{appendix:results}.

In sum, when using the best fixed hyper-parameter setting, \iwalorazero outperforms other active learning algorithms. 
When using the best hyper-parameter setting tuned for each dataset, \cover and \iwalorazero 
perform equally well and better than other algorithms. 

\section{Analysis of generalization ability}
\label{proofs-regret}

In this section we present the main framework and analysis for the
results on the generalization properties of the \alglong
algorithm. Our analysis is broken up into several steps. We start by
setting up some additional notation for the proofs. Our analysis
relies on two deviation bounds for the empirical regret and the
empirical error of the ERM classifier. These are obtained by
appropriately applying Freedman-style concentration bounds for
martingales. Both these bounds depend on the variance and range of our
error and regret estimates for all classifiers $h \in \H$, and these
quantities are controlled using the constraints~\eqref{eq:queryp}
and~(\ref{eq:minbndcons}) in the definition of the optimization
problem (\optprob). Since our data consists of examples from different
epochs, which use different query probabilities $P_\epoch$, the above
steps with appropriate manipulations yield bounds for the epoch
$\epoch$, in terms of various quantities involving the previous
epochs. Theorem~\ref{thm:regret} and its corollaries are then obtained
by setting up appropriate inductive claims. We make this intuition
precise in the following sections.

\subsection{Framework for generalization analysis}
\label{sec:regret-prelim}

Before we can prove our main results, we recall some notations and
introduce a few additional ones. We also prove some technical
lemmas in this section which are used to prove our main results.

Recall the notation $\mbox{reg}(h,h') := \mbox{err}(h) -
\mbox{err}(h'), h^* \in \arg \min_{h \in \mathcal{H}} \mbox{err}(h)$,
$\reg(h) := \reg(h,h^*)$. Let $Z_\epoch$ denote the set of
importance-weighted examples in $\ztil_\epoch$, and the corresponding
empirical error is denoted as:

\begin{equation}
\err(h,Z_\epoch) := \frac{1}{\epend} \sum_{j=1}^\epoch
\sum_{i=\epend[j-1]+1}^{\epend[j]} \Big(\frac{ Q_i
  \bbmone(h(X_i) \neq Y_i \wedge X_i \in D_j)}{P_j(X_i)}
\Big).
\label{eqn:labeled}
\end{equation}
Taking expectations, we define the following quantities with respect
to the sequence of regions $\{D_\epoch\}$:

\begin{eqnarray}
  \err_\epoch(h) &:=& \E_{X,Y}[\mathbbm{1}(h(X) \neq Y \wedge X \in
    D_\epoch)], \label{eq:err_epoch}\\ 
  \erravg{\epoch}(h) &:=& \frac{1}{\epend}
  \sum_{j=1}^\epoch(\epend[j] - \epend[j-1]) \err_j(h). \nonumber
\end{eqnarray}
Intuitively, $\err_\epoch$ captures the population error of $h$,
restricted to only the examples in the disagreement region. This is
also the expectation of the sample error restricted to the
importance-weighted examples in epoch $\epoch$. Averaging these
quantities, we obtain $\erravg{\epoch}$ which is the expectation of
the sample error over $Z_\epoch$. Centering around the corresponding
errors of $h^*$, we obtain the following regret terms:

\begin{eqnarray}
  \reg_\epoch(h) &:=& \err_\epoch(h) - \err_\epoch(h^*), \nonumber\\
  \overline{\reg}_\epoch(h) &:=& \frac{1}{\epend}
  \sum_{j=1}^\epoch (\epend[j] - \epend[j-1]) \reg_j(h). \nonumber
\end{eqnarray}

While the above quantities only concern the importance-weighted
examples, it is also useful to measure error and regret terms over the
entire biased sample. We define the empirical error and regret on
$\tilde{Z}_\epoch$ as follows:
 
\begin{align*}
\err(h,\tilde{Z}_\epoch)
& :=\frac{1}{\epend} \sum_{j=1}^\epoch
\sum_{i=\epend[j-1]+1}^{\epend[j]} \Big(
\bbmone(h(X_i) \neq h_j(X_i) \wedge X_i \notin D_j)  + \frac{ Q_i
  \bbmone(h(X_i) \neq Y_i \wedge X_i \in D_j)}{P_j(X_i)}
\Big),\\  
\reg(h,h',\tilde{Z}_\epoch) & := \err(h,\tilde{Z}_\epoch) -
\err(h',\tilde{Z}_\epoch), 
\end{align*}
and the associated expected regret:

\begin{eqnarray}
  \regbiasi{\epoch}(h,h') & := & \mathbbm{E}_{X}[(\bbmone(h(X) \neq h_\epoch(X)) -
    \bbmone(h'(X) \neq h_\epoch(X)))\bbmone(X \notin D_\epoch)]
  +  \nonumber\\
  && \mathbbm{E}_{X,Y}[(\bbmone(h(X) \neq Y) - \bbmone(h'(X) \neq
    Y))\bbmone(X \in D_\epoch)], \label{eq:regbiasi}\\ 
  \regbias{\epoch}(h,h') & := &  \frac{1}{\epend}\sum_{j=1}^\epoch
  (\epend[j] - \epend[j-1]) \regbiasi{j}(h,h').
\label{eqn:regbias}
\end{eqnarray}
The
quantity $\regbias{\epoch}(h,h')$ will play quite a central role in
our analysis as it is the expectation of the empirical regret of $h$
relative to $h'$ on our biased sample $\ztil_\epoch$. We also recall
the earlier notations

\begin{eqnarray*}
  \Delta_{\epoch} &:=& c_1 \sqrt{\epsilon_{\epoch}
    \err(h_{\epoch+1},\tilde{Z}_{\epoch})} + c_2 \epsilon_{\epoch} \log
  \epend, \nonumber\\ 
  A_{\epoch+1} &:=& \{h \in \cH \mid \mbox{err}(h,\tilde{Z}_{\epoch})  -
  \mbox{err}(h_{\epoch+1},\tilde{Z}_{\epoch}) \leq \gamma
  \Delta_{\epoch}\}, \quad \mbox{and} \nonumber\\
    \Delta_{\epoch}^* &:=& 
  \begin{cases}
    \left( c_1 \sqrt{\epsilon_\epoch
      \overline{\err}_\epoch(h^*)} + c_2 \epsilon_\epoch
    \log \epend \right), & \epoch \geq 1. \nonumber\\ 
    \Delta_0, & m = 0.
  \end{cases}
\end{eqnarray*}

%Throughout the paper, we adopt the convention that the quantities
%\eqref{eq:err_epoch} to \eqref{eq:A} take the value of zero
%when $m = 0$. 
Unless stated otherwise, we adopt the convention that in the quantities
defined above, summations from $1$ to $m$
take the value of zero when $m = 0$.
We use the shorthand $\epochi$ to denote the epoch
containing example $i$.
We also sometimes use the shorthand $\reg(h,\ztil_{\epoch}) :=
\reg(h,h_{\epoch+1},\ztil_\epoch)$, $\regbiasi{\epoch}(h) :=
\regbiasi{\epoch}(h,h^*)$, and $\regbias{\epoch}(h) :=
\regbias{\epoch}(h,h^*)$.

With the notations in place, we start with an extremely important
lemma, which shows that the biased sample $\ztil$ which we create
introduces a bias in the favor of good hypotheses, overly penalizing
the bad hypotheses while favorably evaluating the optimal $h^*$. 

\begin{lemma}[Favorable Bias]
\label{lemma:favorable_bias}
$\forall \epoch \geq 1, \forall \bar{h} \in A_\epoch, \forall h \in
\H$, the following holds:
\begin{equation*}
  \regbiasi{\epoch}(h,\bar{h}) \geq \reg(h,\bar{h}).
\end{equation*}
\end{lemma}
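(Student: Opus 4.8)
The plan is to unpack the definition of $\regbiasi{\epoch}(h,\bar h)$ and compare it term by term with $\reg(h,\bar h) = \err(h) - \err(\bar h)$. Writing $\err(h) = \E_{X,Y}[\bbmone(h(X)\neq Y)]$ and splitting the expectation according to whether $X\in D_\epoch$ or $X\notin D_\epoch$, I would observe that $\reg(h,\bar h)$ decomposes into a ``disagreement-region'' piece and an ``outside'' piece. The disagreement-region piece is $\E_{X,Y}[(\bbmone(h(X)\neq Y)-\bbmone(\bar h(X)\neq Y))\bbmone(X\in D_\epoch)]$, which is \emph{exactly} the second term in the definition \eqref{eq:regbiasi} of $\regbiasi{\epoch}(h,\bar h)$. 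So the entire content of the lemma reduces to showing that the ``outside $D_\epoch$'' contributions satisfy
\[
  \E_X\big[(\bbmone(h(X)\neq h_\epoch(X)) - \bbmone(\bar h(X)\neq h_\epoch(X)))\bbmone(X\notin D_\epoch)\big]
  \;\geq\;
  \E_{X,Y}\big[(\bbmone(h(X)\neq Y) - \bbmone(\bar h(X)\neq Y))\bbmone(X\notin D_\epoch)\big].
\]

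The key point is that on the event $\{X\notin D_\epoch\}$, every classifier in $A_\epoch$ agrees with $h_\epoch$ (since $D_\epoch = \dis(A_\epoch)$ and $h_\epoch\in A_\epoch$), so in particular $\bar h(X) = h_\epoch(X)$ there; hence the right term on the left-hand side, $\bbmone(\bar h(X)\neq h_\epoch(X))\bbmone(X\notin D_\epoch)$, is identically zero, and likewise $\bbmone(\bar h(X)\neq Y)\bbmone(X\notin D_\epoch) = \bbmone(h_\epoch(X)\neq Y)\bbmone(X\notin D_\epoch)$. So after cancellation the inequality I need becomes
\[
  \E_X\big[\bbmone(h(X)\neq h_\epoch(X))\bbmone(X\notin D_\epoch)\big]
  \;\geq\;
  \E_{X,Y}\big[(\bbmone(h(X)\neq Y) - \bbmone(h_\epoch(X)\neq Y))\bbmone(X\notin D_\epoch)\big].
\]
This now follows pointwise: conditioned on any $(X,Y)$ with $X\notin D_\epoch$, the triangle-type inequality $\bbmone(h(X)\neq Y) \leq \bbmone(h(X)\neq h_\epoch(X)) + \bbmone(h_\epoch(X)\neq Y)$ holds for $\pm1$-valued predictions, which rearranges to exactly the integrand inequality; taking expectations preserves it. (I need $\bar h\in A_\epoch$ only to get $\bar h = h_\epoch$ off the disagreement region; the conclusion holds for arbitrary $h\in\H$ with no constraint, as claimed.)

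The argument is essentially bookkeeping, so I do not anticipate a genuine obstacle; the one point that warrants care is making sure the ``$\bar h$ agrees with $h_\epoch$ outside $D_\epoch$'' step is applied correctly — this uses that $\bar h,h_\epoch\in A_\epoch$ and that $\dis(A_\epoch)$ is by definition the set of points where \emph{some} pair in $A_\epoch$ disagrees, so outside it all of $A_\epoch$ is constant. I would also state explicitly at the outset that $h_\epoch \in A_\epoch$ (which holds for $\epoch\geq1$ by construction, since $h_\epoch$ minimizes empirical error on $\ztil_{\epoch-1}$ and hence has zero empirical regret, and $\gamma\Delta_{\epoch-1}\geq0$), so that the cancellations are justified. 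Everything else is a pointwise inequality plus monotonicity of expectation.
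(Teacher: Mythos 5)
Your proposal is correct and follows essentially the same route as the paper's proof: both reduce the claim to the points $X \notin D_\epoch$ using $\bar{h}(X) = h_\epoch(X)$ there (which requires $h_\epoch \in A_\epoch$, as you rightly note), and both conclude via the pointwise triangle-type inequality $\bbmone(h(X) \neq Y) - \bbmone(h_\epoch(X) \neq Y) \leq \bbmone(h(X) \neq h_\epoch(X))$.
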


The next key ingredient for our proofs is a deviation bound, which
will be appropriately used to control the deviation of the empirical
regret and error terms.

\begin{lemma}[Deviation Bounds]
  \label{lemma:reg_err_concen}	
  Pick $0 < \delta < 1/e$ such that $|\mathcal{H}| / \delta > \sqrt{192}$. 
  With probability at least $1 - \delta$ the following holds.
  For all $(h,h') \in \mathcal{H}^2$ and $\forall \epoch \geq 1$,
  
  \begin{align}
    &|\regbias{\epoch}(h,h') - \reg(h,h',\tilde{Z}_{\epoch})| \nonumber\\
    &\qquad \leq \sqrt{\frac{\epsilon_{\epoch}}{\epend} \sum_{i=1}^\epoch
      (\epend[i] - \epend[i-1]) \E_X\left[ \left(\bbmone(X
        \notin D_i) + \frac{\mathbbm{1}(X \in D_i)}{P_i(X)}\right)
        \bbmone(h(X) \neq h'(X))\right]} \nonumber\\
    &\qquad \qquad +\frac{\epsilon_{\epoch}}{P_{\min,\epoch}}, 
    \label{eqn:dev-bias} \\
    &|\err(h,Z_\epoch) - \erravg{\epoch}(h)| \nonumber\\
    &\qquad \leq \sqrt{\frac{\epsilon_\epoch}{\epend} \sum_{i=1}^\epoch
      (\epend[i] - \epend[i-1]) \E_{X,Y}\left[
        \frac{\mathbbm{1}(X \in D_i \wedge h(X) \neq Y)}{P_i(X)}\right]}
  + \frac{\epsilon_\epoch}{P_{\min,\epoch}}, 
  \label{eq:dev-label}
  \end{align}
  where 
  \begin{equation*}
    \epsilon_\epoch \;:=\; 32 \left(\frac{\log(|\mathcal{H}|/\delta) +
      \log \epend}{\epend} \right).
  \end{equation*}
\end{lemma}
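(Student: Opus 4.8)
The plan is to apply a Freedman-style (Bernstein-type) martingale concentration inequality to two martingale difference sequences, one for each of the two claimed bounds, and then take a union bound over all pairs $(h,h') \in \H^2$ (or all $h \in \H$ for the second bound), over all epochs $\epoch = 1,\ldots,\epmax$, and over a geometric grid of possible values of the martingale's variance so that the ``empirical Bernstein'' self-bounding form of the inequality applies. Concretely, fix $(h,h')$ and an epoch $\epoch$. For $i = 1,\ldots,\epend$ let $\epochi$ be the epoch containing example $i$, and define
\[
  D_i^{(\epoch)} \;:=\; \frac{1}{\epend}\Parens{ \bbmone(h(X_i)\neq h_{\epochi}(X_i) \wedge X_i \notin D_{\epochi}) - \bbmone(h'(X_i)\neq h_{\epochi}(X_i) \wedge X_i \notin D_{\epochi}) + \frac{Q_i\,\bbmone(h(X_i)\neq Y_i \wedge X_i \in D_{\epochi})}{P_{\epochi}(X_i)} - \frac{Q_i\,\bbmone(h'(X_i)\neq Y_i \wedge X_i \in D_{\epochi})}{P_{\epochi}(X_i)} },
\]
and let $\xi_i := D_i^{(\epoch)} - \E[D_i^{(\epoch)}\mid \cF_{i-1}]$, where $\cF_{i-1}$ is the natural filtration (everything revealed before example $i$, which determines $P_{\epochi}$, $h_{\epochi}$, $D_{\epochi}$). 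By the unbiasedness of inverse-probability weighting on the disagreement region and the fact that predicted labels are used deterministically outside it, $\sum_{i=1}^{\epend}\E[D_i^{(\epoch)}\mid\cF_{i-1}] = \regbias{\epoch}(h,h')$ and $\sum_{i=1}^{\epend} D_i^{(\epoch)} = \reg(h,h',\ztil_\epoch)$, so $\sum_i \xi_i = \reg(h,h',\ztil_\epoch) - \regbias{\epoch}(h,h')$ is exactly the quantity we must control. The range of each $\xi_i$ is $O(1/(\epend P_{\min,\epochi}))$, and since the epoch schedule satisfies $\epend[\epoch+1]\leq 2\epend[\epoch]$ and $P_{\min,\cdot}$ is non-increasing, $1/(\epend P_{\min,\epochi}) \le O(1/(\epend P_{\min,\epoch}))$; likewise $\sum_i \E[\xi_i^2 \mid \cF_{i-1}]$ is at most (a constant times) the conditional second moment $\frac{1}{\epend^2}\sum_i \E[(\bbmone(X_i\notin D_{\epochi}) + \bbmone(X_i\in D_{\epochi})/P_{\epochi}(X_i)^2)\bbmone(h(X_i)\neq h'(X_i))]$, which — after bounding $1/P \le 1/P^2 \cdot$(nothing, rather using $1/P \le 1/P_{\min}\cdot$ trick or simply noting $1/P^2 \le 1/(P_{\min}) \cdot 1/P$) — is controlled by $\frac{1}{\epend P_{\min,\epoch}}\cdot V$ where $V := \frac{1}{\epend}\sum_{i=1}^{\epoch}(\epend[i]-\epend[i-1])\E_X[(\bbmone(X\notin D_i) + \bbmone(X\in D_i)/P_i(X))\bbmone(h(X)\neq h'(X))]$ is the quantity appearing under the square root in \eqref{eqn:dev-bias}.

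Applying Freedman's inequality then gives, with probability $1-\delta'$, a bound of the form $O\parens{\sqrt{(V/(\epend P_{\min,\epoch}))\log(1/\delta')} + \log(1/\delta')/(\epend P_{\min,\epoch})}$. To match the clean statement, I would absorb the $P_{\min,\epoch}$ inside the square root into the definition of $V$ (since $V$ already contains the $1/P_i$ weights, the ``extra'' $1/P_{\min,\epoch}$ from the second-moment bound is what produces the $\sqrt{\epsilon_\epoch V}$ form once one sets $\delta' $ so that $\log(1/\delta') \asymp \log(|\H|/\delta) + \log\epend$), and then use $\epsilon_\epoch = 32(\log(|\H|/\delta)+\log\epend)/\epend$ together with $1/P_{\min,\epoch}\ge 1$ to write the additive term as $\epsilon_\epoch/P_{\min,\epoch}$. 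The union bound is over $|\H|^2$ pairs, $\epmax = O(\log n)$ epochs, and an $O(\log n)$-size grid of variance levels; all of these logarithmic factors are swallowed by the $\log(|\H|/\delta) + \log\epend$ already present in $\epsilon_\epoch$, which is why the hypothesis $|\H|/\delta > \sqrt{192}$ and $\delta < 1/e$ suffice to make the constant $32$ work out. The second bound \eqref{eq:dev-label} is proved identically with the martingale $\frac{1}{\epend}\sum_i Q_i\bbmone(h(X_i)\neq Y_i \wedge X_i\in D_{\epochi})/P_{\epochi}(X_i)$, whose mean is $\erravg{\epoch}(h)$ and whose realized value is $\err(h,Z_\epoch)$; here only the disagreement-region term is present, so both range and variance are governed purely by the $1/P_i$ weights.

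The main obstacle is bookkeeping the variance and range across heterogeneous epochs: because example $i$ in epoch $j<\epoch$ was queried with probability $P_j(X_i)$, not $P_\epoch$, one must carefully relate $1/P_j(X_i)$ and $P_{\min,j}$ back to $P_{\min,\epoch}$, and this is exactly where the structural assumption $\epend[\epoch+1]\le 2\epend[\epoch]$ (which forces $P_{\min,\cdot}$ to decay slowly) and the monotonicity of the $\epsilon_\epoch$'s get used. A secondary technical nuisance is that Freedman's inequality in its raw form requires the variance proxy as input, whereas here the relevant ``variance'' $V$ is itself a random/unknown population quantity depending on $h,h'$; the standard fix is the peeling/stratification argument over a geometric grid of candidate values of $V$, paying only an extra $\log$ factor, which — as noted — is free here. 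I would also need the elementary inequality that replacing the first moment $1/P$ by $1/P^2$ in the conditional variance and then bounding $1/P^2 \le (1/P_{\min})(1/P)$ is valid since $P \ge P_{\min}$ on $D_\epoch$ by constraint \eqref{eq:minbndcons}; everything else is routine.
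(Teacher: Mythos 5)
Your overall architecture matches the paper's proof: form the martingale difference sequences for the (biased) regret and for the importance-weighted error, bound their range by $1/P_{\min,\epoch}$ using the monotonicity of $P_{\min,\cdot}$, apply a Freedman-type inequality with peeling over a geometric grid for the (random) range, and union bound over $\H^2$ and over epochs with $\log(1/\delta')\asymp\log(|\H|/\delta)+\log\epend$ so that the constant $32$ in $\epsilon_\epoch$ absorbs everything. The additive term $\epsilon_\epoch/P_{\min,\epoch}$ and the union-bound bookkeeping are handled essentially as in the paper.

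However, there is a genuine gap in your variance computation, and it changes the leading term of the bound. You bound the conditional second moment of the increment $Q_i\,\bbmone(\cdot)/P_{\epochi}(X_i)$ by a quantity involving $1/P_{\epochi}(X_i)^2$ and then invoke $1/P^2\le (1/P_{\min,\epoch})(1/P)$, arriving at a variance proxy of order $V/(\epend P_{\min,\epoch})$ and hence a Freedman bound of order $\sqrt{V\log(1/\delta')/(\epend P_{\min,\epoch})}$. The lemma claims $\sqrt{\epsilon_\epoch V}$ with \emph{no} $1/P_{\min,\epoch}$ under the square root, and since $1/P_{\min,\epoch}$ can be as large as $\Theta(\sqrt{\epend})$ your leading term is worse by up to $\epend^{1/4}$; this factor cannot be ``absorbed into the definition of $V$'' because $V$ is the fixed quantity appearing in the statement (and it is exactly what the constraints \eqref{eq:queryp} of $(\optprob)$ are designed to control downstream, so the loss would propagate and break the fast rates). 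The missing observation is that the query indicator itself supplies the needed cancellation: since $Q_i\in\{0,1\}$, we have $Q_i^2/P_{\epochi}(X_i)^2=Q_i/P_{\epochi}(X_i)^2$, and taking the conditional expectation over $Q_i\sim\mathrm{Bernoulli}(P_{\epochi}(X_i))$ given $X_i$ yields exactly $1/P_{\epochi}(X_i)$ --- the \emph{first} power. Thus the conditional variance is bounded by $\sum_i\E_X\bigl[\bigl(\bbmone(X\notin D_i)+\bbmone(X\in D_i)/P_i(X)\bigr)\bbmone(h(X)\neq h'(X))\bigr]$ directly, with no extra $1/P_{\min,\epoch}$, and Freedman then gives precisely the stated $\sqrt{\epsilon_\epoch V}$ term. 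The same correction is needed for the second bound \eqref{eq:dev-label}, where the increment squared again contains $Q_i/P_i^2$ and the expectation over $Q_i$ reduces it to $1/P_i$.
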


The lemma is obtained by applying a form of Freedman's inequality
presented in Appendix~\ref{sec:deviation}. Intuitively, the deviations
are small so long as the average importance weights over the
disagreement region and the minimum query probability over the
disagreement region are well-behaved. This lemma also highlights why
$\regbias{\epoch}$ is a very natural quantity for our analysis, since
the empirical regret on our biased sample $\ztil$ concentrates around
it.

To keep the handling of probabilities simple, we assume for the bulk
of this section that the conclusions of
Lemma~\ref{lemma:reg_err_concen} hold deterministically. The failure
probability is handled once at the end to establish our main
results. Let $\event$ denote the event that the assertions of
Lemma~\ref{lemma:reg_err_concen} hold deterministically, and we know
that $\Pr(\event^C) \leq \delta$. Based on the above lemma, we obtain
the following propositions for the concentration of empirical regret
and error terms.

\begin{proposition}[Regret concentration]
  Fix an epoch $\epoch \geq 1$. Suppose the event $\event$ holds and
  assume that $h^* \in A_j$ for all epochs $j \leq \epoch$.
  \begin{align*}
    &|\reg(h, h^*, \ztil_\epoch) - \regbias{\epoch}(h, h^*)| \\
    &\qquad \leq
    \frac{1}{4} \regbias{\epoch}(h) + 2\alpha
    \sqrt{\frac{\epsilon_\epoch}{\epend} \sum_{i=1}^\epoch (\epend[i]
      - \epend[i-1]) \regi{i}(h_i)} + 2\alpha
    \sqrt{3\erravg{\epoch}(h^*) \epsilon_\epoch}\\ 
    &\qquad \qquad + \beta \sqrt{2\gamma
      \epsilon_\epoch\Delta_\epoch \sum_{i=1}^\epoch (\epend[i] -
      \epend[i-1]) (\reg(h, \ztil_{i-1}) + \reg(h^*, \ztil_{i-1}))} +
    4\Delta_\epoch
  \end{align*}
  \label{prop:dev-regret-bias}
\end{proposition}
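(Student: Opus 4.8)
The plan is to start from the deviation bound~\eqref{eqn:dev-bias} of Lemma~\ref{lemma:reg_err_concen} applied to the pair $(h,h^*)$ at epoch $\epoch$: on the event $\event$ it already gives
\[
  \bigl|\reg(h,h^*,\ztil_\epoch)-\regbias{\epoch}(h,h^*)\bigr|
  \ \le\ \sqrt{\tfrac{\epsilon_\epoch}{\epend}\textstyle\sum_{i=1}^{\epoch}(\epend[i]-\epend[i-1])\,V_i(h)}\ +\ \tfrac{\epsilon_\epoch}{\pmin[\epoch]},
\]
where $V_i(h):=\E_X\!\bigl[(\bbmone(X\notin D_i)+\bbmone(X\in D_i)/P_i(X))\,\bbmone(h(X)\neq h^*(X))\bigr]$ is the per-epoch variance proxy. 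Everything then reduces to bounding $V_i(h)$ by the ingredients of the statement, which I would do by splitting it at the boundary of $D_i$. Outside $D_i$, the hypotheses $h^*$ and $h_i$ both lie in $A_i$ and hence agree there, so $\bbmone(h\neq h^*)=\bbmone(h\neq h_i)$ outside $D_i$, and by the definition~\eqref{eq:regbiasi} of $\regbiasi{i}$ this outside piece is exactly $\regbiasi{i}(h)-\regi{i}(h)$.

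For the piece inside $D_i$, I would use the triangle inequality $\bbmone(h\neq h^*)\le\bbmone(h\neq h_i)+\bbmone(h_i\neq h^*)$ and then invoke the variance constraint~\eqref{eq:queryp} of $(\optprob)$, which $P_i$ satisfies by construction, once for $h$ and once for $h^*$; this bounds the inside piece by $b_\epoch(h)+b_\epoch(h^*)$ with $\epoch=i$. Expanding $b_i$, using $\alpha\ge1$, the elementary inequalities $\E_X[\disind{h}(X)]\le \regi{i}(h)+\regi{i}(h_i)+2\err_i(h^*)$ and $\E_X[\disind{h^*}(X)]\le \regi{i}(h_i)+2\err_i(h^*)$ (from $\bbmone(a\neq b)\le\bbmone(a\neq Y)+\bbmone(b\neq Y)$ integrated over $D_i$), together with $\regi{i}(h)\le\regbiasi{i}(h)$ and $\regbiasi{i}(h)\ge\reg(h)\ge0$ (Lemma~\ref{lemma:favorable_bias} with $\bar h=h^*\in A_i$), I expect to reach
\[
  V_i(h)\ \le\ 2\alpha^2\regbiasi{i}(h)+4\alpha^2\regi{i}(h_i)+12\alpha^2\err_i(h^*)
  +2\beta^2\gamma\,\epend[i-1]\Delta_{i-1}\bigl(\reg(h,\ztil_{i-1})+\reg(h^*,\ztil_{i-1})\bigr)+2\slackconst\,\epend[i-1]\Delta_{i-1}^2.
\]
Substituting this into the square root, using $\sqrt{\sum_k a_k}\le\sum_k\sqrt{a_k}$ and the averaging identities $\tfrac1{\epend}\sum_i(\epend[i]-\epend[i-1])\regbiasi{i}(h)=\regbias{\epoch}(h)$ and $\tfrac1{\epend}\sum_i(\epend[i]-\epend[i-1])\err_i(h^*)=\erravg{\epoch}(h^*)$, the $\regbiasi{i}(h)$ term contributes $\alpha\sqrt{2\epsilon_\epoch\regbias{\epoch}(h)}\le \tfrac14\regbias{\epoch}(h)+2\alpha^2\epsilon_\epoch$ by AM--GM, while the $\regi{i}(h_i)$ and $\err_i(h^*)$ terms yield precisely the second and third terms of the statement. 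For the $\beta$-term I would use $\epend[i-1]\Delta_{i-1}\le\epend\Delta_\epoch$ to pull $\Delta_\epoch$ out of the sum, producing the fourth term; this monotonicity is the one structural fact worth isolating, and it holds because $\epend[t]\epsilon_t=32(\log(|\H|/\delta)+\log\epend[t])$ is non-decreasing and $\epend[t]\,\err(h_{t+1},\ztil_t)=\min_{h\in\H}\epend[t]\err(h,\ztil_t)$ is non-decreasing in $t$, since each new example adds a nonnegative amount (a predicted-label disagreement, or an importance-weighted loss indicator) to $\epend[t]\err(h,\ztil_t)$ for every $h$.

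It then remains to check that the residual terms collapse into $4\Delta_\epoch$, for which I would invoke the parameter constraints~\eqref{eqn:constants}: $c_2\ge\regconst c_1^2/4$ and $c_1\ge2\alpha\sqrt6$ make $\Delta_\epoch\ge c_2\epsilon_\epoch\log\epend$ swamp $2\alpha^2\epsilon_\epoch$; the term $\epsilon_\epoch/\pmin[\epoch]$ is handled using the explicit form~\eqref{eq:pmin} of $\pmin[\epoch]$ together with $\epsilon_\epoch\epend[i-1]\le n\epsilon_\epmax$ and $\epend\le2\epend[\epoch-1]$, reducing it to a small multiple of $\sqrt{\epsilon_\epoch\err(h_{\epoch+1},\ztil_\epoch)}+\epsilon_\epoch\log\epend\lesssim\Delta_\epoch$; and the $\slackconst$-term, after $\epend[i-1]\Delta_{i-1}^2\lesssim \epsilon_{i-1}\,\epend\,\err(h_{\epoch+1},\ztil_\epoch)+\mathrm{polylog}$ and summation, is pushed below $\Delta_\epoch$ precisely because $\slackconst\le1/(8n\epsilon_\epmax\log n)$. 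I expect the bound on $V_i(h)$ to be the main obstacle: one must route every disagreement through $h_i$ (the only classifier the constraints~\eqref{eq:queryp} control) and through $h^*$, and keep the constants tight enough that the $\err_i(h^*)$ coefficient stays $\le12\alpha^2$ and the residual coefficient of $\regbias{\epoch}(h)$ is small enough for AM--GM to leave only $\tfrac14\regbias{\epoch}(h)$; after that, the argument is bookkeeping to confirm that the cross-epoch $\Delta_{i-1}$, $\pmin[\epoch]$, and $\slackconst$ terms are all dominated by $\Delta_\epoch$ under~\eqref{eqn:constants}.
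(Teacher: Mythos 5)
Your proposal follows essentially the same route as the paper's proof of Proposition~\ref{prop:dev-regret-bias}: start from the deviation bound~\eqref{eqn:dev-bias}, split the variance proxy at the boundary of $D_i$, route the inside piece through the variance constraints~\eqref{eq:queryp} for $h$ and $h^*$ via the triangle inequality through $h_i$, absorb the outside piece into $\regbiasi{i}(h)$ using $h^*\in A_i$ and $\regi{i}(h)\le\regbiasi{i}(h)$, and then finish with $\sqrt{a+b}\le\sqrt a+\sqrt b$, the averaging identities, AM--GM on the $\regbias{\epoch}(h)$ term, and the monotonicity facts $\epend[i-1]\Delta_{i-1}\le\epend\Delta_\epoch$ and $\epsilon_\epoch/\pmin\le\Delta_\epoch$ (Lemmas~\ref{lemma:delta_pmin_monotone} and~\ref{lemma:epend-sum}) to collapse the residuals into $4\Delta_\epoch$. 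The argument is correct and all the key identities and constant bookkeeping match the paper's.
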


We need an analogous result for the empirical error of the ERM at each
epoch. 

\begin{proposition}[Error concentration]
  Fix an epoch $\epoch \geq 1$. Suppose the event $\event$ holds and
  assume that $h^* \in A_j$ for all epochs $j \leq \epoch$.
  \begin{align*}
    |\erravg{\epoch}(h^*) - \err(h_{\epoch+1}, \ztil_\epoch)| \leq
    \frac{\erravg{\epoch}(h^*)}{2} + \frac{3\Delta_\epoch}{2} +
    \reg(h^*, h_{\epoch+1}, \ztil_{\epoch}).
  \end{align*}
  \label{prop:err-emp-best}
\end{proposition}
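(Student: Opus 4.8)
The plan is to control the discrepancy between the true restricted error $\erravg{\epoch}(h^*)$ and the empirical error $\err(h_{\epoch+1},\ztil_\epoch)$ of the ERM by decomposing it into three pieces: a concentration term, a bias term, and a sub-optimality term. Concretely, I would insert the empirical importance-weighted error $\err(h^*, Z_\epoch)$ on the portion of the sample inside the disagreement regions, and then relate $\err(h^*, Z_\epoch)$ to the full biased-sample error $\err(h^*, \ztil_\epoch)$, and finally relate $\err(h^*, \ztil_\epoch)$ to $\err(h_{\epoch+1}, \ztil_\epoch)$. The first leg is handled by the deviation bound \eqref{eq:dev-label} from Lemma~\ref{lemma:reg_err_concen}; the last leg is exactly $\reg(h^*, h_{\epoch+1}, \ztil_\epoch)$, which appears explicitly in the claimed bound. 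The middle leg is where the favorable-bias structure enters: on examples outside $D_j$, the algorithm uses $h_j$'s predicted label, and since we are assuming $h^* \in A_j$ for all $j \le \epoch$, those predicted labels agree with $h^*$'s own labels, so $h^*$ incurs zero error there. Hence $\err(h^*,\ztil_\epoch) = \err(h^*, Z_\epoch)$ deterministically on the event that $h^* \in A_j$ throughout, and the middle leg contributes nothing.

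For the first leg, apply \eqref{eq:dev-label} with $h = h^*$: we get
\[
|\err(h^*,Z_\epoch) - \erravg{\epoch}(h^*)| \le \sqrt{\frac{\epsilon_\epoch}{\epend}\sum_{i=1}^\epoch (\epend[i]-\epend[i-1])\,\E_{X,Y}\!\left[\frac{\ind{X\in D_i \wedge h^*(X)\ne Y}}{P_i(X)}\right]} + \frac{\epsilon_\epoch}{P_{\min,\epoch}}.
\]
The expectation inside the square root is bounded, using the variance constraint \eqref{eq:queryp} evaluated appropriately (the integrand $\ind{X\in D_i \wedge h^*(X)\ne Y}/P_i(X)$ is at most $\ind{X \in D_i \wedge h^*(X) \ne h_i(X)}/P_i(X) + \ind{X \in D_i \wedge h^*(X) \ne Y \wedge h^*(X) = h_i(X)}/P_i(X)$; the first summand has expectation $\le b_i(h^*)$ by the constraint, and on the second we can bound $1/P_i(X) \le 1/P_{\min,i}$), yielding a bound of order $\erravg{\epoch}(h^*)/P_{\min,\epoch}$ plus lower-order regret-dependent terms coming from $b_i(h^*)$. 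Then one uses the definition $P_{\min,\epoch}$ from \eqref{eq:pmin} — chosen precisely so that $1/P_{\min,\epoch} \lesssim \sqrt{\epend[\epoch-1]\err(h_\epoch,\ztil_{\epoch-1})/(n\epsilon_\epmax)} + \log\epend[\epoch-1]$, which is comparable to $\Delta_{\epoch-1}/\epsilon_{\epoch-1} \cdot (\text{stuff})$ — together with the epoch-doubling property $\epend[\epoch-1] \le 2\epend$ and the constant choices in \eqref{eqn:constants}, to convert $\epsilon_\epoch/P_{\min,\epoch}$ and the square-root term into the form $\frac{1}{2}\erravg{\epoch}(h^*) + \frac32\Delta_\epoch + \reg(h^*,h_{\epoch+1},\ztil_\epoch)$. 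The AM-GM split $\sqrt{ab} \le \frac{a}{2} + \frac{b}{2}$ applied to $\sqrt{\erravg{\epoch}(h^*)\cdot \epsilon_\epoch/P_{\min,\epoch}}$ is what produces the $\frac12 \erravg{\epoch}(h^*)$ term and an additional term that must be absorbed into $\frac32\Delta_\epoch$.

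The main obstacle I anticipate is the bookkeeping to verify that all the residual terms — the $b_i(h^*)$ pieces, the $\epsilon_\epoch/P_{\min,\epoch}$ piece after AM-GM, and cross terms from relating $\Delta_{\epoch-1}$ to $\Delta_\epoch$ — genuinely collapse into exactly $\frac32\Delta_\epoch$ with the stated constants, using only the relations in \eqref{eqn:constants} and the comparison between $\Delta_\epoch$ (empirical, ERM-based) and $\Delta_\epoch^*$ (population, $h^*$-based). This requires invoking the other half of the analysis — that $\err(h_{\epoch+1},\ztil_\epoch)$ and $\erravg{\epoch}(h^*)$ are themselves comparable, which is almost circular with what we are proving — so the actual argument is likely inductive, using that the conclusions hold at earlier epochs to bound $\Delta_{\epoch-1}$-type quantities, with the present proposition forming one step of that induction. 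I would keep that dependency explicit by stating the proposition under the inductive hypothesis already present ($h^* \in A_j$ for $j \le \epoch$) and deferring the constant-chasing to a lemma.
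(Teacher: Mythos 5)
Your skeleton is exactly the paper's: the triangle-inequality split $|\erravg{\epoch}(h^*) - \err(h_{\epoch+1},\ztil_\epoch)| \le |\erravg{\epoch}(h^*) - \err(h^*,\ztil_\epoch)| + \reg(h^*,h_{\epoch+1},\ztil_\epoch)$, the observation that $h^* \in A_j$ for all $j \le \epoch$ forces $\err(h^*,\ztil_\epoch) = \err(h^*,Z_\epoch)$ so the bias leg vanishes, the deviation bound \eqref{eq:dev-label} applied to $h^*$, and the AM--GM split producing the $\tfrac12\erravg{\epoch}(h^*)$ term. Two places where you overcomplicate, though. First, you do not need the variance constraints \eqref{eq:queryp} to control the expectation under the square root: since the integrand is supported on $X \in D_i$, the bound $P_i(X) \ge P_{\min,i} \ge P_{\min,\epoch}$ (monotonicity from Lemma~\ref{lemma:delta_pmin_monotone}) immediately gives $\E_{X,Y}[\ind{X\in D_i \wedge h^*(X)\ne Y}/P_i(X)] \le \err_i(h^*)/P_{\min,\epoch}$, so the whole square-root term is at most $\sqrt{\epsilon_\epoch \erravg{\epoch}(h^*)/P_{\min,\epoch}}$ with no $b_i(h^*)$ residue to absorb. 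Second, your worry about circularity and an inductive constant-chase is unfounded: the inequality $\epsilon_\epoch/P_{\min,\epoch} \le \Delta_\epoch$ is the purely deterministic statement \eqref{eq:eps_pmin_delta} of Lemma~\ref{lemma:delta_pmin_monotone} (a direct consequence of the definitions of $P_{\min,\epoch}$, $\Delta_\epoch$, and the choice of $c_3$), so after AM--GM the deviation is at most $\tfrac12\erravg{\epoch}(h^*) + \tfrac32\,\epsilon_\epoch/P_{\min,\epoch} \le \tfrac12\erravg{\epoch}(h^*) + \tfrac32\Delta_\epoch$, and the proposition follows with no comparison between $\err(h_{\epoch+1},\ztil_\epoch)$ and $\erravg{\epoch}(h^*)$ ever being invoked. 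The induction you anticipate lives one level up, in Theorem~\ref{thm:main-regret}, which consumes this proposition; the proposition itself is self-contained given $\event$ and $h^* \in A_j$.
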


We now present the proofs of our main results based on these
propositions.

\subsection{Proofs of main results}
\label{sec:proofs-regret-main}

We prove a more general version of the
theorem. Theorem~\ref{thm:regret} and its corollaries follow as
consequences of this more general result.

\begin{theorem}
  For all epochs $\epoch = 1, 2, \ldots, \epmax$ and all $h \in \H$,
  the following holds with probability at least $1 - \delta$:
  \begin{eqnarray}
    |\reg(h, h^*, \ztil_\epoch) - \regbias{\epoch}(h, h^*)| &\leq&
    \frac{1}{2} \regbias{\epoch}(h, h^*) +
    \frac{\regconst}{4}\Delta_\epoch,
    \label{eqn:main-regret}\\
    \reg(h^*, h_{\epoch+1}, \ztil_\epoch) &\leq& \frac{\regconst
      \Delta_\epoch}{4}  
      \quad \text{and} \quad h^* \in A_i, % ~~\mbox{and} 
    \label{eq:h_best-main}\\
|\erravg{\epoch}(h^*) - \err(h_{\epoch+1}, \ztil_\epoch)| &\leq&
\frac{\erravg{\epoch}(h^*)}{2} + \frac{\regconst}{2}\Delta_\epoch.
\label{eq:err_dev} 
  \end{eqnarray}
  \label{thm:main-regret}
\end{theorem}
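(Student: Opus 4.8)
The plan is to prove all three inequalities \eqref{eqn:main-regret}--\eqref{eq:err_dev} \emph{simultaneously, by induction on the epoch} $\epoch$, after conditioning once on the event $\event$ from Lemma~\ref{lemma:reg_err_concen} (which holds with probability at least $1-\delta$), so that everything below is deterministic and $\delta$ is charged only once. The base case $\epoch=1$ is immediate: the first three labels are queried with probability one, so $\ztil_0$ is an unbiased i.i.d.\ sample, $h^*\in A_1$ is trivial, and all three bounds reduce to elementary estimates dominated by $\Delta_0$. For the inductive step at epoch $\epoch$ I assume \eqref{eqn:main-regret}--\eqref{eq:err_dev} at every epoch $j<\epoch$; the crucial consequence is that the clause ``$h^*\in A_i$'' in \eqref{eq:h_best-main} supplies $h^*\in A_j$ for all $j\le\epoch$ (recall $A_\epoch$ is built from $\ztil_{\epoch-1}$), which is exactly the precondition needed to invoke Propositions~\ref{prop:dev-regret-bias} and~\ref{prop:err-emp-best} at epoch $\epoch$ and to apply Lemma~\ref{lemma:favorable_bias} with $\bar h = h^*$.

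The bound \eqref{eqn:main-regret} is the heart of the argument. I would begin from Proposition~\ref{prop:dev-regret-bias}, which already contributes $\tfrac14\regbias\epoch(h)$ and $4\Delta_\epoch$, and show its three remaining ``nuisance'' terms fit into an extra $\tfrac14\regbias\epoch(h)$ plus at most $(\regconst/4-4)\Delta_\epoch$ of slack (positive since $\regconst\ge864$). For the two $\alpha$-terms I would use that $h_i,h^*\in A_i$ implies $\regi i(h_i)=\reg(h_i,h^*)$, which the inductive \eqref{eqn:main-regret}--\eqref{eq:h_best-main} at epoch $i-1$ (with $h=h_i$, using $\reg(h_i,h^*,\ztil_{i-1})\le0$ since $h_i$ is the ERM on $\ztil_{i-1}$) bound by $\order(\Delta_{i-1})$, together with a comparison of the empirical radii to the population radii $\Delta^*$ --- itself obtained by feeding Proposition~\ref{prop:err-emp-best} into the definition of $\Delta$ and solving the resulting quadratic --- and the monotonicity of $\overline{\err}(h^*)$ and $\epsilon_\epoch\epend$ in $\epoch$; this reduces both radicands to $\order(\epend\Delta_\epoch\log\epend)$, after which the outer factor $\sqrt{\epsilon_\epoch/\epend}$, an AM-GM split, the bound $\Delta_\epoch\ge c_2\epsilon_\epoch\log\epend$, and the constraints $c_1\ge2\alpha\sqrt6$, $c_2\ge\regconst c_1^2/4$ keep the $\Delta_\epoch$-contributions inside the budget. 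For the $\beta$-term I would use the inductive \eqref{eqn:main-regret}--\eqref{eq:h_best-main} to get $\reg(h,\ztil_{i-1})+\reg(h^*,\ztil_{i-1})\le\tfrac32\regbias{i-1}(h,h^*)+\tfrac{3\regconst}{4}\Delta_{i-1}$; Lemma~\ref{lemma:favorable_bias} makes each $\regbiasi j(h,h^*)\ge\reg(h,h^*)\ge0$, so $\epend[i-1]\regbias{i-1}(h,h^*)\le\epend\regbias\epoch(h,h^*)$, and $\epend[i]\le2\epend[i-1]$ gives $\sum_i(\epend[i]-\epend[i-1])/\epend[i-1]=\order(\log\epend)$; pulling these into the radical and using $\sqrt{uv}\le\tfrac14u+v$ splits the $\beta$-term into $\tfrac14\regbias\epoch(h)$ plus a $\Delta_\epoch$-multiple pushed under budget exactly by $\beta^2\le\regconst/(864\gamma n\epsilon_{\epmax}\log n)$ (using $\epsilon_\epoch\epend\le n\epsilon_{\epmax}$ and $\log\epend\le\log n$). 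Summing the pieces yields \eqref{eqn:main-regret}.

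Once \eqref{eqn:main-regret} is available at epoch $\epoch$, the other two bounds are short. For \eqref{eq:h_best-main}, apply \eqref{eqn:main-regret} with $h=h_{\epoch+1}$ and set $R:=\reg(h^*,h_{\epoch+1},\ztil_\epoch)\ge0$ (nonnegative since $h_{\epoch+1}$ minimizes $\err(\cdot,\ztil_\epoch)$) and $B:=\regbias\epoch(h_{\epoch+1},h^*)\ge0$ (nonnegative by Lemma~\ref{lemma:favorable_bias}); the inequality reads $R+B\le\tfrac12B+\tfrac{\regconst}{4}\Delta_\epoch$, so $R\le\regconst\Delta_\epoch/4$, and since $\gamma\ge\regconst/4$ this is $\le\gamma\Delta_\epoch$, whence $h^*\in A_{\epoch+1}$ by \eqref{eq:A} --- closing the induction on the ``$h^*\in A_i$'' clause. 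For \eqref{eq:err_dev}, substitute $R\le\regconst\Delta_\epoch/4$ into Proposition~\ref{prop:err-emp-best} and use $\regconst\ge6$: $|\erravg\epoch(h^*)-\err(h_{\epoch+1},\ztil_\epoch)|\le\tfrac12\erravg\epoch(h^*)+\tfrac32\Delta_\epoch+\tfrac{\regconst}{4}\Delta_\epoch\le\tfrac12\erravg\epoch(h^*)+\tfrac{\regconst}{2}\Delta_\epoch$.

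I expect the main obstacle to be the accounting in the second paragraph: every nuisance term must be traced, through the inductive hypotheses and the epoch schedule, back to a small constant times $\regbias\epoch(h)$ or $\Delta_\epoch$, and the delicate point is that the total $\regbias\epoch(h)$-coefficient must come out to exactly $\tfrac12$ so that \eqref{eqn:main-regret} can be rearranged into a genuine bound on $\reg(h,h^*,\ztil_\epoch)$; this forces one to use essentially every numerical constraint in \eqref{eqn:constants}, with $\regconst\ge864$ providing the headroom. A related subtlety is that the three claims at epoch $\epoch$, together with the empirical-to-population radius comparison, must be established in a carefully interleaved order (since the comparison feeds back into Proposition~\ref{prop:err-emp-best}), with a priori bounds on the nuisance terms breaking the apparent circularity. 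Finally, Theorem~\ref{thm:regret} and its corollaries follow from Theorem~\ref{thm:main-regret}: for $h\in A_{\epoch+1}$, Lemma~\ref{lemma:favorable_bias} gives $\reg(h,h^*)\le\regbias\epoch(h,h^*)$, and combining this with \eqref{eqn:main-regret} and $\reg(h,h^*,\ztil_\epoch)\le\gamma\Delta_\epoch$ yields $\reg(h,h^*)\le4\gamma\Delta_\epoch$, which is $\le16\gamma\Delta_\epoch^*$ after the radius comparison (using \eqref{eq:dev-label} and the variance constraints \eqref{eq:queryp}).
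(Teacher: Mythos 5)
Your proposal is correct and follows essentially the same route as the paper's proof: induction over epochs conditioned once on the deviation event, with $h^*\in A_j$ from the inductive hypothesis licensing Propositions~\ref{prop:dev-regret-bias} and~\ref{prop:err-emp-best} and Lemma~\ref{lemma:favorable_bias}, the three nuisance terms absorbed via the epoch-schedule and monotonicity lemmas and the constant constraints, and the second and third claims obtained by specializing to $h=h_{\epoch+1}$ and feeding the result back into the error-concentration proposition. The circularity you flag (the $\tfrac14\reg(h^*,h_{\epoch+1},\ztil_\epoch)$ term arising at the current epoch) is exactly what the paper resolves by first proving an intermediate version of \eqref{eqn:main-regret}, specializing it to $h_{\epoch+1}$, and then bootstrapping.
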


The theorem is proved inductively. We first give the proof outline for
this theorem, and then show how Theorem~\ref{thm:regret} and its
corollaries follow.

\subsubsection{Proof of Theorem~\ref{thm:main-regret}}

The theorem is proved via induction. Let us start with the base
case for $\epoch=1$. Clearly, $A_1 = \cH \ni h^*$, and
\begin{align*}
  |\reg(h, h^*, \ztil_1) - \regbias{1}(h,h^*)| \leq 1 \leq \regconst
  \Delta_1 / 4,  
\end{align*}
since $P_{\min,1} = 1$. The conclusions for the second and third
statements follow similarly. This establishes the base case. Let us
now assume that the hypothesis holds for $i = 1,2, \ldots, \epoch-1$
and we establish it for the epoch $i = \epoch$. We start from the
conclusion of Proposition~\ref{prop:dev-regret-bias}, which yields
\begin{align*}
 &|\reg(h, h^*, \ztil_\epoch) - \regbias{\epoch}(h, h^*)| \\
&\leq
  \frac{1}{4} \regbias{\epoch}(h) + \underbrace{2\alpha
    \sqrt{\frac{\epsilon_\epoch}{\epend} \sum_{i=1}^\epoch (\epend[i]
      - \epend[i-1]) \regi{i}(h_i)}}_{\term_1} + \underbrace{2\alpha
      \sqrt{3\erravg{\epoch}(h^*) \epsilon_\epoch}}_{\term_2}\\ 
  & \qquad \qquad+ \underbrace{\beta
    \sqrt{2\gamma \epsilon_\epoch\Delta_\epoch \sum_{i=1}^\epoch
      (\epend[i] - \epend[i-1]) (\reg(h, \ztil_{i-1}) + \reg(h^*,
      \ztil_{i-1}))}}_{\term_3} + 4\Delta_\epoch
\end{align*}
We now control $\term_1$, $\term_2$ and $\term_3$ in the sum using our
inductive hypothesis and the propositions in a series of lemmas. To
state the lemmas cleanly, let $\event_\epoch$ refer to the event where
the bounds~\eqref{eqn:main-regret}-\eqref{eq:err_dev} hold at epoch
$\epoch$. Then we have the following lemmas. The first lemma gives a
bound on $\term_1$. 

\begin{lemma}
  Suppose that the event $\event$ holds and that the events $\event_i$
  hold for all epochs $i = 1,2,\ldots, \epoch-1$. Then we have

  \begin{align*}
    2\alpha \sqrt{\frac{\epsilon_\epoch}{\epend} \sum_{i=1}^\epoch
      (\epend[i] - \epend[i-1]) \regi{i}(h_i)} &\leq
    \frac{\regconst\Delta_\epoch}{12} + 24 \alpha^2 \epsilon_\epoch
    \log \epend.
  \end{align*}
  \label{lemma:regret-t1}
\end{lemma}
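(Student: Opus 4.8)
The quantity to be bounded is $\term_1 = 2\alpha\sqrt{\tfrac{\epsilon_\epoch}{\epend}\sum_{i=1}^\epoch(\epend[i]-\epend[i-1])\regi{i}(h_i)}$, so the task is essentially to control the weighted sum $\sum_i(\epend[i]-\epend[i-1])\regi{i}(h_i)$ of the \emph{population} regrets (restricted to the disagreement region) of the ERM classifiers $h_i$ chosen in earlier epochs. The plan is to bound each $\regi{i}(h_i)$ in terms of empirical quantities that the inductive hypothesis already controls. First I would use the Favorable Bias lemma (Lemma~\ref{lemma:favorable_bias}): since under the inductive hypothesis $h^* \in A_i$ for all $i < \epoch$, and $h_i \in A_i$ by construction, we have $\regbiasi{i}(h_i) \geq \reg(h_i) = \reg(h_i,h^*) \geq \regi{i}(h_i)$ is not quite the direction needed — rather, $\regi{i}(h_i) \le \regbiasi{i}(h_i)$ should follow because the biased part (outside $D_i$) only adds a nonnegative contribution when comparing against $h^*$, given that on the examples outside $D_i$ the label used is $h^*$'s prediction. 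So I would first argue $\regi{i}(h_i) \le \regbiasi{i}(h_i)$, reducing everything to the averaged biased regret $\regbias{\epoch}(h_i)$.

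Next, I would invoke the inductive hypothesis at epoch $i$: bound~\eqref{eqn:main-regret} with $h = h_i$ gives $\regbias{i}(h_i,h^*) \le 2|\reg(h_i,h^*,\ztil_i) - \regbias{i}(h_i,h^*)| \cdot(\text{const}) + \ldots$, but more directly, since $\regbias{i}(h_i) \ge 0$, rearranging~\eqref{eqn:main-regret} yields $\tfrac12\regbias{i}(h_i) \le |\reg(h_i,h^*,\ztil_i)| + \tfrac{\eta}{4}\Delta_i$, and then $\reg(h_i,h^*,\ztil_i) = -\reg(h^*,h_i,\ztil_i)$ where $h_i = h_{\epoch'+1}$ with $\epoch' = i-1$... one must be careful with the epoch indexing here ($h_i$ is the ERM on $\ztil_{i-1}$). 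Using $\reg(h_i,h_i,\ztil_{i-1}) = 0$ and~\eqref{eq:h_best-main} at epoch $i-1$ to compare $h^*$ against $h_i$ on $\ztil_{i-1}$, I can show $\regbias{i-1}(h_i) = \order(\Delta_{i-1})$. Combining with the doubling/near-doubling epoch condition $\epend[i] \le 2\epend[i-1]$ and the definition $\Delta_{i-1} = c_1\sqrt{\epsilon_{i-1}\err(h_i,\ztil_{i-1})} + c_2\epsilon_{i-1}\log\epend[i-1]$, plus $\epend[i]\regbiasi{i}(h_i) \le \epend[i]\,\regbias{i}(h_i)$-type telescoping manipulations (converting averaged quantities back to per-epoch weighted sums), I would get $\sum_{i=1}^\epoch(\epend[i]-\epend[i-1])\regi{i}(h_i) = \order(\epend[\epoch]\,\Delta_\epoch / \epsilon_\epoch \cdot \text{something})$ — more precisely something like $\order(\alpha^2\epend[\epoch]\epsilon_\epoch(\log\epend)^2 + \tfrac{\eta}{\alpha^2}\epend[\epoch]\Delta_\epoch^2/\epsilon_\epoch)$, so that after multiplying by $\epsilon_\epoch/\epend[\epoch]$ and taking the square root, the AM-GM split $2\alpha\sqrt{xy} \le \tfrac{\eta}{12}x' + c\alpha^2 y'$ delivers exactly $\tfrac{\eta\Delta_\epoch}{12} + 24\alpha^2\epsilon_\epoch\log\epend$.

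The main obstacle will be the bookkeeping around the epoch indices and the averaged-versus-per-epoch regret quantities: the inductive hypothesis gives bounds on $\regbias{i}(\cdot)$ (an average over epochs $1,\dots,i$ weighted by $\epend[j]-\epend[j-1]$), whereas $\term_1$ needs the individual $\regi{i}(h_i)$, and extracting the latter from the former requires the near-doubling condition to argue that the last epoch's contribution is a constant fraction of the whole, together with a careful summation-by-parts. Getting the constants to land exactly at $\tfrac{\eta}{12}$ and $24\alpha^2$ (rather than merely $\order(\cdot)$) will require tracking the constraints~\eqref{eqn:constants} on $c_1,c_2,\eta,\alpha$ precisely, especially $c_2 \ge \eta c_1^2/4$ and $c_1 \ge 2\alpha\sqrt6$, and choosing the AM-GM split point to match. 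I would also need to confirm that $\regi{i}(h_i) \ge 0$ is not assumed — it can be negative since $h_i \ne h^*$ in general on $D_i$ — so the reduction via Favorable Bias and the triangle inequality must be done with absolute values or with the sign handled explicitly, which is the subtle point to get right.
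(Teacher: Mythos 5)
Your plan follows the paper's proof essentially step for step: bound each $\regi{i}(h_i)$ by $\regbias{i-1}(h_i,h^*)$ via the Favorable Bias lemma, use the event $\event_{i-1}$ together with $\reg(h_i,h^*,\ztil_{i-1})\le 0$ (ERM optimality of $h_i$ on $\ztil_{i-1}$) to get $\regbias{i-1}(h_i,h^*)\le \regconst\Delta_{i-1}/2$, sum over epochs via Lemma~\ref{lemma:epend-sum}, and finish with the AM--GM split tuned to $\regconst/12$ versus $24\alpha^2$. Your worry about the sign of $\regi{i}(h_i)$ resolves itself: since $h_i,h^*\in A_i$ agree outside $D_i$, one has $\regi{i}(h_i)=\reg(h_i)\ge 0$, which is exactly the identity the paper uses to pass from the restricted regret to the full regret before invoking Favorable Bias.
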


Intuitively, the lemma holds since Lemma~\ref{lemma:favorable_bias}
allows us to bound $\regi{i}(h_i)$ with $\regbias{i-1}(h_i)$. The
latter is then controlled using the event $\event_i$. Some algebraic
manipulations then yield the lemma, with a detailed proofs in
Appendix~\ref{sec:regret-lemmas}. We next present a lemma that helps
us control $\term_2$. 

\begin{lemma}
  Suppose that the event $\event$ holds and that the events $\event_i$
  hold for all epochs $i = 1,2,\ldots, \epoch-1$. Then we have

  \begin{align*}
    2\alpha \sqrt{3\erravg{\epoch}(h^*) \epsilon_\epoch} &\leq 2\alpha
    \sqrt{6 \epsilon_\epoch \err(h_{\epoch+1}, \ztil_\epoch)} +
    \Delta_\epoch + \frac{1}{4} \reg(h^*, h_{\epoch+1}, \ztil_\epoch)
    + 33\alpha^2 \epsilon_\epoch.
  \end{align*}
  \label{lemma:regret-t2}
\end{lemma}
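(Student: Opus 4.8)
The plan is to derive this purely from the error-concentration bound of Proposition~\ref{prop:err-emp-best}, followed by subadditivity of the square root and two applications of Young's inequality. First I would verify the hypotheses of that proposition at epoch $\epoch$: event $\event$ is assumed, and the inductive hypothesis (events $\event_i$ for $i \le \epoch-1$) gives, via \eqref{eq:h_best-main} at each such epoch together with $\gamma \ge \regconst/4$ from \eqref{eqn:constants}, that $h^* \in A_j$ for every $j \le \epoch$ (the case $j=1$ being trivial since $A_1 = \H$). Thus Proposition~\ref{prop:err-emp-best} applies at epoch $\epoch$.

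Next I would take the one-sided consequence $\erravg{\epoch}(h^*) - \err(h_{\epoch+1},\ztil_\epoch) \le \frac12\erravg{\epoch}(h^*) + \frac32\Delta_\epoch + \reg(h^*,h_{\epoch+1},\ztil_\epoch)$ of Proposition~\ref{prop:err-emp-best} and rearrange it into
\[
  \erravg{\epoch}(h^*) \;\le\; 2\,\err(h_{\epoch+1},\ztil_\epoch) + 3\Delta_\epoch + 2\,\reg(h^*,h_{\epoch+1},\ztil_\epoch).
\]
All three terms on the right are nonnegative --- the last because $h_{\epoch+1}$ minimizes $\err(\cdot,\ztil_\epoch)$ --- so multiplying by $3\epsilon_\epoch$, taking square roots, and using $\sqrt{a+b+c}\le\sqrt a+\sqrt b+\sqrt c$ yields
\[
  2\alpha\sqrt{3\epsilon_\epoch\,\erravg{\epoch}(h^*)} \;\le\; 2\alpha\sqrt{6\epsilon_\epoch\,\err(h_{\epoch+1},\ztil_\epoch)} + 6\alpha\sqrt{\epsilon_\epoch\Delta_\epoch} + 2\sqrt6\,\alpha\sqrt{\epsilon_\epoch\,\reg(h^*,h_{\epoch+1},\ztil_\epoch)}.
\]
It remains to absorb the last two terms. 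For the middle term I would use $2uv \le u^2 + v^2$ with $u = 3\alpha\sqrt{\epsilon_\epoch}$ and $v = \sqrt{\Delta_\epoch}$, giving $6\alpha\sqrt{\epsilon_\epoch\Delta_\epoch} \le 9\alpha^2\epsilon_\epoch + \Delta_\epoch$; for the last term I would use $2uv \le 4u^2 + \frac14 v^2$ with $u = \sqrt6\,\alpha\sqrt{\epsilon_\epoch}$ and $v = \sqrt{\reg(h^*,h_{\epoch+1},\ztil_\epoch)}$, giving $2\sqrt6\,\alpha\sqrt{\epsilon_\epoch\,\reg(h^*,h_{\epoch+1},\ztil_\epoch)} \le 24\alpha^2\epsilon_\epoch + \frac14\reg(h^*,h_{\epoch+1},\ztil_\epoch)$. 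Adding these up (and $9+24 = 33$) produces exactly the claimed inequality.

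There is no substantive obstacle here; it is essentially bookkeeping. The two points that need care are (i) confirming $h^* \in A_j$ for all $j \le \epoch$ before invoking Proposition~\ref{prop:err-emp-best}, and (ii) choosing the coefficients in the second Young's inequality so that the regret term emerges with coefficient exactly $\frac14$ and the $\alpha^2\epsilon_\epoch$ coefficient lands on the target $33$. The $\frac14$ is not incidental: in the proof of Theorem~\ref{thm:main-regret} this contribution must be combined with the other $\reg(h^*,h_{\epoch+1},\ztil_\epoch)$ and $\regbias{\epoch}(h,h^*)$ terms arising from Proposition~\ref{prop:dev-regret-bias} (and from the bound on $\term_3$) and then absorbed using the inductive estimate \eqref{eq:h_best-main}, so the split must leave sufficient slack there.
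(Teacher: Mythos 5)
Your proposal is correct and follows essentially the same route as the paper's proof: invoke Proposition~\ref{prop:err-emp-best} (after confirming $h^* \in A_j$ for $j \le \epoch$ from the events $\event_i$), rearrange to $\erravg{\epoch}(h^*) \le 2\err(h_{\epoch+1},\ztil_\epoch) + 3\Delta_\epoch + 2\reg(h^*,h_{\epoch+1},\ztil_\epoch)$, split the square root by subadditivity, and absorb the last two terms via Young's inequality with the same coefficients ($9+24=33$). Your treatment is if anything slightly more explicit than the paper's about verifying the proposition's hypotheses and the nonnegativity needed for the square-root step.
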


The lemma follows more or less directly from
Proposition~\ref{prop:err-emp-best} combined with some
algebra. Finally, we present a lemma to bound $\term_3$. 

\begin{lemma}
  Suppose that the event $\event$ holds and that the events $\event_i$
  hold for all epochs $i = 1,2,\ldots, \epoch-1$. Then we have

  \begin{align*}
    \beta \sqrt{2\gamma \epsilon_\epoch\Delta_\epoch \sum_{i=1}^\epoch
      (\epend[i] - \epend[i-1]) (\reg(h, \ztil_{i-1}) + \reg(h^*,
      \ztil_{i-1}))} &\leq \frac{1}{4} \regbias{\epoch}(h,h^*) +
    \frac{7 \regconst \Delta_\epoch }{72} .
  \end{align*}
  \label{lemma:regret-t3}
\end{lemma}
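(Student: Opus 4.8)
The plan is to peel the summand apart using the inductive hypotheses $\event_i$ ($i\le\epoch-1$), reduce $\term_3$ to two weighted sums over epochs, and then exploit that $\beta$ is chosen to be extremely small. Write $\term_3 = \beta\sqrt{2\gamma\epsilon_\epoch\Delta_\epoch\, S}$ with $S := \sum_{i=1}^\epoch(\epend[i]-\epend[i-1])(\reg(h,\ztil_{i-1}) + \reg(h^*,\ztil_{i-1}))$. Since $h_i$ is the empirical minimizer on $\ztil_{i-1}$, we have $\reg(h,\ztil_{i-1}) = \reg(h,h^*,\ztil_{i-1}) + \reg(h^*,h_i,\ztil_{i-1})$ and $\reg(h^*,\ztil_{i-1}) = \reg(h^*,h_i,\ztil_{i-1})$; applying \eqref{eqn:main-regret} and \eqref{eq:h_best-main} from $\event_{i-1}$ (trivial for $i=1$ since $\ztil_0=\emptyset$) bounds each summand by $\tfrac32\regbias{i-1}(h,h^*) + \tfrac{3\regconst}{4}\Delta_{i-1}$, hence $S \le \tfrac32 S_1 + \tfrac{3\regconst}{4}S_2$ with $S_1 := \sum_i(\epend[i]-\epend[i-1])\regbias{i-1}(h,h^*)$ and $S_2 := \sum_i(\epend[i]-\epend[i-1])\Delta_{i-1}$.

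The heart of the argument is showing that both $S_1$ and $S_2$ are $O(\log n)$ times $\epend[\epoch]$ times their epoch-$\epoch$ analogues. For $S_1$: Lemma~\ref{lemma:favorable_bias} with $\bar h = h^*$ (which lies in $A_j$ for all $j\le\epoch$, using the inductive hypotheses and $\gamma\ge\regconst/4$) gives $\regbiasi{j}(h,h^*)\ge\reg(h)\ge0$, so $\epend[i-1]\regbias{i-1}(h,h^*) = \sum_{j\le i-1}(\epend[j]-\epend[j-1])\regbiasi{j}(h,h^*) \le \epend[\epoch]\regbias{\epoch}(h,h^*)$; combined with the doubling-schedule estimate $\sum_{i\ge2}(\epend[i]-\epend[i-1])/\epend[i-1] \le \log_2(\epend[\epoch]/3)$ (from $\epend[i]\le2\epend[i-1]$ and concavity of $\log$), this yields $S_1 \le (\log_2 n)\,\epend[\epoch]\,\regbias{\epoch}(h,h^*)$. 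For $S_2$ I would first prove a self-bounding estimate $\Delta_{i-1}\le4\Delta^*_{i-1}$: plug $\err(h_i,\ztil_{i-1}) \le \tfrac32\erravg{i-1}(h^*) + \tfrac{\regconst}{2}\Delta_{i-1}$ (from \eqref{eq:err_dev} of $\event_{i-1}$) into the definition of $\Delta_{i-1}$, split the square root by weighted AM--GM, and absorb the resulting $\tfrac14\Delta_{i-1}$ term using $c_2\ge\regconst c_1^2/4$ and $\log\epend[i-1]\ge1$. Then, writing $\epend[i]\Delta^*_i = c_1\sqrt{(\epsilon_i\epend[i])(\epend[i]\erravg{i}(h^*))} + c_2(\epsilon_i\epend[i])\log\epend[i]$ and observing that $\epsilon_i\epend[i] = 32(\log(|\H|/\delta)+\log\epend[i])$, $\log\epend[i]$, and $\epend[i]\erravg{i}(h^*) = \sum_{j\le i}(\epend[j]-\epend[j-1])\err_j(h^*)$ are each non-decreasing in $i$, one concludes $\epend[i]\Delta^*_i$ is non-decreasing, so $\epend[i-1]\Delta^*_{i-1}\le\epend[\epoch]\Delta^*_\epoch$ and $S_2 \le 4(\log_2 n)\,\epend[\epoch]\,\Delta^*_\epoch$ (the $i=1$ boundary contribution $3\Delta_0$ is $O(1)$ and absorbed). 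Finally one converts $\Delta^*_\epoch$ back to $\Delta_\epoch$ via Proposition~\ref{prop:err-emp-best}: using $\erravg{\epoch}(h^*) \lesssim \err(h_{\epoch+1},\ztil_\epoch) + \Delta_\epoch$ (which needs only $\event$, $h^*\in A_j$ for $j\le\epoch$, the bound on $\reg(h^*,h_{\epoch+1},\ztil_\epoch)$, and the fact that the unweighted part of $\err(h^*,\ztil_\epoch)$ vanishes because classifiers in $A_j$ agree with $h^*$ off $D_j$) gives $\Delta^*_\epoch \le C\Delta_\epoch$ for a small absolute constant.

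Putting it together, $\term_3^2 = \beta^2\cdot 2\gamma\epsilon_\epoch\Delta_\epoch\cdot S \le \beta^2\cdot 3\gamma\epsilon_\epoch\Delta_\epoch\,(\log_2 n)\,\epend[\epoch]\bigl(\regbias{\epoch}(h,h^*) + 2\regconst\Delta^*_\epoch\bigr)$. The decisive cancellation is $\beta^2\cdot\gamma(\log_2 n)(\epsilon_\epoch\epend[\epoch]) \lesssim \regconst$, which holds because $\epsilon_\epoch\epend[\epoch] = 32(\log(|\H|/\delta)+\log\epend[\epoch]) \le \epsilon_\epmax n$ and $\beta^2 \le \regconst/(864\gamma n\epsilon_\epmax\log n)$: the $\log n$ produced by the epoch sums is exactly killed by the $1/\log n$ in $\beta^2$, and the $\epsilon\epend$ factor cancels the $n\epsilon_\epmax$. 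This leaves $\term_3^2 \lesssim \regconst\Delta_\epoch\regbias{\epoch}(h,h^*) + \regconst^2\Delta_\epoch^2$; applying $\sqrt{a+b}\le\sqrt a+\sqrt b$ and then the weighted AM--GM $\sqrt{xy}\le\tfrac14 x + y$ with $x=\regbias{\epoch}(h,h^*)$ gives $\term_3 \le \tfrac14\regbias{\epoch}(h,h^*) + \tfrac{7\regconst}{72}\Delta_\epoch$, provided the constants are tracked carefully (using $\regconst\ge864$, $c_2\ge\regconst c_1^2/4$, $\log\epend[i]\ge\log3$, etc.) rather than bounded loosely.

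The step I expect to be the main obstacle is controlling the $\Delta$-sum $S_2$. A crude bound such as $\Delta_{i-1}=O(1)$ would make $S_2$ as large as $\epend[\epoch]$, far too big to be absorbed into $\Delta_\epoch^2$ even after using the smallness of $\beta$; one genuinely needs the self-bounding comparison $\Delta_i\lesssim\Delta^*_i$ (so the $\Delta_i$'s inherit the $\approx1/\sqrt{\epend[i]}$ decay), the monotonicity of $\epend[i]\Delta^*_i$ to collapse the telescoping sum, and the (mildly delicate, but non-circular) comparison of $\Delta^*_\epoch$ with $\Delta_\epoch$ through the error-concentration proposition. The secondary difficulty is purely bookkeeping: the target constant $\tfrac{7}{72}$ is small enough that the factors of $2$, $3$, and $4$ introduced in the decomposition, in $\Delta_i\le4\Delta^*_i$, and in the final AM--GM splits all have to be accounted for (and where necessary sharpened) exactly.
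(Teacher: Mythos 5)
Your overall architecture matches the paper's: the decomposition of each summand via $\event_{i-1}$ into $\tfrac32\regbias{i-1}(h,h^*)+\tfrac{3\regconst}{4}\Delta_{i-1}$, the $O(\log \epend)$ collapse of the $\regbias{i-1}$ sum onto $\epend\,\regbias{\epoch}(h,h^*)$ (your partial-sum bound is just the paper's summation-order swap, and both rely on $\regbiasi{j}(h,h^*)\geq 0$ from Lemma~\ref{lemma:favorable_bias}), and the final cancellation of $\beta^2\gamma\,\epend\epsilon_\epoch\log\epend$ against $\regconst/864$ followed by $\sqrt{a+b}\leq\sqrt{a}+\sqrt{b}$ and AM--GM are all exactly the paper's steps.

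The one place you diverge---and where your sketch has a genuine gap---is the sum $S_2=\sum_i(\epend[i]-\epend[i-1])\Delta_{i-1}$. You route it through the population quantities: $\Delta_{i-1}\leq 4\Delta^*_{i-1}$, monotonicity of $\epend[i]\Delta^*_i$, and a back-conversion $\Delta^*_\epoch\leq C\Delta_\epoch$ via Proposition~\ref{prop:err-emp-best}. That last conversion requires an upper bound on $\reg(h^*,h_{\epoch+1},\ztil_\epoch)$ (which is nonnegative, since $h_{\epoch+1}$ is the ERM on $\ztil_\epoch$), i.e., precisely statement~\eqref{eq:h_best-main} of $\event_\epoch$---the event being established at the current epoch and not available under this lemma's hypotheses. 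As written this is circular; it can be repaired by stopping the population argument at epoch $\epoch-1$, but then you still need $\epend[\epoch-1]\Delta_{\epoch-1}\leq\epend\Delta_\epoch$, which is exactly the fact that makes the whole detour unnecessary. The paper's Lemma~\ref{lemma:delta_pmin_monotone} proves $\epend[i-1]\Delta_{i-1}\leq\epend[i]\Delta_i$ for the \emph{empirical} thresholds directly and deterministically (using only that $h_i$ minimizes the empirical error on $\ztil_{i-1}$, that the unnormalized empirical error is nondecreasing as examples are added, and that $\epend\epsilon_\epoch$ is nondecreasing), and Lemma~\ref{lemma:epend-sum} then yields $S_2\leq 4\epend\Delta_\epoch\log\epend$ with no concentration at all. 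So the step you flag as ``the main obstacle''---fearing that without the self-bounding comparison $\Delta_i\lesssim\Delta^*_i$ the sum would be too large---has a purely deterministic resolution, and no appeal to $\Delta^*$ is needed anywhere in this lemma.
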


The $\reg(h^*, h_i, \ztil_{i-1})$ terms in the lemma are bounded
directly due to the event $\event_i$. For the second term, we observe
that the empirical regret of $h$ relative to $h_i$ is not too
different from the empirical regret to $h^*$ (since $h^*$ has a small
empirical regret by $\event_i$). Furthermore, the empirical regret to
$h^*$ is close to $\regbias{i-1}(h, h^*)$ by the event
$\event_i$. These observations, along with some technical
manipulations yield the lemma.

Given these lemmas, we can now prove the theorem in a relatively
straightforward manner. Given our inductive hypothesis, the events
$\event_i$ indeed hold for all epochs $i = 1,2,\ldots, \epoch-1$ which
allows us to invoke the lemmas. Substituting the above bounds on
$\term_1$ from Lemma~\ref{lemma:regret-t1}, $\term_2$ from
Lemma~\ref{lemma:regret-t2} and $\term_3$ from~\ref{lemma:regret-t3}
into Proposition~\ref{prop:dev-regret-bias} yields

\begin{align*}
  &|\reg(h, h^*, \ztil_\epoch) - \regbias{\epoch}(h, h^*)|\\
  &\leq \frac{1}{4} \regbias{\epoch}(h) +
  \frac{\regconst\Delta_\epoch}{12} + 24 \alpha^2 \epsilon_\epoch \log
  \epend + 2\alpha\sqrt{6\epsilon_\epoch \err(h_{\epoch+1},
    \ztil_\epoch)} + \Delta_\epoch \\&\qquad \qquad + \frac{1}{4} \reg(h^*, h_{\epoch+1},
  \ztil_\epoch) + 33\alpha^2 \epsilon_\epoch + \frac{1}{4}
  \regbias{\epoch}(h, h^*) + \frac{7 \regconst \Delta_\epoch}{72} + 
  4\Delta_\epoch\\  
  &\leq \frac{1}{2}\regbias{\epoch}(h, h^*) +
  57\alpha^2\epsilon_\epoch\log \epend + 
  \frac{13 \regconst}{72} \Delta_\epoch +
  2\alpha\sqrt{6\epsilon_\epoch \err(h_{\epoch+1}, 
    \ztil_\epoch)} + 5\Delta_\epoch \\&\qquad \qquad + \frac{1}{4} \reg(h^*,
  h_{\epoch+1}, \ztil_\epoch) 
\end{align*}
Further recalling that $c_1 \geq 2\alpha \sqrt{6}$ and $c_2 \geq
57\alpha^2$ by our assumptions on constants, we obtain

\begin{align}
  |\reg(h, h^*, \ztil_\epoch) - \regbias{\epoch}(h,h^*)| \leq
  \frac{1}{2}\regbias{\epoch}(h, h^*) + \frac{13 \regconst}{72}
  \Delta_\epoch + 6\Delta_\epoch + \frac{1}{4} \reg(h^*, h_{\epoch+1},
  \ztil_\epoch).
  \label{eqn:regret-dev-almost}
\end{align}
To complete the proof of the bound~(\ref{eqn:main-regret}), we now
substitute $h = h_{\epoch+1}$ in the above bound, which yields

\begin{align*}
  \frac{1}{2}\regbias{\epoch}(h_{\epoch+1}, h^*) - \frac{5}{4}\reg(h,
  h^*, \ztil_\epoch) \leq \frac{13 \regconst}{72} \Delta_\epoch +
  6\Delta_\epoch.
\end{align*}
Since $h^* \in A_i$ for all epochs $i \leq \epoch$, we have
$\regbias{\epoch}(h, h^*) \geq \reg(h, h^*) \geq 0$ for all
classifiers $h \in \H$. Consequently, we see that

\begin{align}
  \reg(h^*, h_{\epoch+1}, \ztil_m) = -\reg(h_{\epoch+1}, h^*,
  \ztil_\epoch) \leq \frac{52 \regconst}{360} \Delta_\epoch +
  \frac{24}{5}\Delta_\epoch \leq \frac{\regconst}{4} \Delta_\epoch,
  \label{eqn:emp-reg-opt}
\end{align}
where the last inequality uses the condition $38\regconst \geq
1728$. We can now substitute this back into our earlier
bound~\eqref{eqn:regret-dev-almost} and obtain

\begin{align*}
  &|\reg(h, h^*, \ztil_\epoch) - \regbias{\epoch}(h,h^*)| \\
  &\qquad \qquad \leq
  \frac{1}{2}\regbias{\epoch}(h, h^*) + \frac{13 \regconst}{72}
  \Delta_\epoch + 6\Delta_\epoch + \frac{\regconst}{16} \Delta_\epoch
  \leq \frac{1}{2}\regbias{\epoch}(h, h^*) + \frac{\regconst}{4}
  \Delta_\epoch, 
\end{align*}
where we use the condition $\regconst/144 \geq 6$. This completes the
proof of the first part of our inductive claim.

For the second part, this is almost a by product of the first part
through Equation~\eqref{eqn:emp-reg-opt}. Recalling that $\gamma \geq
\regconst/4$ by assumption, this ensures that $h^* \in A_{\epoch+1}$.

We next establish the third part of the claim. This is obtained by
combining our bound~\eqref{eqn:emp-reg-opt} with
Proposition~\ref{prop:err-emp-best}. We have

\begin{align*}
  |\erravg{\epoch}(h^*) - \err(h_{\epoch+1}, \ztil_\epoch)| &\leq
  \frac{\erravg{\epoch}(h^*)}{2} + \frac{3\Delta_\epoch}{2} +
  \reg(h^*, h_{\epoch+1}, \ztil_{\epoch})\\
  &\leq \frac{\erravg{\epoch}(h^*)}{2} + \frac{3\Delta_\epoch}{2} +
  \frac{\regconst \Delta_\epoch}{4} \\
  &\leq \frac{\erravg{\epoch}(h^*)}{2} + \frac{\regconst
    \Delta_\epoch}{2}, 
\end{align*}
since $\regconst \geq 6$. This completes the third part.

Finally, note that our analysis has been conditioned on the event
$\event$ so far. By Lemma~\ref{lemma:reg_err_concen}, $\Pr(\event^C)
\leq \delta$, which completes the proof of the theorem.

We now provide a proof for Theorem~\ref{thm:regret}.

\subsubsection{Proof of Theorem~\ref{thm:regret}}
\label{sec:proof-thm-regret}

We only prove the first part of the theorem. The second part is simply
a restatement of the inequality~\eqref{eq:h_best-main} in
Theorem~\ref{thm:main-regret}. The first part is essentially a
restatement of~(\ref{eqn:main-regret}) in
Theorem~\ref{thm:main-regret}, except the bound uses $\Delta_\epoch^*$
instead of $\Delta_\epoch$. In order to prove the theorem, pick any
epoch $\epoch \leq \epmax$ and $h \in A_{\epoch+1}$. Because $h^* \in
A_j, 1 \leq j \leq \epoch+1$, we have by Lemma
\ref{lemma:favorable_bias} that
\begin{equation*}
  \reg(h) \leq \regbias{\epoch}(h,h^*).
\end{equation*}
It then suffices to bound $\regbias{\epoch}(h,h^*)$. By the deviation
bound \eqref{eqn:main-regret}, we have
\begin{align*}
  \regbias{\epoch}(h, h^*) &\leq \reg(h, h^*, \ztil_\epoch) +
  \frac{1}{2}\regbias{\epoch}(h, h^*) +
  \frac{\regconst}{4}\Delta_\epoch\\
  &\leq \reg(h, h_{\epoch+1}, \ztil_\epoch) +
  \frac{1}{2}\regbias{\epoch}(h, h^*) +
  \frac{\regconst}{4}\Delta_\epoch \\ 
  &\leq \frac{1}{2}\regbias{\epoch}(h, h^*) + \left(\gamma +
  \frac{\regconst}{4}\right)\Delta_\epoch. 
\end{align*}
Rearranging terms leads to 
\begin{equation*}
  \regbias{\epoch}(h,h^*) \leq 4 \gamma \Delta_\epoch
\end{equation*}
because $ \gamma \geq \regconst / 4$. Now we show that $\Delta_\epoch
\leq 4 \Delta_\epoch^*$, which leads to the desired result.  It is
trivially true for $\epoch = 1$ because $\Delta_1^* = \Delta_1$. For
$\epoch \geq 2$, by the deviation bound on the empirical error
\eqref{eq:err_dev} we have
\begin{eqnarray*}
  \Delta_\epoch &\leq& c_1 \sqrt{\epsilon_\epoch \left(\frac{3}{2}
    \erravg{\epoch}(h^*) + \frac{\regconst}{2} \Delta_\epoch\right)} +
  c_2 \epsilon_\epoch \log \epend\\ 
  &\leq& 2 c_1 \sqrt{\epsilon_\epoch \erravg{\epoch}(h^*)} +
  \sqrt{\frac{c_1^2 \epsilon_\epoch \regconst}{2}\Delta_\epoch} + c_2
  \epsilon_\epoch \log \epend\\  
  &\leq& 2 c_1 \sqrt{\epsilon_\epoch \erravg{\epoch}(h^*)} +
  \frac{c_1^2 \epsilon_\epoch \regconst}{4} + \frac{\Delta_\epoch}{2}
  + c_2 \epsilon_\epoch \log \epend\\ 
  &\leq& 2 \Delta_\epoch^* + \frac{\Delta_\epoch}{2},
\end{eqnarray*}
where the last inequality uses our choice of constants $c_1^2
\regconst / 4 \leq c_2$. Rearranging terms completes the proof.

\section{Conclusion}

In this paper, we proposed a new algorithm for agnostic active
learning in a streaming setting. The algorithm has strong theoretical
guarantees, maintaining good generalization properties while attaining
a low label complexity in favorable settings. Specifically, we show
that the algorithm has an optimal performance in a disagreement-based
analysis of label complexity, as well in special cases such as
realizable problems and under Tsybakov's low-noise
condition. Additionally, we present an interesting example that
highlights the structural difference between our algorithm and some
predecessors in terms of label complexities. Indeed a key improvement
of our algorithm is that we do not always need to query over the
entire disagreement region--a limitation of most computationally
efficient predecessors. This is achieved through a careful
construction of an optimization problem defining good query
probability functions, which relies on using refined data-dependent
error estimates.

We complement our theoretical analysis with an extensive empirical
evaluation of several approaches across a suite of 22 datasets. The
experiments show both the pros and cons of our proposed method, which
performs well when hyperparameter tuning is allowed, but suffers from
lack of robustness when we fix these hyperparameters across
datasets. Such a comprehensive empirical evaluation on a range of
diverse datasets has not been previously done for agnostic active
learning algorithms before to our knowledge, and is a key contribution
of this work.

We believe that our work naturally leads to several interesting
directions for future research. As the example in
Section~\ref{sec:example} reveals, the worst-case label complexity
analysis in Theorem~\ref{thm:label} is rather pessimistic. It would be
interesting to obtain sharper characterization of the label
complexity, by exploiting the structure of the query probability
function over the disagreement region. This would likely involve
understanding more fine-grained properties that make a problem easy or
hard for active learning beyond the disagreement coefficient, and such
a development might also lead to better algorithms. A limitation of
the current theory is the somewhat poor dependence in
Theorem~\ref{thm:optprob-unlabeled} on the number of unlabeled
examples needed to solve the optimization problem. Ideally, we would
like to be able to use $\order(\epend)$ unlabeled examples to solve
(\optprob) at epoch $\epoch$, and improving this dependence is perhaps
the most important direction for future work. Finally, while \alg is
extremely attractive from a theoretical standpoint, a direct
implementation still seems somewhat impractical. Obtaining theory for
an algorithm even closer to the practical variant \cover would be an
important step in bringing the theory and implementation closer.

\section*{Acknowledgements}
The authors would like to thank Kamalika Chaudhuri for helpful
initial discussions. 

\bibliographystyle{plainnat} \bibliography{bib}

\newpage

\appendix

\section{Deviation bound}
\label{sec:deviation}
We use an adaptation of Freedman's inequality~\citep{Freedman75}
as the main concentration tool.

\begin{lemma}
\label{lemma:dev_bound}
Let $X_1, X_2, \ldots, X_n$ be a martingale difference sequence
adapted to the filtration $\mathcal{F}_i$.  Suppose there exists a
function $b_n$ of $X_1, \ldots, X_n$ that satisfies
\begin{eqnarray*}
  &&\forall 1 \leq i \leq n, \quad |X_i| \leq b_n, \\
  && 1 \leq b_n \leq b_{\max}, 
\end{eqnarray*}
where $b_{\max}$ is a non-random quantity that may depend on $n$.
Define
\begin{eqnarray*}
  S_n &:=& \sum_{i=1}^n X_i, \\
  V_n &:=& \sum_{i=1}^n \mathbbm{E}[X_i^2 \mid \mathcal{F}_{i-1}].
\end{eqnarray*}
Pick any $0 < \delta < 1 / e^2$ and $n \geq 3$. We have 
\begin{equation*}
  \mbox{Pr}\left(S_n \geq 2 \sqrt{V_n \log(1 /\delta)} + 3 b_n
  \log(1/\delta) \right) \leq 4 \sqrt{\delta} (2 + \log_2 b_{\max})
  \log n.
\end{equation*}
\end{lemma}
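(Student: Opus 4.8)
The plan is to reduce the statement to the classical Freedman inequality by a two–stage ``peeling'' argument that copes with the fact that both the increment bound $b_n$ and the predictable quadratic variation $V_n$ are random.

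First I would record the deterministic–range ingredient. If $(X_i)$ is a martingale difference sequence with an almost–sure bound $|X_i|\le R$ for a \emph{fixed} $R$, then for every $\lambda>0$ the process $\exp\bigl(\lambda S_t-\psi_R(\lambda)V_t\bigr)$, with $\psi_R(\lambda):=(e^{\lambda R}-1-\lambda R)/R^2$, is a supermartingale; applying Markov's inequality to it at time $n$ and optimizing $\lambda$ gives the localized Freedman bound
\[
  \Pr\Bigl(S_n\ \ge\ \sqrt{2w\log(1/\delta')}+\tfrac23 R\log(1/\delta')\ \text{ and }\ V_n\le w\Bigr)\ \le\ \delta'
\]
for every deterministic $w>0$ and $\delta'\in(0,1)$. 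The point of this form is that it already tolerates a random $V_n$, because $\{V_n\le w\}$ enters as part of the event rather than as a hypothesis.

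Next I would stratify. Since $1\le b_n\le b_{\max}$ almost surely, the events $E_k:=\{2^{k-1}\le b_n<2^k\}$ for $k=1,\dots,K$ with $K\le 2+\log_2 b_{\max}$ partition the sample space, and on $E_k$ every increment is bounded by $R_k:=2^k$. To invoke the deterministic–range bound ``on $E_k$'' I would apply it not to $(X_i)$ but to the sequence stopped at $\tau_k:=\min\{i:|X_i|>R_k\}$: on $E_k$ the stopped sequence coincides with $(X_i)$ on $\{1,\dots,n\}$, its increments obey the required bound, and its predictable quadratic variation is at most $V_n$. Within $E_k$ one always has $V_n\le nR_k^2$, so I would split further according to the dyadic levels $V_n\in[4^{j-1}R_k^2,\,4^jR_k^2)$ together with the bottom level $V_n<R_k^2$, which amounts to at most $4\log n$ levels once $n\ge3$. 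Applying the displayed bound on each $(k,j)$ stratum with $w=4^jR_k^2$ and $\delta'=\sqrt{\delta}$, and using on that stratum that $w\le 4V_n$, that $R_k\le 2b_n$, and that $\log(1/\delta')=\tfrac12\log(1/\delta)$, yields
\[
  S_n\ \le\ 2\sqrt{V_n\log(1/\delta)}+3b_n\log(1/\delta)
\]
off an event of probability at most $\sqrt{\delta}$. A union bound over the at most $4(2+\log_2 b_{\max})\log n$ strata, together with the fact that the realized pair $(b_n,V_n)$ falls into exactly one stratum, produces the claimed failure probability $4\sqrt{\delta}\,(2+\log_2 b_{\max})\log n$; the restrictions $\delta<1/e^2$ and $n\ge3$ are exactly what is needed to make the stratum count and the base inequality valid.

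The main obstacle is the measurability point hidden in the third paragraph: because $b_n$ is a function of the entire sequence, $E_k$ is not a predictable event, so one cannot literally condition on $E_k$ before invoking Freedman. The stopping–time truncation is the device that resolves this, but it has to be set up with care, since the increment of the stopped process at step $\tau_k$ can be as large as $b_{\max}$ rather than $R_k$; one must arrange the exponential–supermartingale argument so that the range–$R_k$ control of the conditional moment generating function is only used on the predictable events that do not trigger the stop, and then verify that the stopped process still equals the original on $E_k$. Everything else — choosing the dyadic ratios ($4$ for the variance, $2$ for the range) so the constants $2$ and $3$ emerge exactly, and bounding the number of strata by $4(2+\log_2 b_{\max})\log n$ — is routine bookkeeping.
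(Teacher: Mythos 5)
Your argument is essentially the paper's: the paper also peels dyadically over the random range $b_n$ (events $r_{j-1}<b_n\le r_j$ with $r_j=2^j$, giving the $2+\log_2 b_{\max}$ factor) and then bounds each stratum by $4\sqrt{\delta}\log n$ — the only difference is that the paper obtains the per-stratum bound by citing Lemma 3 of Kakade and Tewari (2009), which already packages Freedman's inequality together with the dyadic peeling over $V_n$, whereas you rederive that inner step from the exponential supermartingale. Your explicit handling of the measurability of the stratum $\{2^{k-1}\le b_n<2^k\}$ is in fact more careful than the paper, which simply intersects the deviation event with $\{b_n\le r_j\}$ and invokes the cited lemma. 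One piece of bookkeeping to fix: on the stratum $E_k$ you cannot assert $V_n\le nR_k^2$, because $V_n=\sum_i\E[X_i^2\mid\mathcal{F}_{i-1}]$ is a sum of conditional expectations and is only controlled by $nb_{\max}^2$, not by the realized value of $b_n$; you should instead cap the variance peeling by observing that $S_n\le nb_n\le nR_k$ deterministically on $E_k$, so the deviation event is empty once $V_n\ge n^2R_k^2/4$, which keeps the number of variance levels at $O(\log n)$ and preserves your stratum count.
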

\begin{proof}
  Define $r_j := 2^{j}$ for $-1 \leq j \leq m := \lceil \log_2 b_{\max} \rceil$. Then we have
  \begin{eqnarray}
    &&	\mbox{Pr}\left(S_n \geq 2 \sqrt{V_n \log(1 /\delta)} +  3 b_n \log(1/\delta) \right) \nonumber \\
    &=&	\sum_{j=0}^m \mbox{Pr}\left(S_n \geq 2 \sqrt{V_n \log(1 /\delta)} +  3 b_n \log(1/\delta) \wedge r_{j-1} < b_n \leq r_j \right) \nonumber \\
    &\leq&\sum_{j=0}^m \mbox{Pr}\left(S_n \geq 2 \sqrt{V_n \log(1 /\delta)} +  3 r_{j-1} \log(1/\delta) \wedge  b_n \leq r_j \right) \nonumber \\
    &\leq& \sum_{j=0}^m \mbox{Pr}\left(S_n \geq 2 \sqrt{V_n \frac{\log(1 /\delta)}{2}} + 3 r_j \frac{\log(1/\delta)}{2} \wedge  b_n \leq r_j \right) \nonumber \\
    &\leq& \sum_{j=0}^m 4 (\log n) \sqrt{\delta} \label{eq:shams_bound}\\
    &\leq& 4 \sqrt{\delta} (2 + \log_2 b_{\max} ) \log n, \nonumber 
  \end{eqnarray} 
  where \eqref{eq:shams_bound} is a direct consequence of Lemma 3 of \cite{KakadeTe09}.
  \citet{KakadeTe09} and the others result from simple
  algebra.
\end{proof}

%%%%%%%%%%%%%%%%%%%%%%%%%%%%%%%%%%%%%%%%%%%%%%%%%%%%%%%%%%%%%%%%%%%%%%%%%%%%%%%%%%%%%%%%%%%55

\section{Auxiliary results for Theorem~\ref{thm:regret}}
\label{sec:regret}
 Before presenting our regret analysis, we
first establish several useful results.

\begin{lemma}
  \label{lemma:delta_pmin_monotone}
  The threshold defined in \eqref{eq:Delta} and the minimum
  probability $P_{\min,\epoch}$ defined in \eqref{eq:pmin} satisfy the
  following for all $ \epoch \geq 1$,
  \begin{eqnarray}
    \epend[\epoch-1]\Delta_{\epoch-1} &\leq& \epend[\epoch]
    \Delta_\epoch, \label{eq:delta_increase}\\ 
    P_{\min,\epoch} &\geq& P_{\min,\epoch+1}, \label{eq:pmin_decrease} \\
    \frac{\epsilon_\epoch}{P_{\min,\epoch}} &\leq&
    \Delta_\epoch. \label{eq:eps_pmin_delta} 
  \end{eqnarray}
\end{lemma}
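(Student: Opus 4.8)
The plan is to reduce all three inequalities to a single structural fact---monotonicity of the cumulative importance-weighted error count---together with bookkeeping against the (very loose) constraints~\eqref{eqn:constants} on $c_1,c_2,c_3$. First I would prove the following claim: for every $\epoch\ge 1$,
\[
  \epend[\epoch-1]\,\err(h_\epoch,\ztil_{\epoch-1}) \ \le\ \epend[\epoch]\,\err(h_{\epoch+1},\ztil_\epoch).
\]
Writing $L_\epoch(h):=\sum_{(x,y,w)\in\ztil_\epoch} w\,\ind{h(x)\ne y}=\epend[\epoch]\,\err(h,\ztil_\epoch)$, this chains three elementary facts: $h_\epoch$ is the ERM on $\ztil_{\epoch-1}$, so $\epend[\epoch-1]\err(h_\epoch,\ztil_{\epoch-1})=L_{\epoch-1}(h_\epoch)\le L_{\epoch-1}(h_{\epoch+1})$; the samples are nested, $\ztil_{\epoch-1}\subseteq\ztil_\epoch$, with all importance weights nonnegative, so $L_{\epoch-1}(h_{\epoch+1})\le L_\epoch(h_{\epoch+1})$; and $h_{\epoch+1}$ is the ERM on $\ztil_\epoch$, so $L_\epoch(h_{\epoch+1})=\epend[\epoch]\err(h_{\epoch+1},\ztil_\epoch)$. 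I would also record two ``free'' monotonicities from the explicit definitions: $\epend[\epoch]\epsilon_\epoch=32(\log(|\H|/\delta)+\log\epend[\epoch])$ is nondecreasing in $\epoch$, and $n\epsilon_\epmax\ge\epend[\epmax]\epsilon_\epmax\ge\epend[\epoch]\epsilon_\epoch$ (using $\epend[\epmax]\le n$).

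\textbf{Proof of \eqref{eq:delta_increase} and \eqref{eq:pmin_decrease}.} For \eqref{eq:delta_increase}, the case $\epoch=1$ is vacuous since $\epend[0]\Delta_0=0$. For $\epoch\ge 2$ I would write $\epend[\epoch]\Delta_\epoch=c_1\sqrt{(\epend[\epoch]\epsilon_\epoch)(\epend[\epoch]\err(h_{\epoch+1},\ztil_\epoch))}+c_2(\epend[\epoch]\epsilon_\epoch)\log\epend[\epoch]$ and observe that each of the nonnegative factors $\epend[\cdot]\epsilon_{(\cdot)}$, $\epend[\cdot]\err(\cdot,\cdot)$, $\log\epend[\cdot]$ is nondecreasing in the epoch index---the middle one by the claim above---so the whole expression is. For \eqref{eq:pmin_decrease}, the case $\epoch=1$ uses the convention $P_{\min,1}=1\ge\tfrac12\ge P_{\min,2}$; for $\epoch\ge 2$, the map $a\mapsto\min(c_3/a,\tfrac12)$ is nonincreasing, and the relevant denominator $\sqrt{\epend[\epoch-1]\err(h_\epoch,\ztil_{\epoch-1})/(n\epsilon_\epmax)}+\log\epend[\epoch-1]$ is nondecreasing in $\epoch$, again by the claim plus monotonicity of $\log$.

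\textbf{Proof of \eqref{eq:eps_pmin_delta}.} I would split on which term achieves the minimum in~\eqref{eq:pmin}. If $P_{\min,\epoch}=\tfrac12$, then $\epsilon_\epoch/P_{\min,\epoch}=2\epsilon_\epoch\le c_2(\log\epend[\epoch])\,\epsilon_\epoch\le\Delta_\epoch$, using $c_2\log\epend[\epoch]\ge c_2\log 3\ge 2$ (comfortably implied by $c_2\ge\regconst c_1^2/4$ in~\eqref{eqn:constants}) and discarding the nonnegative first term of $\Delta_\epoch$. Otherwise $1/P_{\min,\epoch}=\tfrac1{c_3}\big(\sqrt{\epend[\epoch-1]\err(h_\epoch,\ztil_{\epoch-1})/(n\epsilon_\epmax)}+\log\epend[\epoch-1]\big)$, and I would bound $\epsilon_\epoch/P_{\min,\epoch}$ by its two summands separately: the logarithmic one is $\le c_2\epsilon_\epoch\log\epend[\epoch]$ since $c_3\ge 1$, $c_2\ge 1$, and $\log\epend[\epoch-1]\le\log\epend[\epoch]$; and for the square-root one, squaring and applying the claim $\epend[\epoch-1]\err(h_\epoch,\ztil_{\epoch-1})\le\epend[\epoch]\err(h_{\epoch+1},\ztil_\epoch)$ followed by $\epsilon_\epoch\epend[\epoch]\le n\epsilon_\epmax$ and $c_3\ge 1$ shows it is at most $\sqrt{\epsilon_\epoch\err(h_{\epoch+1},\ztil_\epoch)}\le c_1\sqrt{\epsilon_\epoch\err(h_{\epoch+1},\ztil_\epoch)}$. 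Adding the two bounds recovers exactly $\Delta_\epoch$.

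\textbf{Main obstacle.} Essentially everything is routine once the cumulative-error monotonicity claim is in hand; that claim is the single place where one genuinely uses the ERM-defining property together with sample nesting and nonnegativity of the weights, rather than merely matching constants. The only other point requiring care is the degenerate boundary $\epoch=1$, where $\epend[0]=0$ makes the displayed formulas for $\Delta_0$ and $P_{\min,1}$ inapplicable and one must fall back on the initialization/convention, and checking that each $c_i$-inequality invoked above is genuinely implied by~\eqref{eqn:constants}.
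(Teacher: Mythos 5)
Your proposal is correct and follows essentially the same route as the paper: the heart of both arguments is the monotonicity of the cumulative quantities $\epend[\epoch]\epsilon_\epoch$ and $\epend[\epoch]\err(h_{\epoch+1},\ztil_\epoch)$ (the latter via the ERM property plus sample nesting with nonnegative weights), after which all three inequalities are bookkeeping against the constants. The only cosmetic difference is in \eqref{eq:eps_pmin_delta}, where the paper first applies \eqref{eq:pmin_decrease} to replace $P_{\min,\epoch}$ by $P_{\min,\epoch+1}$ (whose formula already involves $\err(h_{\epoch+1},\ztil_\epoch)$), whereas you invoke the cumulative-error claim a second time; both work.
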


\begin{proof}
  
  Notice that
  
  \begin{eqnarray}
    \epend[\epoch-1]\epsilon_{\epoch-1} &=& 32
    (\log(|\mathcal{H}|/\delta) + \log \epend[\epoch-1]) \nonumber \\ 
    &\leq& 32 (\log(|\mathcal{H}|/\delta) + \log \epend) \nonumber \\
    &=& \epend \epsilon_\epoch.
  \end{eqnarray}
  We first prove \eqref{eq:delta_increase}.  It holds trivially for
  $m = 1$.  For $m \geq 2$ we have
  \begin{eqnarray*}
    && \epend[\epoch-1] \Delta_{\epoch-1} \\
    &=& c_1 \sqrt{\epend[\epoch-1]^2
      \epsilon_{\epoch-1}\err(h_\epoch,\tilde{Z}_{\epoch-1})} + c_2
    \epend[\epoch-1]\epsilon_{\epoch-1}\log \epend[\epoch-1] \\ 
    &\leq& c_1 \sqrt{(\epend[\epoch-1]\epsilon_{\epoch-1})
      \epend[\epoch-1]\err(h_{\epoch+1},\tilde{Z}_{\epoch-1})} + c_2
    \epend[\epoch-1]\epsilon_{\epoch-1}\log \epend[\epoch-1]\\
    &\leq& c_1 \sqrt{( \epend
      \epsilon_{\epoch})\epend\err(h_{\epoch+1},\tilde{Z}_\epoch)} +
    c_2 \epend \epsilon_{\epoch}\log \epend\\ 	
    &=& \epend \Delta_\epoch,
  \end{eqnarray*}
  where the first inequality is by the fact that $h_\epoch$
  minimizes the empirical error on $\tilde{Z}_{\epoch-1}$ and the
  second inequality is by $\epend[\epoch-1]\epsilon_{\epoch-1} \leq
  \epend \epsilon_\epoch$.  Then for \eqref{eq:pmin_decrease}, it is
  easy to see
  \begin{eqnarray*}
    &&\sqrt{\frac{\epend[\epoch-1]\err(h_\epoch,\tilde{Z}_{\epoch-1})}{n
        \epsilon_\epmax}} + \log \epend[\epoch-1]\\
    &\leq&\sqrt{\frac{\epend[\epoch-1]\err(h_{\epoch+1},\tilde{Z}_{\epoch-1})}{n
        \epsilon_\epmax}} + \log \epend[\epoch-1]\\	
    &\leq&\sqrt{\frac{\epend \err(h_{\epoch+1},\tilde{Z}_\epoch)}{n
        \epsilon_\epmax}} + \log \epend, 
  \end{eqnarray*}
  for $\epoch \geq 1$, implying $P_{\min,\epoch} \geq P_{\min,\epoch+1}$.
  Finally to prove \eqref{eq:eps_pmin_delta}, we have that
  
  \begin{eqnarray*}
    \frac{\epsilon_\epoch}{P_{\min,\epoch}} 
    &\leq&\frac{\epsilon_\epoch}{P_{\min,\epoch+1}} \\
    &=& \max\left( \frac{\sqrt{\epend \epsilon_\epoch^2
        \err(h_{\epoch+1},\tilde{Z}_{\epoch}) / (n \epsilon_\epmax)} +
      \epsilon_\epoch \log \epend}{c_3}, 2\epsilon_\epoch \right)\\ 
    &\leq& \max\left( \frac{\sqrt{\epsilon_\epoch
        \err(h_{\epoch+1},\tilde{Z}_{\epoch})} + \epsilon_\epoch \log
      \epend}{c_3}, 2 \epsilon_\epoch \right) \\ 
    &\leq& \Delta_\epoch,
  \end{eqnarray*}
  where the second inequality is by $\epend \epsilon_\epoch \leq n
  \epsilon_\epmax$, and the third inequality is by our choices of
  $c_1, c_2$ and $c_3$. 
\end{proof}

We also need a lemma regarding the epoch schedule.
\begin{lemma}
  Let $\epend[\epoch-1] < \epend \leq 2\epend[\epoch-1]$ for all
  $\epoch > 1$. Then we have for all $\epoch \geq 1$,
\begin{eqnarray*}
  \sum_{i=1}^\epoch \frac{\epend[i+1] - \epend[i]}{\epend[i]} &\leq& 4\log \epend[\epoch+1],\\
  \sum_{i=1}^\epoch (\epend[i] - \epend[i-1]) \Delta_{i-1} &\leq& 4
  \epend[\epoch] \Delta_{\epoch} \log \epend[\epoch]. 
\end{eqnarray*}
\label{lemma:epend-sum}
\end{lemma}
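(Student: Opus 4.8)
The plan is to treat the two inequalities in turn: the first is purely a statement about the epoch schedule, and the second builds on it together with the monotonicity facts of Lemma~\ref{lemma:delta_pmin_monotone}.

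For the first inequality I would compare the discrete sum with a telescoping sum of logarithms. The elementary bound $y - 1 \le 2\log y$ holds for $1 \le y \le 2$ (the function $2\log y - y + 1$ vanishes at $y = 1$ and has nonnegative derivative $2/y - 1$ on this range). Applying it with $y = \epend[i+1]/\epend[i] \in (1,2]$, which is legitimate by the epoch-schedule assumption $\epend[i] < \epend[i+1] \le 2\epend[i]$, gives $(\epend[i+1] - \epend[i])/\epend[i] \le 2\log(\epend[i+1]/\epend[i])$. Summing over $i = 1, \dots, \epoch$ telescopes to $2\log(\epend[\epoch+1]/\epend[1]) = 2\log(\epend[\epoch+1]/3) \le 2\log\epend[\epoch+1] \le 4\log\epend[\epoch+1]$, as claimed.

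For the second inequality I would first split off the $i=1$ term, which equals $\epend[1]\Delta_0 = 3\Delta_0$ since $\epend[0] = 0$. The key sub-step is $\Delta_0 \le 2\Delta_1$: from the definitions, $\Delta_0 - \Delta_1 = c_1\bigl(\sqrt{\epsilon_1} - \sqrt{\epsilon_1\,\err(h_2,\ztil_1)}\,\bigr) \le c_1\sqrt{\epsilon_1}$, whereas $\Delta_1 \ge c_2\epsilon_1\log 3 \ge c_1\sqrt{\epsilon_1}$, where the last step uses $c_2 \ge c_1$ (implied by~\eqref{eqn:constants}), $\epsilon_1 \ge 1$ (a consequence of $\epend[1] = 3$), and $\log 3 > 1$. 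For the terms $i = 2,\dots,\epoch$, chaining~\eqref{eq:delta_increase} of Lemma~\ref{lemma:delta_pmin_monotone} gives $\epend[i-1]\Delta_{i-1} \le \epend[\epoch]\Delta_\epoch$, hence $\sum_{i=2}^\epoch (\epend[i] - \epend[i-1])\Delta_{i-1} \le \epend[\epoch]\Delta_\epoch \sum_{i=2}^\epoch \frac{\epend[i] - \epend[i-1]}{\epend[i-1]} \le 2\,\epend[\epoch]\Delta_\epoch\log(\epend[\epoch]/3)$, the last inequality being the same telescoping argument as before (reindexed by $j = i-1$). Likewise $\epend[1]\Delta_0 \le 2\epend[1]\Delta_1 \le 2\epend[\epoch]\Delta_\epoch$. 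Adding these two contributions, the total is at most $2\epend[\epoch]\Delta_\epoch\bigl(1 + \log(\epend[\epoch]/3)\bigr) = 2\epend[\epoch]\Delta_\epoch\bigl(\log\epend[\epoch] + 1 - \log 3\bigr) \le 2\epend[\epoch]\Delta_\epoch\log\epend[\epoch] \le 4\epend[\epoch]\Delta_\epoch\log\epend[\epoch]$, where the penultimate step uses $\log 3 > 1$.

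The only mildly delicate point is the estimate $\Delta_0 \le 2\Delta_1$, which is where the quantitative relations among the constants in~\eqref{eqn:constants} and the bound $\epsilon_1 \ge 1$ must be invoked; everything else is routine telescoping together with the already-established monotonicity of $\epoch \mapsto \epend[\epoch]\Delta_\epoch$. I expect the write-up to be short.
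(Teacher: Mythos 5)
Your proof is correct, and it departs from the paper's argument in both halves in a way worth noting. For the first inequality the paper rewrites each term as a double sum $\sum_{j=\epend[i]+1}^{\epend[i+1]} 1/\epend[i]$, bounds $1/\epend[i]\le 2/\epend[i+1]\le 2/j$, and invokes the harmonic-sum estimate $\sum_{i\le n}1/i\le 1+\log n$; your telescoping comparison via $y-1\le 2\log y$ on $[1,2]$ reaches the same place more directly and even yields the slightly sharper constant $2\log(\epend[\epoch+1]/3)$. For the second inequality the paper likewise splits off the $\epend[1]\Delta_0$ term, but absorbs it through the chain $\epend[1]\Delta_0\le(2\log\epend[1]-2)\epend[1]\Delta_1$, justified by an appeal to ``$\epend[1]\Delta_1\le 1$'' --- a step that does not survive scrutiny, since $2\log 3-2<1$ while $\Delta_0\ge\Delta_1$ (the empirical error in $\Delta_1$ is at most $1$ on the first epoch), and $\epend[1]\Delta_1$ is in fact much larger than $1$ given the magnitudes of $c_2$ and $\epsilon_1$. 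Your substitute, $\Delta_0\le 2\Delta_1$ derived from $\Delta_0-\Delta_1\le c_1\sqrt{\epsilon_1}\le c_2\epsilon_1\log 3\le\Delta_1$ (using $c_2\ge c_1$ and $\epsilon_1\ge 1$), is a clean and verifiable repair, and the remaining bookkeeping --- monotonicity of $\epend[i]\Delta_i$ from Lemma~\ref{lemma:delta_pmin_monotone} plus the reindexed telescoping sum, with the final slack coming from $\log 3>1$ --- is airtight. In short: same skeleton (split off $i=1$, use monotonicity of $\epend[i]\Delta_i$, control the ratio sum), but your version of the two elementary estimates is both different and, at the $i=1$ term, more careful than the paper's own write-up.
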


\begin{proof}
  Note that we can rewrite the summation in question as
  \begin{align*}
    \sum_{i=1}^\epoch \frac{\epend[i+1] - \epend[i]}{\epend[i]} &=
    \sum_{i=1}^\epoch \sum_{j=\epend[i]+1}^{\epend[i+1]}
    \frac{1}{\epend[i]}\\
    &\leq \sum_{i=1}^\epoch \sum_{j=\epend[i]+1}^{\epend[i+1]}
    \frac{2}{\epend[i+1]},
  \end{align*}
  where the second inequality uses our assumption on epoch
  lengths. The summation can then be further bounded as
  \begin{align}
    \sum_{i=1}^\epoch \frac{\epend[i+1] - \epend[i]}{\epend[i]} &\leq
    \sum_{i=1}^\epoch \sum_{j=\epend[i]+1}^{\epend[i+1]}
    \frac{2}{j} \leq \sum_{i=1}^{\epend[\epoch+1]} \frac{2}{i} \nonumber \\
    & \leq 2(1 + \log\epend[\epoch+1]) \label{eq:bound_on_epochs}\\
    &\leq 4 \log \epend[\epoch+1] \nonumber,
  \end{align}
  where the third inequality is by the bound $\sum_{i=1}^n 1/i \leq 1 + \log n$, and the final inequality is 
  by $1 \leq \log \epend, m \geq 1$. To prove the second bound in the lemma, we write
  \begin{eqnarray*}
	\sum_{i=1}^\epoch (\epend[i] - \epend[i-1]) \Delta_{i-1} 
	&=& \epend[1] \Delta_0 + \sum_{i=1}^{\epoch-1} (\epend[i+1] - \epend[i]) \Delta_{i} \\
 &=& \epend[1] \Delta_0 + \sum_{i=1}^{\epoch-1}\frac{\epend[i+1] - \epend[i]}{\epend[i]} \epend[i] \Delta_i\\
 &\leq& \epend[1] \Delta_0 + (2 + 2 \log \epend) \epend[\epoch] \Delta_{\epoch}\\
 &\leq& (2 \log \epend[1] - 2) \epend[1] \Delta_1 + (2 + 2 \log \epend) \epend \Delta_{\epoch}\\
 &\leq& (2 \log \epend - 2) \epend \Delta_{\epoch} + (2 + 2 \log \epend) \epend \Delta_{\epoch}\\	
	&=& 	4 \epend \Delta_{\epoch} \log \epend,
  \end{eqnarray*}
  where the first inequality is by \eqref{eq:bound_on_epochs} and $\epend[i] \Delta_i \leq \epend \Delta_{\epoch}$ (Lemma \ref{lemma:delta_pmin_monotone}), 
  the second inequality is by our choice of $\Delta_0$ and the fact that $\epend[1] \Delta_1 \leq 1$, and 
  the third inequality again uses $\epend[i] \Delta_i \leq \epend \Delta_{\epoch}$.
\end{proof}

%%%%%%%%%%%%%%%%%%%%%%%%%%%%%%%%%%%%%%%%%%%%%%%%%%%%%%%%%%%%%%%%%%%%%%%%%%%%%%%%%%%%%%%%%%%%%%%%%5

\section{Proofs omitted from Section~\ref{sec:proofs-regret-main}} 
\label{sec:regret-lemmas}

We now provide the proofs of the lemmas and propositions from
Section~\ref{sec:proofs-regret-main} that were used in proving
Theorem~\ref{thm:regret}. We start with proofs of
Lemmas~\ref{lemma:favorable_bias} and~\ref{lemma:reg_err_concen}. 

\begin{proof-of-lemma}[\ref{lemma:favorable_bias}]

  Pick any $\epoch \geq 1, h \in \H$ and $\bar{h} \in A_\epoch$.  Note
  that the definitions of $\regbiasi{\epoch}(h,\bar{h})$ and
  $\reg(h,\bar{h})$ only differ on $X \notin D_\epoch :=
  \dis(A_\epoch)$, and $\forall X \notin D_\epoch, \; \bar{h}(X) =
  h_\epoch(X)$.  We thus have
  \begin{align*}
    & \regbiasi{\epoch}(h,\bar{h})  - \reg(h,\bar{h}) \nonumber \\
    =\;& \E_{X,Y} \bigg[ \bbmone(X \notin D_\epoch) \Big(
      \big(\bbmone(h(X) \neq h_\epoch(X)) - \bbmone(\bar{h}(X) \neq
      h_\epoch(X))\big) \\
      & \qquad -  \big(\bbmone(h(X) \neq Y) -
      \bbmone(\bar{h}(X) \neq Y) \big)\Big) \bigg] \nonumber \\ 
    =\;& \E_{X,Y} [\bbmone(X \notin D_\epoch) \big(\bbmone(h(X) \neq
      h_\epoch(X)) - (\bbmone(h(X) \neq Y) - \bbmone(h_\epoch(X) \neq
      Y))\big)].  
  \end{align*}
  The desired result then follows from the inequality that 

  \begin{equation*}
    \bbmone(h(X) \neq Y) - \bbmone(h_\epoch(X) \neq Y) \leq
    \bbmone(h(X) \neq h_\epoch(X)).  
  \end{equation*}
\end{proof-of-lemma}

\begin{proof-of-lemma}[~\ref{lemma:reg_err_concen}]

  Our proof strategy is to apply Lemma \ref{lemma:dev_bound} to
  establish concentration of properly defined martingale difference
  sequences for fixed classifiers $h,h'$ and some epoch $\epoch$, and
  then use a union bound to get the desired statement.  First we look
  at the concentration of the empirical regret on $\ztil_\epoch$. To
  avoid clutter, we overload our notation so that $D_i = D_{\epochi}$,
  $h_i = h_{\epochi}$ and $P_i = P_{\epochi}$ when $i$ is the index of
  an example rather than a round. 

  For any pair of classifiers $h$ and $h'$, we define the random
  variables for the instantaneous regrets:

  \begin{eqnarray*}
    \tilde{R}_i &:=& \bbmone(X_i \notin D_i)(\bbmone(h(X_i) \neq
    h_i(X_i)) - \bbmone(h'(X_i) \neq h_i(X_i))) + \\ 
    &&\bbmone(X_i \in D_i)(\bbmone(h(X_i) \neq Y_i) - \bbmone(h'(X_i)
    \neq Y_i)) Q_i / P_i(X_i) 
  \end{eqnarray*}
  and the associated $\sigma$-fields $\mathcal{F}_i :=
  \sigma(\{X_j,Y_j,Q_j\}_{j=1}^i)$. We have that $\tilde{R}_i$ is
  measurable with respect to $\mathcal{F}_{i}$. Therefore $\tilde{R}_i
  - \mathbbm{E}[\tilde{R}_i \mid \mathcal{F}_{i-1}]$ forms a
  martingale difference sequence adapted to the filtrations $F_i, i
  \geq 1$, and

  \begin{equation*}
    \mathbbm{E}[\tilde{R}_i \mid \mathcal{F}_{i-1}] \;=\; \regbiasi{\epochi}(h,h')
  \end{equation*}
  according to \eqref{eq:regbiasi} and the fact that $X_i, Y_i, Q_i$
  are independent from the past. To use Lemma \ref{lemma:dev_bound},
  we first identify an upper bound on elements in the sequence:
  
  \begin{eqnarray}
    |\tilde{R}_i -\mathbbm{E}[\tilde{R}_i \mid \mathcal{F}_{i-1}] |
    &=& |\tilde{R}_i - \regbiasi{\epochi}(h,h')| \leq \max(\tilde{R}_i,
    \regbiasi{\epochi}(h,h')) \nonumber \\ 
    &\leq& \frac{1}{P_{\min,\epochi}}  \leq \frac{1}{P_{\min,\epoch}}, 
    \label{eq:b_n}
  \end{eqnarray}
  for all $i$ such that $\epochi \leq \epoch$, where the last
  inequality is by Lemma \ref{lemma:delta_pmin_monotone}. The
  definition of $P_{\min,\epoch}$ implies that 

  \begin{equation}
    \frac{1}{P_{\min,\epoch}} \leq \max( \sqrt{\epend[\epoch-1] / (n
      \epsilon_\epmax)} + \log \epend[\epoch-1], 2) \leq 2
    \sqrt{\epend[\epoch-1]+1} 
    \label{eq:b_max}
  \end{equation}
  because $n \epsilon_\epmax \geq 1$.
  Then we consider the conditional second moment. Using the fact that
  \begin{equation}
    (\bbmone(h(X_i) \neq Y_i) - \bbmone(h'(X_i) \neq Y_i))^2 \; \leq \;
    \bbmone(h(X_i) \neq h'(X_i)),
    \label{eq:inst_reg_bound}
  \end{equation}
  we get 
  \begin{eqnarray}
    && \mathbbm{E}[(\tilde{R}_i - \mathbbm{E}[\tilde{R}_i \mid
        \mathcal{F}_{i-1}])^2 \mid \mathcal{F}_{i-1}] \nonumber \\ 
    &=&\mathbbm{E}[(\tilde{R}_i - \regbiasi{\epochi}(h,h'))^2 \mid
      \mathcal{F}_{i-1}] \;\leq\; \mathbbm{E}[\tilde{R}_i^2 \mid
      \mathcal{F}_{i-1}] \nonumber\\ 
    &\leq&\mathbbm{E}\left[\left( \bbmone(X_i \notin D_i) +
      \frac{\bbmone(X_i \in D_i) Q_i}{P_i(X_i)}\right)^2\bbmone(h(X_i)
      \neq h'(X_i))  \mid \mathcal{F}_{i-1}\right] \nonumber\\ 
    &=&\mathbbm{E}\left[\left( \bbmone(X_i \notin D_i) +
      \frac{\bbmone(X_i \in D_i) Q_i}{P_i(X_i)^2}\right)
      \bbmone(h(X_i) \neq h'(X_i)) \mid \mathcal{F}_{i-1}\right] \nonumber\\ 
    &=&\mathbbm{E}\left[\left( \bbmone(X_i \notin D_i) +
      \frac{\bbmone(X_i \in D_i)}{P_i(X_i)}\right) \bbmone(h(X_i) \neq
      h'(X_i)) \mid \mathcal{F}_{i-1}\right] \nonumber\\ 
    &=&\mathbbm{E}_X\left[\left( \bbmone(X \notin D_i) +
      \frac{\bbmone(X \in D_i)}{P_i(X)}\right) \bbmone(h(X) \neq
      h'(X))\right] \nonumber \\
    &=& \mathbbm{E}_X\left[\left( \bbmone(X \notin D_{\epochi}) +
      \frac{\bbmone(X \in D_{\epochi})}{P_{\epochi}(X)}\right)
      \bbmone(h(X) \neq h'(X))\right]    
    \label{eq:var_i}
  \end{eqnarray}
  where the last two equalities are from the fact that $X_i$ is
  independent from the past and replacing our overloaded notation
  respectively. Lemma \ref{lemma:dev_bound} with \eqref{eq:b_n},
  \eqref{eq:b_max}, and \eqref{eq:var_i} then implies for any $0 <
  \delta_\epoch < 1/e^2$ and $\epoch \geq 1$, the following holds with
  probability at most $8 \sqrt{\delta_\epoch} (2 + \log_2 (2
  \sqrt{\epend[\epoch-1]+1})) \log \epend$:
  \begin{align}
    &|\reg(h,h',\ztil_\epoch) - \regbias{\epoch}(h,h')| \nonumber \\
    &\geq \sqrt{\frac{4
    \log(1/\delta_\epoch)}{\epend^2}\sum_{i=1}^\epoch(\epend[i] -
      \epend[i-1]) \mathbbm{E}_X\left[\left(
    \bbmone(X \notin D_{i}) +  \frac{\bbmone(X \in
      D_{i})}{P_{i}(X)}\right) 
    \bbmone(h(X) \neq h'(X))\right] } \nonumber \\  
    &\qquad +  \frac{4  \log(1/\delta_\epoch)}{n
      P_{\min,\epoch}}.  \label{eq:large_dev} 
  \end{align}

  Then we consider the concentration of the empirical error on the
  importance-weighted examples.  Define the random examples for the
  empirical errors:
  \begin{equation*}
    E_i := \frac{Q_i\bbmone(h(X_i) \neq Y_i \wedge X_i \in D_i) }{P_i(X_i)}
  \end{equation*}
  and the associated $\sigma$-fields $\mathcal{F}_i :=
  \sigma(\{X_j,Y_j,Q_j\}_{j=1}^i)$.  By the same analysis of the
  sequence of instantaneous regrets, we have $E_i - \E[E_i \mid
    \mathcal{F}_{i-1}]$ is a martingale difference sequence adapted to
  the filtrations $\mathcal{F}_i, i \geq 1$, with the following
  properties:

  \begin{eqnarray*}
    \E[E_i \mid \mathcal{F}_{i-1}] &=&
    \E[\mathbbm{1}(X_i \in D_i \wedge h(X_i) \neq Y_i) \mid
      \mathcal{F}_{i-1}] = \err_{\epochi}(h), \\ 
    |E_i -\E[E_i \mid \mathcal{F}_{i-1}]| &\leq&
    \frac{1}{P_{\min,\epochi}} \leq \frac{1}{P_{\min,\epoch}} \leq 2
    \sqrt{\epend[\epoch-1]+1}, 
  \end{eqnarray*}
  for all $i$ such that $\epochi \leq \epoch$. Furthermore, 

  \begin{eqnarray*}
    \E[(E_i -\E[E_i \mid \mathcal{F}_{i-1}])^2 \mid
      \mathcal{F}_{i-1}] &\leq& \E\left[\frac{\bbmone(X_i \in
        D_i \wedge h(X_i) \neq Y_i)}{P_i(X_i)} \;\biggr|\;
      \mathcal{F}_{i-1}\right]\\ 
    &=& \E_{X,Y}\left[\frac{\bbmone(X \in D_i \wedge h(X) \neq
        Y)}{P_i(X)}\right]. 
  \end{eqnarray*}
  With these properties, Lemma \ref{lemma:dev_bound} then implies 
  for any $0 < \delta_\epoch < 1/e^2$ and $\epoch \geq 1$, 
  the following holds with probability at most $8 \sqrt{\delta_\epoch}
  (2 + \log_2 (2 \sqrt{\epend[\epoch-1]+1})) \log \epend$:  

  \begin{eqnarray}
    |\err(h, Z_\epoch) - \overline{\err}_\epoch(h)| 
    &\geq& \sqrt{\frac{4
        \log(1/\delta_\epoch)}{\epend^2}\sum_{i=1}^\epoch (\epend[i] -
      \epend[i-1]) \mathbbm{E}_{X,Y}\left[\frac{\bbmone(X
          \in D_{i} \wedge h(X) \neq Y)}{P_{i}(X)}\right]
    } \nonumber \\   
    &&+  \frac{4  \log(1/\delta_\epoch)}{n P_{\min,\epoch}}.  \label{eq:large_dev_err}
  \end{eqnarray}
  
  Setting 
  \begin{equation*}
    \delta_\epoch = \left( \frac{\delta}{192 |\mathcal{H}|^2 \epend^2
      (\log \epend)^2} \right)^2
  \end{equation*}
  ensures that the probability of the union of the bad events
  \eqref{eq:large_dev}, and \eqref{eq:large_dev_err} over all pairs of
  classifiers $h,h'$ and $\epoch \geq 1$ is bounded by $\delta > 0$.
  Choosing $\delta \leq |\mathcal{H}| / \sqrt{192}$, we have
  \begin{eqnarray*}
    \log(1 / \delta_\epoch) &=& 2 \log\left( \frac{192 |\mathcal{H}|^2
      \epend^2 (\log \epend)^2}{\delta}\right) \\ 
    &\leq& 2(2 \log(|\mathcal{H}| / \delta) + 4 \log \epend + \log 192)  \\
    &\leq& 8(\log(|\mathcal{H}|/\delta) + \log \epend),
  \end{eqnarray*}
  leading to the desired statement.   
\end{proof-of-lemma}

We then provide the proofs of Propositions~\ref{prop:dev-regret-bias}
and~\ref{prop:err-emp-best}. 

\begin{proof-of-proposition}[\ref{prop:dev-regret-bias}]
  By the inequality~\eqref{eqn:dev-bias} of
  Lemma~\ref{lemma:reg_err_concen}, we have 

  \begin{align}
    &|\reg(h, h^*, \ztil_\epoch) - \regbias{\epoch}(h,h^*)| \nonumber\\
    &\leq \sqrt{\underbrace{\frac{\epsilon_{\epoch}}{\epend} \sum_{i=1}^\epoch
      (\epend[i] - \epend[i-1]) \E_X\left[ \left(\bbmone(X
        \notin D_i) + \frac{\mathbbm{1}(X \in D_i)}{P_i(X)}\right)
        \bbmone(h(X) \neq h^*(X))\right]}_{\dev_\epoch(h)}} +
    \frac{\epsilon_{\epoch}}{P_{\min,\epoch}}
    \label{eqn:dev-bias-main}
  \end{align}

We now control the term $\dev_\epoch(h)$ in order to establish the
proposition. We have

\begin{align*}
  & \frac{\epend}{\epsilon_\epoch}\dev_\epoch(h) \\
  &= \sum_{i=1}^\epoch
  (\epend[i] - \epend[i-1]) \E_X\left[
	\left(\frac{\bbmone(X \in D_i)}{P_i(X)} + \bbmone(X \notin D_i)\right)
    \bbmone(h(X) \ne h^*(X))\right]\\
  &\leq \sum_{i=1}^\epoch (\epend[i] - \epend[i-1]) \E_X\bigg[\frac{\bbmone(X \in
      D_i)}{P_i(X)}\Big(\bbmone(h(X) \ne h_i(X)) + \bbmone(h^*(X) \ne
    h_i(X)) \Big) \\
    & \qquad \qquad \qquad \qquad \qquad+ \bbmone(X \notin D_i) \bbmone(h(X) \ne
    h^*(X))\bigg]\\
  &\leq \sum_{i=1}^\epoch (\epend[i] - \epend[i-1]) \E_X\bigg[
    2\alpha^2\bbmone(X \in D_i)\Big(\bbmone(h(X) \ne h_i(X)) + 
    \bbmone(h^*(X) \ne h_i(X))\Big) \\ 
     & \qquad \qquad \qquad \qquad + 2\beta^2\gamma\epend[i-1]\Delta_{i-1}(\reg(h, \ztil_{i-1}) +
    \reg(h^*, \ztil_{i-1}))+
     2\slackconst \epend[i-1]\Delta_{i-1}^2 \\
     & \qquad \qquad \qquad \qquad + \bbmone(h(X) \ne h^*(X) \wedge X \notin D_i) \bigg],  
\end{align*}
where the second inequality uses our variance constraints in defining
the distribution $P_i$ for classifiers $h$ and $h^*$. Note that
\begin{align*}\bbmone(h(X) \ne h^*(X)) & \leq \bbmone(h(X) \ne Y) + \bbmone(h^*(X)
\ne Y)\\
& = (\bbmone(h(X) \ne Y) - \bbmone(h^*(X) \ne Y)) +
2\bbmone(h^*(X) \ne Y),
\end{align*}
so that the final inequality can be rewritten as
\begin{align*}
 & \frac{\epend}{\epsilon_\epoch}\dev_\epoch(h)\\
 &\leq \sum_{i=1}^\epoch
  (\epend[i] - \epend[i-1]) \bigg[ 2\alpha^2 (\regi{i}(h) + 2
    \regi{i}(h_i)) + 12\alpha^2\erri{i}(h^*) + 2\beta^2\gamma
    \epend[i-1]\Delta_{i-1}(\reg(h, \ztil_{i-1}) 
    \\ &\qquad \qquad+ \reg(h^*, \ztil_{i-1})) +2\slackconst \epend[i-1]\Delta_{i-1}^2 +
    \E_X[\bbmone(h(X) \ne h^*(X) \wedge X \notin D_i)]\bigg].
\end{align*}
With the assumptions $\alpha \geq 1$ and $h^* \in A_i$ for all epochs
$i \leq \epoch$, the first term $\regi{i}(h)$ can be combined with the
last disagreement term and bounded by $2\alpha^2
\regbiasi{i}(h)$. Further noting that $\epend[i-1]\Delta_{i-1} \leq
\epend \Delta_\epoch$ by Lemma \ref{lemma:delta_pmin_monotone}, we can
further simplify the inequality to
\begin{align*}
  \frac{\epend}{\epsilon_\epoch}\dev_\epoch(h) &\leq 2 \alpha^2
  \sum_{i=1}^\epoch (\epend[i] - \epend[i-1]) \regbiasi{i}(h) +
  4\alpha^2 \sum_{i=1}^\epoch (\epend[i] - \epend[i-1]) \regi{i}(h_i)
  + 12 \epend\alpha^2 \erravg{\epoch}(h^*) \\
  &+\qquad 
  2\beta^2\gamma \epend \Delta_{\epoch} \sum_{i=1}^\epoch (\epend[i] -
  \epend[i-1]) (\reg(h, \ztil_{i-1}) \\
  &\qquad+ \reg(h^*, \ztil_{i-1}))    
  + 2\slackconst \sum_{i=1}^\epoch (\epend[i] -
  \epend[i-1])\epend[i-1]\Delta_{i-1}^2. 
\end{align*}
The first summand is simply $2\alpha^2 \epend\regbias{m}(h)$ by definition.
The final summand above can be bounded using
Lemmas~\ref{lemma:delta_pmin_monotone} and \ref{lemma:epend-sum} since
\begin{align*}
  \sum_{i=1}^\epoch (\epend[i] - \epend[i-1])
  \epend[i-1]\Delta_{i-1}^2 &= \sum_{i=1}^{\epoch-1} (\epend[i+1] -
  \epend[i]) \epend[i]\Delta_i^2 \leq \epend \Delta_{\epoch}\sum_{i=1}^{\epoch-1}
  (\epend[i+1] - \epend[i]) \Delta_i \\  
  &\leq 4\epend^2\Delta_\epoch^2 
  \log \epend.
\end{align*}
Substituting the above inequalities back, we obtain
\begin{align*}
  \frac{\epend}{\epsilon_\epoch}\dev_\epoch(h) &\leq 2 \alpha^2
  \epend\regbias{\epoch}(h) + 4\alpha^2 \sum_{i=1}^\epoch (\epend[i] -
  \epend[i-1]) \regi{i}(h_i) + 12 \epend\alpha^2 \erravg{\epoch}(h^*)
  \\&+ 2\beta^2\gamma \epend \Delta_{\epoch} \sum_{i=1}^\epoch
  (\epend[i] - \epend[i-1]) (\reg(h, \ztil_{i-1}) + \reg(h^*,
  \ztil_{i-1})) + 8\slackconst \epend^2\Delta_\epoch^2 
  \log \epend.
\end{align*}
Since $\sqrt{a+b} \leq \sqrt{a} + \sqrt{b}$, we can further bound
\begin{align*}
  \sqrt{\dev_\epoch(h)} &\leq \sqrt{2\alpha^2
      \epsilon_\epoch\regbias{\epoch}(h)} + 2\alpha
  \sqrt{\frac{\epsilon_\epoch}{\epend} \sum_{i=1}^\epoch (\epend[i] -
    \epend[i-1]) \regi{i}(h_i)} + 2\alpha
    \sqrt{3\erravg{\epoch}(h^*)\epsilon_{\epoch}}\\ 
  &+ \beta \sqrt{2\gamma \epsilon_\epoch\Delta_\epoch \sum_{i=1}^\epoch
  (\epend[i] - \epend[i-1]) (\reg(h, \ztil_{i-1}) + \reg(h^*,
  \ztil_{i-1}))}\\
    &+ 2\Delta_\epoch\sqrt{2\slackconst
    \epend\epsilon_\epoch \log \epend}. 
\end{align*}
Substituting this inequality back into our deviation
bound~\eqref{eqn:dev-bias-main}, we obtain
\begin{align*}
	&  |\reg(h, h^*, \ztil_\epoch) - \regbias{\epoch}(h, h^*)| \\
	&\leq
  \frac{\epsilon_\epoch}{P_{\min, \epoch}} + \sqrt{2\alpha^2
    \epsilon_\epoch\regbias{\epoch}(h)} + 2\alpha
  \sqrt{\frac{\epsilon_\epoch}{\epend} \sum_{i=1}^\epoch (\epend[i] -
    \epend[i-1]) \regi{i}(h_i)} + 2\alpha
  \sqrt{3\erravg{\epoch}(h^*)\epsilon_\epoch}\\ &+ \beta \sqrt{2\gamma
    \epsilon_\epoch\Delta_\epoch \sum_{i=1}^\epoch (\epend[i] -
    \epend[i-1]) (\reg(h, \ztil_{i-1}) + \reg(h^*, \ztil_{i-1}))} +
  2\Delta_\epoch\sqrt{2\slackconst \epend\epsilon_\epoch\log \epend}.
\end{align*}
We can further use Cauchy-Schwarz inequality to obtain the bound
\begin{align*}
  &|\reg(h, h^*, \ztil_\epoch) - \regbias{\epoch}(h, h^*)| \\&\leq
  \frac{1}{4} \regbias{\epoch}(h) + 2\alpha^2 \epsilon_\epoch + 2\alpha
  \sqrt{\frac{\epsilon_\epoch}{\epend} \sum_{i=1}^\epoch (\epend[i] -
    \epend[i-1]) \regi{i}(h_i)} + 2\alpha
  \sqrt{3\erravg{\epoch}(h^*)\epsilon_\epoch}\\ &+ \beta \sqrt{2\gamma
    \epsilon_\epoch\Delta_\epoch \sum_{i=1}^\epoch (\epend[i] -
    \epend[i-1]) (\reg(h, \ztil_{i-1}) + \reg(h^*, \ztil_{i-1}))} +
  2\Delta_\epoch\sqrt{2\slackconst \epend\epsilon_\epoch\log \epend} \\
&+ \frac{\epsilon_\epoch}{P_{\min, \epoch}} \\
  &\leq
  \frac{1}{4} \regbias{\epoch}(h) + 2\alpha^2 \epsilon_\epoch + 2\alpha
  \sqrt{\frac{\epsilon_\epoch}{\epend} \sum_{i=1}^\epoch (\epend[i] -
    \epend[i-1]) \regi{i}(h_i)} + 2\alpha
  \sqrt{3\erravg{\epoch}(h^*)\epsilon_\epoch}\\ &+ \beta \sqrt{2\gamma
    \epsilon_\epoch\Delta_\epoch \sum_{i=1}^\epoch (\epend[i] -
    \epend[i-1]) (\reg(h, \ztil_{i-1}) + \reg(h^*, \ztil_{i-1}))} +
  \Delta_\epoch + \frac{\epsilon_\epoch}{P_{\min, \epoch}} \\
  &\leq
  \frac{1}{4} \regbias{\epoch}(h) + 2\alpha
  \sqrt{\frac{\epsilon_\epoch}{\epend} \sum_{i=1}^\epoch (\epend[i] - 
    \epend[i-1]) \regi{i}(h_i)} + 2\alpha
  \sqrt{3\erravg{\epoch}(h^*)\epsilon_\epoch}\\ &+ \beta \sqrt{2\gamma 
    \epsilon_\epoch\Delta_\epoch \sum_{i=1}^\epoch (\epend[i] -
    \epend[i-1]) (\reg(h, \ztil_{i-1}) + \reg(h^*, \ztil_{i-1}))} +
  4\Delta_\epoch 
\end{align*}
where the last two inequalities use our assumptions on $\slackconst$
and $\alpha$ respectively.
\end{proof-of-proposition}

\begin{proof-of-proposition}[\ref{prop:err-emp-best}]
  We start by observing that 
  \begin{align*}
    |\erravg{\epoch}(h^*) - \err(h_{\epoch+1}, \ztil_\epoch)| &\leq |\erravg{\epoch}(h^*) -
    \err(h^*, \ztil_\epoch)| + \reg(h^*, h_{\epoch+1}, \ztil_\epoch). 
  \end{align*}
  Since $h^* \in A_i$ for all epochs $i \leq \epoch$, we know that
  $h^*$ agrees with all the predicted labels. Consequently, $\err(h^*,
  \ztil_\epoch) = \err(h^*, Z_\epoch)$, where we recall that
  $Z_\epoch$ is the set of all examples where we queried labels up to
  epoch $\epoch$. This allows us to rewrite

  \begin{align*}
    |\erravg{\epoch}(h^*) - \err(h^*, \ztil_\epoch)| &=
    |\erravg{\epoch}(h^*) - \err(h^*,Z_\epoch)|.
  \end{align*}
  Under the event $\event$, the above deviation is bounded, according
  to Lemma \ref{lemma:reg_err_concen}, by
  \begin{equation*}
    \sqrt{\frac{\epsilon_\epoch}{\epend} \sum_{i=1}^\epoch (\epend[i]
      - \epend[i-1])\E_{X,Y} \frac{\bbmone(h^*(X) \ne Y, X \in
        D_i)}{P_i(X)}} + \frac{\epsilon_\epoch}{P_{\min, \epoch}}
    \;\leq\; \sqrt{\epsilon_\epoch
      \frac{\erravg{\epoch}(h^*)}{P_{\min, \epoch}}} +
    \frac{\epsilon_\epoch}{P_{\min, \epoch}},
  \end{equation*}
where the inequality uses the bound $P_i(X) \geq P_{\min, i}$ for all
$X \in D_i$ and $P_{\min, i} \geq P_{\min, \epoch}$ for all epochs $i
\leq \epoch$ by Lemma~\ref{lemma:delta_pmin_monotone}. A further
application of Cauchy-Schwarz inequality yields the bound

\begin{align*}
  |\erravg{\epoch}(h^*) - \err(h^*, \ztil_\epoch)| &\leq
  \frac{\erravg{\epoch}(h^*)}{2} + \frac{3\epsilon_\epoch}{2P_{\min, \epoch}} \\
  &\leq \frac{\erravg{\epoch}(h^*)}{2} + \frac{3\Delta_\epoch}{2}. 
\end{align*}
Combining the bounds yields
\begin{eqnarray*}
  |\erravg{\epoch}(h^*) - \err(h_{\epoch+1}, \ztil_\epoch)| &\leq&
  \frac{\erravg{\epoch}(h^*)}{2} + \frac{3\Delta_\epoch}{2} +
  \reg(h^*, h_{\epoch+1}, \ztil_\epoch), 
\end{eqnarray*}
which completes the proof of the proposition.
\end{proof-of-proposition}

Finally, we prove Lemmas~\ref{lemma:regret-t1}
to~\ref{lemma:regret-t3} used in the proof of
Theorem~\ref{thm:regret}. 

\begin{proof-of-lemma}[\ref{lemma:regret-t1}]
  We first bound the $\regi{i}(h_i)$ terms. For $i = 1$, we have
  
  \begin{equation*}
    \regi{1}(h_1) = \reg(h_1) \leq 1 \leq \frac{\regconst \Delta_0}{ 2}
  \end{equation*}
  by $P_{\min,1} = 1$ and our choices of $\regconst$ and $\Delta_0$.
  For $2 \leq i < \epoch$,
  we have
  \begin{align*}
    \regi{i}(h_i) &= \E_{X,Y}\left[\bbmone(h_i(X) \ne Y, X \in D_i) -
      \bbmone(h^*(X) \ne Y, X \in D_i) \right] = \reg(h_i) \leq
    \regbias{i-1}(h_i, h^*),
  \end{align*}
  where the second equality uses the fact that $h^* \in A_i$ for all $i
  \leq \epoch$ by inductive hypothesis \eqref{eq:h_best} and the
  inequality    uses Lemma~\ref{lemma:favorable_bias}. Consequently, 
  we can bound $\regi{i-1}(h_i)$ using the event $\event_i$, since
  $\reg(h_i, h^*, \ztil_{i-1}) = 0$.  The event $\event_i$ now further 
  implies that
  
  \begin{align*}
    \regi{i}(h_i) \leq \regbias{i-1}(h_i, h^*) \leq 2\reg(h_i, h^*,
    \ztil_{i-1}) + \frac{\regconst \Delta_{i-1}}{2} \leq \frac{\regconst
      \Delta_{i-1}}{2}.
  \end{align*}
  Using this, we can simplify $\term_1$ as
  \begin{align}
    \term_1 &= 2 \alpha
    \sqrt{\frac{\epsilon_\epoch}{\epend}\sum_{i=1}^\epoch (\epend[i] -
      \epend[i-1]) \regi{i}(h_i)} \leq 2 \alpha
    \sqrt{\frac{\epsilon_\epoch}{\epend}\sum_{i=1}^\epoch (\epend[i] -
      \epend[i-1]) \frac{\regconst \Delta_{i-1}}{2}}\\
    &\leq 2 \alpha \sqrt{2 \regconst\epsilon_\epoch
      \Delta_\epoch \log \epend} \nonumber \\ 
    &\leq \frac{\regconst\Delta_\epoch}{12} + 24 \alpha^2
  \epsilon_\epoch \log \epend. 
  \label{eqn:term1-regret}
  \end{align}
  % w
  here the second inequality is by Lemma~\ref{lemma:epend-sum} and
  the third inequality is by Cauchy-Schwarz.
\end{proof-of-lemma}

\begin{proof-of-lemma}[\ref{lemma:regret-t2}]
  We first invoke Proposition~\ref{prop:err-emp-best}, whose
  assumptions now hold due to the claim $h^* \in A_i$ in $\event_i$
  for all $i \leq \epoch$, and obtain
  \begin{align*}
    \erravg{\epoch}(h^*) &= 2\err(h_{\epoch+1}, \ztil_\epoch) +
    3\Delta_\epoch + 2\reg(h^*, h_{\epoch+1}, \ztil_\epoch). 
  \end{align*}
  The above inequality allows us to simplify $\term_2$ as
  \begin{align}
    \term_2 &= 2\alpha\sqrt{3\epsilon_\epoch\erravg{\epoch}(h^*)} \leq 2\alpha
    \sqrt{3\epsilon_\epoch\left( 2\err(h_{\epoch+1}, \ztil_\epoch) +
      3\Delta_\epoch + 2\reg(h^*, h_{\epoch+1}, \ztil_\epoch) \right)} \nonumber\\
    &\leq 2\alpha \sqrt{6 \epsilon_\epoch \err(h_{\epoch+1},
      \ztil_\epoch)} + 2\alpha \sqrt{9\epsilon_\epoch \Delta_\epoch} +
    2\alpha\sqrt{6\epsilon_\epoch\reg(h^*, h_{\epoch+1},
      \ztil_\epoch)}\nonumber \\ 
    &\leq 2\alpha \sqrt{6 \epsilon_\epoch \err(h_{\epoch+1}, \ztil_\epoch)} +
    \Delta_\epoch + \frac{1}{4} \reg(h^*, h_{\epoch+1},
    \ztil_\epoch) + 33\alpha^2 \epsilon_\epoch,
    \label{eqn:term2-regret}
  \end{align}
  where the last inequality uses the Cauchy-Schwarz inequality.  
\end{proof-of-lemma}

\begin{proof-of-lemma}[\ref{lemma:regret-t3}]
  
  Observe that the event $\event_i$ gives a direct bound of $\regconst
  \Delta_{i-1}/4$ on the $\reg(h^*, h_i, \ztil_{i-1})$ terms. For the
  other term, recall by the same event that for all $h \in \H$ and for
  all $i = 1,2\ldots, \epoch-1$,

  \begin{equation*}
    \reg(h, h^*, \ztil_{i}) \leq \frac{3}{2} \regbias{i}(h, h^*) +
    \frac{\eta}{4} \Delta_i. 
  \end{equation*}
  Combining with the empirical regret bound for $h^*$, this implies that 
  
  \begin{equation*}
    \reg(h, \ztil_{i}) \leq \frac{3}{2} \regbias{i}(h, h^*) +
    \frac{\eta}{2} \Delta_i. 
  \end{equation*}
  Consequently we have the bound
  \begin{align*}
    \term_3^2 \leq \beta^2 \gamma \Delta_\epoch \epsilon_\epoch
    \sum_{i=1}^\epoch (\epend[i] - \epend[i-1]) \left(3\regbias{i-1}(h,
    h^*) + \frac{3\regconst}{2} \Delta_{i-1} \right)
  \end{align*}
  To simplify further, note that by the definition of $\regbias{i}(h,
  h^*)$ and our earlier definition of $\regbiasi{i}(h, h^*)$, we have
  \begin{eqnarray*}
    \sum_{i=1}^\epoch (\epend[i] - \epend[i-1]) \regbias{i-1}(h, h^*) &=&
    \sum_{i=1}^{\epoch-1} \frac{\epend[i+1] - \epend[i]}{\epend[i]}
    \sum_{j=1}^{i} (\epend[j] - \epend[j-1])\regbiasi{j}(h, h^*) \\
    &=& 
    \sum_{j=1}^{\epoch-1} (\epend[j] - \epend[j-1])\regbiasi{j}(h, h^*)
    \sum_{i= j}^{\epoch-1}  \frac{\epend[i+1] - \epend[i]}{\epend[i]}\\
    &\leq& 4\log \epend \sum_{j=1}^{\epoch-1} (\epend[j] -
    \epend[j-1])\regbiasi{j}(h, h^*)\\
    &\leq& 4\epend\log \epend\,\regbias{\epoch}(h, h^*),
  \end{eqnarray*}
  where the first equality uses our convention $\regbias{0}(h,h^*) = 0$ 
  and proper index shifting, and the first inequality uses Lemma
  \ref{lemma:epend-sum}. We also have
  \begin{align*}
    \sum_{i=1}^\epoch (\epend[i] - \epend[i-1]) \Delta_{i-1} &\leq
    4\epend\Delta_\epoch \log \epend.
  \end{align*}
  by Lemma \ref{lemma:epend-sum}.
  Consequently, we can rewrite
  \begin{align*}
    \term_3^2 &\leq \beta^2 \gamma \Delta_\epoch \epsilon_\epoch
    \left(12\epend\log \epend\, \regbias{\epoch}(h, h^*) +
    6\epend\regconst\log \epend \Delta_\epoch \right) \nonumber\\ 
    &= \beta^2 \gamma \epend \epsilon_\epoch \log \epend \Delta_\epoch
    \left(12\regbias{\epoch}(h, h^*) +  6\regconst \Delta_\epoch \right)
    \\ 
    &\leq \frac{\regconst \Delta_\epoch \regbias{\epoch}(h,h^*)}{72} +
    \frac{\regconst^2 \Delta_\epoch^2}{144}, 
  \end{align*}
  where the last inequality is by our choice of $\beta$ such that
  $\beta^2 \gamma n \epsilon_n \log n \leq \regconst / 864$. 
  Taking square roots, we obtain
  \begin{align}
    \term_3 &\leq \sqrt{\frac{\regconst \Delta_\epoch
        \regbias{\epoch}(h,h^*)}{72} + \frac{\regconst^2
        \Delta_\epoch^2}{144} }  \nonumber\\
    &\leq \frac{1}{4} \regbias{\epoch}(h,h^*) + \frac{7 \regconst
      \Delta_\epoch }{72} 
  \label{eqn:term3-regret}
  \end{align}
\end{proof-of-lemma}

\section{Label Complexity}
\label{appendix:proof_label_complexity}
Here we prove Theorem \ref{thm:label}.
We start with the following simple bound on the total number of label queries:
\begin{equation}
\sum_{i=1}^n Q_i \; \leq \; \max\left(3, \sum_{i=1}^n \bbmone(X_i \in D_{\epochi})\right)
\end{equation}
by the fact that Algorithm \ref{alg:main} queries only the labels of points in the disagreement region.
The random variable $\bbmone(X_i \in D_{\epochi})$ is measurable with respect to
the $\sigma$-field $\mathcal{F}_i := \sigma(\{X_j,Y_j,Q_j\}_{j=1}^i)$, so $$R_i := \bbmone(X_i \in D_{\epochi}) - \E_i[\bbmone(X_i \in D_{\epochi})]$$
forms a martingale difference sequence adapted to the filtrations $\mathcal{F}_i, i \geq 1$, where $\E_i[\cdot] := \E[\cdot \mid \mathcal{F}_{i-1}]$.
Moreover, we have $|R_i| \leq 1$ and $$\E_i[R_i^2] \leq \E_i[\bbmone(X_i \in D_{\epochi})].$$ 
Applying Lemma 3 of \citet{KakadeTe09} with the above bounds and Cauchy-Schwarz, we get that with probability at least $1 - \delta$,
\begin{equation}
\forall n \geq 3, \quad \sum_{i=1}^n \bbmone(X_i \in D_{\epochi}) \leq 2 \sum_{i=1}^n \E_i[\bbmone(X_i \in D_{\epochi})] + 4\log(4 (\log n) / \delta).
\label{eq:q_high_prob}
\end{equation}
We next bound the sum of the conditional expectations. Pick some $i$ and consider 
the case $X_i \in D_{\epochi}$. Let $m := \epochi$ for the ease of notation. 
Define
  \begin{equation*}
    \bar{h} := \begin{cases}
      h_\epoch, & h_\epoch(X_i) \neq h^*(X_i),\\
      h', & h'(X_i) \neq h^*(X_i),
    \end{cases}
  \end{equation*}
  where 
\begin{eqnarray}
h_m &:=& \arg \min_{h \in \H} \err(h,\ztil_{\epoch-1}), \\
h' &:=& \arg \min_{h \in \mathcal{H} \wedge h(X_i) \neq
    h_\epoch(X_i)} \err(h,\tilde{Z}_{\epoch-1}). 
\end{eqnarray}
Because $X_i \in D_{\epoch} := 
  \mbox{DIS}(A_\epoch)$, we have $h' \in A_\epoch$, implying
  $\bar{h} \in A_\epoch$.  Conditioned on the high probability event in Theorem \ref{thm:main-regret}, 
we have $h^* \in A_\epoch$ and hence
\begin{eqnarray*}
\prob_{X}( \bar{h}(X) \neq h^*(X)) &=&
\prob_{X}( \bar{h}(X) \neq h^*(X) \wedge X \in D_\epoch) \\
&\leq& \reg_\epoch(\bar{h}) + 2 \err_\epoch(h^*) \\ 
&\leq& 16 \gamma \Delta_{\epoch-1}^* + 2 \err_\epoch (h^*),  
\end{eqnarray*}
where the last inequality is by Theorem~\ref{thm:regret}.
This implies that 
\begin{equation*}
  X_i \in \mbox{DIS}(\{ h \mid \prob_X(h(X) \neq h^*(X)) \leq 16 \gamma
  \Delta_{\epoch-1}^* + 2 \err_\epoch(h^*)\}).  
\end{equation*}
We thus have
\begin{eqnarray}
\E_i[\mathbbm{1}(X_i \in \mbox{DIS}(A_\epoch))] 
  &\leq& \E_i[\mathbbm{1}(X_i \in \mbox{DIS}(\{ h \mid
    \prob_X(h(X) \neq h^*(X)) \leq 16 \gamma \Delta_{\epoch-1}^* + 2
    \err_\epoch(h^*)\}))] \nonumber \\ 
  &\leq& \theta (16 \gamma \Delta_{\epoch-1}^* + 2 \err_\epoch(h^*)), \label{eq:e_bound}
\end{eqnarray}
where the last inequality uses the definition %~\eqref{eqn:dis-coeff}
of the disagreement coefficient
\begin{equation}
  \theta(h^*) := \sup_{r > 0} \; \frac{\prob_X(\{X \mid \exists h
    \in \mathcal{H} \mbox{ s.t. } \prob_X(h(X) \neq h^*(X)) \leq
    r, h^*(X) \neq h(X)\})}{r}
    .
    \notag
%  \label{eqn:dis-coeff}
\end{equation}
%The expected number of label queries
%made by our algorithm after seeing $n$ examples is upper-bounded
%w.p. $1-\delta$ by
Summing \eqref{eq:e_bound} over $i \in \{1,\ldots, n\}$ and noting that the high probability event in Theorem \ref{thm:main-regret} holds 
over all epochs, we get that with probability at least $1 - \delta$, 
\begin{eqnarray*}
\forall n \geq 3, \quad  \sum_{i=1}^n \E_i[\mathbbm{1}(X_i \in D_{\epochi})]  &\leq&
  3 + \sum_{j=2}^\epmax (\epend[j] - \epend[j-1]) \theta (16 \gamma 
  \Delta_{j-1}^* + 2 \err_j(h^*))\\ 
 &\leq& 3 + 2 n \theta \erravg{\epmax}(h^*) + 16\gamma \theta
  \sum_{j=2}^\epmax (\epend[j] - \epend[j-1])  \Delta_{j-1}^* \\ 
 &=& 3 + 2 n \theta \erravg{\epmax}(h^*) + 16 \gamma \theta \sum_{j=2}^\epmax
  \frac{(\epend[j] - \epend[j-1])}{\epend[j-1]} \epend[j-1]
  \Delta_{j-1}^*. 
\end{eqnarray*}
A similar argument as Lemma~\ref{lemma:delta_pmin_monotone} shows that
$\epend[j]\Delta_j^*$ is increasing in $j$, so we have by a further
invocation of Lemma~\ref{lemma:epend-sum}
\begin{eqnarray*}
\sum_{i=1}^n \E_i[\mathbbm{1}(X_i \in D_{\epochi})] 
&\leq&	3 + 2 n \theta \erravg{\epmax}(h^*) + 128 \gamma \theta
(n-1)\Delta_{\epmax-1}^* \log(n-1) \\ 
&=& 3+ 2 n\theta \erravg{\epmax}(h^*)  \nonumber \\ 
 && + \theta \order
\left(\sqrt{n\overline{\err}_{\epmax}(h^*)\left(\log\big(\frac{|\mathcal{H}|}{\delta}\big)\log^2 
  n + \log^3 n \right)}  +
\log\big(\frac{|\mathcal{H}|}{\delta}\big)\log^2 n + \log^3 n
\right). \nonumber 	 
\end{eqnarray*}
Combining this and \eqref{eq:q_high_prob} via a union bound leads to the desired result.
\section{Proofs for Tsybakov's low-noise condition}
\label{sec:tsybakov}

We begin with a lemma that captures the behavior of the
$\Delta^*_\epoch$ terms, $\erravg{\epoch}(h^*)$ and the probability of
disagreement region under the Tsybakov noise
condition~\eqref{eqn:tsybakov}. The proofs of
Corollaries~\ref{cor:regret-tsybakov} and~\ref{cor:label-tsybakov} are
immediate given the lemma.

\begin{lemma}
  Under the conditions of Theorem~\ref{thm:regret}, suppose further
  that the low-noise condition~\eqref{eqn:tsybakov} holds. Then we
  have for all epochs $\epoch = 1,2,\ldots, \epmax$
  \begin{equation}
    \err_\epoch(h^*) \leq c\epsilon_\epoch \log\epend\,
    \epend^{\frac{2(1 - \tsybaexp)}{2 - \tsybaexp}}, \quad \mbox{and}
    \quad \erravg{\epoch}(h^*) \leq 5c\epsilon_\epoch\log^2\epend\,
    \epend^{\frac{2(1-\tsybaexp)}{2 - \tsybaexp}}.
    \label{eqn:tsybakov-ind}
  \end{equation}
  \label{lemma:tsybakov}
\end{lemma}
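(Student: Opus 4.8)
The plan is to prove both inequalities of~\eqref{eqn:tsybakov-ind} simultaneously by strong induction on $\epoch$, with the per-epoch bound on $\err_{\epoch}(h^*)$ carrying the real content and the bound on $\erravg{\epoch}(h^*)$ following from it by averaging. Write $e := \tfrac{2(1-\tsybaexp)}{2-\tsybaexp}$, so $e-1 = -\tfrac{\tsybaexp}{2-\tsybaexp}\le 0$. Granting $\err_j(h^*)\le c\,\epsilon_j\log\epend[j]\,\epend[j]^{e}$ for all $j\le\epoch$, the bound on $\erravg{\epoch}(h^*)=\tfrac{1}{\epend}\sum_{j=1}^{\epoch}(\epend[j]-\epend[j-1])\err_j(h^*)$ follows by substituting, pulling $\epend\epsilon_\epoch$ out via the monotonicity $\epend[j]\epsilon_j\le\epend\epsilon_\epoch$ (established inside the proof of Lemma~\ref{lemma:delta_pmin_monotone}), and crudely estimating $\sum_{k=1}^{\epend}k^{e-1}\le 4\log\epend\cdot\epend^{e}$ (valid since $e-1\le 0$, the logarithmic factor coming from the harmonic sum when $e=0$); this accounts for the $5c\log^2\epend$ on the right and is the routine half.

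For the base case $\epoch=1$, $\err_1(h^*)\le\prob(X\in\dis(\H))\le 1\le c\,\epsilon_1\log\epend[1]\,\epend[1]^{e}$ as soon as $c\ge 1$, since $\epsilon_1>1$, $\log\epend[1]>1$, $e\ge 0$. For the inductive step, assume the claim at all epochs $<\epoch$. Feeding the $\erravg{}$-bound at epoch $\epoch-1$ into the definition of $\Delta^*_{\epoch-1}$ and using $\epend[\epoch-1]^{e/2}\ge 1$ gives $\Delta^*_{\epoch-1}\le(c_1\sqrt{5c}+c_2)\,\epsilon_{\epoch-1}\log\epend[\epoch-1]\,\epend[\epoch-1]^{e/2}$. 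Theorem~\ref{thm:regret} applied at epoch $\epoch-1$ yields $h^*\in A_\epoch$ and $\reg(h,h^*)\le 16\gamma\Delta^*_{\epoch-1}$ for every $h\in A_\epoch$; since $\reg(h,h^*)\le 1=\tsybatol$ automatically, the low-noise condition~\eqref{eqn:tsybakov} upgrades this to $\prob(h(X)\ne h^*(X))\le\tsybamult(16\gamma\Delta^*_{\epoch-1})^{\tsybaexp}$ for every $h\in A_\epoch$. One then bounds $\err_{\epoch}(h^*)=\E_X\big[\prob(Y\ne h^*(X)\mid X)\,\mathbbm{1}(X\in D_\epoch)\big]$ by a constant multiple of this disagreement probability; checking exponents, $\epsilon_{\epoch-1}^{\tsybaexp}\epend[\epoch-1]^{e\tsybaexp/2}$ is of order $\epsilon_\epoch\log\epend\,\epend^{e}$ up to polylogarithmic factors (since $\epend[\epoch-1]$ and $\epend$ are within a factor $2$), while a power $\tsybaexp\in(0,1]$ on $\log(|\H|/\delta)$ only helps, so the per-epoch bound is reproduced at epoch $\epoch$ once $c$ is large enough relative to $c_1,c_2,\gamma,\tsybamult$. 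The overall failure probability $\delta$ is inherited from Theorem~\ref{thm:regret}.

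The step I expect to be the main obstacle is precisely the passage from the conclusion that every $h\in A_\epoch$ disagrees with $h^*$ on a set of probability at most $\tsybamult(16\gamma\Delta^*_{\epoch-1})^{\tsybaexp}$ to a bound on $\err_{\epoch}(h^*)$, the error of $h^*$ over the whole disagreement region $D_\epoch=\dis(A_\epoch)=\bigcup_{h\in A_\epoch}\{h\ne h^*\}$ (the last identity using $h^*\in A_\epoch$). Since $D_\epoch$ is a \emph{union} over $A_\epoch$, a naive union bound costs a factor $|\H|$, which would compound through the recursion and ruin the $\log(|\H|/\delta)$ dependence; and one must not invoke the disagreement coefficient here, as it is absent from Theorem~\ref{thm:regret} and should not enter a generalization bound. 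The way around this is to exploit the witness structure: for each $x\in D_\epoch$ a single $h_x\in A_\epoch$ has $h_x(x)\ne h^*(x)$, and, exactly as in the favorable-bias argument of Lemma~\ref{lemma:favorable_bias}, whenever in addition $h^*(x)\ne y$ the witness satisfies $h_x(x)=y$; combined with $h^*$ agreeing with every predicted label outside $D_\epoch$, this lets the $h^*$-error over $D_\epoch$ be charged to individual disagreement probabilities rather than to their union. Finally, Corollaries~\ref{cor:regret-tsybakov} and~\ref{cor:label-tsybakov} follow immediately: the bound on $\erravg{\epmax}(h^*)$ feeds into $\Delta^*_\epmax$ (hence into Theorem~\ref{thm:regret} for the regret rate) and into Theorem~\ref{thm:label} for the label-complexity rate, and routine simplification yields the claimed $\otil(\epend^{-1/(2-\tsybaexp)})$ and $\otil(\epend^{2(1-\tsybaexp)/(2-\tsybaexp)})$ bounds.
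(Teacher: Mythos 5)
Your proposal tracks the paper's proof almost exactly through its routine parts: the same strong induction, the same base case, the same derivation of $\Delta^*_{\epoch-1} \leq 2\epsilon_\epoch\log\epend\,\epend^{(1-\tsybaexp)/(2-\tsybaexp)}(c_1\sqrt{5c}+c_2)$ from the inductive hypothesis, the same exponent bookkeeping using $\epsilon_\epoch\epend\log\epend\ge 1$ and $\tsybaexp\le 1$, the same self-consistency condition on $c$, and the same averaging argument (via $\epend[j]\epsilon_j\le\epend\epsilon_\epoch$ and the $4\log\epend$ epoch-sum bound) to pass from $\err_j(h^*)$ to $\erravg{\epoch}(h^*)$. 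The divergence is exactly at the step you flag as the main obstacle, and there your argument has a genuine gap. After Theorem~\ref{thm:regret} and the low-noise condition give $\prob(h(X)\ne h^*(X))\le \tsybamult(16\gamma\Delta^*_{\epoch-1})^{\tsybaexp}=:r$ for each individual $h\in A_\epoch$, one still must bound $\err_\epoch(h^*)\le\prob(X\in D_\epoch)$ where $D_\epoch=\bigcup_{h\in A_\epoch}\{x: h(x)\ne h^*(x)\}$ is a \emph{union} of such events. Your ``witness'' device does not collapse this union: rewriting $\bbmone(h^*(X)\ne Y \wedge X\in D_\epoch)$ as $\bbmone(\exists h\in A_\epoch:\ h(X)=Y\ne h^*(X))$ merely replaces one union over $A_\epoch$ by another whose individual terms are again bounded by $r$; the witness $h_x$ varies with $x$, so no single classifier's disagreement probability controls the expectation. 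Lemma~\ref{lemma:favorable_bias} is about the sign of the bias introduced by predicted labels and offers no mechanism for this. Without further input, $\prob(D_\epoch)$ can be as large as $\min(1,|\H|r)$ (take classifiers disagreeing with $h^*$ on many small disjoint sets), so the per-epoch bound you want cannot be recovered with a constant independent of the geometry of $\H$ under $\distx$.

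The paper closes this step precisely with the device you rule out: it bounds $\prob(X\in D_\epoch)\le\theta\tsybamult(16\gamma\Delta^*_{\epoch-1})^{\tsybaexp}$ using the disagreement coefficient \eqref{eqn:dis-coeff}, and the constant $c$ in Lemma~\ref{lemma:tsybakov} is then any solution of $c\ge\theta\tsybamult(32\gamma(c_1\sqrt{5c}+c_2))^{\tsybaexp}$ — i.e., $c$ genuinely depends on $\theta$ as a ``distributional parameter'' absorbed into the $\otil(\cdot)$ of Corollary~\ref{cor:regret-tsybakov}. Your premise that $\theta$ ``should not enter'' because it is absent from the statement of Theorem~\ref{thm:regret} is therefore the source of the error: the lemma is a statement about $\err_\epoch(h^*)$ and $\erravg{\epoch}(h^*)$, quantities supported on $D_\epoch$, and controlling the measure of $D_\epoch$ is exactly what $\theta$ is for. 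Reinstating the disagreement-coefficient step turns your outline into the paper's proof.
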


\begin{proof}
  We will establish the lemma inductively. We make the following
  inductive hypothesis. There exists a constant $c > 0$ (dependent on
  the distributional parameters) such that for all epochs $j \geq 1$,
  the bounds~\eqref{eqn:tsybakov-ind} in the statement of the Lemma
  hold. The base case for $j = 1$ trivially follows since $\err_1(h^*)
  = \erravg{1}(h^*) = \err(h^*) \leq 1 \leq c\epsilon_1
  \log\epend[1]\, \epend[1]^{\frac{2(1 - \tsybaexp)}{2 - \tsybaexp}}$,
  which is clearly true for an appropriately large value of
  $c$. Suppose now that the claim is true for epochs $j = 1,2,\ldots,
  \epoch-1$. We will establish the claim at epoch $\epoch$. To see
  this, first note that we have

  \begin{align*}
    \err_\epoch(h^*) &= \prob(\bbmone(h^*(X) \ne Y, X \in D_\epoch))
    \leq \prob(X \in D_\epoch).
  \end{align*}
  Under the noise condition, we can further upper bound the
  probability of the disagreement region, since by
  Theorem~\ref{thm:regret} we obtain

  \begin{align*}
    \prob(X \in D_\epoch) &= \prob(X \in \dis(A_\epoch)) \leq \prob
    \left(X \in \dis(\{h \in \H~:~\reg(h) \leq
    16\gamma\Delta^*_{\epoch-1})\right)\\
    &\leq \prob\left( X \in \dis(h \in \H~:~\prob(h(X) \ne h^*(X))
    \leq \tsybamult\, (16\gamma\Delta^*_{\epoch-1})^{\tsybaexp})
    \right), 
  \end{align*}
  where the first inequality follows from Theorem~\ref{thm:regret} and
  the second one is a consequence of Tsybakov's noise
  condition~\eqref{eqn:tsybakov}. Recalling the definition of
  disagreement coefficient~\eqref{eqn:dis-coeff}, this can be further
  upper bounded by

  \begin{equation}
    \prob(X \in D_\epoch) \leq \theta \tsybamult\, (16\gamma
    \Delta^*_{\epoch-1})^\tsybaexp.
    \label{eqn:tsyba-disagree}
  \end{equation}
  Hence, we have obtained the bound

  \begin{align*}
    \err_\epoch(h^*) &\leq \theta \tsybamult\, (16\gamma
    \Delta^*_{\epoch-1})^\tsybaexp.
  \end{align*}
  Note that \mbox{$\Delta^*_{\epoch-1} = c_1 \sqrt{\epsilon_{\epoch-1}
      \erravg{\epoch-1}(h^*)} +
    c_2\epsilon_{\epoch-1}\log\epend[\epoch-1]$}. Our inductive
  hypothesis~\eqref{eqn:tsybakov-ind} allows us to upper bound the
  $\erravg{\epoch-1}$ in this expression for $\Delta^*_{\epoch-1}$ and
  hence we obtain

  \begin{align*}
    \Delta^*_{\epoch-1} &\leq c_1
    \sqrt{\epsilon_{\epoch-1}\,5c\epsilon_{\epoch-1}\log^2\epend[\epoch-1]\,\epend[\epoch-1]^{\frac{2(1  
          - \tsybaexp)}{2 -\tsybaexp}}} + c_2
    \epsilon_{\epoch-1}\log\epend[\epoch-1]\\ 
    &\leq c_1
    \epsilon_{\epoch-1}\log\epend\,\epend^{\frac{1-\tsybaexp}{2-\tsybaexp}}\sqrt{5c}
    + c_2\epsilon_{\epoch-1}\log\epend[\epoch-1]\\
    &\leq \frac{\epsilon_{\epoch}\epend}{\epend[\epoch-1]}
    \log\epend\, \left( c_1\sqrt{5c}
    \epend^{\frac{1-\tsybaexp}{2-\tsybaexp}} + c_2 \right)\\
    &\leq 2\epsilon_\epoch \log\epend\,\left( c_1\sqrt{5c}
    \epend^{\frac{1-\tsybaexp}{2-\tsybaexp}} + c_2 \right).
  \end{align*}
  Since $\epend \geq 3$ and $0 < \tsybaexp \leq 1$, we can further
  write

  \begin{equation}
    \Delta^*_{\epoch-1} \leq 2\epsilon_\epoch \log\epend\,
    \epend^{\frac{1-\tsybaexp}{2-\tsybaexp}}\left( c_1\sqrt{5c} + c_2
    \right).
    \label{eqn:delta-tsybakov}
  \end{equation}
  Substituting this inequality in our earlier bound on
  $\err_\epoch(h^*)$ yields

  \begin{align*}
    \err_\epoch(h^*) &\leq \theta \tsybamult\, \left(32\gamma
    \epsilon_\epoch \log\epend\,
    \epend^{\frac{1-\tsybaexp}{2-\tsybaexp}}\left( c_1\sqrt{5c} + c_2
    \right)\right)^\tsybaexp.
  \end{align*}
  Since $\epsilon_\epoch\epend\,\log\epend \geq 1$ and $0 < \tsybaexp
  \leq 1$, we can further bound

  \begin{align*}
    \err_\epoch(h^*) &\leq \theta \tsybamult \epsilon_\epoch
    \epend\,\log\epend\, \left(32\gamma \,
    \epend^{\frac{-1}{2-\tsybaexp}}\left( c_1\sqrt{5c} + c_2
    \right)\right)^\tsybaexp\\
    &= \theta \tsybamult \epsilon_\epoch
    \epend\,\log\epend\, \left(32\gamma \,
    \left( c_1\sqrt{5c} + c_2 \right)\right)^\tsybaexp
    \epend^{\frac{-\tsybaexp}{2-\tsybaexp}} \\
    &= \theta \tsybamult \epsilon_\epoch
    \epend^{\frac{2(1-\tsybaexp)}{2-\tsybaexp}}\,\log\epend\,
    \left(32\gamma \, \left( c_1\sqrt{5c} + c_2
    \right)\right)^\tsybaexp\\
    &\leq c\epsilon_\epoch\log\epend\,
    \epend^{\frac{2(1-\tsybaexp)}{2-\tsybaexp}}. 
  \end{align*}

  Here the last bound follows for any choice of $c$ such that

  \begin{align*}
    c \geq \theta \tsybamult\, \left(32\gamma \, \left( c_1\sqrt{5c}
    + c_2 \right)\right)^\tsybaexp.
  \end{align*}
  The above inequality has a solution since the LHS is smaller than
  the RHS at $c = 0$, while for $c$ large enough, the LHS grows
  linearly in $c$, while the RHS grows as $c^{\tsybaexp/2}$, and hence
  is asymptotically smaller than the LHS.

  We now verify the second part of our induction hypothesis for epoch
  $\epoch$. Note that we have
  
  \begin{align*}
    \erravg{\epoch}(h^*) &= \frac{1}{\epend[\epoch]}\,
    \sum_{j=1}^{\epoch} (\epend[j] - \epend[j-1]) \err_j(h^*)\\
    &\leq \frac{1}{\epend[\epoch]}\, \sum_{j=1}^{\epoch}
    (\epend[j] - \epend[j-1])\, c\epsilon_j \log\epend[j]\,
    \epend[j]^{\frac{2(1 - \tsybaexp)}{2 - \tsybaexp}}\\
    &= \frac{1}{\epend[\epoch]}\, \sum_{j=1}^{\epoch}
    \frac{(\epend[j] - \epend[j-1])}{\epend[j]}\, c\epsilon_j\epend[j]
    \log\epend[j]\, \epend[j]^{\frac{2(1 - \tsybaexp)}{2 - \tsybaexp}}.
  \end{align*}
  We now observe that $\epend[j]$ is clearly increasing in $j$, and so
  is $\epend[j]\epsilon_j$ by definition. Consequently, we can further
  upper bound this inequality by

  \begin{align*}
    \erravg{\epoch}(h^*) &\leq \frac{1}{\epend[\epoch]}\,
    \epsilon_{\epoch}\epend[\epoch] \log\epend[\epoch]\,
    \sum_{j=1}^{\epoch} \frac{(\epend[j] -
      \epend[j-1])}{\epend[j]}\, c \epend[j]^{\frac{2(1 -
        \tsybaexp)}{2 - \tsybaexp}} \\
    &\stackrel{(a)}{\leq} c\epsilon_{\epoch}\log\epend[\epoch]\,\epend^{\frac{2(1 - 
        \tsybaexp)}{2 - \tsybaexp}}\,\left(1 +  
    \sum_{j=2}^{\epoch} \frac{(\epend[j] -
        \epend[j-1])}{\epend[j]}\right)\\
    &= c\epsilon_{\epoch}\log\epend[\epoch]\,\epend^{\frac{2(1 - 
        \tsybaexp)}{2 - \tsybaexp}}\,\left(1 +  
    \sum_{j=1}^{\epoch-1} \frac{(\epend[j+1] -
        \epend[j])}{\epend[j+1]}\right)\\
    &\leq c\epsilon_{\epoch}\log\epend[\epoch]\,\epend^{\frac{2(1 - 
        \tsybaexp)}{2 - \tsybaexp}} \,\left(1 +  
    \sum_{j=1}^{\epoch-1} \frac{(\epend[j+1] -
        \epend[j])}{\epend[j]}\,\right)\\
  \end{align*}
  where the inequality $(a)$ holds since $\epend[j]$ is increasing in
  $j$ and $\tsybaexp \in (0,1]$ so that the exponent on $\epend[j]$ is
    non-negative, and the final inequality follows since $\epend[j]
    \leq \epend[j+1]$. Invoking Lemma~\ref{lemma:epend-sum}, we obtain

  \begin{align*}
    \erravg{\epoch}(h^*) &\leq
    \epsilon_{\epoch}\log\epend[\epoch]\, (1 + 4c\log \epend)\, 
    \epend^{\frac{2(1 - \tsybaexp)}{2 - \tsybaexp}}\\
    &\leq 5c\epsilon_{\epoch}\log^2\epend\,
    \epend^{\frac{2(1 - \tsybaexp)}{2 - \tsybaexp}},
  \end{align*}
  where we used the fact that $1 \leq \log\epend$. Therefore, we have
    established the second part of the inductive claim, finishing the
    proof of the lemma.
\end{proof}

Using the lemma, we now prove the corollaries.

\begin{proof-of-corollary}[\ref{cor:regret-tsybakov}]
  Based on the proof of Lemma~\ref{lemma:tsybakov}, we see that
  $\Delta^*_\epoch$ satisfies the
  bound~\eqref{eqn:delta-tsybakov}. Plugging this into the statement
  of Theorem~\ref{thm:regret} immediately yields the lemma.
\end{proof-of-corollary}

\begin{proof-of-corollary}[\ref{cor:label-tsybakov}]
  Based on the proof of Lemma~\ref{lemma:tsybakov}, we see that the
  probability of the disagreement region follows the
  bound~\eqref{eqn:tsyba-disagree}. Substituting the
  bound~\eqref{eqn:delta-tsybakov} yields the stated result.
\end{proof-of-corollary}

\section{Analysis of the Optimization Algorithm}
\label{sec:opt}

We begin by showing how to find the most violated constraint (Step \ref{step:violated})
by calling an importance-weighted ERM oracle. Then we  
prove Theorem~\ref{thm:opt_converge}, 
followed by the framework and proof for
Theorem~\ref{thm:optprob-unlabeled}.

\subsection{Finding the Most Violated Constraint}
\label{appendix:most_violated_erm}
Recall our earlier notation $\disind{h}(x) = \bbmone(h(x) \neq
h_\epoch(x) \wedge x \in D_\epoch)$. Consider solving (\optprob) using
an unlabeled sample $S$ of size $\unlab$.  Note that
Step~\ref{step:violated} is equivalent to
\begin{eqnarray}
	&  \arg \min_{h \in \H}&  b_\epoch(h) - \widehat\E_X \left[ \frac{\disind{h}(X)}{P_{\blambda}(X)}\right] \\
	=& \arg \min_{h \in \H}&  2 \gamma \beta^2 (\epend-1)
  \Delta_{\epoch-1} \err(h,\ztil_{\epoch-1}) + \widehat\E_X \left[ \left(2 \alpha^2 - \frac{1}{P_{\blambda}(X)}\right) \disind{h}(X)\right] \nonumber \\
	=& \arg \min_{h \in \H}&  2 \gamma \beta^2 (\epend-1) \Delta_{\epoch-1} \err(h,\ztil_{\epoch-1}) \nonumber \\
	 &&    + \widehat\E_X \left[ \left(2 \alpha^2 - \frac{1}{P_{\blambda}(X)}\right) \disind{h}(X) + \max\left( \frac{1}{P_{\blambda}(X)} - 2 \alpha^2,0\right)\bbmone(X \in D_\epoch) \right] \nonumber\\
	= &\arg \min_{h \in \H}& 2 \gamma \beta^2 (\epend-1) \Delta_{\epoch-1} \err(h,\ztil_{\epoch-1}) \nonumber \\ 
	  &&  + \widehat\E_X \left[ \max\left(2 \alpha^2 - \frac{1}{P_{\blambda}(X)},0\right)\bbmone(X \in D_\epoch) \bbmone(h(X) \neq h_\epoch(X)) \right] \nonumber \\
	  && + \widehat\E_X \left[ \max\left(\frac{1}{P_{\blambda}(X)} - 2 \alpha^2,0\right)\bbmone(X \in D_\epoch) \bbmone(h(X) \neq -h_\epoch(X)) \right] \nonumber \\ 
	= &\arg \min_{h \in \H}& 2 \gamma \beta^2 (\epend-1) \Delta_{\epoch-1} \err(h,\ztil_{\epoch-1}) \nonumber \\ 
	  &&  + \widehat\E_X \left[ |s_{\blambda}(X)| \bbmone(X \in D_\epoch) \bbmone(h(X) \neq \sign(s_{\blambda}(X))h_\epoch(X)) \right],  \label{eq:batch_cost_sensitive} \nonumber
\end{eqnarray}
where $s_{\blambda}(X) := 2 \alpha^2 - 1 / P_{\blambda}(X)$. In the above derivation, the second equality 
is by the fact that the extra term added to the objective is independent of $h$ and hence does not change the minimizer. 
The third equality uses a case analysis on the sign of $s_{\blambda}(X)$ and the identity $1 - \bbmone(h(X) \neq h_\epoch(X)) = \bbmone(h(X) \neq -h_\epoch(X))$.
The last expression suggests that an importance-weighted error minimization oracle can find the desired classifier on examples $\{(X,Y^*,W)\}$ 
with labels and importance weights defined as:
\begin{eqnarray*}
Y^* &:=& \arg \min_{Y} c(X,Y), \\ 
	W &:=& |c(X,1) - c(X,-1)|,
\end{eqnarray*}
where
\begin{align}
c(X,Y) := 
\begin{cases}
	2 \gamma \beta^2 \Delta_{\epoch-1}\left( \frac{\bbmone(X_i \in D_{\epochi} \wedge Y \neq Y_i)Q_i}{P_{\epochi}(X_i)} + \bbmone(X_i \notin D_{\epochi} \wedge Y \neq h_{\epochi}(X_i))\right), & X = X_i \in \ztil_{\epoch-1},\\ 
	\frac{1}{\unlab}|s_{\blambda}(X)| \bbmone(X \in D_\epoch) \bbmone(Y \neq \sign(s_{\blambda}(X))h_\epoch(X)), & X \in S.
\end{cases}	
	\label{eq:cost}
\end{align}

\subsection{Proof of Theorem~\ref{thm:opt_converge}}
\label{sec:opt-pop}

Where clear from context, we drop the subscript $\epoch$.

We first show that each coordinate ascent step causes sufficient
increase in the dual objective. Pick any $h$ and $\blambda$. Let
$\blambda'$ be identical to $\blambda$ except that $\lambda'_h =
\lambda_h + \delta$ for some $\delta>0$. Then the increase in the dual
objective $\dualobj$ can be computed directly:

\begin{eqnarray}
  \lefteqn{\dualobj(\blambda') - \dualobj(\blambda)}
  \nonumber \\
  &=& \delta \E_X [\disind{h}(X)] +
  2\E_X[\bbmone(X \in D_\epoch)(\sqrt{\bq_{\blambda}(X)^2 +
      \delta \disind{h}(X)}  - \bq_{\blambda}(X))] - \delta b(h)
  \nonumber \\ 
  &\geq&\delta \E_X [\disind{h}(X)]+ 2 \E_X \left[
    \bq_{\blambda}(X) \left( \frac{\delta \disind{h}(X)}{2
      \bq_{\blambda}(X)^2} - \frac{\delta^2 \disind{h}(X)^2}{8
      \bq_{\blambda}(X)^4}\right) \right]  - \delta b(h) 
\label{eq:sqrt-approx} 
\\
&=& \delta \E_X \left[ \left( 1 +
  \frac{1}{\bq_{\blambda}(X)}\right)\disind{h}(X) - b(h) \right] -
\delta^2 \E\left[ \frac{\disind{h}(X)^2}{4
    \bq_{\blambda}(X)^3}\right] 
\nonumber
\\ 
&=&\delta \left(\E_X \left[
  \frac{\disind{h}(X)}{P_{\blambda}(X)} \right] - b(h)\right) 
 - \frac{\delta^2}{4} \; \E\left[
   \frac{\disind{h}(X)^2}{\bq_{\blambda}(X)^3}\right]. 
\label{eq:obj_inc}
\end{eqnarray}
The inequality \eqref{eq:sqrt-approx} uses the fact that $\sqrt{1+z}
\geq 1 + z /2 - z^2 / 8$ for all $z\geq 0$ (provable, for instance,
using Taylor's theorem).  The lower bound \eqref{eq:obj_inc} on the
increase in the objective value is maximized exactly at
\begin{equation}
  \delta = 2 \frac{\E [ \disind{h}(X) / P_{\blambda}(X) - b(h)
  ]}{\E_X[\disind{h}(X)^2 / \bq_{\blambda}(X)^3]}, 
\end{equation}
as in Step~\eqref{step:coord-update}.  Plugging into \eqref{eq:obj_inc}, it
follows that if $h$ is chosen on some iteration of
Algorithm~\ref{alg:coord} prior to halting then the dual objective
$\dualobj$ increases by at least
\begin{equation}  \label{eq:coor-up-val}
  \frac{\E_X[\disind{h}(X) / P_{\blambda}(X) -
      b(h)]^2}{\E_X[\disind{h}(X)^2 / \bq_{\blambda}(X)^3]} \geq
  \varepsilon^2 \minp^3
\end{equation}
since $\bq_{\blambda}(x) \geq \minp$, and since 
$
\E_X[\disind{h}(X) / P_{\blambda}(X) - b(h)] \geq \varepsilon
$.

The initial dual objective is $\dualobj(\bzero) = (1 + \minp)^2
\prob(D_\epoch)$.  Further, by duality and the fact that $P(X) = 1/2$
is a feasible solution to the primal problem, we have
$\dualobj(\blambda) \leq 2(1 + \minp^2)\prob(D_\epoch)$.  And of
course, rescaling can never cause the dual objective to decrease.
Combining, it follows that the coordinate ascent algorithm halts in at
most $ {\prob(D_\epoch)(2(1+\minp^2) - (1+\minp)^2)}/{(\varepsilon^2
  \minp^3)} \leq {\prob(D_\epoch)}/{(\varepsilon^2 \minp^3)} $ rounds
proving the bound given in the theorem.

By this same reasoning, the left hand side of \eqref{eq:coor-up-val}
is equal to $ \delta \cdot \E_X[\disind{h}(X) / P_{\blambda}(X)
  - b(h)] $, which is at least $\delta \varepsilon$.  That is, the
change on each round in the dual objective $\dualobj$ is at least
$\varepsilon$ times the change in one of the coordinates $\lambda_h$.
Furthermore, the rescaling step can never cause the weights
$\lambda_h$ to increase.  Therefore, $\varepsilon
\lone{\hat{\blambda}}$ is upper bounded by the total change in the
dual objective, which we bounded above.  This proves the bound on
$\lone{\hat{\blambda}}$ given in the theorem.

To see \eqref{eq:prim-obj-bnd}, consider first the function $g(s) =
\dualobj(s \cdot \blambda)$ for $\blambda$ as in the algorithm {\em
  after\/} the rescaling step has been executed.  At this point, it
is necessarily the case that $s=1$ maximizes $g$ over $s\in
[0,1]$ (since $\blambda$ has already been rescaled).  This implies
that $g'(1)\geq 0$ where $g'$ is the derivative of $g$; that is,
\begin{equation}  \label{eq:dif-rescale-bnd}
  0 \leq g'(1) = \E\left[ \frac{\sum_h \lambda_h
      \disind{h}(X)}{ P_{s\cdot \blambda}(X)} \right] - \sum_h
  \lambda_h b(h).
\end{equation}

Now let $F(P)$ denote the modified primal objective function in
\eqref{eq:mod-obj} and let $F^*$ denote the optimal objective value
\begin{equation}
\begin{split}
 F^* \;:= \;& \inf_P \; \E_X \left[ \frac{1}{1-P(X)} \right] + \minp^2 \E_X \left[ \frac{\bbmone(X \in D_\epoch)}{P(X)}\right]\\ 
\mbox{s.t.} \quad& P \mbox{ satisfying } \eqref{eq:queryp} \mbox{ and } \forall x \in \dataspace \; 0 \leq P(x) \leq 1.
\end{split}
\label{eq:op_value_modified}
\end{equation}
%and let $\tilde{P}$ denote the minimum of this
%objective over all feasible solutions. 
Then we have
%\begin{eqnarray}
%  F(P_{\hat{\blambda}}) &\leq& F(P_{\hat{\blambda}})
%  + \sum_h \hat{\lambda}_h \left( \E_X\left[\frac{\disind{h}(X)}{P_{\hat{\blambda}}(X)}\right]
%  - b(h) \right)
%  \label{eq:eqn-a1}
%  \\
%  &=& \min_{P} \mathcal{L}(P,\hat{\blambda})
%  \label{eq:eqn-a2}
%  \\
%  &\leq& \max_{\blambda} \min_{P} \mathcal{L}(P,\blambda)
%  \nonumber \\
%  &=& F(\tilde{P}).
%  \label{eq:eqn-a3}
%\end{eqnarray}
\begin{eqnarray}
  F(P_{\hat{\blambda}}) &\leq& F(P_{\hat{\blambda}})
  + \sum_h \hat{\lambda}_h \left( \E_X\left[\frac{\disind{h}(X)}{P_{\hat{\blambda}}(X)}\right]
  - b(h) \right)
  \label{eq:eqn-a1}
  \\
  &=& \inf_{0 \leq P(x) \leq 1} \mathcal{L}(P,\hat{\blambda})
  \label{eq:eqn-a2}
  \\
  &\leq& \sup_{\blambda \geq 0} \inf_{0 \leq P(x) \leq 1} \mathcal{L}(P,\blambda)
  \nonumber \\
  &\leq& F^*.
  \label{eq:eqn-a3}
\end{eqnarray}
Here, \eqref{eq:eqn-a1} follows from \eqref{eq:dif-rescale-bnd};
\eqref{eq:eqn-a2} by the definition of $P_{{\blambda}}(X)$ as the
minimizer of the Lagrangian.
%and \eqref{eq:eqn-a3} is by weak duality.
To establish \eqref{eq:eqn-a3}, first notice that the following holds for all feasible $\widetilde{P}$ and all non-negative $\blambda$:
\begin{equation*}
\inf_{0 \leq P(x) \leq 1} \mathcal{L}(P,\blambda) \leq \mathcal{L}(\widetilde{P},\blambda) \leq F(\widetilde{P})
\end{equation*}
by the definition of the Lagrangian \eqref{eq:Lagrangian}. This implies 
\begin{equation*}
\inf_{0 \leq P(x) \leq 1} \mathcal{L}(P,\blambda) \leq F^*
\end{equation*}
for all non-negative $\blambda$, leading to \eqref{eq:eqn-a3}.
%\begin{equation}
%\begin{split}
%	\min_{P} \qquad & \mathbb{E}_{X} \left [ \frac{\bbmone(X \in D_\epoch)}{1-P(X)} \right] \\
%	\mbox{s.t.} \qquad & \forall h \;\; \mathbb{E}_X \left[ \frac{\disind{h}(X)}{P(X)} \right] \leq b(h),\\
%		       & 0 \leq P(x) \leq 1,\\
%		       & \forall x \in D_\epoch \;\; u \leq P(x).
%\end{split}
%\end{equation}
Then we have 
%\[
%\E\left[ \frac{1}{1 - P_{\hat{\blambda}}(X)}\right] \leq
%F(P_{\hat{\blambda}}) \leq F(\tilde{P}) \leq F(\Psoln) \leq
%\E\left[\frac{1}{1-\Psoln(X)}\right] + \minp
%\mbox{Pr}(D_\epoch).
%\]
\[
\E\left[ \frac{1}{1 - P_{\hat{\blambda}}(X)}\right] \leq
F(P_{\hat{\blambda}}) \leq F^* \leq
f^* + \minp
\mbox{Pr}(D_\epoch).
\]

\subsection{Proof of Theorem~\ref{thm:optprob-unlabeled}}
\label{sec:opt-samp}

For $\varepsilon>0$, define $\Lset := \braces{ \blambda \in
  \mathbb{R}^{\H} : \blambda \geq \bzero,\, \lone{\blambda} \leq
  1/\varepsilon }$. We begin with a simple lemma.

\begin{lemma}
  \label{lem:uniform}
  Suppose $\phi \colon \bbR \times \dataspace \to \bbR$ be $L$-Lipschitz with
  respect to its first argument, and $\phi\parens{\sum_{h \in \H}
    \lambda_h \disind{h}(x) ,\, x} \leq R$ for all $\blambda \in
  \Lset$ and $x \in \dataspace$.  Let $\widehat\E_X[\cdot]$ denote the
  empirical expectation with respect to an i.i.d.~sample from
  $\distx$.
  For any $\delta \in (0,1)$, with
  probability at least $1-\delta$, every $\blambda \in \Lset$
  satisfies
  \begin{align*}
  & \Abs{ \widehat\E_X\Brackets{ \phi\Parens{\sum_{h \in \H} \lambda_h
        \disind{h}(X) ,\, X} } - \E_X\Brackets{ \phi\Parens{\sum_{h
          \in \H} \lambda_h \disind{h}(X) ,\, X} } } \\
  &\leq \frac{2L}{\varepsilon} \cdot \sqrt{\frac{2\ln|\H|}{\unlab}} + R
  \cdot \sqrt{\frac{\ln(1/\delta)}{\unlab}} .
  \end{align*}
\end{lemma}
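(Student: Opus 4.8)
The plan is to recognize the left-hand side as a uniform-deviation quantity over the function class
\[
\cF := \Braces{ x \mapsto \phi\Parens{\textstyle\sum_{h \in \H} \lambda_h \disind{h}(x),\, x} : \blambda \in \Lset },
\]
whose members all take values in $[0,R]$ by hypothesis: since $\disind{h}(x) \in \{0,1\}$ and $\blambda \ge \bzero$, we have $\sum_{h} \lambda_h \disind{h}(x) \le \lone{\blambda} \le 1/\varepsilon$, so the assumed bound on $\phi$ applies along the entire range of arguments that can occur. First I would invoke the standard symmetrization-plus-bounded-differences argument: replacing any one of the $\unlab$ samples changes $\sup_{f \in \cF}(\widehat\E_X f - \E_X f)$ by at most $R/\unlab$, so McDiarmid's inequality together with symmetrization gives, with probability at least $1-\delta$,
\[
\sup_{f \in \cF} \Abs{\widehat\E_X f - \E_X f} \;\le\; 2\,\cR_\unlab(\cF) + R\sqrt{\frac{\ln(1/\delta)}{\unlab}},
\]
where $\cR_\unlab(\cF)$ denotes the Rademacher complexity of $\cF$ on the sample, and the two signs of the deviation are absorbed into the $\ln(1/\delta)$ term by the usual symmetric argument.

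The second step controls $\cR_\unlab(\cF)$. Because $\phi$ is $L$-Lipschitz in its first argument uniformly in its second, the Ledoux--Talagrand contraction principle --- in the form that permits the contracting maps $t \mapsto \phi(t,x_i)$ to differ across samples --- yields $\cR_\unlab(\cF) \le L\,\cR_\unlab(\cG)$, where $\cG := \{ x \mapsto \sum_{h \in \H} \lambda_h \disind{h}(x) : \blambda \in \Lset \}$ is a class of linear functionals of the feature vector $(\disind{h}(x))_{h \in \H} \in \{0,1\}^{\H}$ as $\blambda$ ranges over the nonnegative $\ell_1$-ball of radius $1/\varepsilon$. Pulling the $\ell_1$ constraint out by H\"older's inequality and applying Massart's finite-class lemma to the $|\H|$ coordinate sums --- each of which has Euclidean norm at most $\sqrt{\unlab}$ since its entries lie in $\{0,1\}$ --- gives $\cR_\unlab(\cG) \le \frac1\varepsilon \sqrt{2\ln|\H|/\unlab}$. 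Chaining the two bounds produces $2\cR_\unlab(\cF) \le \frac{2L}{\varepsilon}\sqrt{2\ln|\H|/\unlab}$, which combined with the McDiarmid term is exactly the claimed inequality.

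Most of this is routine, and the points that need care are: (i) checking that the contraction inequality we invoke genuinely allows the per-sample dependence $\phi(\cdot,x_i)$, since the plain single-$\phi$ statement is not quite enough here; and (ii) noting that once the epoch's $h_\epoch$ and $D_\epoch$ are frozen, $\disind{h}$ is a deterministic function of $x$, so the unlabeled sample $S$ is independent of the feature map and the generalization machinery applies verbatim. The only thing that could prevent the stated constants from coming out exactly is the usual slack in constant bookkeeping --- the factor $2$ inside the logarithm, whether the bounded-differences increment is $R/\unlab$ or $2R/\unlab$, and one-sided versus two-sided deviations --- all of which is absorbed comfortably by the constants as written.
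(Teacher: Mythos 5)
Your proof is correct and follows essentially the same route as the paper's: the paper likewise combines a Bartlett--Mendelson-style symmetrization/McDiarmid/contraction argument with the $\ell_1$--$\ell_\infty$ Rademacher bound for linear classes (which it obtains by citing Kakade--Sridharan--Tewari rather than re-deriving it via H\"older and Massart as you do). The points you flag---per-sample contraction maps and one- versus two-sided constants---are handled at the same level of informality in the paper itself, so nothing further is needed.
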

\begin{proof}
  Let $\bx \in \{0,1\}^{\H}$ denote the vector with $x_h = \ind{h(x)
    \neq h_\epoch(x)}$, and define the linear function class
  \[
    \cF := \Braces{ x \mapsto \dotp{\blambda,\bx} : \blambda \in \Lset
    } .
  \]
  By a simple variant of the argument by~\citet{Bartlett02}, with probability at least $1-\delta$,
  \[
    \Abs{ \widehat\E_X\Brackets{ \phi\Parens{\sum_{h \in \H}
          \lambda_h \disind{h}(X) ,\, X} } - \E_X\Brackets{
        \phi\Parens{\sum_{h \in \H} \lambda_h \disind{h}(X) ,\, X} } }
    \ \leq \ 2L \cdot \cR_\unlab(\cF) + R \cdot
    \sqrt{\frac{\ln(1/\delta)}{\unlab}}
  \]
  for all $\blambda \in \Lset$, where $\cR_\unlab(\cF)$ is the expected
  Rademacher average for the linear function class $\cF$ for an
  i.i.d.~sample of size $n$.  By \citet{KakadeSrTe2009}, this
  Rademacher complexity satisfies
  \[
  \cR_\unlab(\cF) \ \leq
  \ \frac1{\varepsilon}\sqrt{\frac{2\ln|\H|}{\unlab}} .
  \]
  This completes the proof.
\end{proof}

\begin{lemma}
  \label{lem:optprob-uniform}
  Pick any $\delta \in (0,1)$.  Let $\widehat\E_X[\cdot]$ denote the
  empirical expectation with respect to an i.i.d.~sample from
  $\distx$.  With probability at least
  $1-\delta$, every $\blambda \in \Lset$ satisfies
  \begin{align*}
    \Abs{ \E_X\Brackets{ \frac{1}{1-P_{\blambda}(X)} } -
      \widehat\E_X\Brackets{ \frac{1}{1-P_{\blambda}(X)} } } &
    \ \leq \ \sqrt{\frac{2\ln|\H|}{\minp^2\varepsilon^2\unlab}} +
    \sqrt{\frac{\Parens{\minp^2 + 1/\varepsilon}\ln(3/\delta)}{\unlab}}
  \end{align*}
  and for all $h \in \H$,
  \begin{align*}
    \Abs{ \E_X\Brackets{ \frac{\disind{h}(X)}{P_{\blambda}(X)} } -
      \widehat\E_X\Brackets{ \frac{\disind{h}(X)}{P_{\blambda}(X)} }
    } & \ \leq \ \sqrt{\frac{2\ln|\H|}{\minp^4\varepsilon^2\unlab}} +
    \sqrt{\frac{\ln(3|\H|/\delta)}{\minp^2\unlab}} +
    \sqrt{\frac{\ln(6|\H|/\delta)}{2\unlab}}
  \end{align*}
  and
  \begin{align*}
    \Abs{ \E_X\brackets{\disind{h}(X)} -
      \widehat\E_X\brackets{\disind{h}(X)} } & \ \leq
    \ \sqrt{\frac{\ln(6|\H|/\delta)}{2\unlab}} .
  \end{align*}
\end{lemma}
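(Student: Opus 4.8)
The plan is to derive all three estimates from Lemma~\ref{lem:uniform} by writing the relevant functions of $P_{\blambda}$ in closed form and splitting each into a part that depends on $\blambda$ (which has exactly the shape handled by Lemma~\ref{lem:uniform}) and a $\blambda$-free part (a scalar Hoeffding bound, made uniform over $\H$ by a union bound). On $D_\epoch$ one has $P_{\blambda}(x) = q_{\blambda}(x)/(1+q_{\blambda}(x))$ with $q_{\blambda}(x) = \sqrt{\minp^2 + \sum_{h \in \H}\lambda_h\disind{h}(x)}$, while $P_{\blambda}(x) = 0$ and $\disind{h}(x) = 0$ off $D_\epoch$; hence
\[
\frac{1}{1 - P_{\blambda}(x)} \;=\; 1 + \bbmone(x \in D_\epoch)\,q_{\blambda}(x), \qquad \frac{\disind{h}(x)}{P_{\blambda}(x)} \;=\; \disind{h}(x) + \frac{\disind{h}(x)}{q_{\blambda}(x)},
\]
where the ratio $\disind{h}(x)/P_{\blambda}(x)$ is read as $0$ whenever $\disind{h}(x) = 0$.

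For the first inequality, the constant $1$ contributes nothing to the deviation, so it suffices to control $\phi\Parens{\sum_h\lambda_h\disind{h}(x),x}$ for $\phi(z,x) := \bbmone(x\in D_\epoch)\sqrt{\minp^2+z}$. On $z \ge 0$ the map $z\mapsto\sqrt{\minp^2+z}$ is $\tfrac{1}{2\minp}$-Lipschitz (its derivative $\tfrac12(\minp^2+z)^{-1/2}$ is maximized at $z=0$), and since $\sum_h\lambda_h\disind{h}(x) \le \lone{\blambda} \le 1/\varepsilon$ for $\blambda\in\Lset$ we have $\phi \le \sqrt{\minp^2 + 1/\varepsilon}$. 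Invoking Lemma~\ref{lem:uniform} with $L = \tfrac{1}{2\minp}$, $R = \sqrt{\minp^2 + 1/\varepsilon}$ and confidence $\delta/3$ reproduces the first bound.

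For the second and third inequalities, fix $h$ and use $\disind{h}(x)/P_{\blambda}(x) = \disind{h}(x) + \disind{h}(x)/q_{\blambda}(x)$. The second summand equals $\phi_h\Parens{\sum_{h'}\lambda_{h'}\disind{h'}(x),x}$ for $\phi_h(z,x) := \disind{h}(x)(\minp^2+z)^{-1/2}$, which is $\tfrac{1}{2\minp^3}$-Lipschitz in its first argument (derivative $-\tfrac12(\minp^2+z)^{-3/2}$ on $z\ge 0$) and bounded by $1/\minp$; applying Lemma~\ref{lem:uniform} at confidence $\delta/(3|\H|)$ and taking a union bound over $h\in\H$ controls the $\blambda$-dependent contribution. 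The first summand $\disind{h}(x)\in[0,1]$ does not depend on $\blambda$, so a Hoeffding bound with a union bound over $\H$ (confidence $\delta/(6|\H|)$) controls it; this is precisely the third inequality, which therefore holds simultaneously. Combining the two summands by the triangle inequality gives the second inequality.

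There is no conceptual obstacle; the work is bookkeeping: (i) verifying the two closed forms and the (global) Lipschitz extensions of $z\mapsto\sqrt{\minp^2+z}$ and $z\mapsto(\minp^2+z)^{-1/2}$ to all of $\bbR$ (e.g.\ replace $z$ by $\max(z,0)$) so that Lemma~\ref{lem:uniform} applies; (ii) computing the Lipschitz constants and sup-norms of those two scalar functions on $z\ge 0$ and substituting them into Lemma~\ref{lem:uniform}; and (iii) allocating the failure probability $\delta$ across the three displays, and within the last two across the $|\H|$ classifiers, to produce the $\ln(3/\delta)$, $\ln(3|\H|/\delta)$ and $\ln(6|\H|/\delta)$ arguments. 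The step most likely to introduce an error is (ii), since the chain-rule bound on $(\minp^2+z)^{-1/2}$ governs the leading $1/\minp$-power in the variance-constraint estimate.
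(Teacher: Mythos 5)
Your proposal follows the paper's proof essentially verbatim: the same decomposition $1/(1-P_{\blambda}(x)) = 1 + \bbmone(x\in D_\epoch)q_{\blambda}(x)$ and $\disind{h}(x)/P_{\blambda}(x) = \disind{h}(x) + \disind{h}(x)/q_{\blambda}(x)$, the same two applications of Lemma~\ref{lem:uniform} to $\sqrt{\minp^2+z}$ and $\disind{h}(x)(\minp^2+z)^{-1/2}$, the same Hoeffding-plus-union-bound for the $\blambda$-free term, and the same $\delta/3$ allocation. The one place you diverge numerically is the Lipschitz constant of $\phi_h(z,x)=\disind{h}(x)(\minp^2+z)^{-1/2}$: you compute $\tfrac{1}{2\minp^3}$, the paper asserts $\tfrac{1}{2\minp^2}$, and your value is the correct one (the derivative is $-\tfrac12(\minp^2+z)^{-3/2}$, maximized in magnitude at $z=0$). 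Consequently your argument establishes the second display only with $\minp^6$ in place of $\minp^4$ under the first square root; this is a (mild) error in the paper's stated constant rather than a gap in your proof, and it propagates only as a larger polynomial dependence on $1/\pmin$ in the unlabeled sample size required by Theorem~\ref{thm:optprob-unlabeled}.
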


\begin{proof}
  Observe that $1/(1-P_{\blambda}(x)) = 1 + q_{\blambda}(x)$ for all
  $\blambda \in \Lset$ and $x \in \dataspace$.  Now we apply
  Lemma~\ref{lem:uniform} to the function $\phi_1(z,x) :=
  \sqrt{\minp^2+z}$, which is $(2\minp)^{-1}$-Lipschitz with respect
  to its first argument.  Since $q_{\blambda}(x) = f_1(\sum_{h \in \H}
  \lambda_h \disind{h}(x), x) \leq \sqrt{\minp^2 + 1/\varepsilon}$ for
  all $\blambda \in \Lset$ and $x \in \dataspace$, Lemma~\ref{lem:uniform}
  implies that, with probability at least $1-\delta/3$,
  \begin{equation}
    \label{eqn:rad1}
    \Abs{ \E_X\Brackets{ \frac{1}{1-P_{\blambda}(X)} } -
      \widehat\E_X\Brackets{ \frac{1}{1-P_{\blambda}(X)} } } \ \leq
    \ \frac{1}{\minp\varepsilon} \sqrt{\frac{2\ln|\H|}{\unlab}} +
    \sqrt{\frac{\Parens{\minp^2 +
          1/\varepsilon}\ln(3/\delta)}{\unlab}} , \quad \forall
    \blambda \in \Lset .
  \end{equation}
  Next, observe that for every $h \in \H$ and $x \in \dataspace$,
  \[
    \frac{\disind{h}(x)}{P_{\blambda}(x)}
    \ = \ \disind{h}(x) + \frac{\disind{h}(x)}{q_{\blambda}(x)}
    .
  \]
  By Hoeffding's inequality and a union bound, we have with
  probability at least $1-\delta/3$,
  \begin{equation}
    \label{eqn:rad2}
    \Abs{ \E_X\brackets{\disind{h}(X)} -
      \widehat\E_X\brackets{\disind{h}(X)} } \ \leq
    \ \sqrt{\frac{\ln(6|\H|/\delta)}{2\unlab}} , \quad \forall h \in \H .
  \end{equation}
  Now we apply Lemma~\ref{lem:uniform} to the functions $\phi_h(z,x)
  := \disind{h}(x)/\sqrt{\minp^2+z}$ for each $h \in \H$; each
  function $\phi_h$ is $(2\minp^2)^{-1}$-Lipschitz with respect to its
  first argument.  Furthermore, since $\phi_h(\sum_{h\in\H}\lambda_h
  \disind{h}(x), x) = \disind{h}(x) / q_{\blambda}(x) \leq 1/\minp$
  for all $\blambda \in \Lset$ and $x \in \dataspace$,
  Lemma~\ref{lem:uniform} and a union bound over all $h \in \H$
  implies that, with probability at least $1-\delta/3$
  \begin{equation}
    \label{eqn:rad3}
    \Abs{ \E_X\Brackets{ \frac{\disind{h}(X)}{q_{\blambda}(X)} } -
      \widehat\E_X\Brackets{ \frac{\disind{h}(X)}{q_{\blambda}(X)} }
    } \ \leq \ \sqrt{\frac{2\ln|\H|}{\minp^4\varepsilon^2\unlab}} +
    \sqrt{\frac{\ln(3|\H|/\delta)}{\minp^2\unlab}} , \quad \forall
    \blambda \in \Lset ,\, h \in \H .
  \end{equation}
  Finally, by a union bound, all of~\eqref{eqn:rad1},
  \eqref{eqn:rad2}, and \eqref{eqn:rad3} hold simultaneously with
  probability at least $1-\delta$.
\end{proof}

We can now prove Theorem~\ref{thm:optprob-unlabeled}. We first state a
slightly more explicit version of the theorem, which is then proved.

\begin{theorem}
  \label{thm:optprob-unlabeled-big}
  Let $S$ be an i.i.d.~sample of size $\unlab$ from the $\distx$.
  Suppose Algorithm~\ref{alg:coord} is run on the $\epoch$-th epoch
  for solving $(\optprob_{S,\samperr})$ up to slack $\samperr$ in the
  variance constraints. Then the following holds:
\begin{enumerate}
\item Algorithm \ref{alg:coord} halts in at most $\frac{\widehat{\prob}(D_m)}{8 P_{\min,\epoch}^3\varepsilon^2}$ iterations, where $\widehat{\prob}(D_\epoch) := \sum_{X \in S} \bbmone(X \in D_m) / \unlab$.
\item The solution $\hat{\blambda} \geq \bzero$ it outputs has bounded $\ell_1$ norm:
\begin{equation*}
\lone{\hat{\blambda}}\leq \widehat{\prob}(D_\epoch) / \varepsilon.
\end{equation*}
\item There exists an absolute constant $C>0$ such that the following
holds. 
  If
  \[
  \unlab \ \geq \ C \cdot \Parens{ \Parens{ \frac1{\pmin^4\samperr^2} +
      \alpha^4 } \cdot \frac{\log|\H|}{\samperr^2} + \Parens{
      \frac1{\pmin^2} + \frac1\samperr + \alpha^4 } \cdot
    \frac{\log(1/\delta)}{\samperr^2} } ,
  \]
  then with probability at least $1-\delta$, the query probability function 
$P_{\hat{\blambda}}(x)$ satisfies
\begin{itemize}
\item All constraints of $(\optprob)$ except with
  slack $2.5\samperr$ in constraints~\eqref{eq:queryp}, 
\item Approximate primal optimality:
  \begin{equation*}
    \E_X \Brackets{ \frac{1}{1 - P_{\hat{\blambda}}(X)} } \ \leq
    \ %\E_X \Brackets{ \frac{1}{1 - {\Psoln}(X)} } 
    f^* + 8 \pmin
    \prob(D_\epoch) + \parens{ 2 + 4 \pmin} \samperr, 
  \end{equation*}
where $f^*$ is the optimal value of $(\optprob)$ defined in \eqref{eq:op_value}. %$\Psoln$ is the solution to $(\optprob)$.
\end{itemize}
\end{enumerate}
\end{theorem}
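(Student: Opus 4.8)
The plan is to reduce the whole statement to two ingredients: the population analysis behind Theorem~\ref{thm:opt_converge}, applied \emph{verbatim} to the sample problem $(\optprob_{S,\samperr})$, and the uniform deviation bounds of Lemma~\ref{lem:optprob-uniform} (together with elementary Hoeffding bounds for the indicators $\disind{h}(X)$). Throughout, write $b_\epoch^S(h)$ for the sample version of $b_\epoch(h)$, i.e.\ with $\E_X[\disind{h}]$ replaced by its empirical average over $S$, so that $b_\epoch^S(h)-b_\epoch(h) = 2\alpha^2\bigl(\widehat\E_X[\disind{h}]-\E_X[\disind{h}]\bigr)$.

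For parts~1 and~2, note that running Algorithm~\ref{alg:coord} on $(\optprob_{S,\samperr})$ is literally running it on a problem of the same form as $(\optprob)$, with $b_\epoch(h)$ replaced by $b_\epoch^S(h)+\samperr$ and every population expectation replaced by an average over $S$; in particular $P\equiv\tfrac12$ remains feasible. Hence the coordinate-ascent progress bound and the duality sandwich $\dualobj(\bzero)=(1+\pmin)^2\widehat\prob(D_\epoch)\le\dualobj(\blambda)\le 2(1+\pmin^2)\widehat\prob(D_\epoch)$ in the proof of Theorem~\ref{thm:opt_converge} carry over with $\prob(D_\epoch)$ replaced by $\widehat\prob(D_\epoch)$, giving the iteration count and $\lone{\hat\blambda}\le\widehat\prob(D_\epoch)/\samperr\le 1/\samperr$; in particular $\hat\blambda\in\Lset$, so Lemma~\ref{lem:optprob-uniform} applies to $P_{\hat\blambda}$. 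For the constraint part of~3, Theorem~\ref{thm:opt_converge}(3) applied to $(\optprob_{S,\samperr})$ yields $\widehat\E_X[\disind{h}/P_{\hat\blambda}]\le b_\epoch^S(h)+2\samperr$ for all $h$ (the built-in relaxation plus the halting accuracy). I then pass to the population on both sides: $|\widehat\E_X[\disind{h}/P_{\hat\blambda}]-\E_X[\disind{h}/P_{\hat\blambda}]|$ is bounded uniformly over $\Lset$ by the second inequality of Lemma~\ref{lem:optprob-uniform}, and $|b_\epoch^S(h)-b_\epoch(h)|=2\alpha^2|\widehat\E_X[\disind{h}]-\E_X[\disind{h}]|$ by Hoeffding and a union bound over $\H$ (third inequality of Lemma~\ref{lem:optprob-uniform}). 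The stated lower bound on $\unlab$ makes each of these at most $\tfrac14\samperr$, giving $\E_X[\disind{h}/P_{\hat\blambda}]\le b_\epoch(h)+2.5\samperr$; the $\alpha^4\log|\H|/\samperr^2$ term in the requirement on $\unlab$ is exactly what is needed to control $2\alpha^2|\widehat\E_X[\disind{h}]-\E_X[\disind{h}]|$, and the $\pmin^{-4}\samperr^{-2}$ term comes from the Rademacher piece of the $\disind{h}/P_{\hat\blambda}$ bound. The simple bound constraints~\eqref{eq:minbndcons} hold exactly since $\pmin/(1+\pmin)\le P_{\hat\blambda}(x)\le 1$ always, just as in Theorem~\ref{thm:opt_converge}.

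For the objective bound I would chain three inequalities. (a) $\E_X[1/(1-P_{\hat\blambda})]\le\widehat\E_X[1/(1-P_{\hat\blambda})]+O(\samperr)$ by the first inequality of Lemma~\ref{lem:optprob-uniform}, using $\hat\blambda\in\Lset$ --- the $\samperr^{-1}$ inside that square root is responsible for the $\tfrac1\samperr\cdot\tfrac{\log(1/\delta)}{\samperr^2}$ term in the requirement on $\unlab$. (b) $\widehat\E_X[1/(1-P_{\hat\blambda})]\le\widehat f^{\,*}_{S,\samperr}+4\pmin\widehat\prob(D_\epoch)$ by Theorem~\ref{thm:opt_converge}(3) applied to $(\optprob_{S,\samperr})$, with $\widehat f^{\,*}_{S,\samperr}$ that problem's optimal value for the original objective $\widehat\E_X[1/(1-P)]$. (c) $\widehat f^{\,*}_{S,\samperr}\le\widehat\E_X[1/(1-P^{*})]\le f^{*}+O(\samperr)$, where $P^{*}$ is the optimum of $(\optprob)$; the first step needs $P^{*}$ to be feasible for $(\optprob_{S,\samperr})$ with high probability --- same Hoeffding-plus-$\disind{h}$-deviation argument as above, using $P^{*}(x)\ge\pmin$ on $D_\epoch$ so $\disind{h}/P^{*}\le 1/\pmin$ --- and the second is concentration of $\widehat\E_X[1/(1-P^{*})]$ around $f^{*}=\E_X[1/(1-P^{*})]$. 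Combining (a)--(c) and collecting the $O(\pmin\prob(D_\epoch))$ and $O(\samperr)$ terms (the $\widehat\prob$-to-$\prob$ passage and the crude constant in Theorem~\ref{thm:opt_converge}(3)'s primal bound account for the jump from $4\pmin$ to $8\pmin$) yields $f^{*}+8\pmin\prob(D_\epoch)+(2+4\pmin)\samperr$.

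The main obstacle is the last concentration in step~(c): a priori $P^{*}$ is an arbitrary feasible point, not of the form $P_{\blambda}$ with controlled $\ell_1$ norm, so Lemma~\ref{lem:optprob-uniform} does not directly apply to $1/(1-P^{*})$. I would resolve this via strong duality for the barrier reformulation~\eqref{eq:mod-obj}: the modified objective is strictly convex and $P\equiv\tfrac12$ is strictly feasible (since $\slackconst>0$ forces $2\E_X[\disind{h}]<b_\epoch(h)$), so $P^{*}=P_{\blambda^{*}}$ for an optimal dual $\blambda^{*}$, and $1/(1-P^{*}(x))=1+q_{\blambda^{*}}(x)\le 1+\sqrt{\pmin^2+\lone{\blambda^{*}}}$. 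To bound $\lone{\blambda^{*}}$, use that $P^{*}$ minimizes $\mathcal{L}(\cdot,\blambda^{*})$: comparing against $P\equiv\tfrac12$ gives $\sum_h\lambda^{*}_h\bigl(b_\epoch(h)-2\E_X[\disind{h}]\bigr)\le F(\tfrac12)\le 2+2\pmin^2$, and since each factor $b_\epoch(h)-2\E_X[\disind{h}]\ge\slackconst\epend[\epoch-1]\Delta_{\epoch-1}^2$ (using $\alpha\ge1$ and $\reg(h,h_\epoch,\ztil_{\epoch-1})\ge0$), this gives $\lone{\blambda^{*}}=O(1/\samperr)$ for the standard choice $\samperr\asymp\slackconst\epend[\epoch-1]\Delta_{\epoch-1}^2$. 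Thus $\blambda^{*}$ lies in a constant enlargement of $\Lset$, on which Lemma~\ref{lem:uniform} still supplies the required $O(\samperr)$ deviation for $1/(1-P^{*})$ at the cost of a constant factor absorbed into the $O(\cdot)$ terms. Everything else is routine uniform-convergence bookkeeping.
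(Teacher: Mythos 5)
Your parts 1 and 2, the constraint half of part 3, and steps (a)--(b) of the objective chain all track the paper's proof exactly: run the population analysis of Theorem~\ref{thm:opt_converge} verbatim on $(\optprob_{S,\samperr})$, then transfer constraints and objective between sample and population using Lemma~\ref{lem:optprob-uniform} and Hoeffding. The divergence, and the gap, is in your step (c), where you must exhibit a feasible point of $(\optprob_{S,\samperr})$ whose sample objective is close to $f^*$. You correctly flag that the optimum $P^*$ of $(\optprob)$ is not a priori of the form $P_{\blambda}$ with controlled $\lone{\blambda}$, but your proposed repair does not close the hole. First, $f^*$ in \eqref{eq:op_value} is an infimum over an infinite-dimensional feasible set; attainment of a primal optimum, and of a dual optimum with $P^*=P_{\blambda^*}$, is asserted via ``strong duality for the barrier reformulation,'' but Slater-type arguments are not automatic in this setting, and more importantly the barrier problem \eqref{eq:mod-obj} has a \emph{different} optimizer and optimal value than $(\optprob)$ (the paper only ever proves $F^*\leq f^*+\order(\pmin\prob(D_\epoch))$ via weak duality, never that the two optima coincide). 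Second, your bound $\lone{\blambda^*}=\order(1/\samperr)$ rests on the lower bound $b_\epoch(h)-2\E_X[\disind{h}]\geq\slackconst\epend[\epoch-1]\Delta_{\epoch-1}^2$ \emph{together with} the extra assumption $\samperr\asymp\slackconst\epend[\epoch-1]\Delta_{\epoch-1}^2$, which is not a hypothesis of the theorem; for general $\samperr$ your witness need not lie in any controlled enlargement of $\Lset$.

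The paper sidesteps all of this with a device you should adopt: take $\blambda^*$ to be the \emph{output of Algorithm~\ref{alg:coord} run on the population problem} $(\optprob)$ with accuracy $\samperr/2$. Theorem~\ref{thm:opt_converge} then hands you, unconditionally, everything you were trying to extract from duality: $\lone{\blambda^*}\leq 2/\samperr$ (so $\blambda^*\in\Lsetbig$ and Lemma~\ref{lem:optprob-uniform} applies to $1/(1-P_{\blambda^*})$ and to $\disind{h}/P_{\blambda^*}$), approximate feasibility $\E_X[\disind{h}/P_{\blambda^*}]\leq b_\epoch(h)+\samperr/2$ (hence feasibility for $(\optprob_{S,\samperr})$ after the deviation bounds), and approximate optimality $\E_X[1/(1-P_{\blambda^*})]\leq f^*+4\pmin\prob(D_\epoch)$. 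Chaining $\widehat\E_X[1/(1-\Psolnhatapx)]\leq\widehat\E_X[1/(1-P_{\blambda^*})]\leq f^*+4\pmin\prob(D_\epoch)+\samperr$ with your (a)--(b) gives the stated bound; the second $4\pmin\prob(D_\epoch)$ (producing $8\pmin$) comes precisely from this population witness, not from the $\widehat\prob$-to-$\prob$ passage as you conjectured. With that substitution the rest of your bookkeeping goes through.
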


Theorem~\ref{thm:optprob-unlabeled} is just a result of some
simplifications in the $\order(\cdot)$ notation in the above
result. We now prove the theorem.

\begin{proof-of-theorem}[\ref{thm:optprob-unlabeled-big}]
The first two statements, finite convergence and boundedness of the solution's $\ell_1$ norm, 
can be proved with the techniques in Appendix \ref{sec:opt-pop} that establish the same for Theorem \ref{thm:opt_converge}.
We thus focus on proving the third statement here.

Let $\widehat\E_X[\cdot]$ denote empirical expectation with
respect to $S$.
Hoeffding's inequality implies that with probability at least
  $1-\delta/2$,
  \begin{equation}
    \label{eqn:disagree-discrep}
    \widehat\E_X[\ind{X \in D_\epoch}] \ \leq \ \E_X[\ind{X \in
        D_\epoch}] + \samperr .
  \end{equation}
  Also, Lemma~\ref{lem:optprob-uniform} implies that with probability
  at least $1-\delta/2$,
  \begin{align}
    \Abs{ \widehat\E_X \Brackets{ \frac{1}{1 - P_{\blambda}(X)} } -
      \E_X \Brackets{ \frac{1}{1 - P_{\blambda}(X)} } } & \ \leq
    \ \samperr , \quad \forall \blambda \in \Lsetbig ;
    \label{eqn:obj-discrep} \\
    \Abs{ \E_X\brackets{\disind{h}(X)} -
      \widehat\E_X\brackets{\disind{h}(X)} } & \ \leq
    \ \samperr/(8\alpha^2) , \quad \forall h \in \H ;
    \label{eqn:disind-discrep} \\
    \Abs{ \E_X\Brackets{ \frac{\disind{h}(X)}{P_{\blambda}(X)} } -
      \widehat\E_X\Brackets{ \frac{\disind{h}(X)}{P_{\blambda}(X)} }
    } & \ \leq \ \samperr/4 , \quad \forall \blambda \in \Lsetbig ,\,
    h \in \H .
    \label{eqn:cons-discrep}
  \end{align}
  Therefore, by a union bound, there is an event of probability mass
  at least $1-\delta$ on which Eqs.~\eqref{eqn:disagree-discrep},
  \eqref{eqn:obj-discrep}, \eqref{eqn:disind-discrep},
  \eqref{eqn:cons-discrep} hold simultaneously.  We henceforth
  condition on this event.

  By Theorem~\ref{thm:opt_converge}, $\hat\blambda$ satisfies
  $\lone{\hat\blambda} \leq 1/\samperr$, the bound constraints
  in~\eqref{eq:minbndcons}, as well as
  \begin{equation}
    \label{eqn:conshat-lhat}
    \widehat\E_X\Brackets{ \frac{\disind{h}(X)}{P_{\hat\blambda}(X)}
    } \ \leq b_\epoch(h) + 2\samperr , \quad \forall h
    \in \H , 
  \end{equation}
  and
  \begin{equation}
    \label{eqn:objhat-lhat}
    \widehat\E_X \Brackets{ \frac{1}{1 - P_{\hat{\blambda}}(X)} }
    \ \leq \ \widehat\E_X \Brackets{ \frac{1}{1 - {\Psolnhatapx}(X)}
    } + 4 \pmin \widehat\E_X[\ind{X \in D_\epoch}]
  \end{equation}
  where $\Psolnhatapx$ is the optimal solution\footnote{Note that on a finite sample $S$, the primal optimization variables $P(x)$ are in a compact and convex subset of $\mathbb{R}^{|S|}$, and therefore an optimal solution can always be attained.} to
  $(\optprob_{S,\samperr})$.
  We use this to show that $P_{\hat\blambda}$ is a feasible
  solution for $(\optprob_{2.5\samperr})$, and compare its
  objective value to the optimal objective value for $(\optprob)$.

  Applying~\eqref{eqn:disind-discrep} and~\eqref{eqn:cons-discrep}
  to \eqref{eqn:conshat-lhat} gives
  \begin{equation}
    \E_X\Brackets{ \frac{\disind{h}(X)}{P_{\hat\blambda}(X)} }
    \ \leq \ b_\epoch(h) + 2.5\samperr , \quad \forall h \in \H .
    \notag
  \end{equation}
  Since $P_{\hat\blambda}$ also satisfies the bound constraints
  in~\eqref{eq:minbndcons}, it follows that $P_{\hat\blambda}$ is
  feasible for $(\optprob_{2.5\samperr})$.

  Now we turn to the objective value.
  Applying~\eqref{eqn:disagree-discrep} and~\eqref{eqn:obj-discrep} to
  \eqref{eqn:objhat-lhat} gives
  \begin{equation}
    \label{eqn:obj-lhat}
    \E_X \Brackets{ \frac{1}{1 - P_{\hat{\blambda}}(X)} } \ \leq
    \ \widehat\E_X \Brackets{ \frac{1}{1 - {\Psolnhatapx}(X)} } + 4
    \pmin \E_X[\ind{X \in D_\epoch}] + \parens{ 1 + 4 \pmin}
    \samperr .
  \end{equation}
  We need to relate the first term on the right-hand side to the
  optimal objective value for $(\optprob)$.

  Let $\blambda^*$ be the output of running Algorithm~\ref{alg:coord}
  for solving $(\optprob)$ up to slack $\samperr/2$.  By
  Theorem~\ref{thm:opt_converge}, $\blambda^*$ satisfies
  $\lone{\blambda^*} \leq 2/\samperr$, the bound constraints
  in~\eqref{eq:minbndcons}, as well as
  \begin{equation}
    \label{eqn:cons-lstar}
    \E_X\Brackets{ \frac{\disind{h}(X)}{P_{\blambda^*}(X)} } \ \leq
    \ b_\epoch(h) + \samperr/2 , \quad \forall h \in \H , \notag
  \end{equation}
  and
  \begin{equation}
    \label{eqn:obj-lstar}
    \E_X \Brackets{ \frac{1}{1 - P_{\blambda^*}(X)} } \ \leq
    \ %\E_X \Brackets{ \frac{1}{1 - {\Psoln}(X)} } 
      f^* + 4 \pmin
    \E_X[\ind{X \in D_\epoch}] .
  \end{equation}
  Applying~\eqref{eqn:obj-discrep} to~\eqref{eqn:obj-lstar}, we have
  \begin{equation}
    \label{eqn:objhat-lstar}
    \widehat\E_X \Brackets{ \frac{1}{1 - P_{\blambda^*}(X)} } \ \leq
    \ %\E_X \Brackets{ \frac{1}{1 - {\Psoln}(X)} } 
      f^* + 4 \pmin
    \E_X[\ind{X \in D_\epoch}] + \samperr .
  \end{equation}
  And applying~\eqref{eqn:disind-discrep} and~\eqref{eqn:cons-discrep}
  to~\eqref{eqn:cons-lstar} gives
  \begin{equation}
    \label{eqn:conshat-lstar}
    \widehat\E_X\Brackets{ \frac{\disind{h}(X)}{P_{\blambda^*}(X)} }
    \ \leq \ b_\epoch(h) + \samperr , \quad \forall h \in \H .
  \end{equation}
  This establishes that $\blambda^*$ is a feasible solution for
  $(\optprob_{S,\samperr})$.
  In particular,
  \begin{align*}
    \widehat\E_X \Brackets{ \frac{1}{1 - {\Psolnhatapx}(X)} } &
    \ \leq \ \widehat\E_X \Brackets{ \frac{1}{1 - P_{\blambda^*}(X)}
    } \\ & \ \leq \ %\E_X \Brackets{ \frac{1}{1 - {\Psoln}(X)} } 
    f^* + 4
    \pmin \E_X[\ind{X \in D_\epoch}] + \samperr
  \end{align*}
  where the second inequality follows from~\eqref{eqn:objhat-lstar}.
  We now combine this with~\eqref{eqn:obj-lhat} to obtain
  \begin{equation*}
    \E_X \Brackets{ \frac{1}{1 - P_{\hat{\blambda}}(X)} } \ \leq
    \ %\E_X \Brackets{ \frac{1}{1 - {\Psoln}(X)} } 
    f^* + 8 \pmin
    \E_X[\ind{X \in D_\epoch}] + \parens{ 2 + 4 \pmin} \samperr .
  \end{equation*}
\end{proof-of-theorem}

\section{Experimental Details}
\label{appendix:exp_details}
Here we provide more details about the experiments.
\begin{table}[t]
\centering
\caption{Binary classification datasets used in experiments}
\label{tbl:datasets}
\begin{tabular}{c||c|c|c|c}
	Dataset & $n$ & $s$ & $d$ & $r$
\\
\hline
\hline
titanic &  2201  &  3  & 8                            &0.323 \\
abalone &  4176  &  8  & 8                          &0.498 \\
mushroom &  8124  &  22  & 117                      &0.482 \\
eeg-eye-state &  14980  &  13.9901  & 14            &0.449 \\
20news &  18845  &  93.8854  & 101631               &0.479 \\
magic04 &  19020  &  9.98728  & 10                  &0.352 \\
letter &  20000  &  15.5807  & 16                   &0.233 \\
ijcnn1&  24995  &  13  & 22                         &0.099 \\ 
nomao &  34465  &  82.3306  & 174                   &0.286 \\ 
shuttle&  43500  &  7.04984  & 9                    &0.216 \\
bank &  45210  &  13.9519  & 44                     &0.117 \\
a9a &  48841  &  13.8676  & 123                     &0.239 \\
adult &  48842  &  11.9967  & 105                   &0.239 \\
w8a &  49749  &  11.6502  & 300                     &0.030 \\
bio &  145750  &  73.4184  & 74                     &0.009 \\
maptaskcoref &  158546  &  40.4558  & 5944          &0.438 \\
activity &  165632  &  18.5489  & 20                &0.306 \\
skin &  245057  &  2.948  & 3                       &0.208 \\
vehv2binary &  299254  &  48.5652  & 105            &0.438 \\
census &  299284  &  32.0072  & 401                 &0.062 \\
covtype &  581011  &  11.8789  & 54                 &0.488 \\ 
rcv1&  781265  &  75.7171  & 43001                  &0.474 %\\
%kdda&  8407751  &  36.349  & 19306083               &0.147 
\end{tabular}
\end{table}
\subsection{Datasets}
\label{appendix:datasets}
Table \ref{tbl:datasets} gives details about the 22 binary classification datasets used in our experiments, 
where $n$ is the number of examples, $d$ is the number of 
features, $s$ is the average number of non-zero features per
example, and $r$ is the proportion of the minority class.
\subsection{Hyper-parameter Settings}
\label{appendix:hyper-parameters}
We start with the actual hyper-parameters used by \cover.  
Going back to Algorithm \ref{alg:main}, we note that the tuning parameters get used 
in mostly the following three quantities: $\gamma \Delta_{i-1}$ , $\alpha$ and $\beta$. 
We use this fact to reduce the number of input parameters. 
Let $c_0 := \gamma^2 c_1 32(\log(|\H|/\delta) + \log(i-1))$ (treating $\log(i-1)$ as a constant) and 
set $\regconst = 864, \gamma = \regconst / 4$ and $c_2 = \regconst c_1^2 / 4$ according to our theory.
Then we have
\begin{eqnarray*}
\gamma \Delta_{i-1} 
&=& \sqrt{\gamma^2 c_1 \epsilon_{i-1} \err(h_i,\ztil_{i-1})} + \gamma c_2 \epsilon_{i-1} \log(i-1)	\\
&=& \sqrt{\frac{c_0 \err(h_i,\ztil_{i-1})}{i-1}} + c_0 \frac{c_2}{\gamma c_1} \frac{\log(i-1)}{i-1}, 
\end{eqnarray*}
where
$\frac{c_2}{\gamma c_1}
= c_1 = \order(\alpha).$
Based on this, we use 
\begin{equation}
	\widehat{\Delta}_{i-1} := \sqrt{\frac{c_0 \err(h_i,\ztil_{i-1})}{i-1}} + \max(2\alpha,4)c_0 \frac{\log(i-1)}{i-1}
	\label{eq:cover_threshold}
\end{equation}
in Algorithm \ref{alg:cover} in place of $\gamma \Delta_{i-1}$.
Next we consider 
\begin{eqnarray*}
	\beta^2 &\leq& \frac{1}{216 n \epsilon_n \log n} \\
	      &\approx& \frac{\gamma^2 c_1}{216 c_0 \log n} \qquad \because n \epsilon_n \approx c_0 / (\gamma^2 c_1) \mbox{ by treating } \log n \mbox{ as a constant} \\
	      &=& \order\left(\frac{\alpha}{c_0}\right) \qquad \mbox{ by again treating } \log n \mbox{ as a constant and } c_1 = \order(\alpha).
\end{eqnarray*}
Based on the last expression, we set $\beta := \frac{\sqrt{\alpha / c_0}}{\beta_{scale}}$, where 
$\beta_{scale} > 0$ is a tuning parameter that controls the influence of the regret term in the variance constraints.
In sum, 
the actual input parameters boil down to the cover size $\coversize$, $\alpha \geq 1, c_0$ and $\beta_{scale}$, and we use them to set
\begin{eqnarray*}
	\gamma \Delta_{i-1} &:=& \sqrt{\frac{c_0 \err(h_i,\ztil_{i-1})}{i-1}} + \max(2 \alpha,4) c_0 \frac{\log(i-1)}{i-1}, \quad
	\beta = \frac{\sqrt{\alpha / c_0}}{\beta_{scale}}.
\end{eqnarray*}
Finally, we use the following setting for the minimum query probability:
\begin{equation*}
	P_{\min,i} = \min\left( \frac{1}{\sqrt{(i-1)\err(h_i, \ztil_{i-1})} + \log (i-1)} , \frac{1}{2}\right).
\end{equation*}

Next we describe hyper-parameter settings for different algorithms. 
A common hyper-parameter is the learning rate of the underlying online oracle, which is a reduction 
to importance-weighted logistic regression. For all active learning algorithm, we try the following 11 learning 
rates: $10^{-1} \cdot \{2^{-2},2^{-1},\ldots,2^8\}$. Active learning hyper-parameter settings are given in the following table:
\begin{center}
{
	\renewcommand{\arraystretch}{1.5}	
%\begin{tabular}{r||c|c}
%	algorithm & parameter settings & total number of settings \\ \hline \hline
%	\cover& \multirow{2}{0.5\textwidth}{$(c_0, \coversize, \beta_{scale}, \alpha) \in \big\{0.1 \cdot \{2^{-3}, \ldots, \cdot 2^{-1}\}, 0.1, 0.3, \ldots, 0.9, 2^0, \ldots, 2^4\big\} \times \{3,6,12,24,48\} \times \{1, \sqrt{10}, 10\} \times  \{2^0,\ldots,2^4\} $}  & 975 \\ 
%              & & \\ \hline
%	\coverora& $(\alpha, c_0) \mbox{ the same as } \cover$ & 65 \\ \hline
%	\iwalzero& $C_0 \in \big\{0.1 \cdot \{2^{-10}, 2^{-9}, \ldots, 2^0\}, 2^0, 2^1, \ldots, 2^3 \big\}$ & 15 \\ \hline
%	\iwalorazero & $C_0 \in \{2^{-10},\ldots,2^{5}\}$ & 16 \\ \hline
%	\iwal& $C_0$ the same as \iwalzero& 15 \\ \hline
%	\iwalora&  $C_0$ the same as \iwalorazero& 16 
%\end{tabular}
\begin{tabular}{r||c|c}
	algorithm & parameter settings & total number of settings \\ \hline \hline
	\cover& \multirow{2}{0.5\textwidth}{$(c_0, \coversize, \beta_{scale}, \alpha) \in \big\{0.1 \cdot \{2^{-10}, \ldots, \cdot 2^{-1}\}, 0.1, 0.3, \ldots, 0.9, 2^0, \ldots, 2^4\big\} \times \{3,6,12,24,48\} \times \{\sqrt{10}\} \times  \{1\} $}  & 100 \\ 
              & & \\ \hline
	\iwalzero& $C_0 \in \big\{0.1 \cdot \{2^{-17}, 2^{-16}, \ldots, 2^0\}, 2^0, 2^1, \ldots, 2^4 \big\}$ & 23 \\ \hline
	\iwalorazero & $C_0 \in \{2^{-17},\ldots,2^{5}\}$ & 23 \\ \hline
	\iwal& $C_0$ the same as \iwalzero& 23 \\ \hline
	\iwalora&  $C_0$ the same as \iwalorazero& 23
\end{tabular}
}
\end{center}
Good hyper-parameters of the algorithms often lie in the interior of these value ranges.
\subsection{More Experimental Results}
\label{appendix:results}
We provide detailed per-dataset results in Figures \ref{fig:err_vs_query_medium} to \ref{fig:err_vs_query_large-per-data}.
Figures \ref{fig:err_vs_query_medium} and \ref{fig:err_vs_query_large} show test error rates 
obtained by each algorithm using the best fixed hyper-parameter setting against number of label queries 
for small (fewer than $10^5$ examples) and large (more than $10^5$ examples) datasets. 
Figures \ref{fig:err_vs_query_medium-per-data} and \ref{fig:err_vs_query_large-per-data} 
show results obtained by each algorithm using the best hyper-parameter setting for each dataset.
%
%
%
%On small datasets, \cover is generally
%competitive with other algorithms. On all (including the three shown in Figure \ref{fig:err_vs_query_per_data}) but two large datasets, bio and kdda,  
%\cover outperforms other algorithms at most query rates, with a clear advantage at low query rates.
%Note that both bio and kdda, as shown in Table \ref{tbl:datasets}, are imbalanced.  
%The fraction of the minority class in bio is about 1\%, and the minimum test error is about 0.4\%, a quite significant difference. 
%\iwal strongly dominates other algorithms on this dataset, which suggests that using predicted labels, as done 
%by the other three agnostic active learning algorithms, may be undesirable for highly imbalanced classification problems.
%%and their minimum test errors 
%%shown in Figure \ref{fig:err_vs_querry_large} are barely lower than the proportion of the minority class.
%There is less class skewness in kdda, but the minimum test error 12\% is only slightly lower than
%the fraction of the minority class $14.7\%$. On this hard dataset, \coverora, i.e., the Oracular CAL variant of \cover, 
%outperforms other algorithms.

\begin{figure}[t]
\begin{tabular}{@{}c@{}c@{}c@{}}
\includegraphics[width=0.33\textwidth,height=0.25\textwidth]{./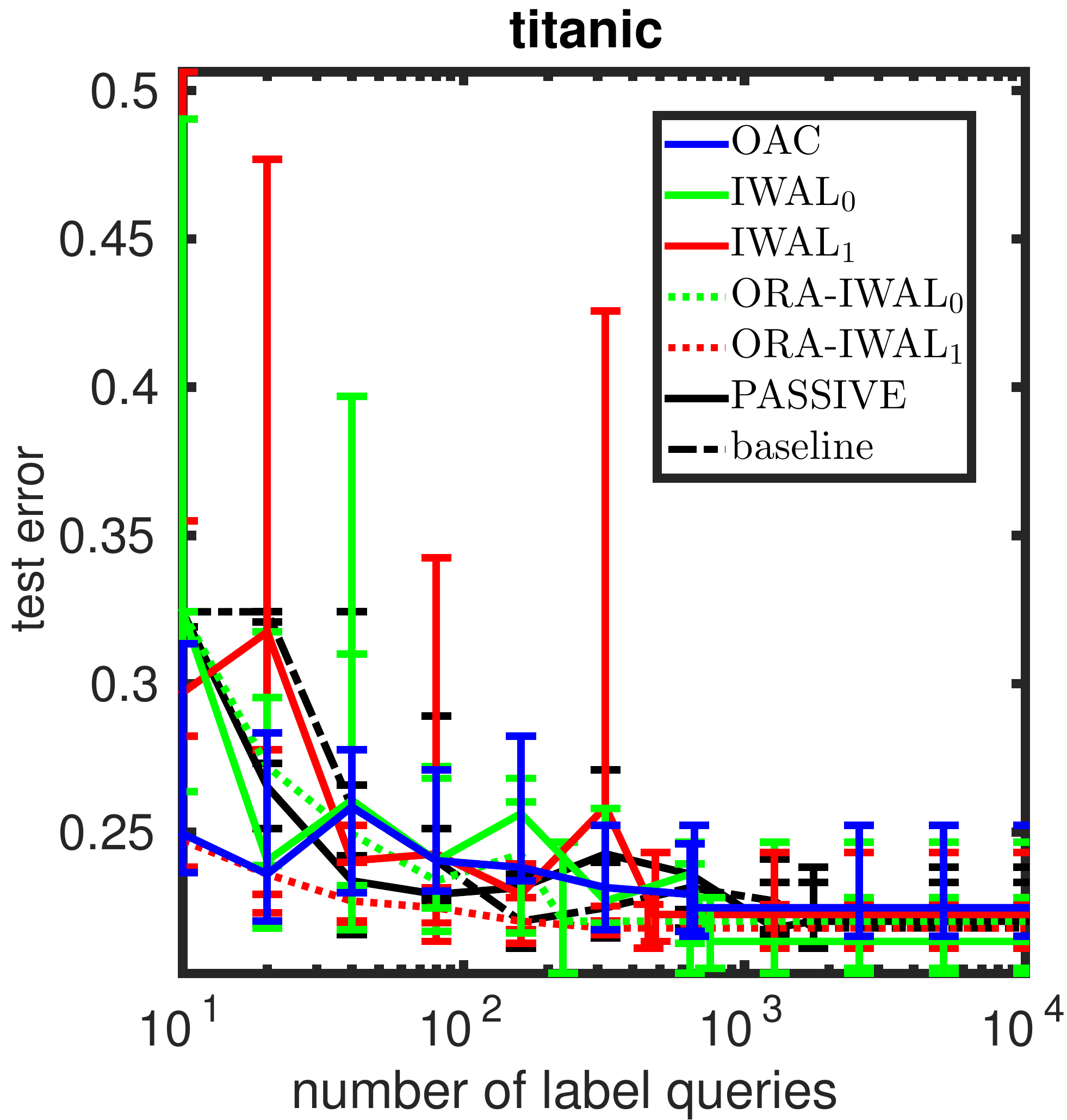}  & \includegraphics[width=0.33\textwidth,height=0.25\textwidth]{./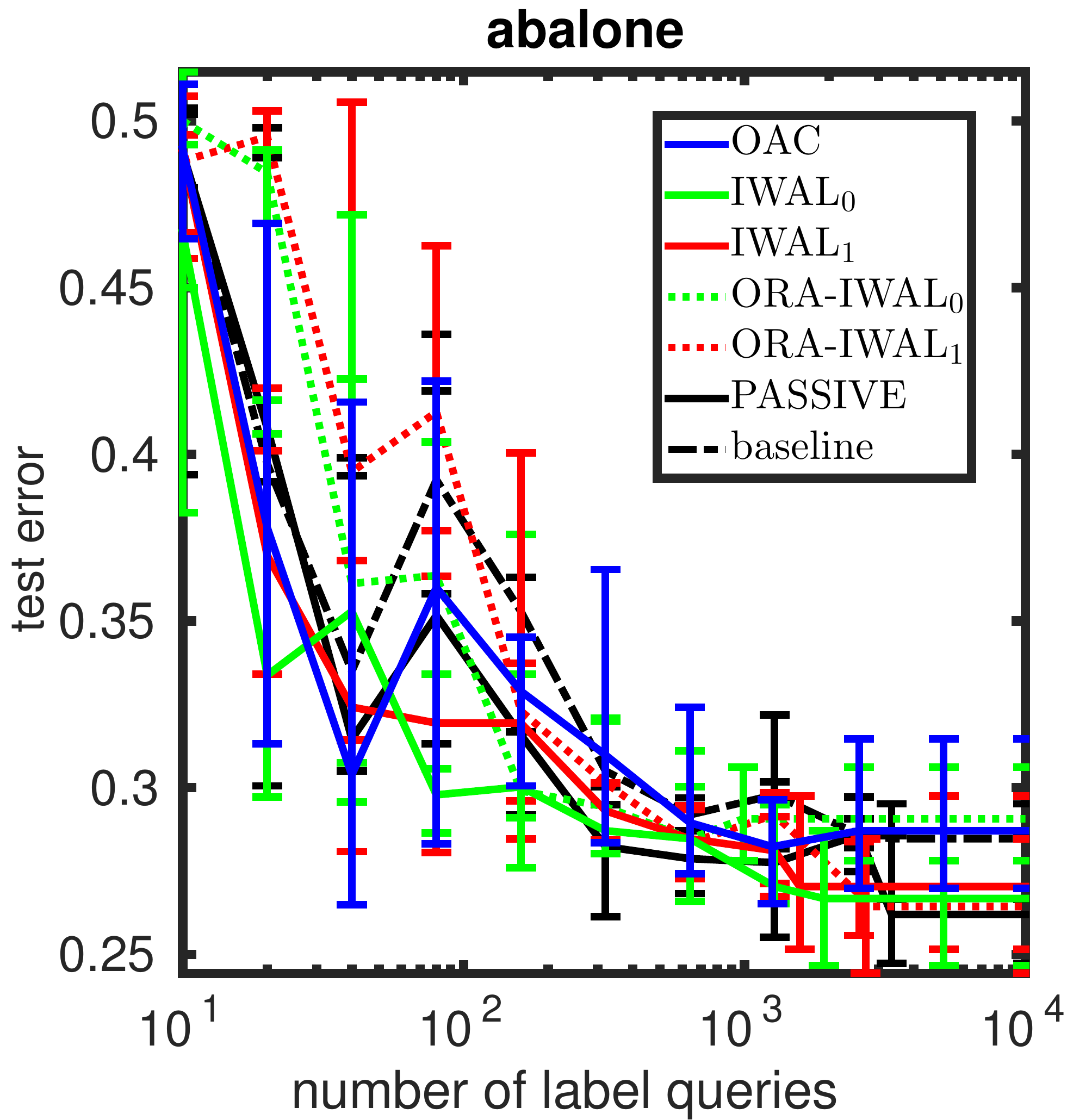}  & \includegraphics[width=0.33\textwidth,height=0.25\textwidth]{./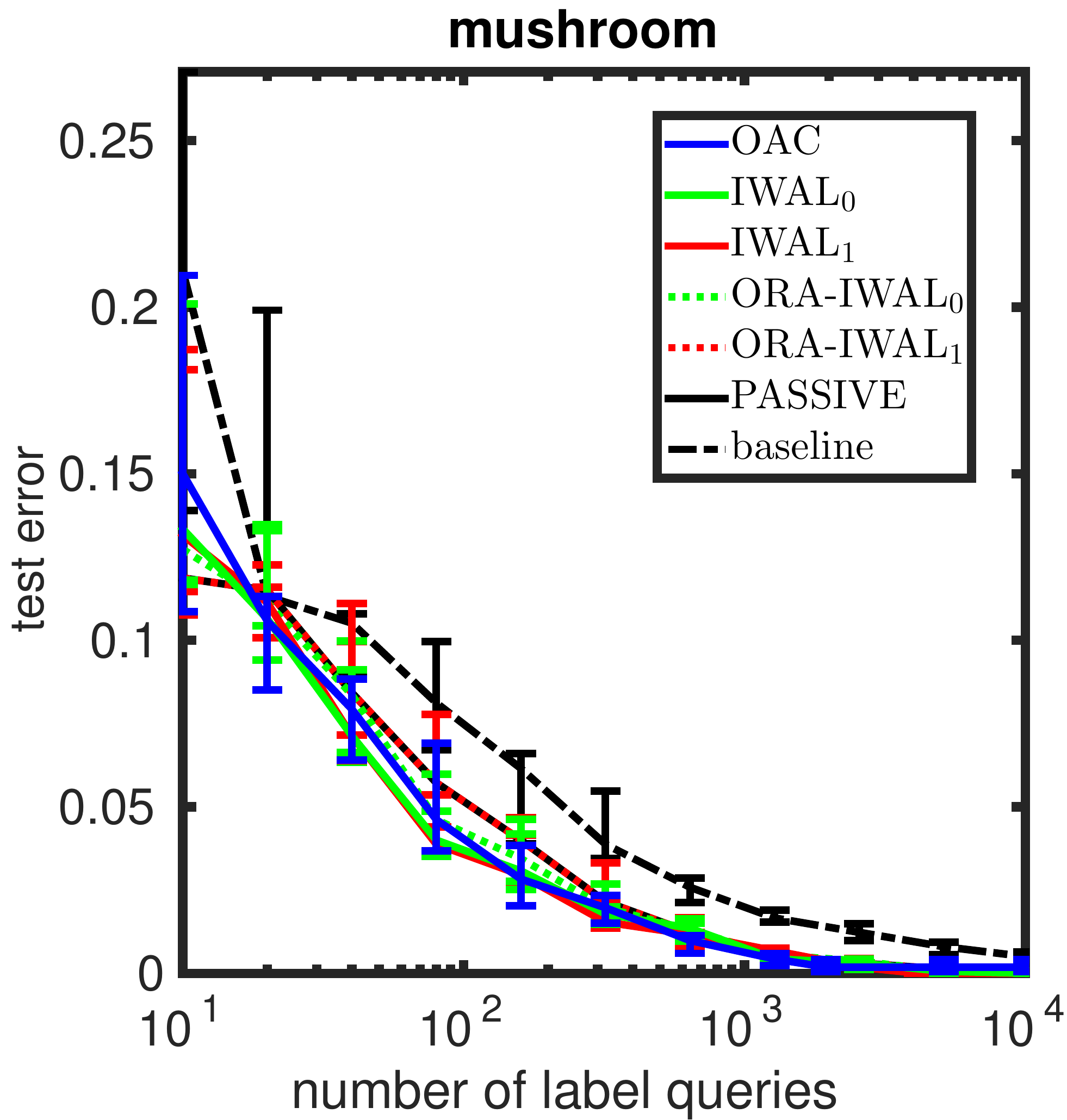}   \\
\includegraphics[width=0.33\textwidth,height=0.25\textwidth]{./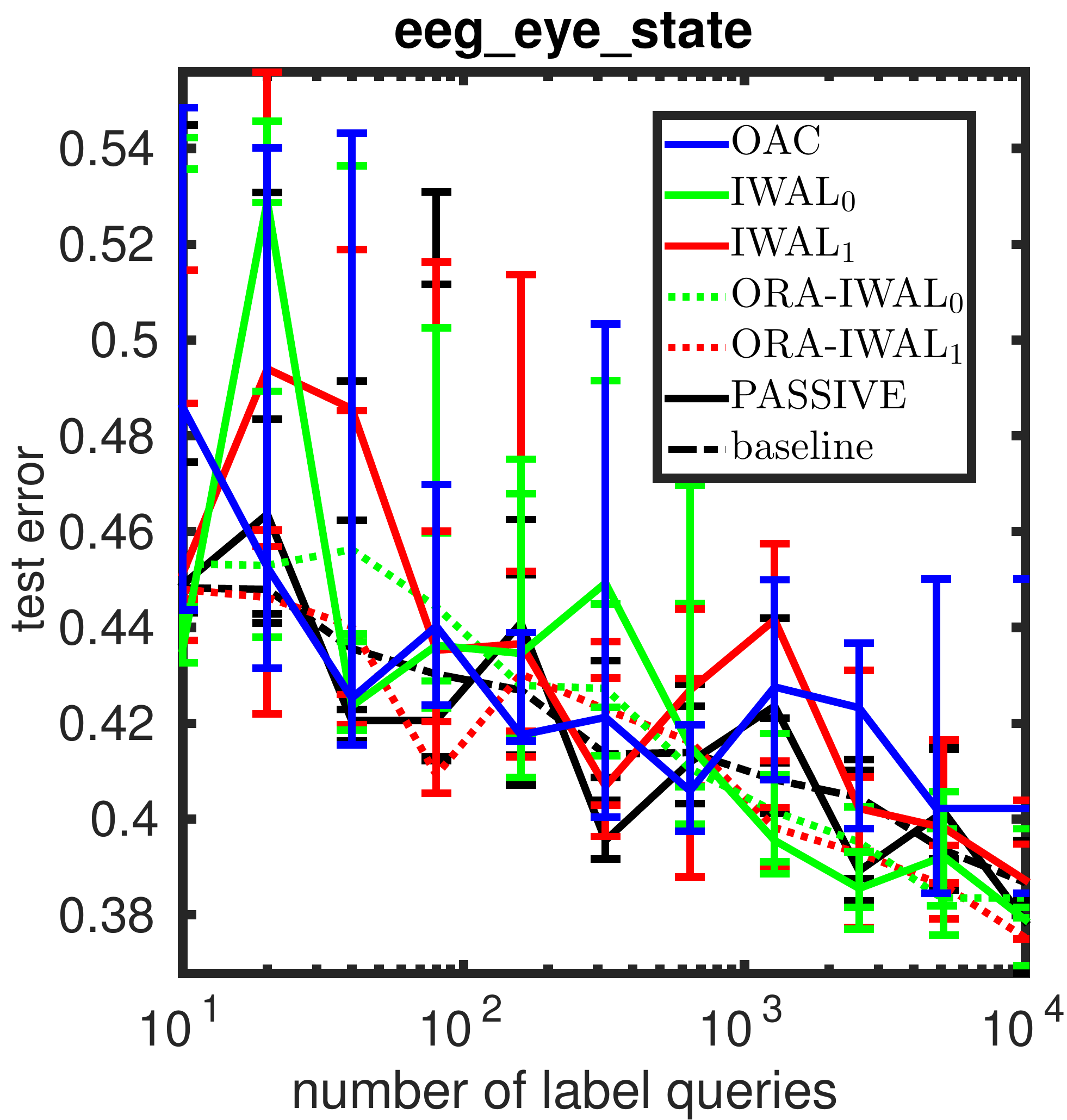}  & \includegraphics[width=0.33\textwidth,height=0.25\textwidth]{./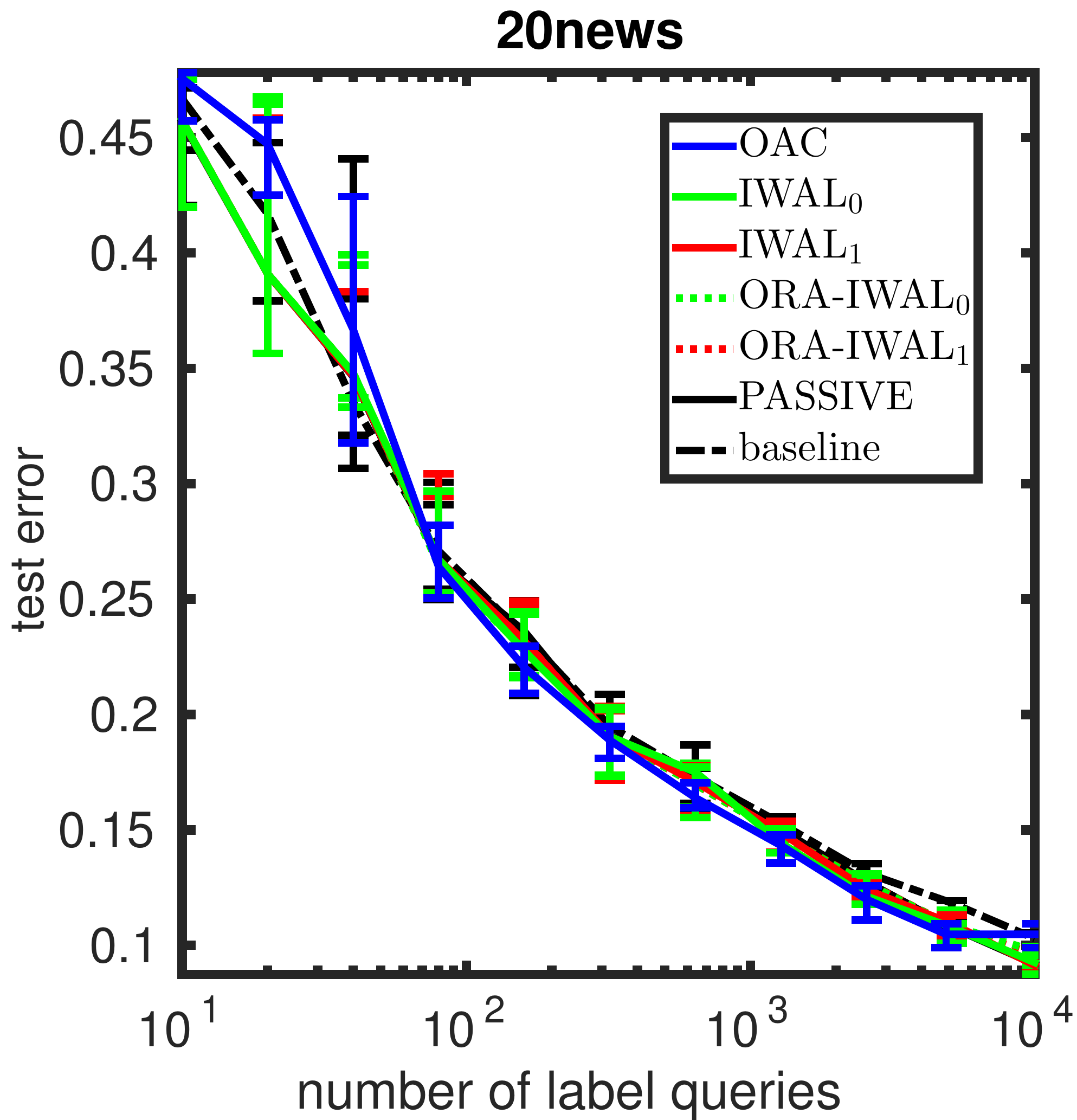}  & \includegraphics[width=0.33\textwidth,height=0.25\textwidth]{./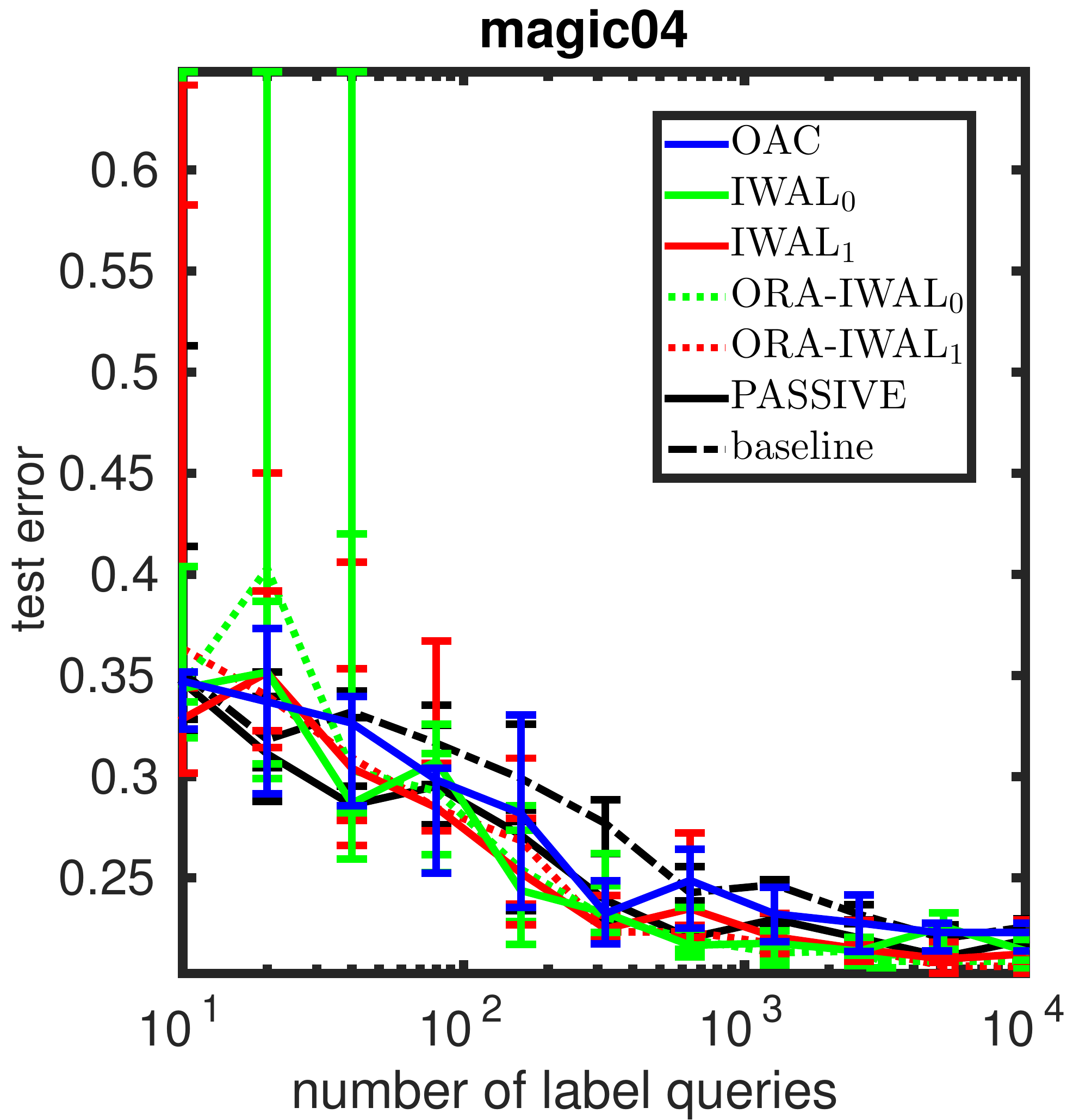}   \\
\includegraphics[width=0.33\textwidth,height=0.25\textwidth]{./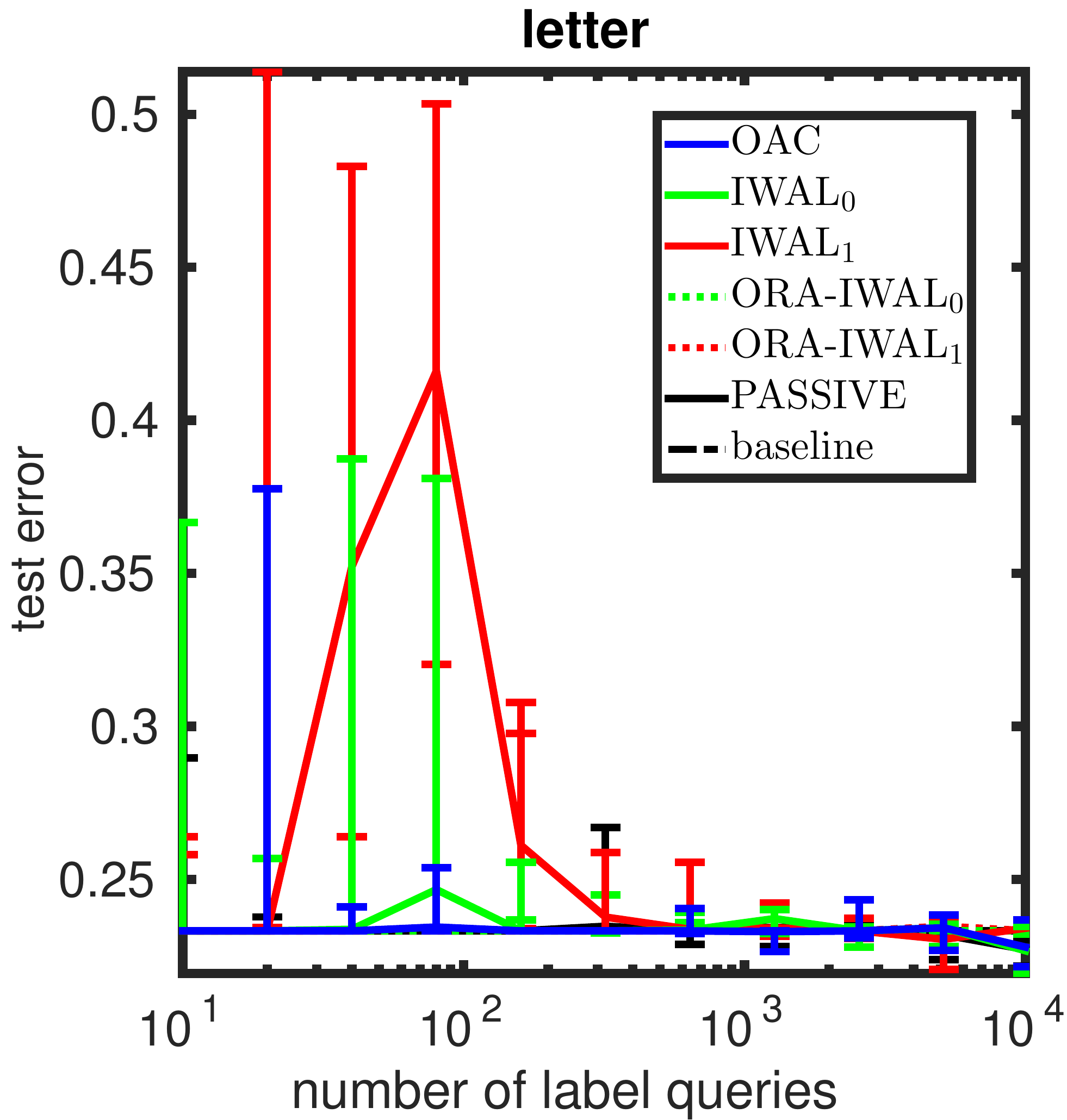}  & \includegraphics[width=0.33\textwidth,height=0.25\textwidth]{./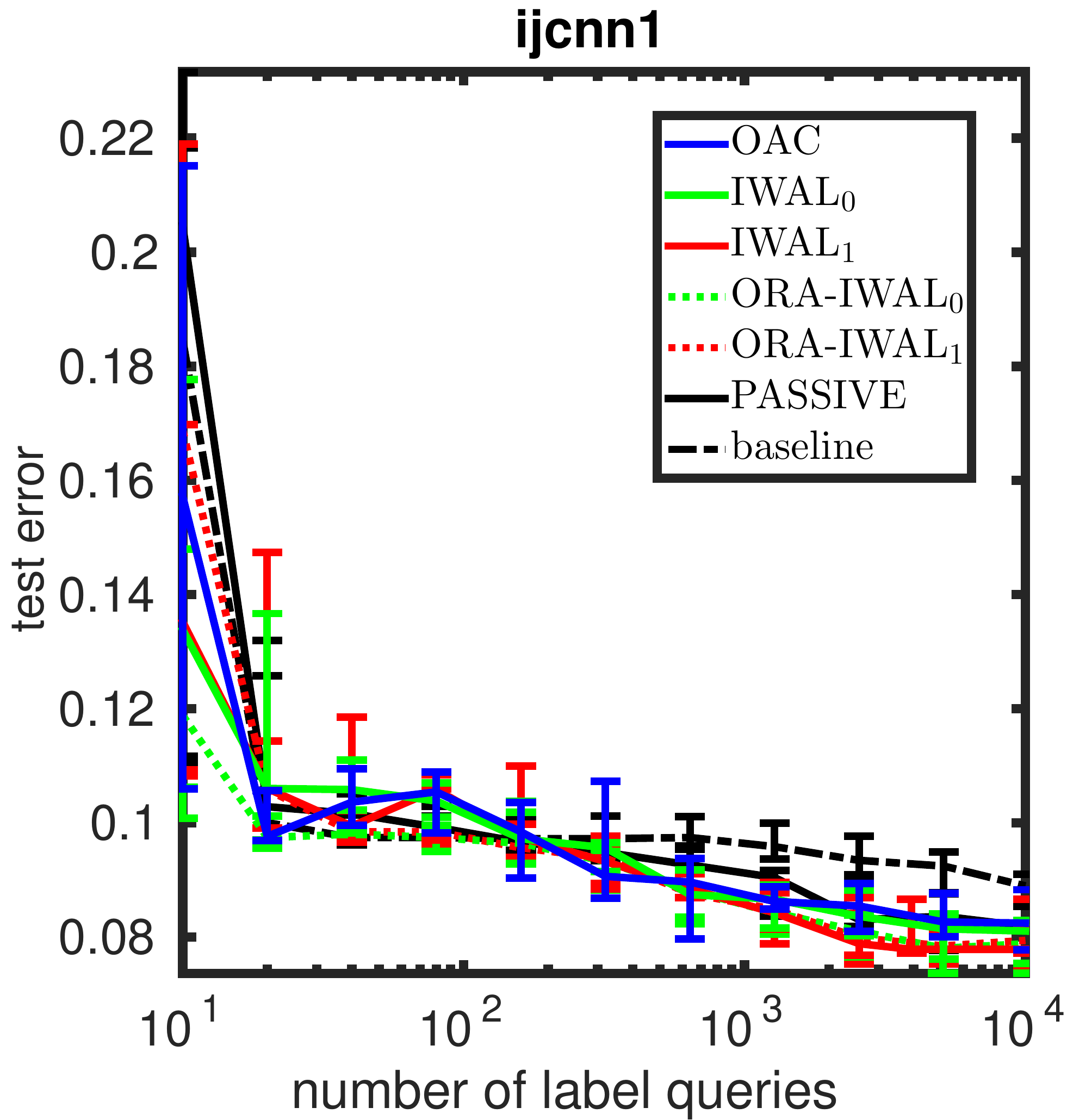}  & \includegraphics[width=0.33\textwidth,height=0.25\textwidth]{./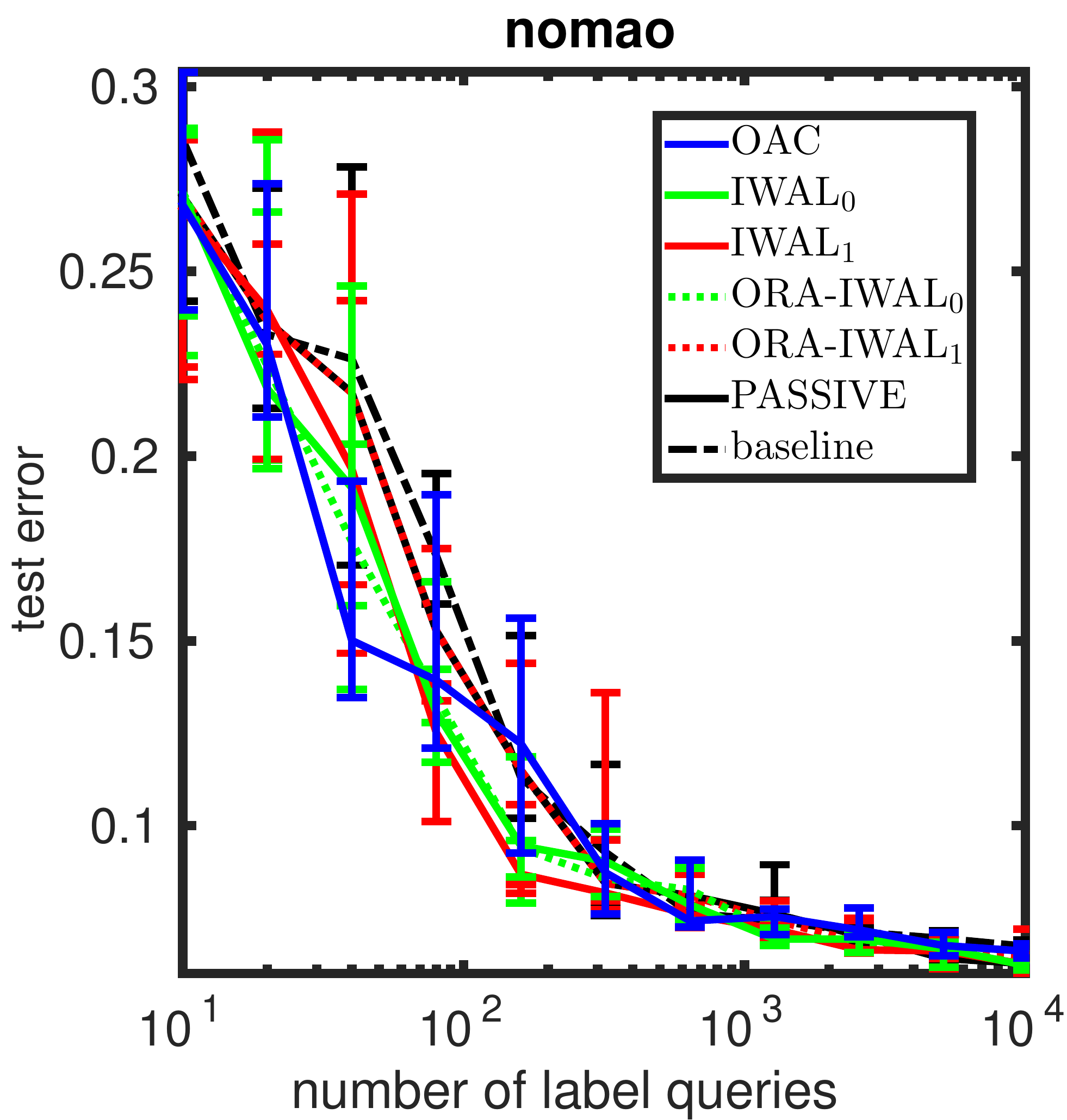}   \\
\includegraphics[width=0.33\textwidth,height=0.25\textwidth]{./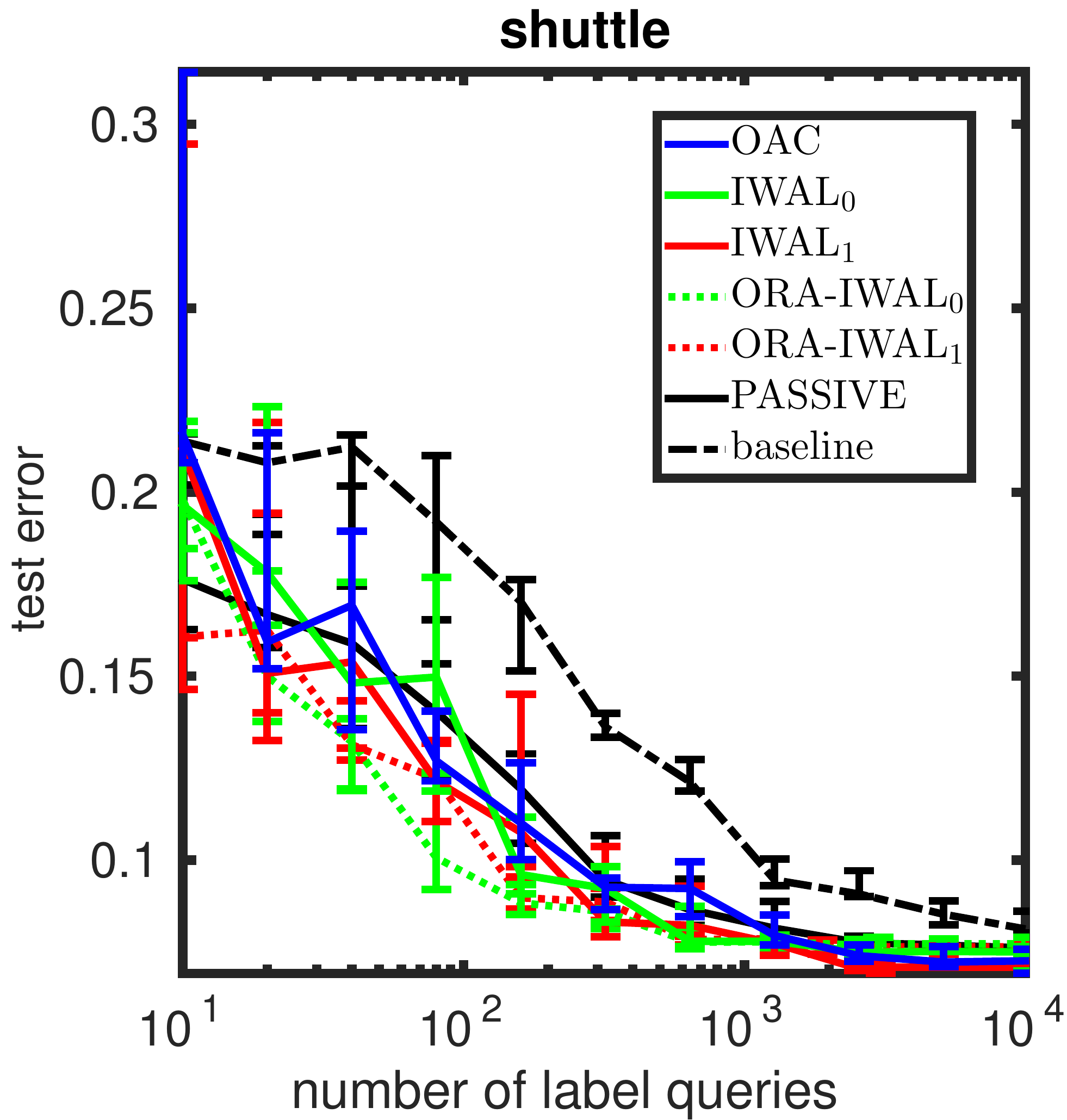}  & \includegraphics[width=0.33\textwidth,height=0.25\textwidth]{./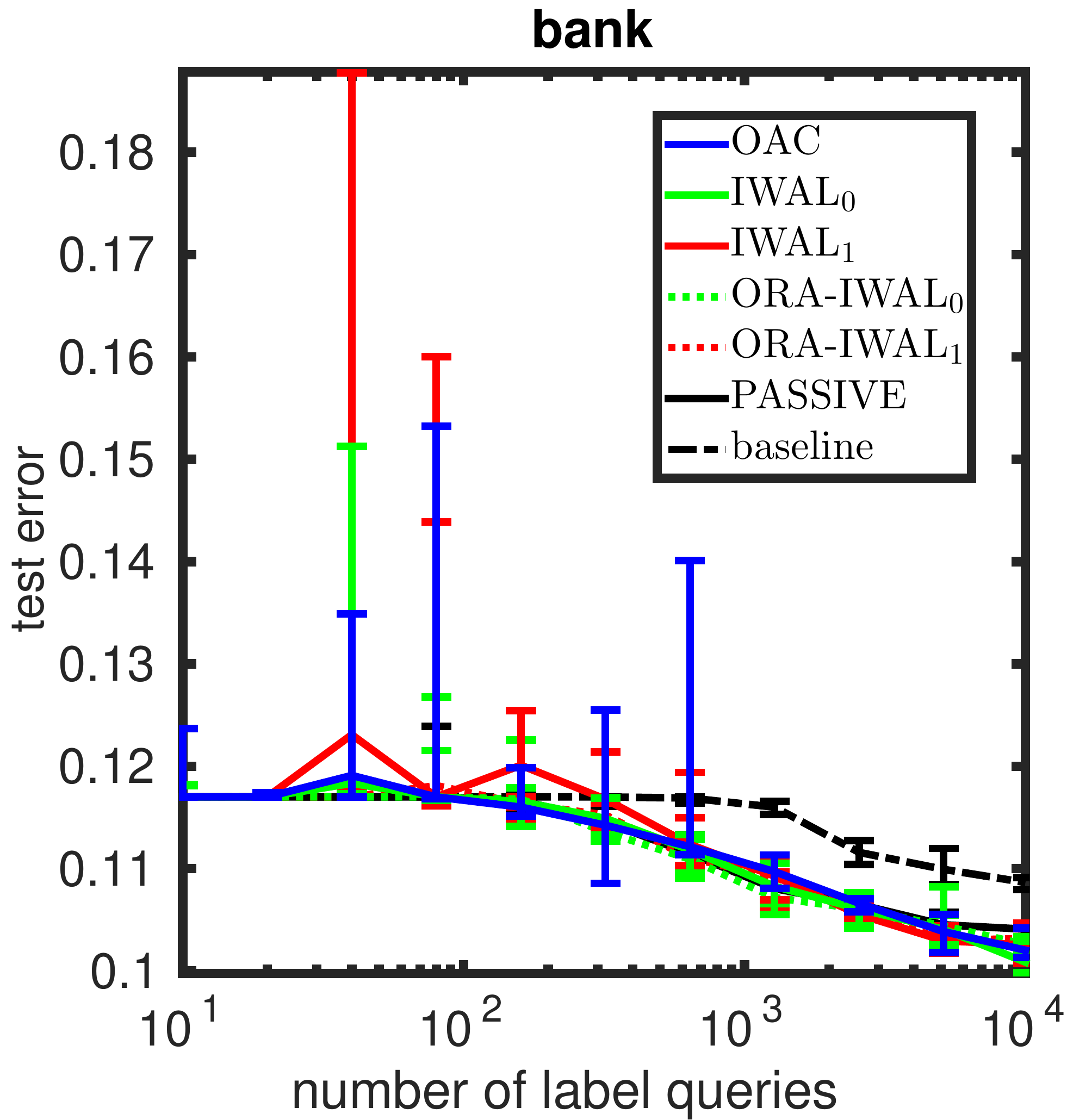}  & \includegraphics[width=0.33\textwidth,height=0.25\textwidth]{./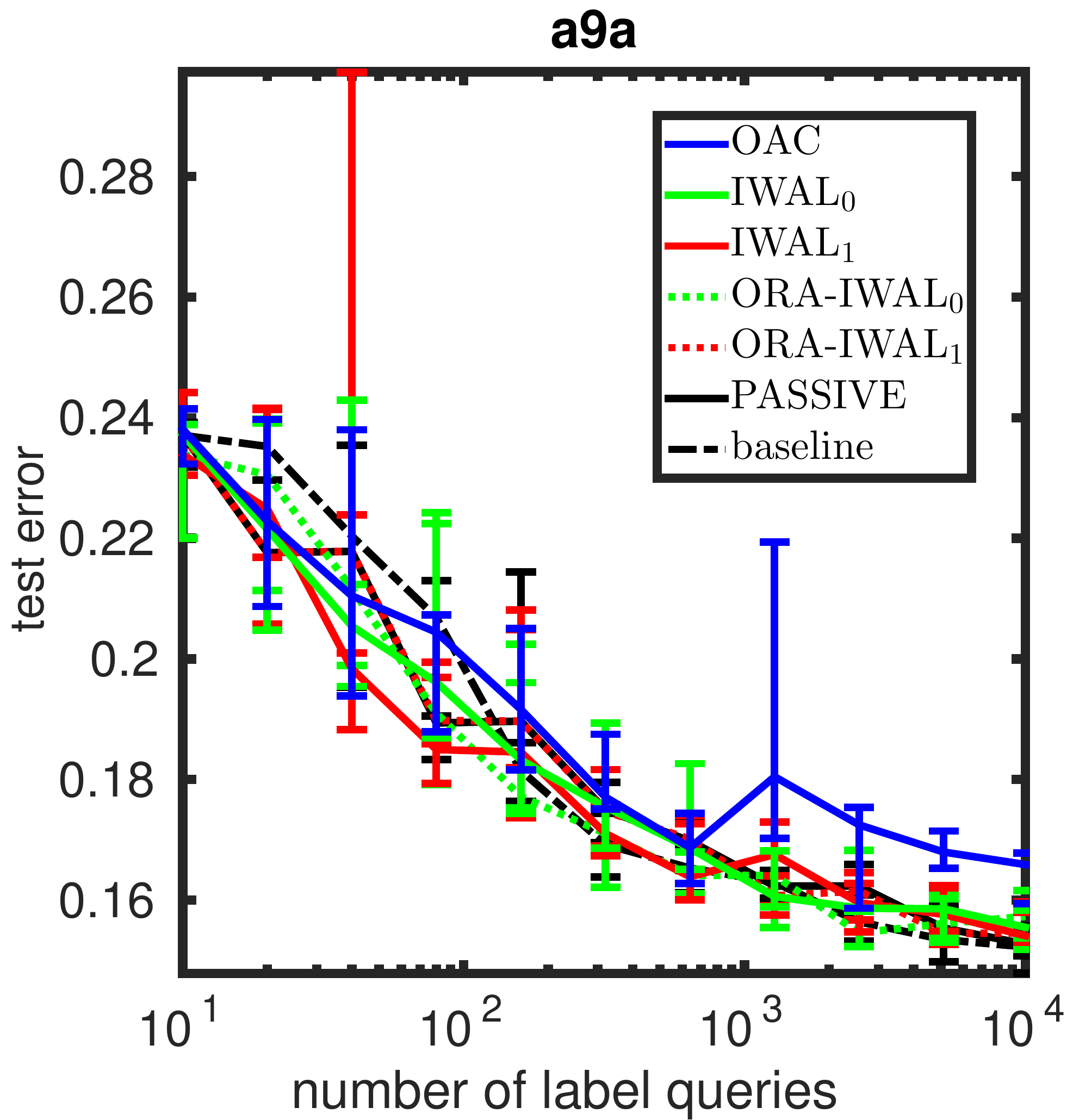} \\ 
\includegraphics[width=0.33\textwidth,height=0.25\textwidth]{./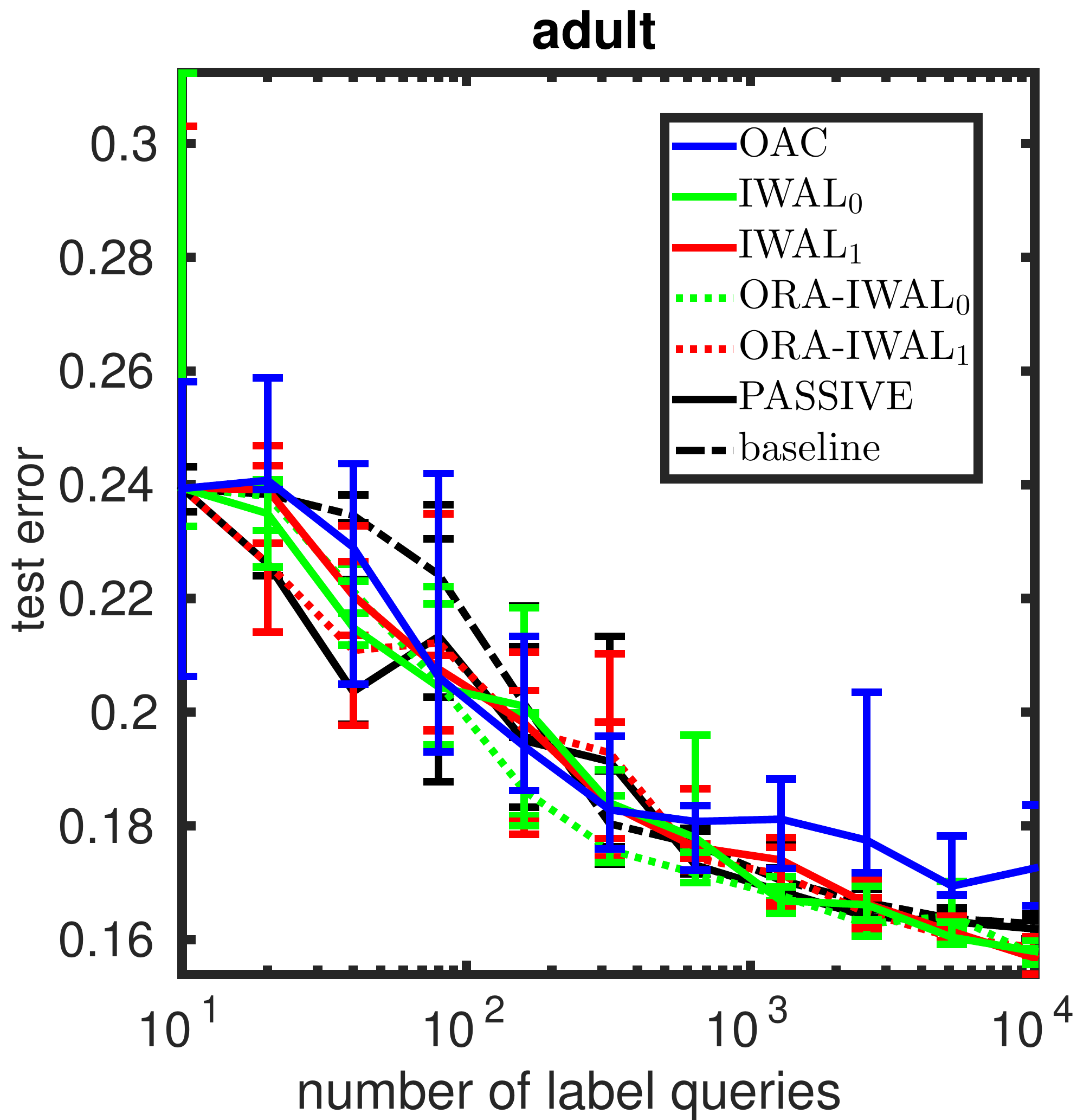}  & \includegraphics[width=0.33\textwidth,height=0.25\textwidth]{./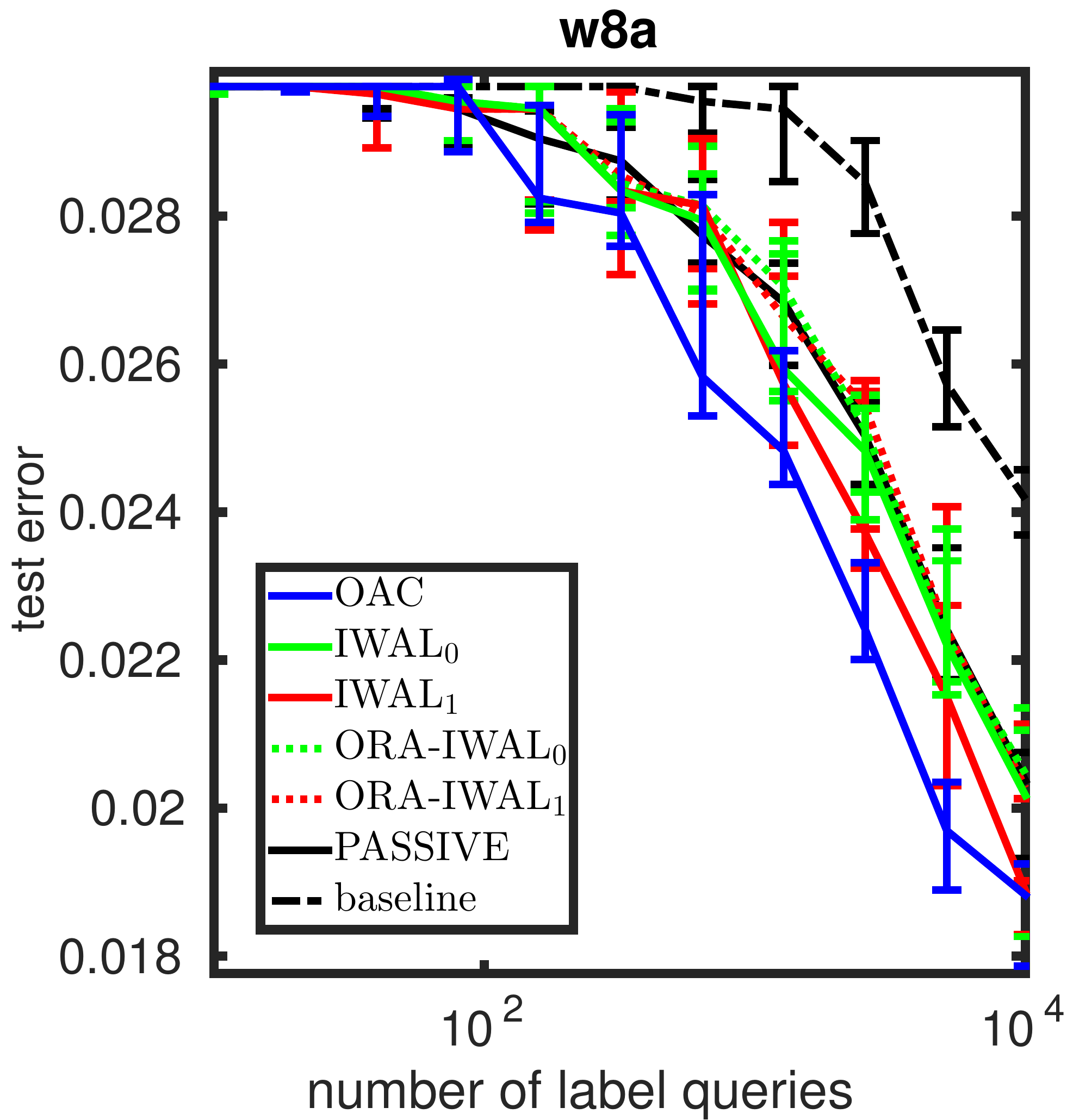}  &  
\end{tabular}
\caption{Test error under the best fixed hyper-parameter setting vs. number of label queries for datasets with fewer than $10^5$ examples}
\label{fig:err_vs_query_medium}
\end{figure}

\begin{figure}[t]
\begin{tabular}{@{}c@{}c@{}c@{}}
\includegraphics[width=0.33\textwidth,height=0.25\textwidth]{./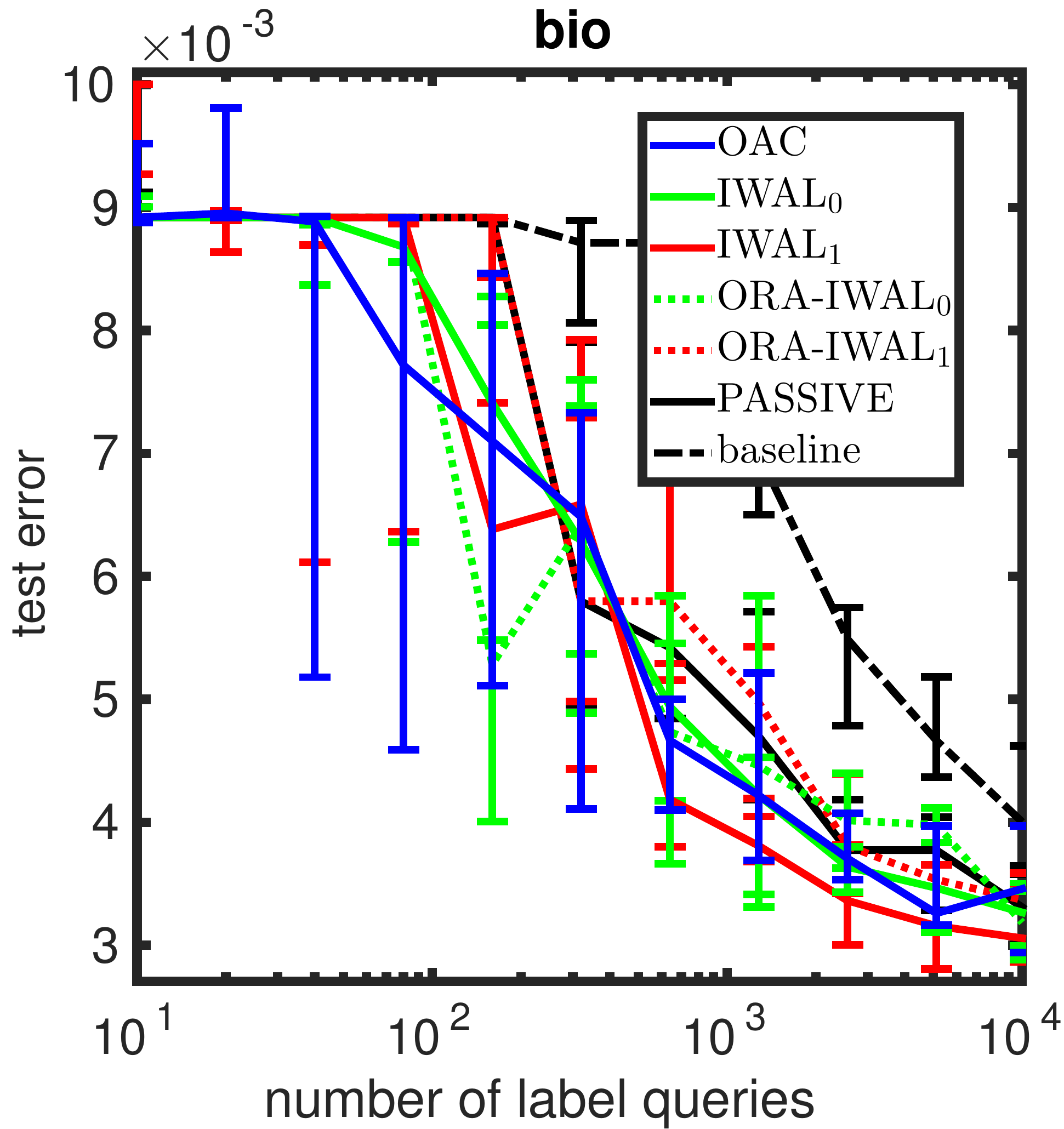} &  \includegraphics[width=0.33\textwidth,height=0.25\textwidth]{./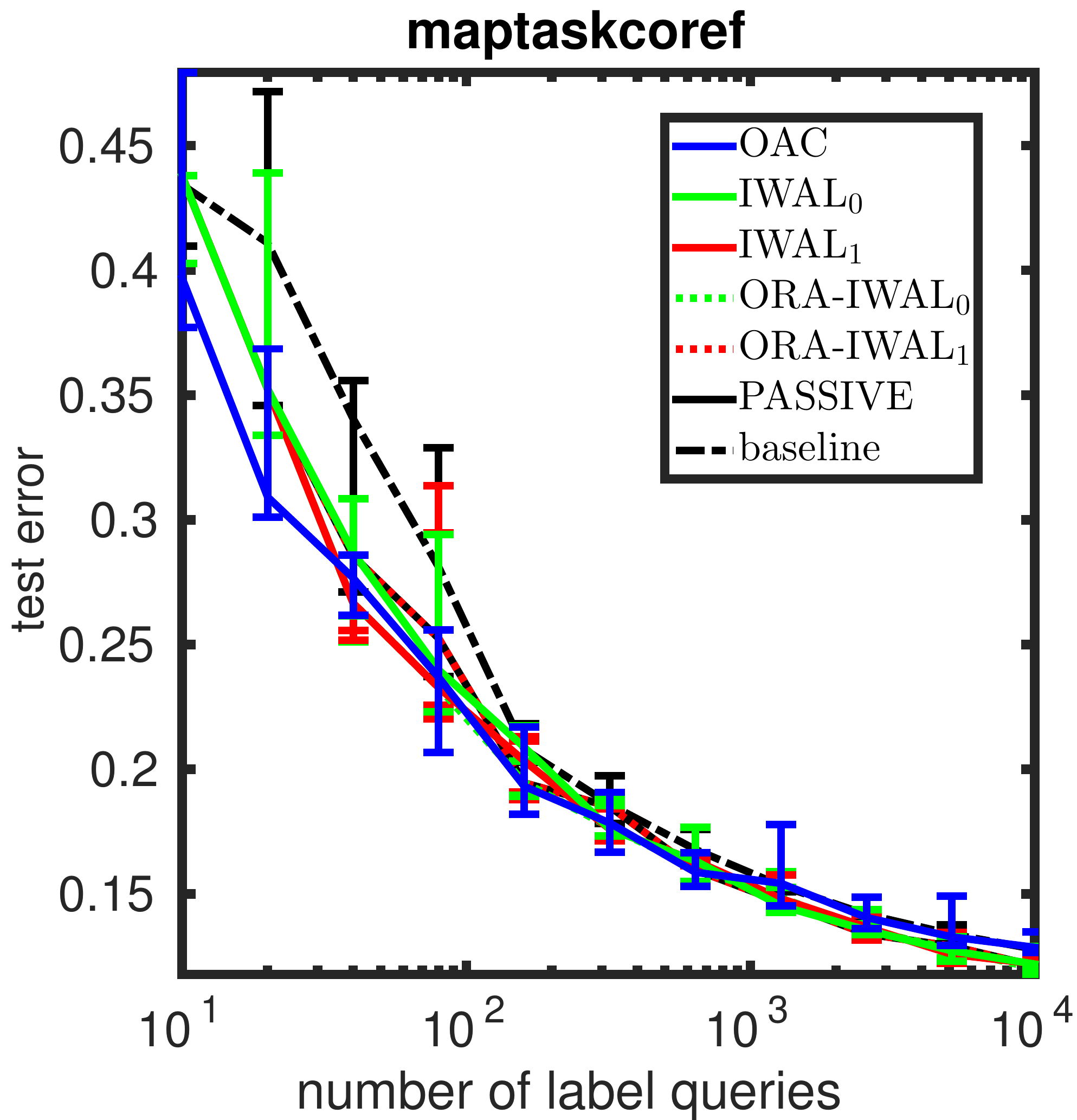}  &  \includegraphics[width=0.33\textwidth,height=0.25\textwidth]{./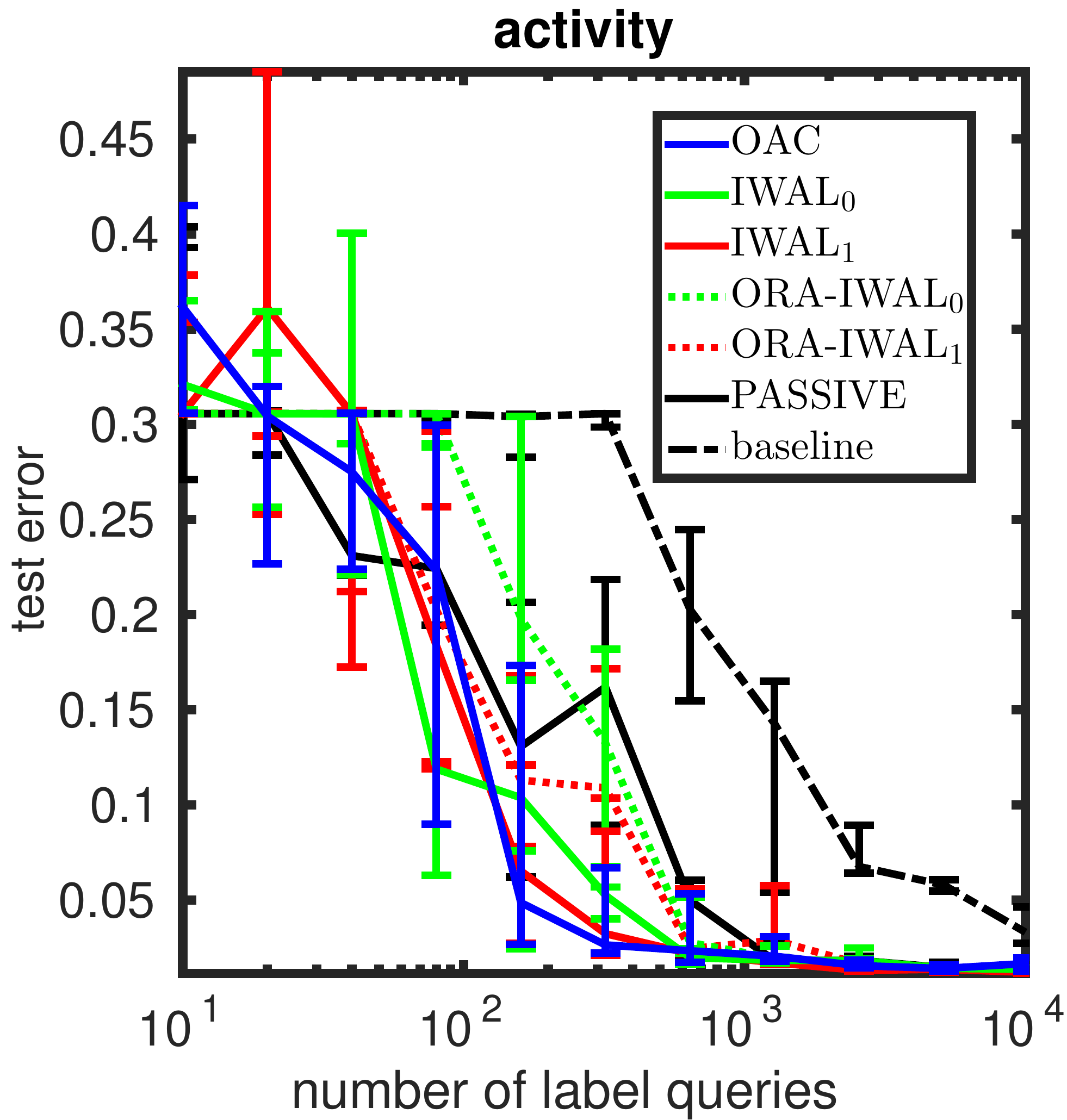} \\
\includegraphics[width=0.33\textwidth,height=0.25\textwidth]{./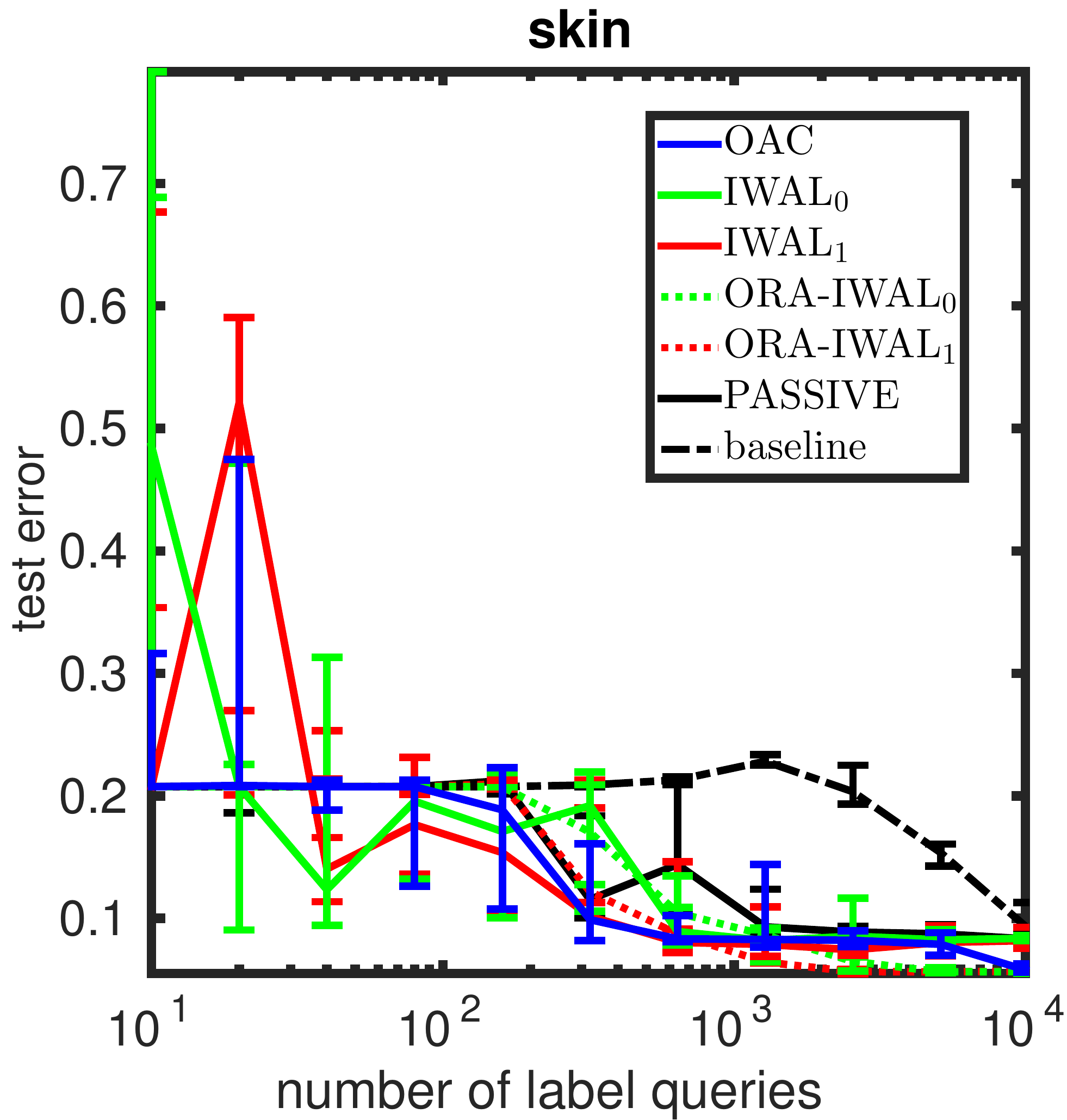} & \includegraphics[width=0.33\textwidth,height=0.25\textwidth]{./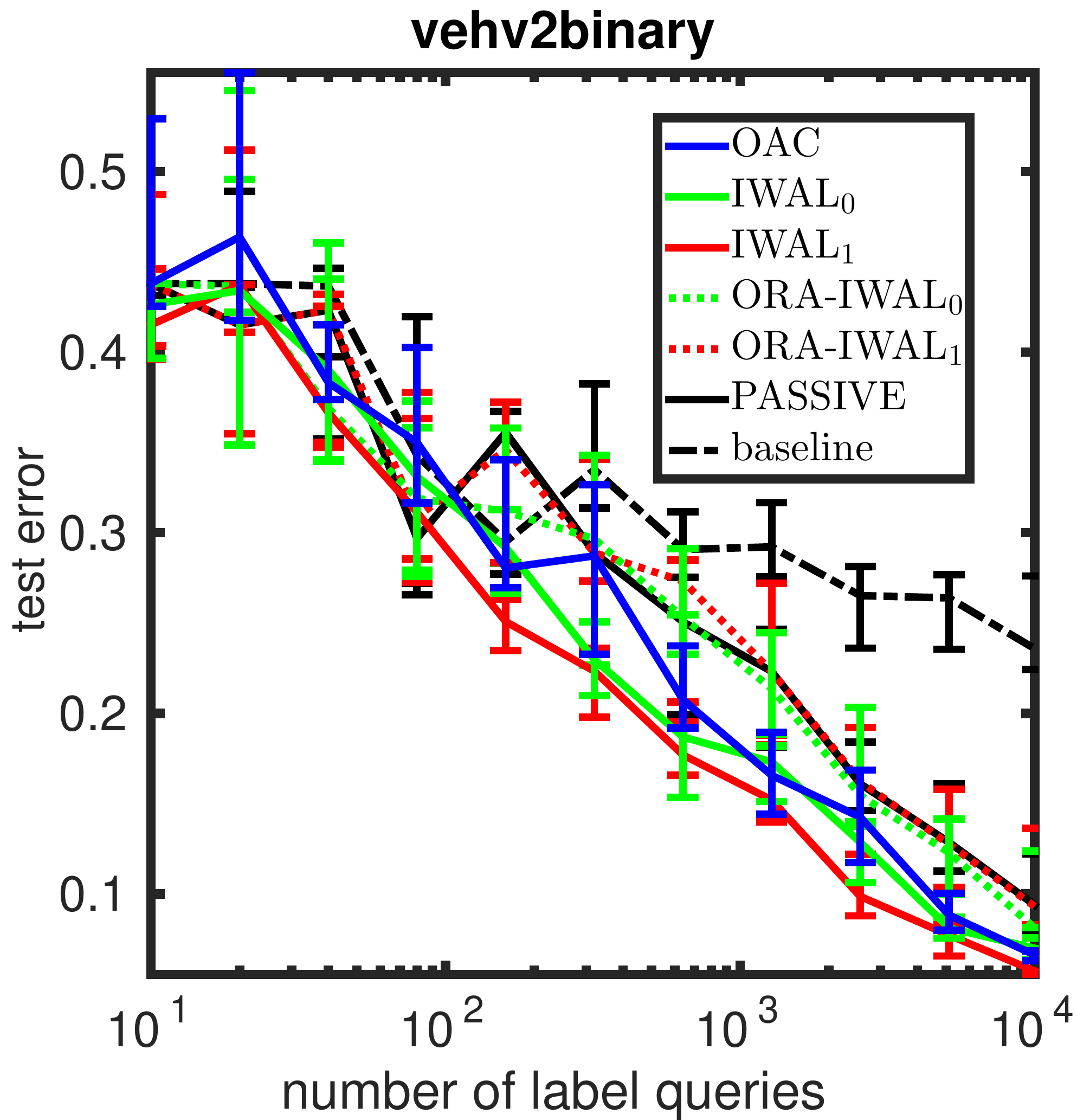}   &  \includegraphics[width=0.33\textwidth,height=0.25\textwidth]{./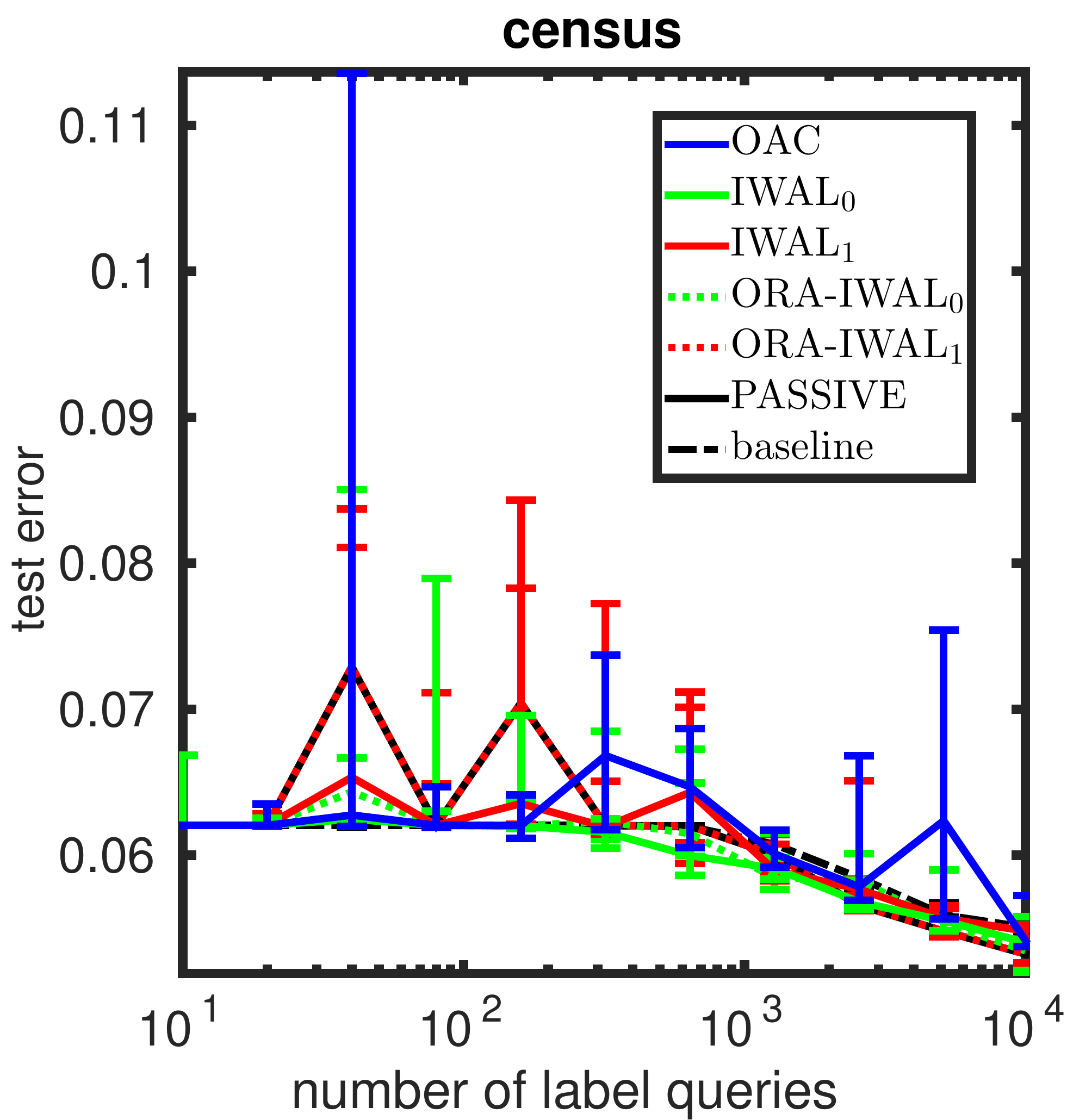}   \\ 
\includegraphics[width=0.33\textwidth,height=0.25\textwidth]{./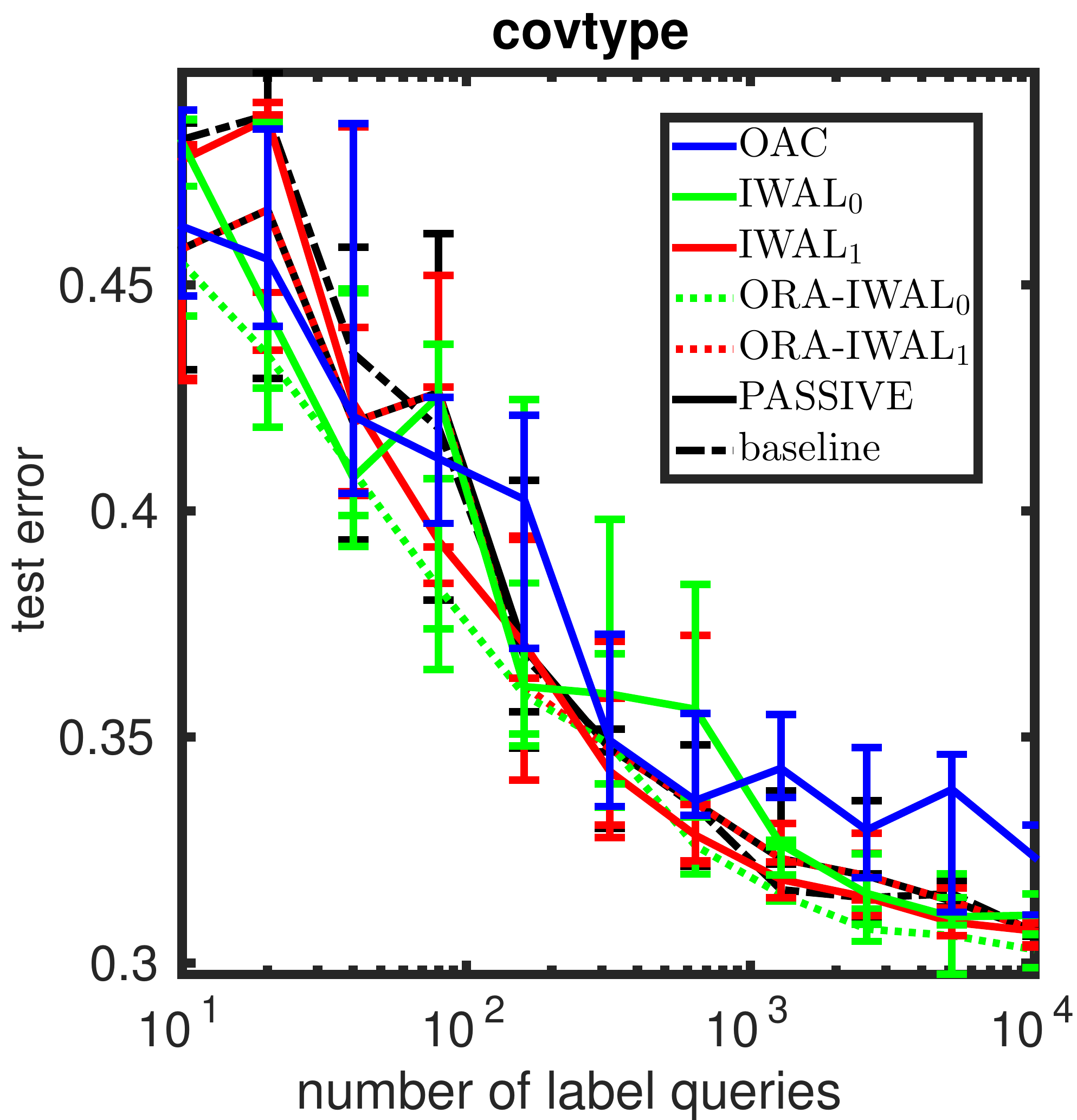}  &  \includegraphics[width=0.33\textwidth,height=0.25\textwidth]{./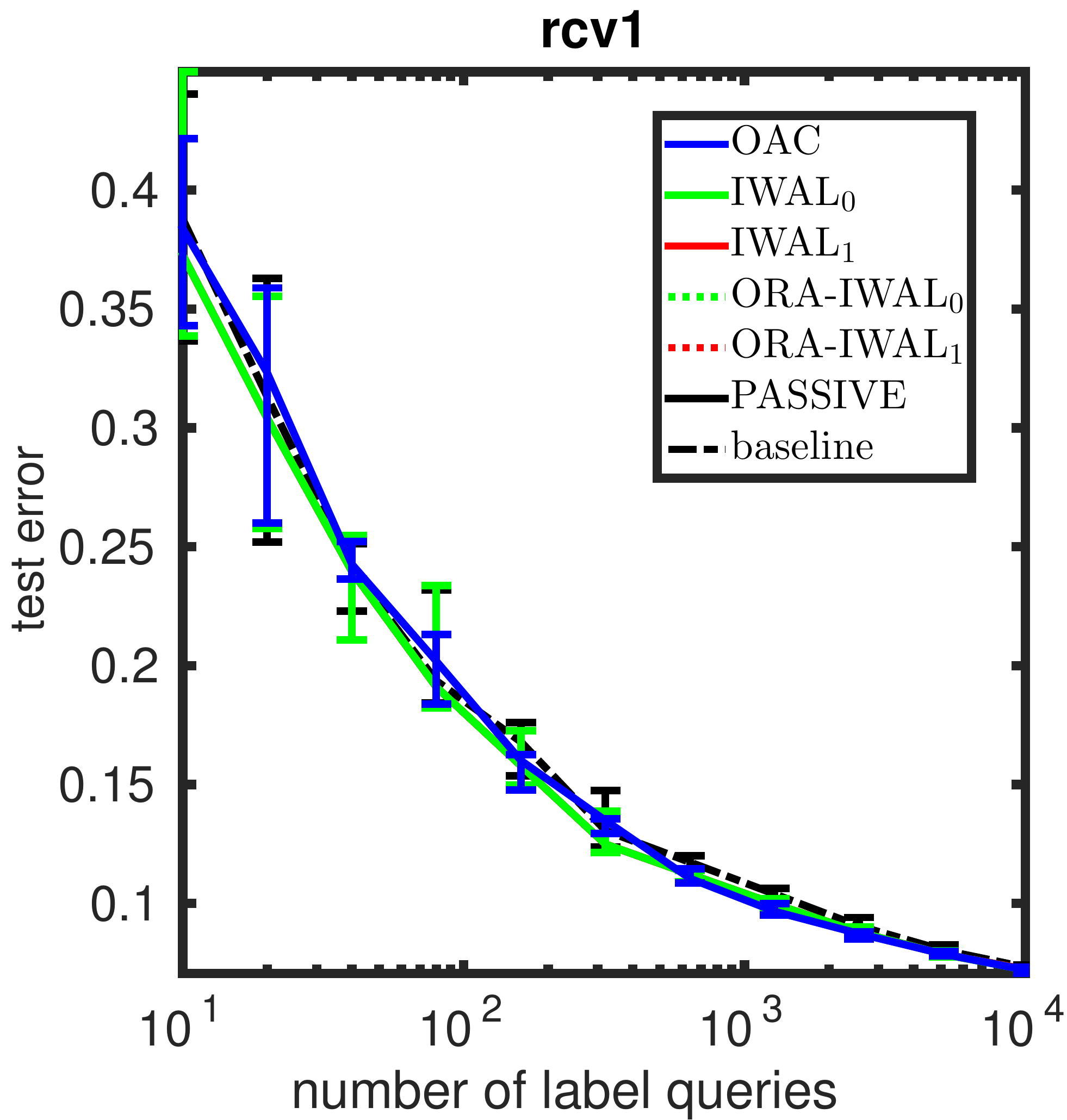} 
\end{tabular}
\caption{Test error under the best fixed hyper-parameter setting vs. number of label queries for datasets with more than $10^5$ examples}
\label{fig:err_vs_query_large}
\end{figure}

\begin{figure}[t]
\begin{tabular}{@{}c@{}c@{}c@{}}
\includegraphics[width=0.33\textwidth,height=0.25\textwidth]{./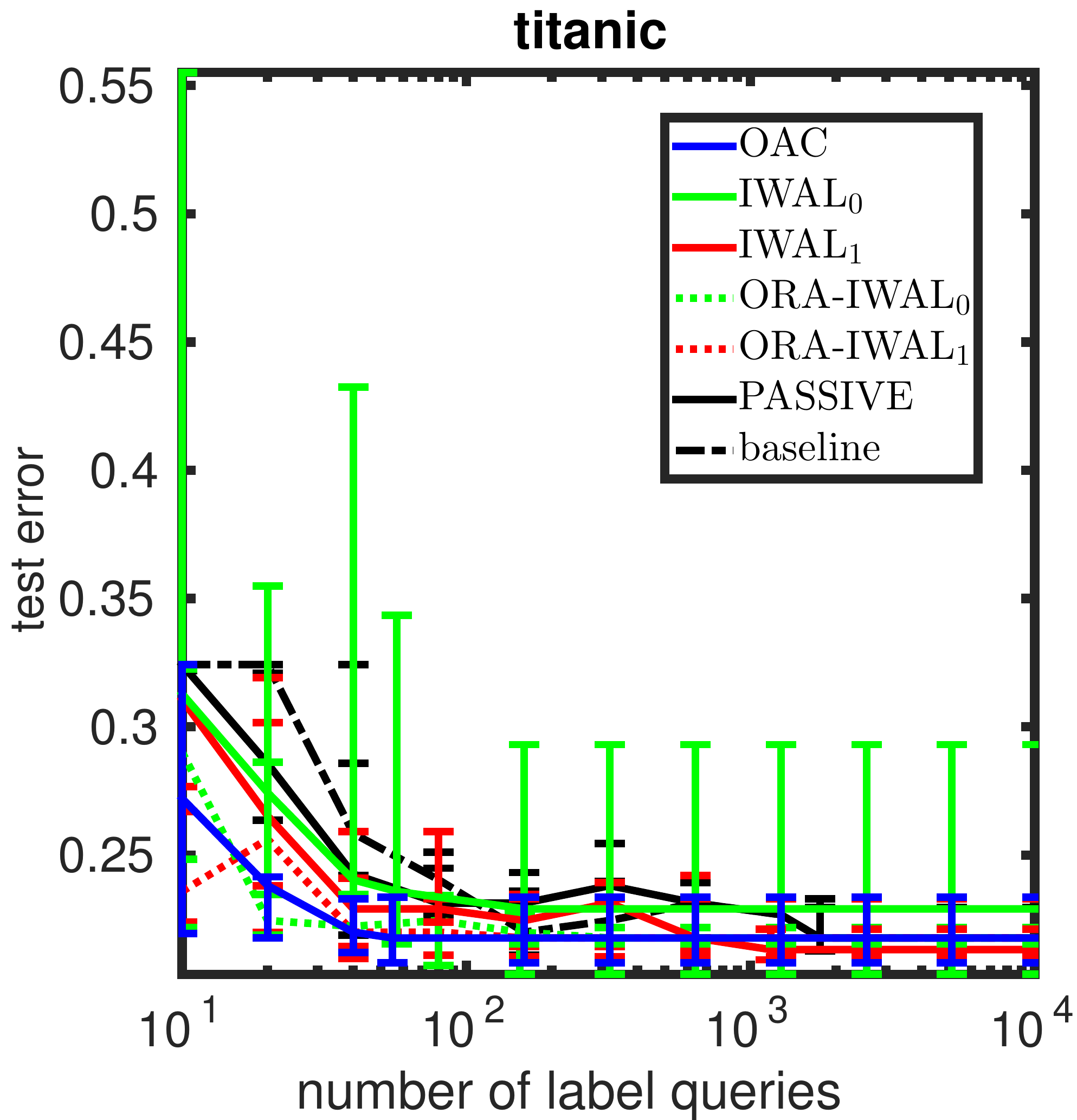}  & \includegraphics[width=0.33\textwidth,height=0.25\textwidth]{./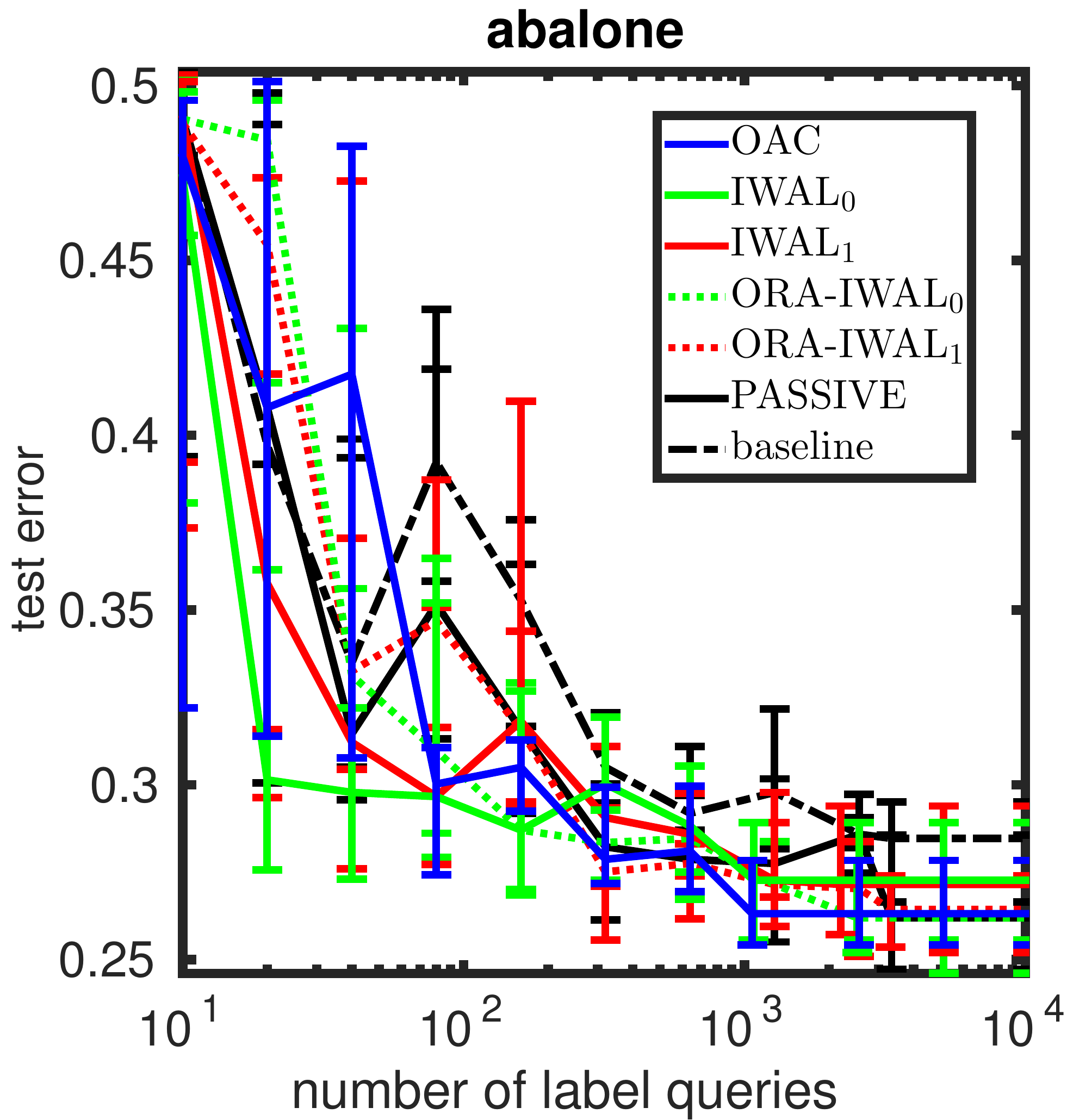}  & \includegraphics[width=0.33\textwidth,height=0.25\textwidth]{./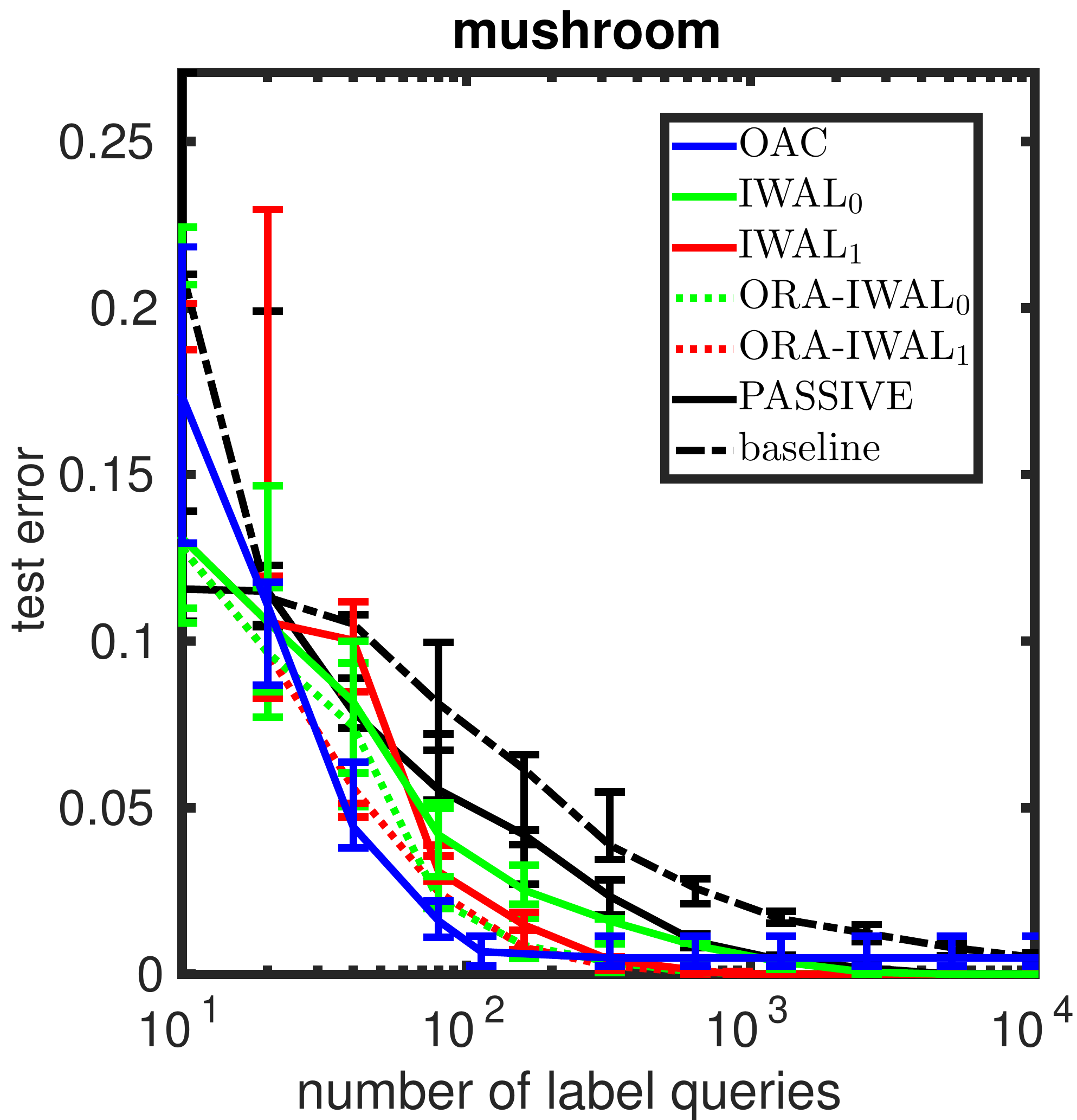}   \\
\includegraphics[width=0.33\textwidth,height=0.25\textwidth]{./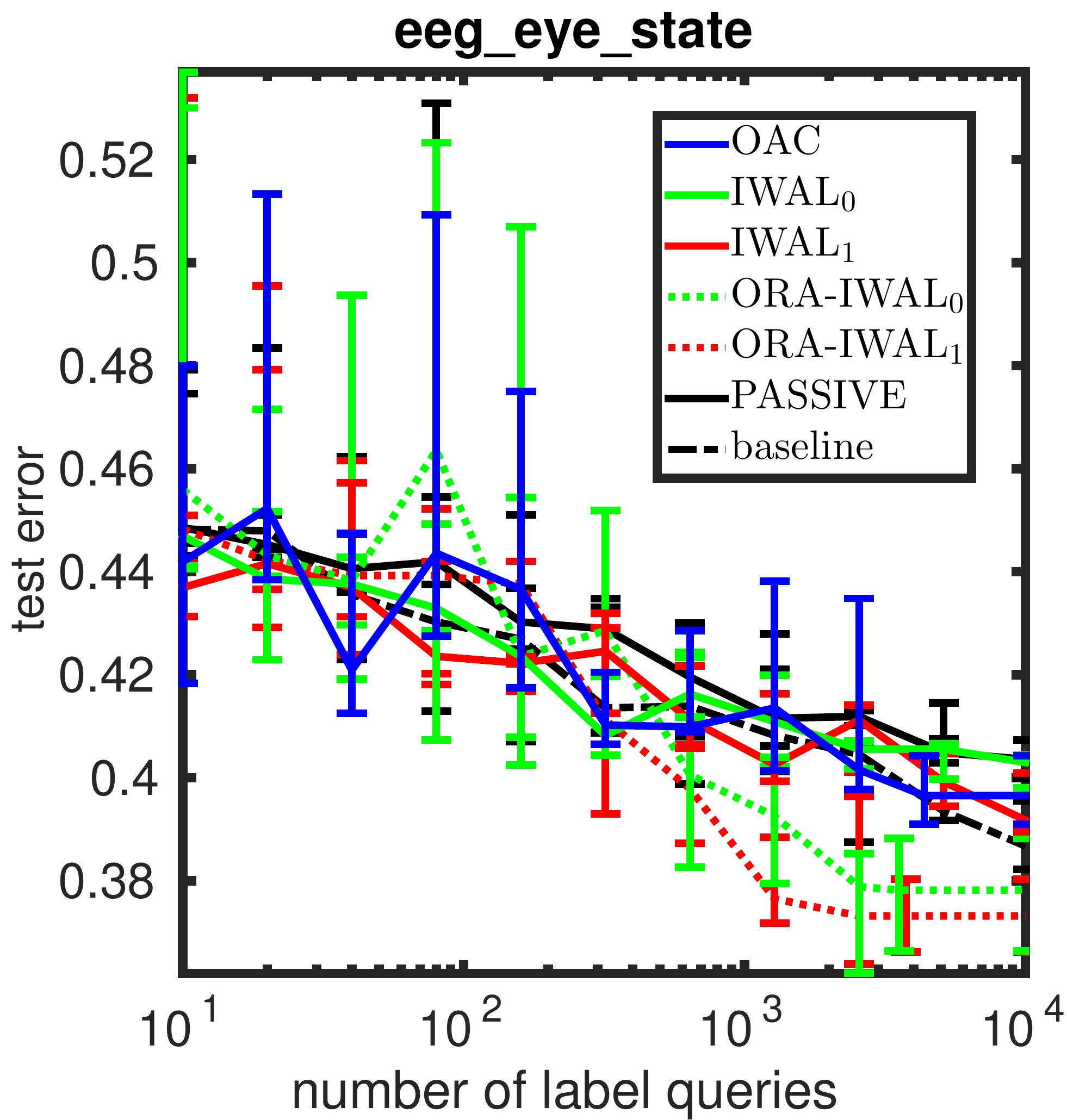}  & \includegraphics[width=0.33\textwidth,height=0.25\textwidth]{./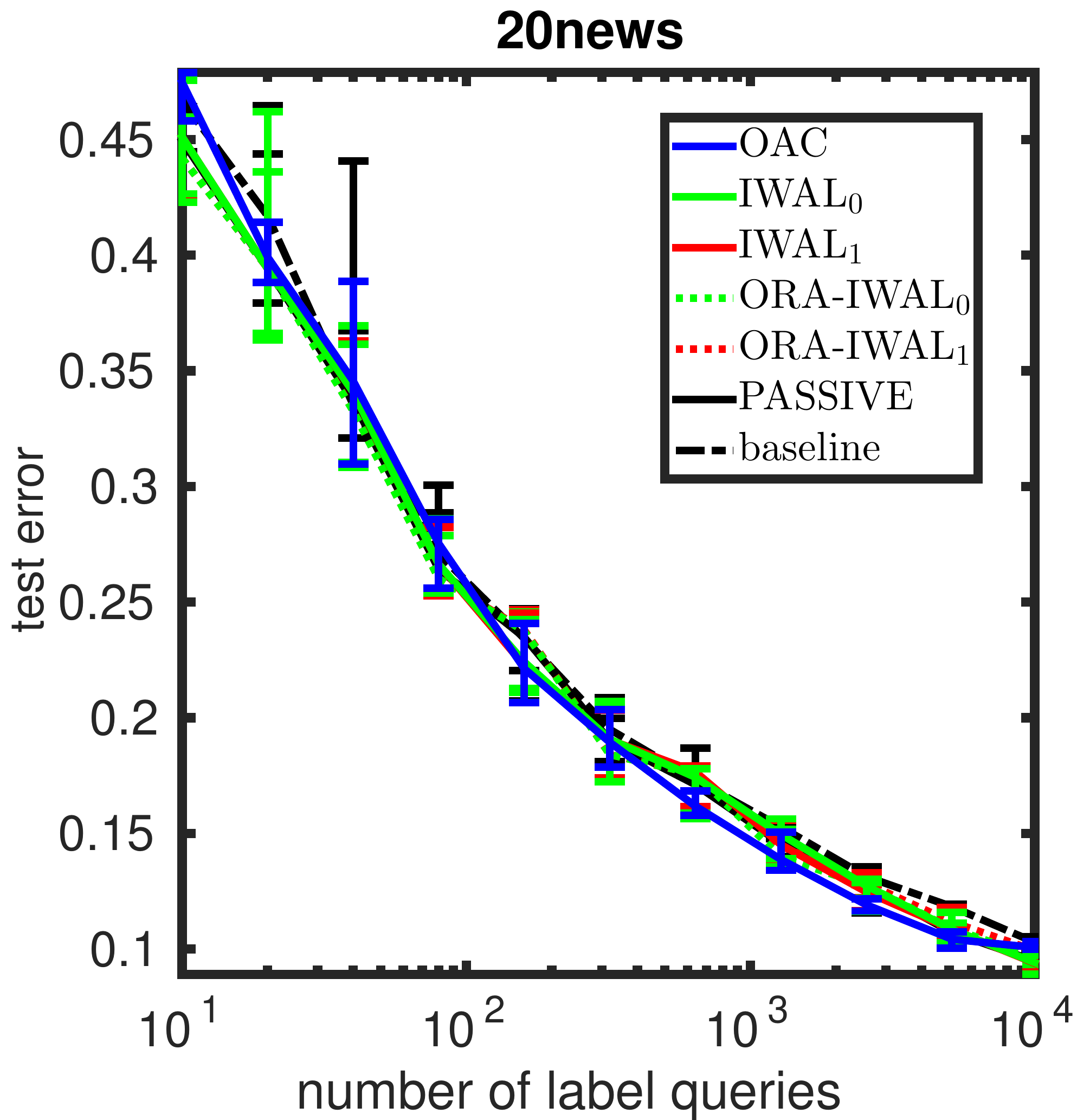}  & \includegraphics[width=0.33\textwidth,height=0.25\textwidth]{./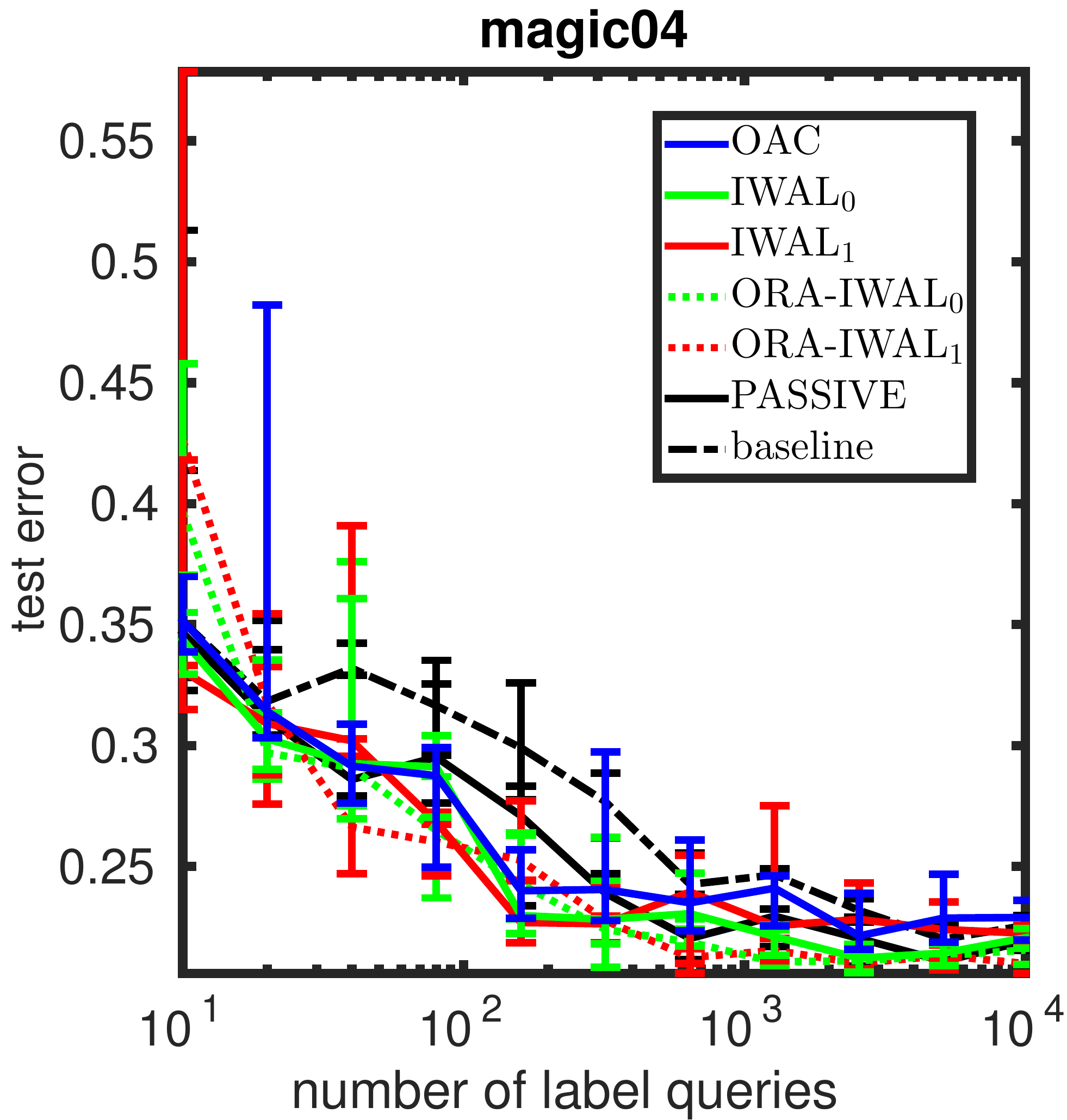}   \\
\includegraphics[width=0.33\textwidth,height=0.25\textwidth]{./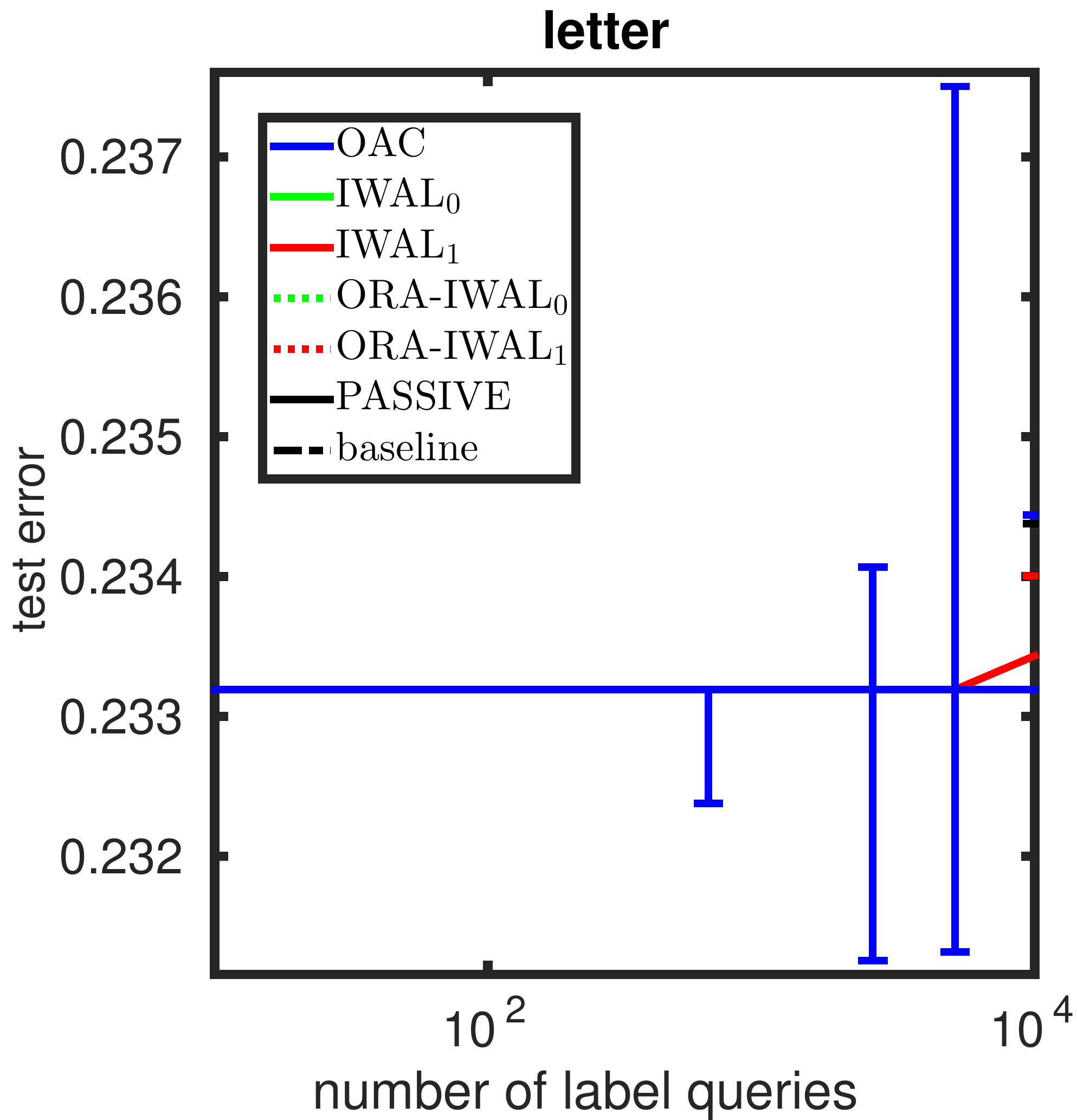}  & \includegraphics[width=0.33\textwidth,height=0.25\textwidth]{./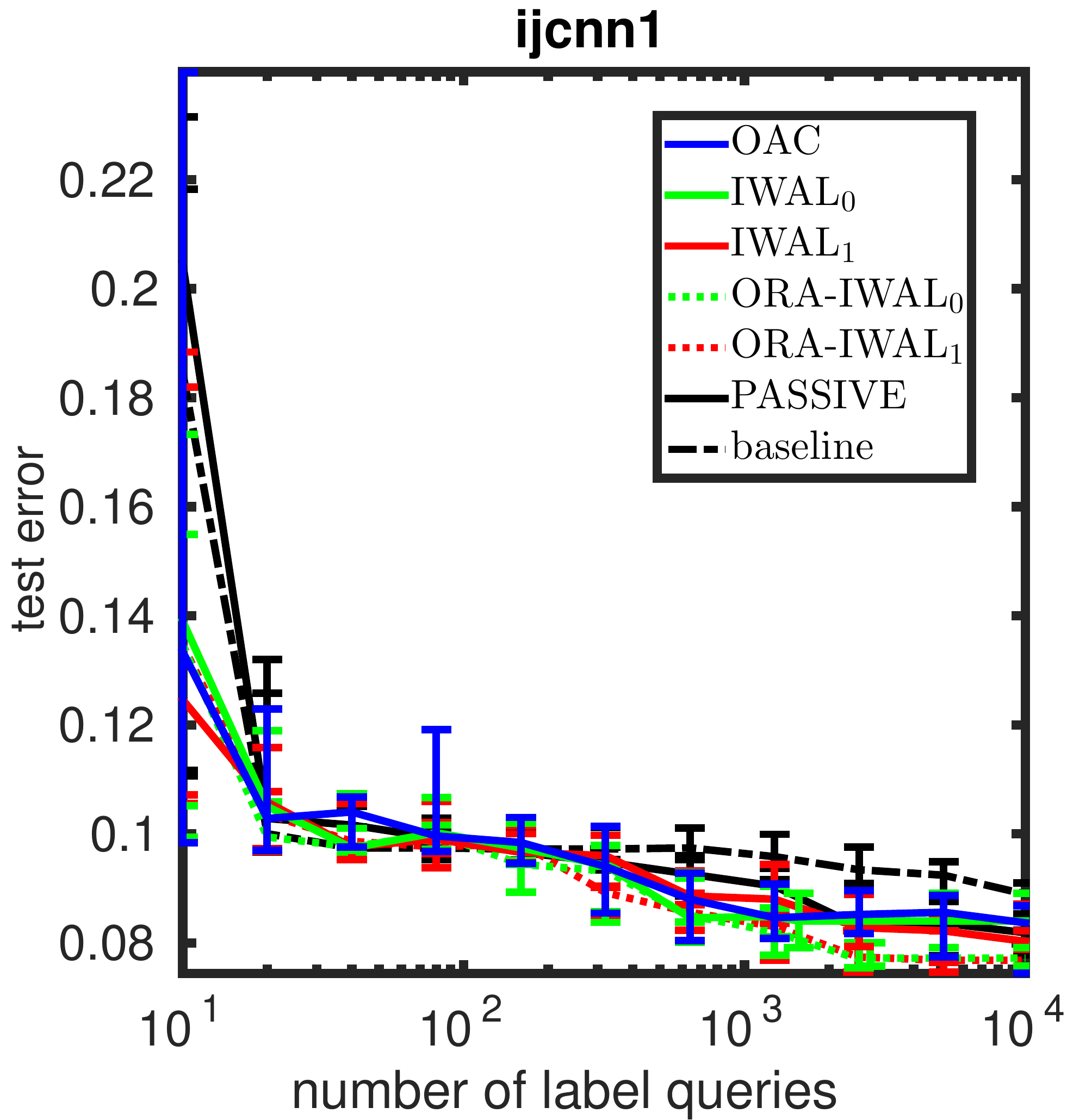}  & \includegraphics[width=0.33\textwidth,height=0.25\textwidth]{./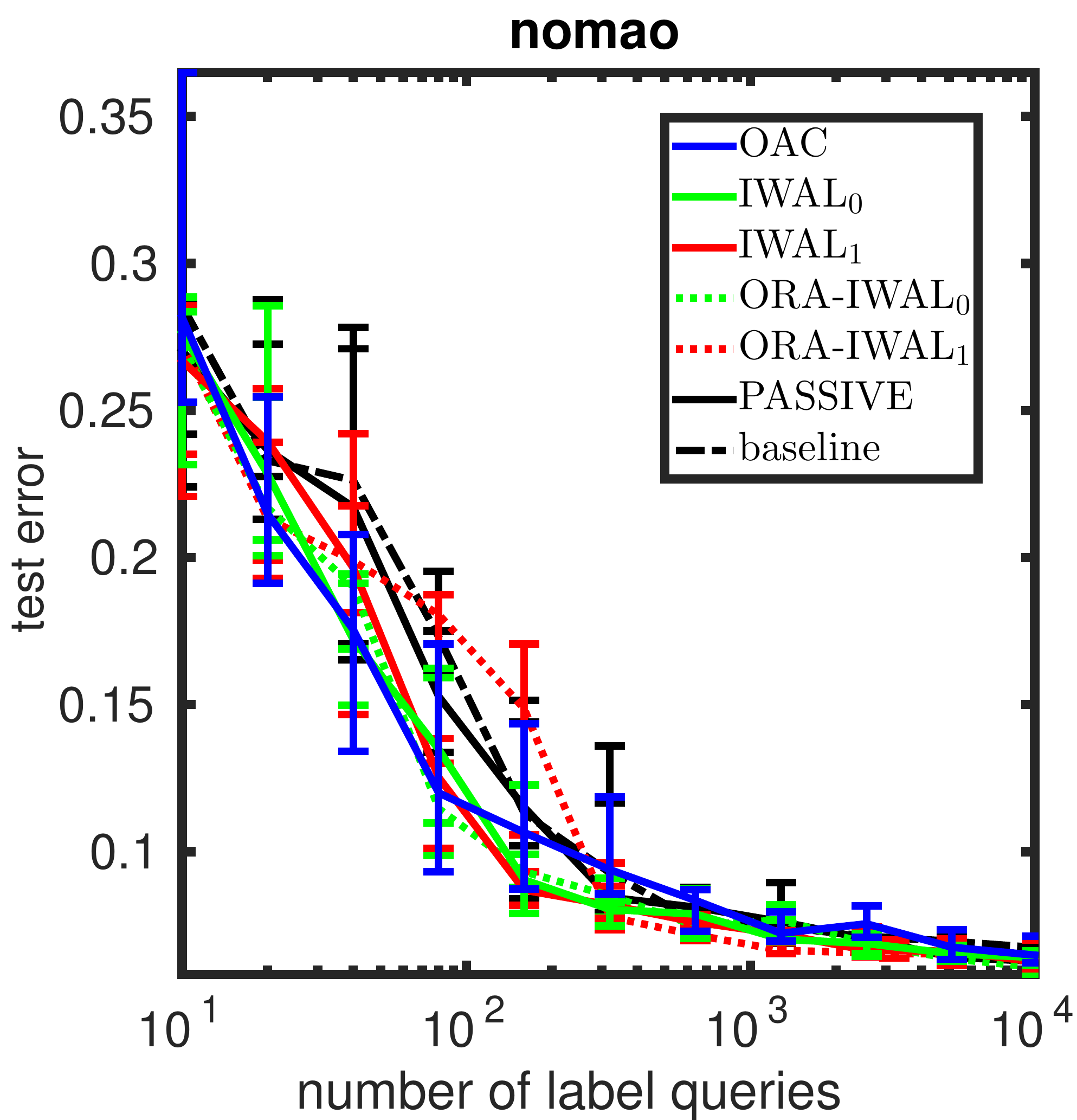}   \\
\includegraphics[width=0.33\textwidth,height=0.25\textwidth]{./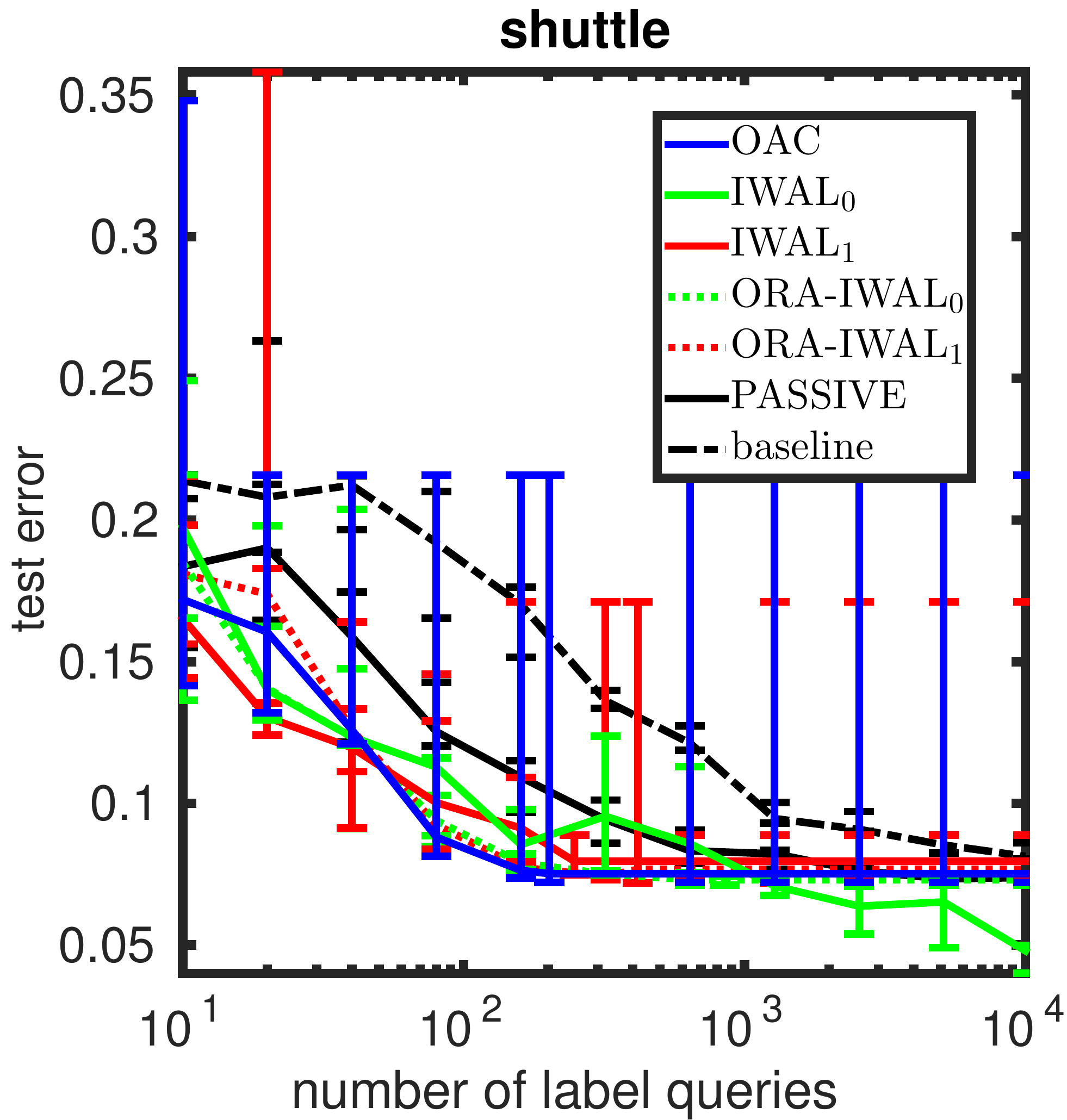}  & \includegraphics[width=0.33\textwidth,height=0.25\textwidth]{./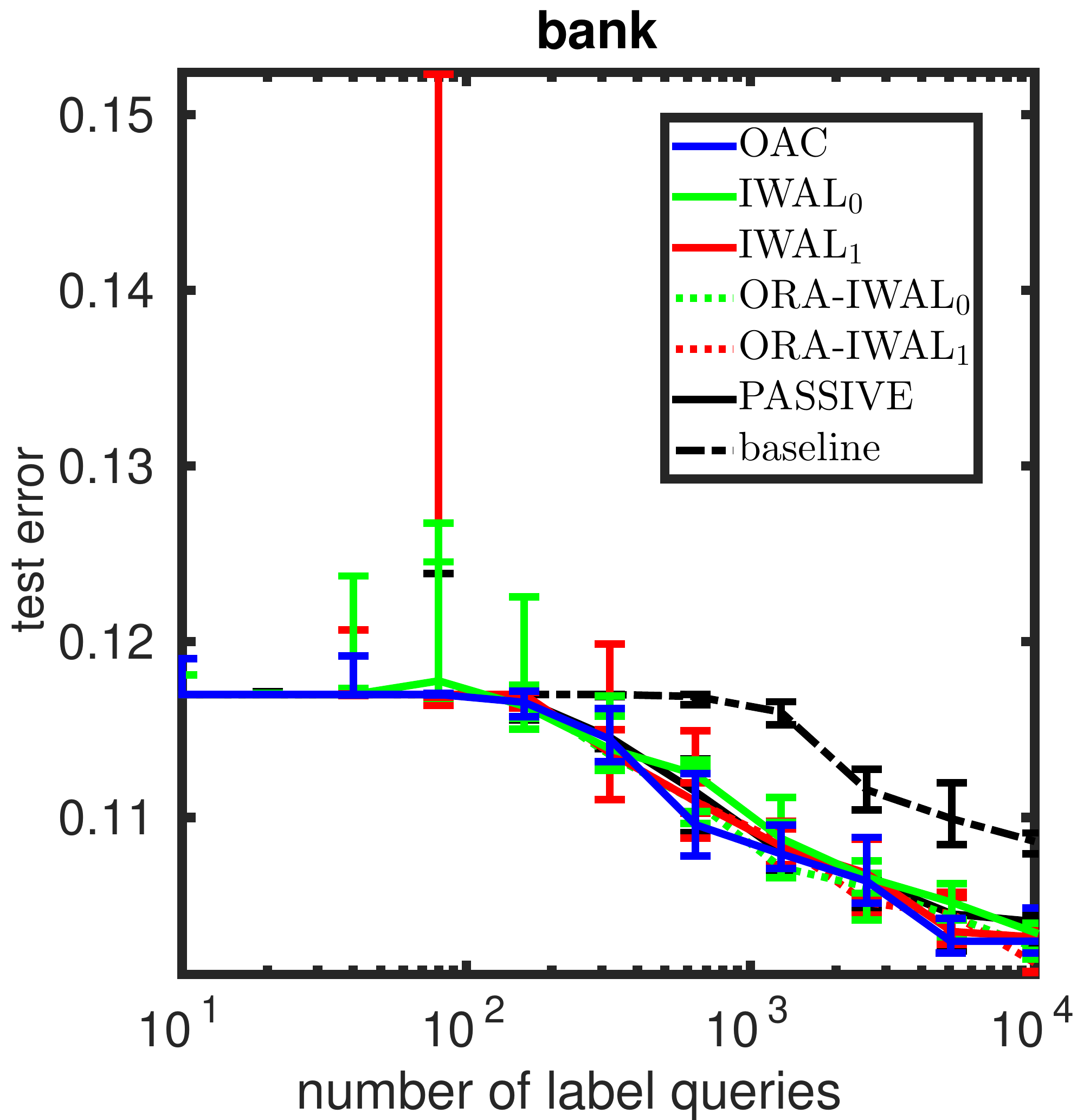}  & \includegraphics[width=0.33\textwidth,height=0.25\textwidth]{./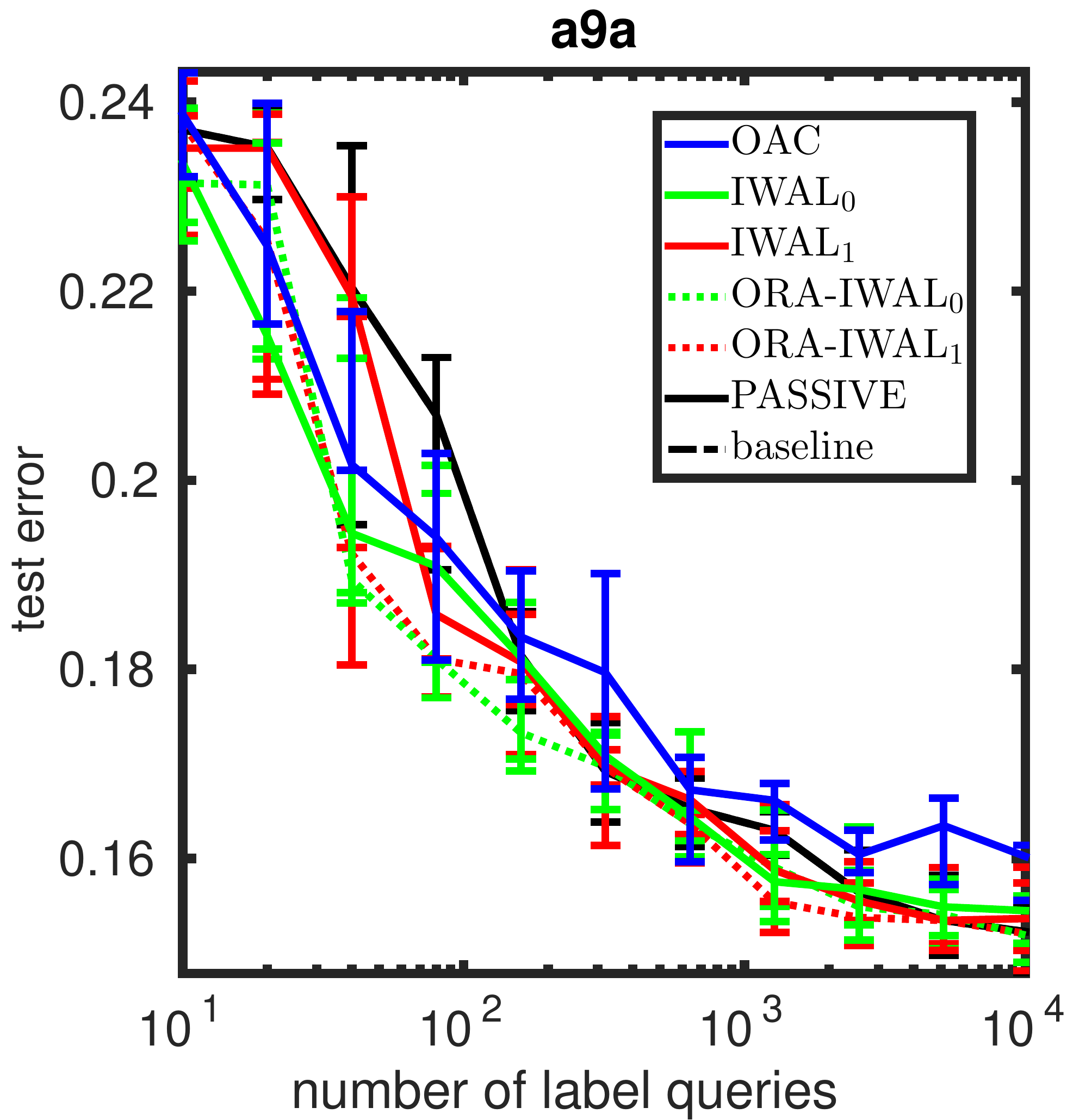} \\ 
\includegraphics[width=0.33\textwidth,height=0.25\textwidth]{./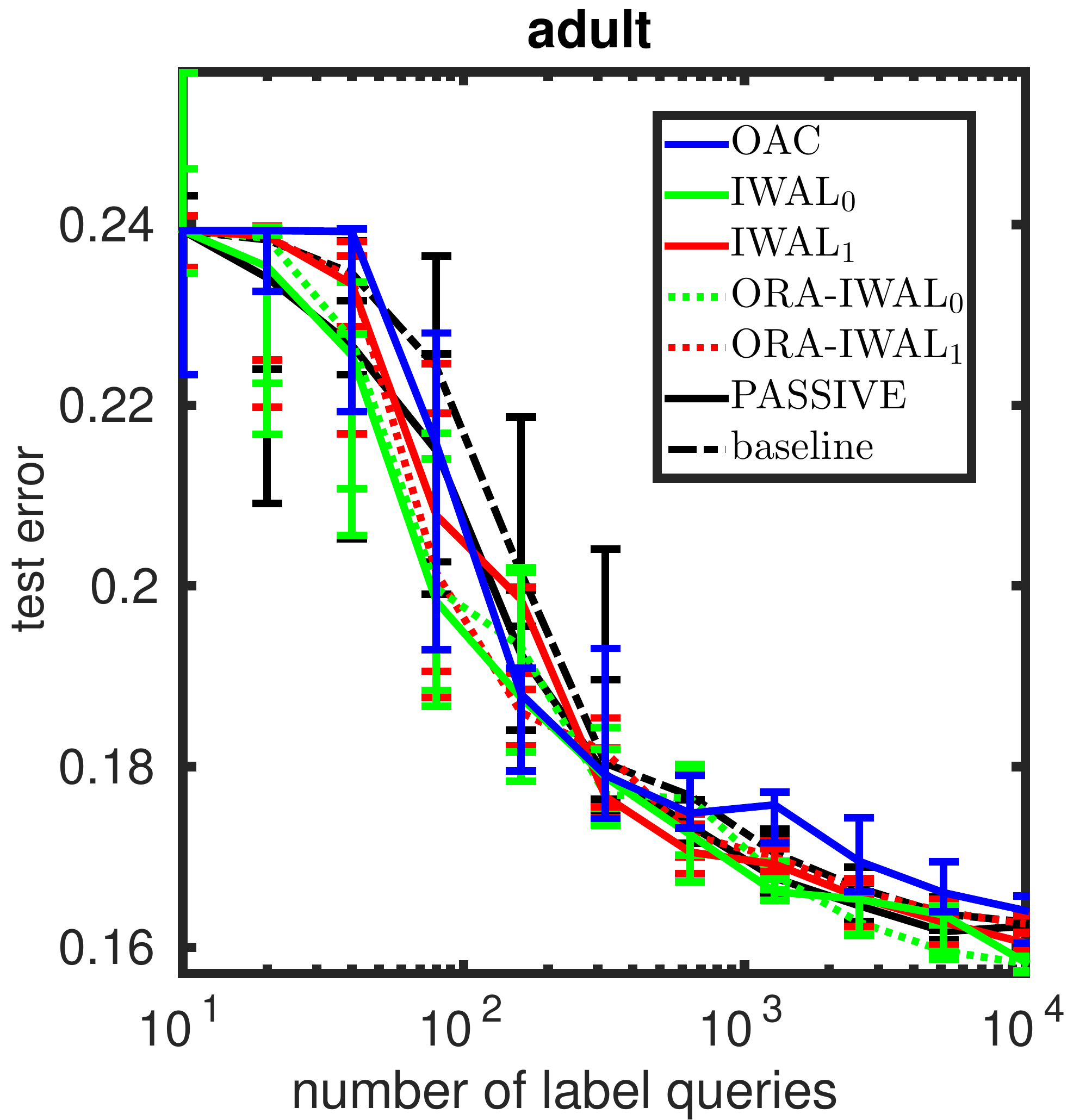}  & \includegraphics[width=0.33\textwidth,height=0.25\textwidth]{./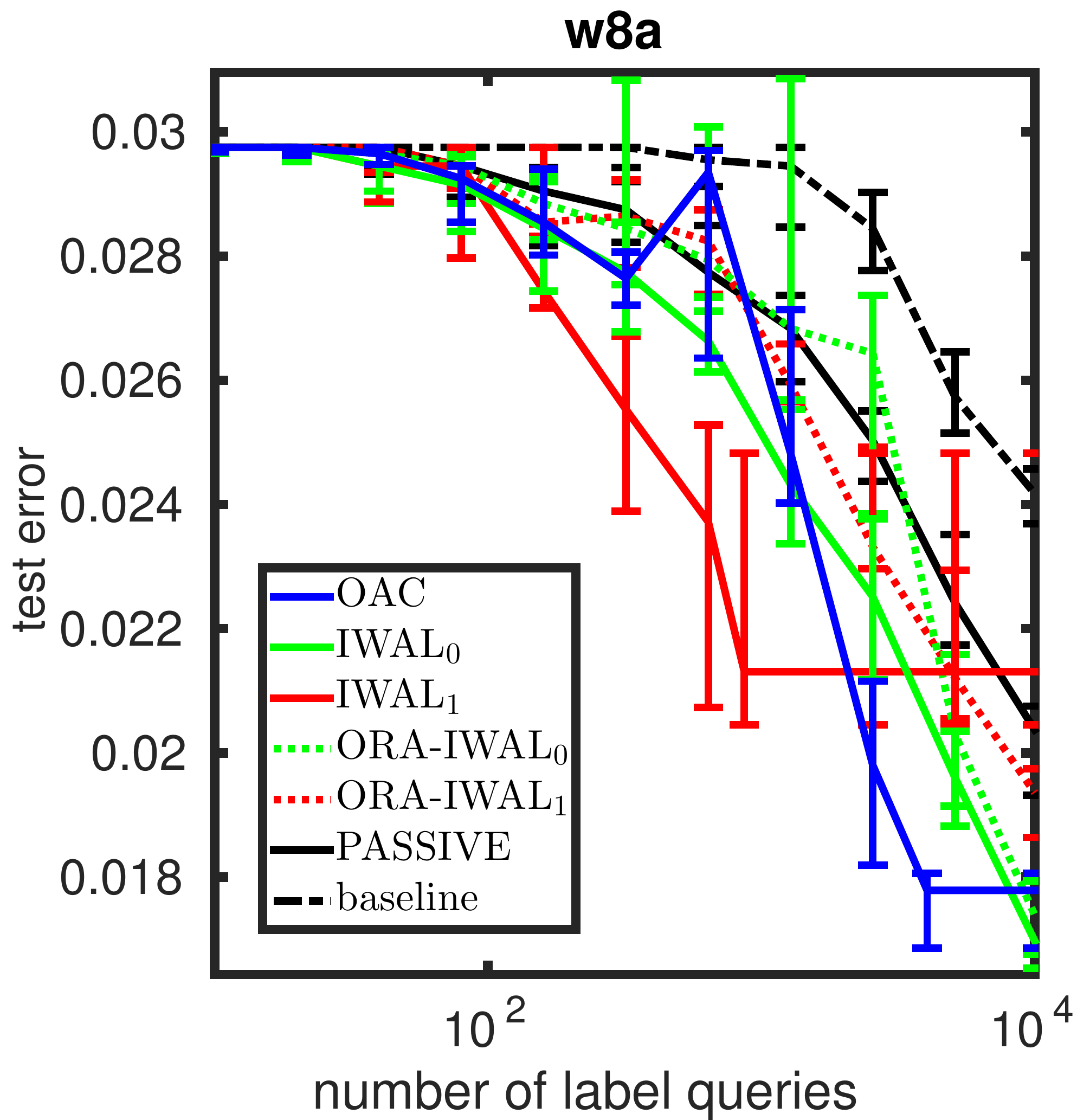}  &  
\end{tabular}
\caption{Test error under the best hyper-parameter setting for each dataset vs. number of label queries for datasets with fewer than $10^5$ examples}
\label{fig:err_vs_query_medium-per-data}
\end{figure}

\begin{figure}[t]
\begin{tabular}{@{}c@{}c@{}c@{}}
\includegraphics[width=0.33\textwidth,height=0.25\textwidth]{./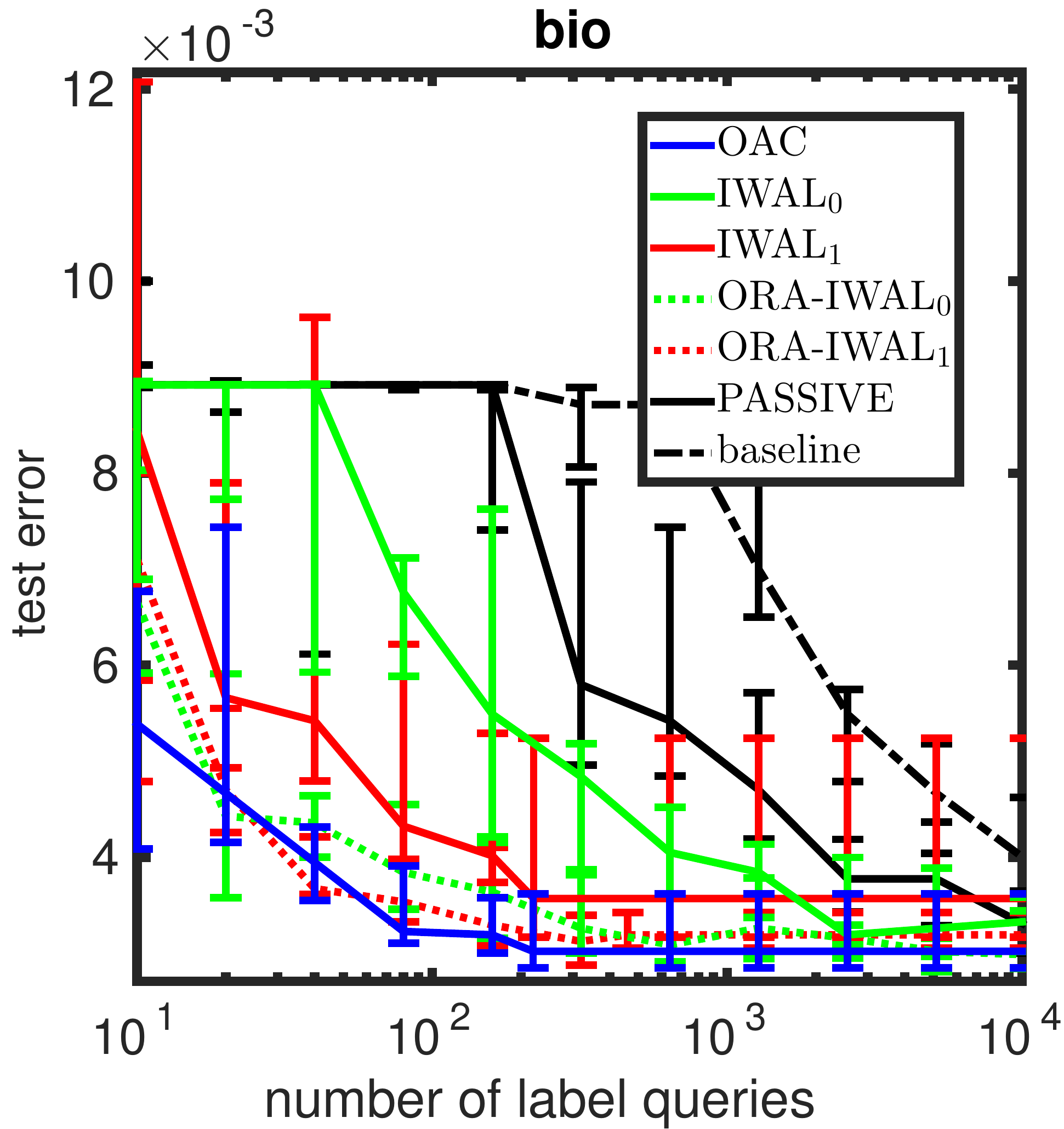} &  \includegraphics[width=0.33\textwidth,height=0.25\textwidth]{./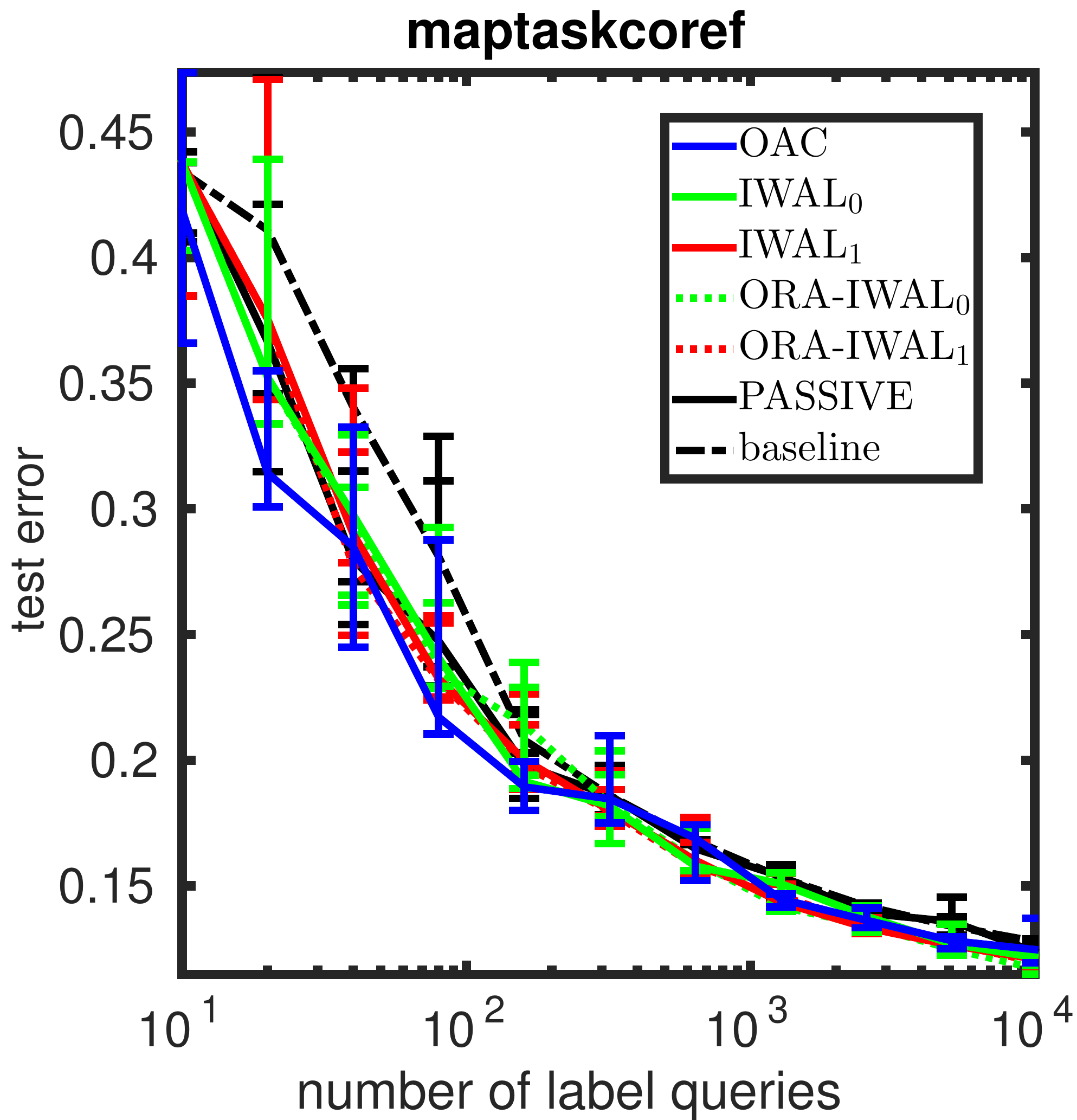}  &  \includegraphics[width=0.33\textwidth,height=0.25\textwidth]{./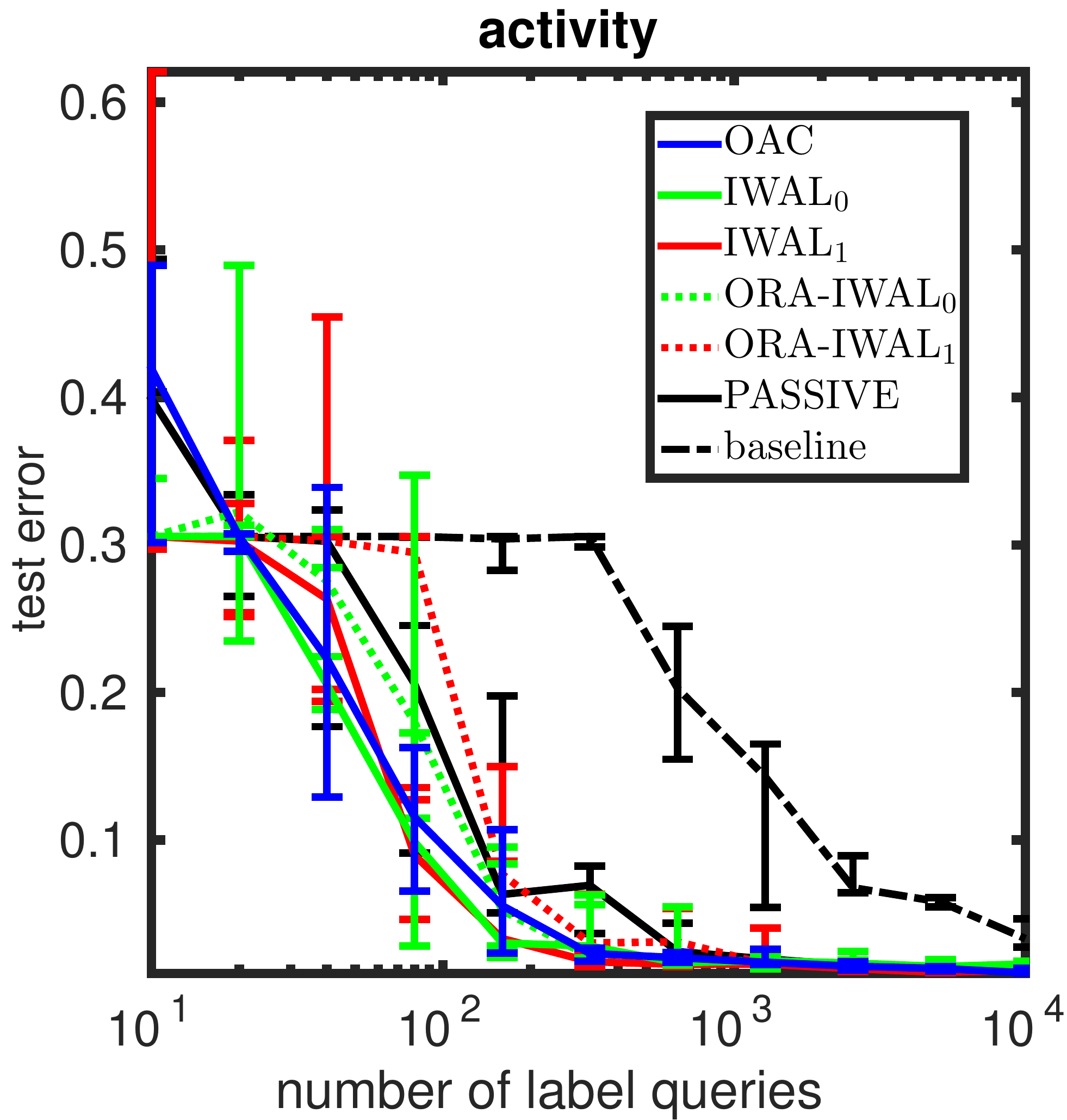} \\
\includegraphics[width=0.33\textwidth,height=0.25\textwidth]{./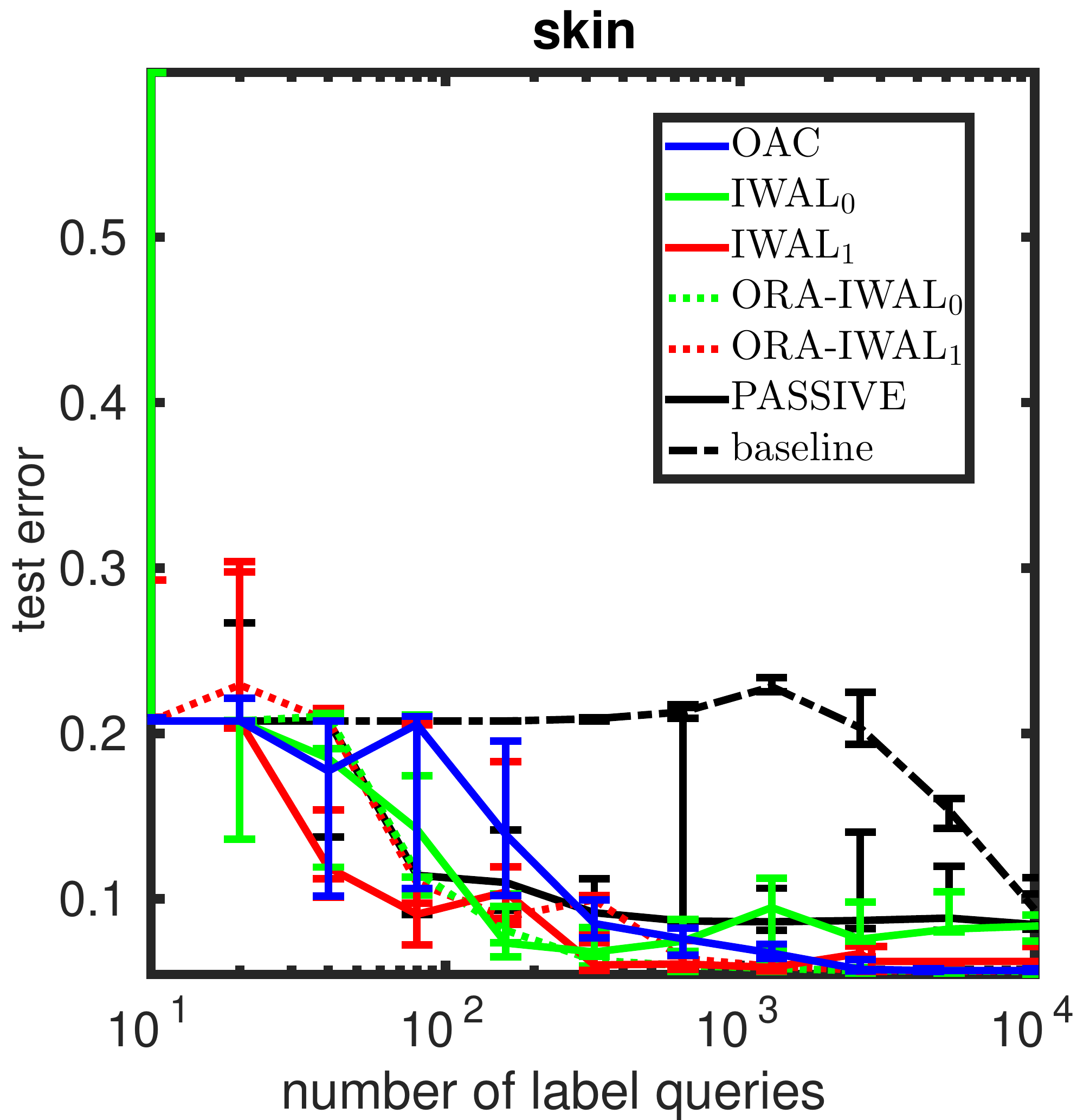} & \includegraphics[width=0.33\textwidth,height=0.25\textwidth]{./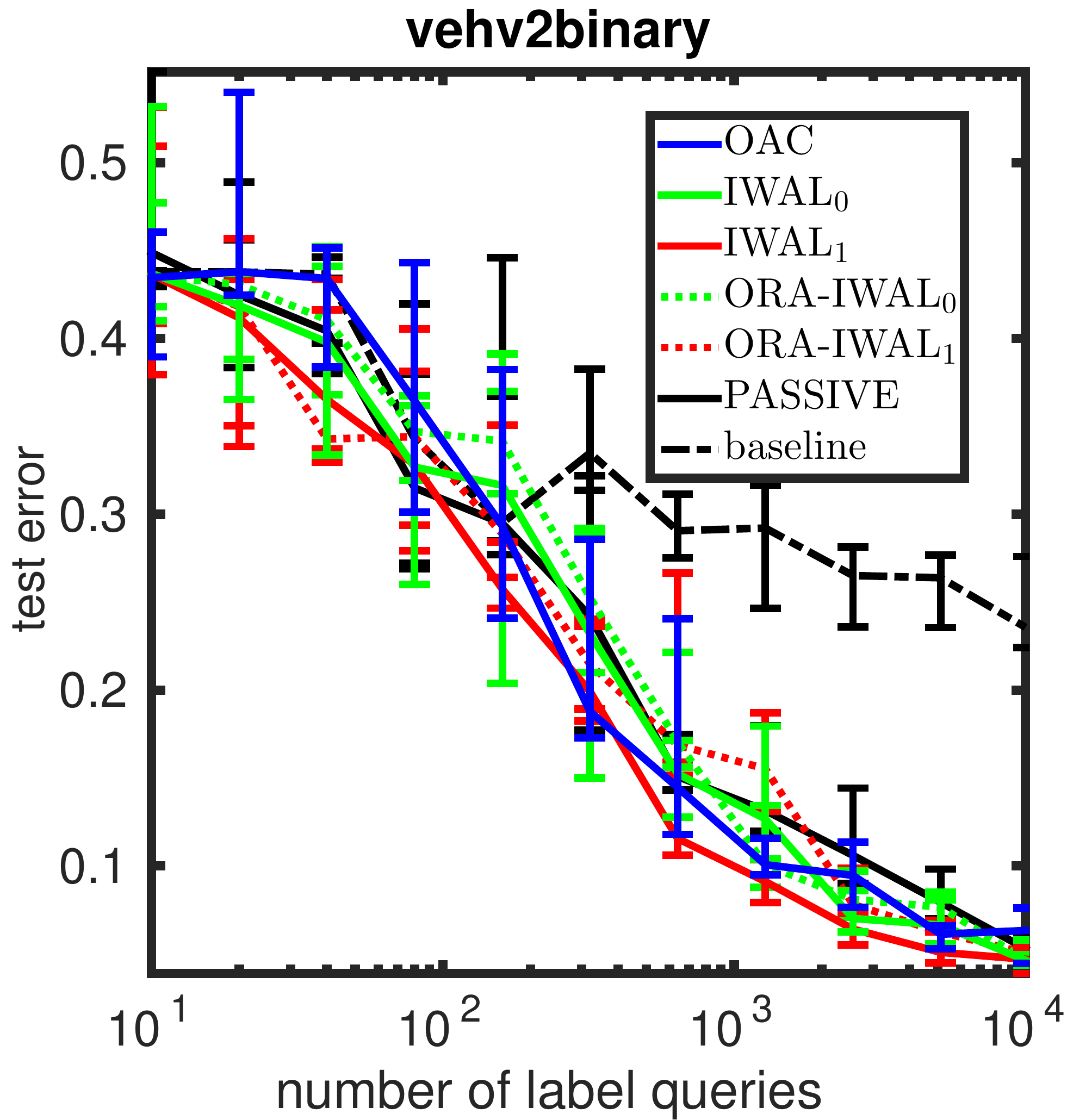}   &  \includegraphics[width=0.33\textwidth,height=0.25\textwidth]{./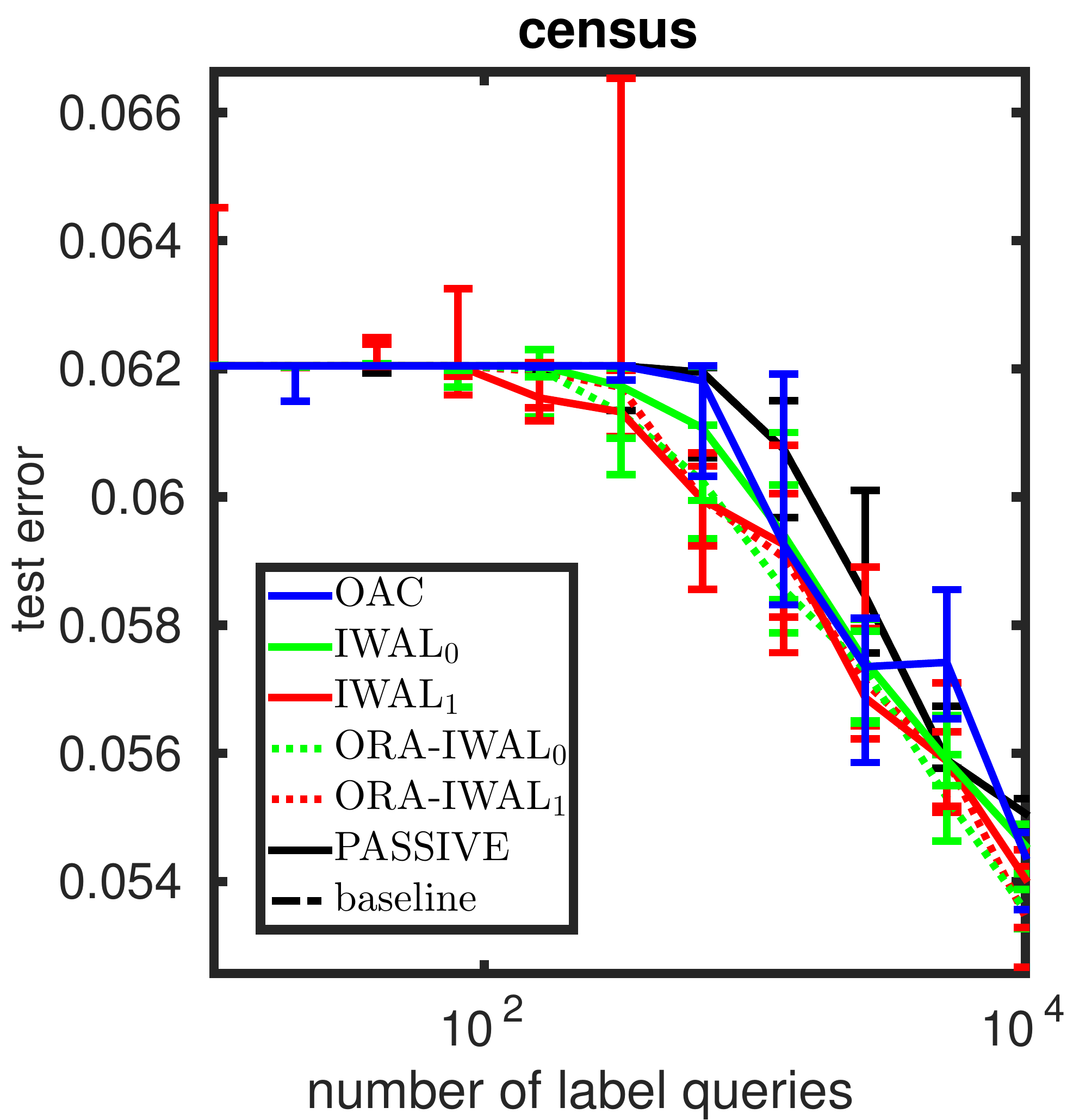}   \\ 
\includegraphics[width=0.33\textwidth,height=0.25\textwidth]{./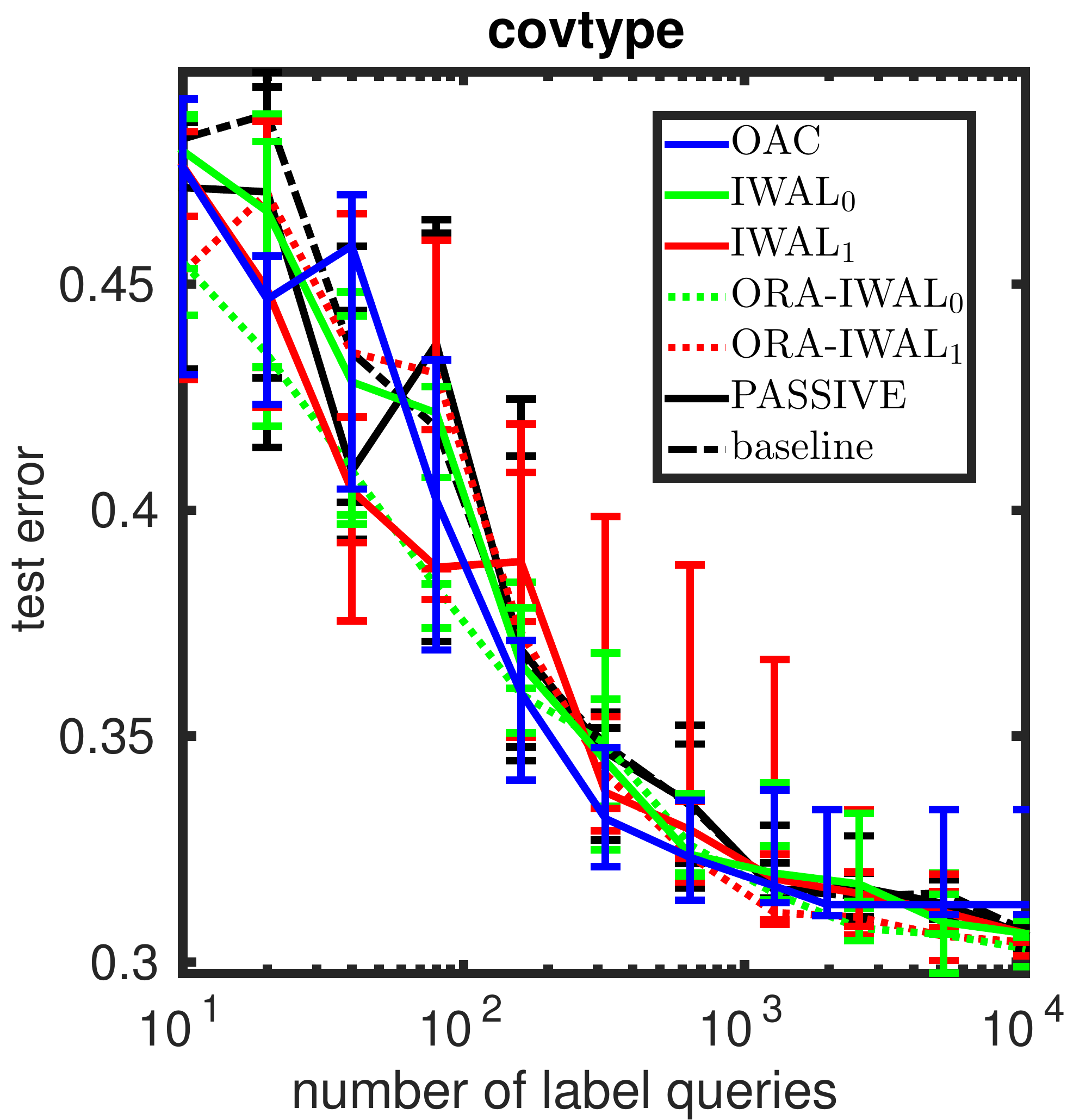}  &  \includegraphics[width=0.33\textwidth,height=0.25\textwidth]{./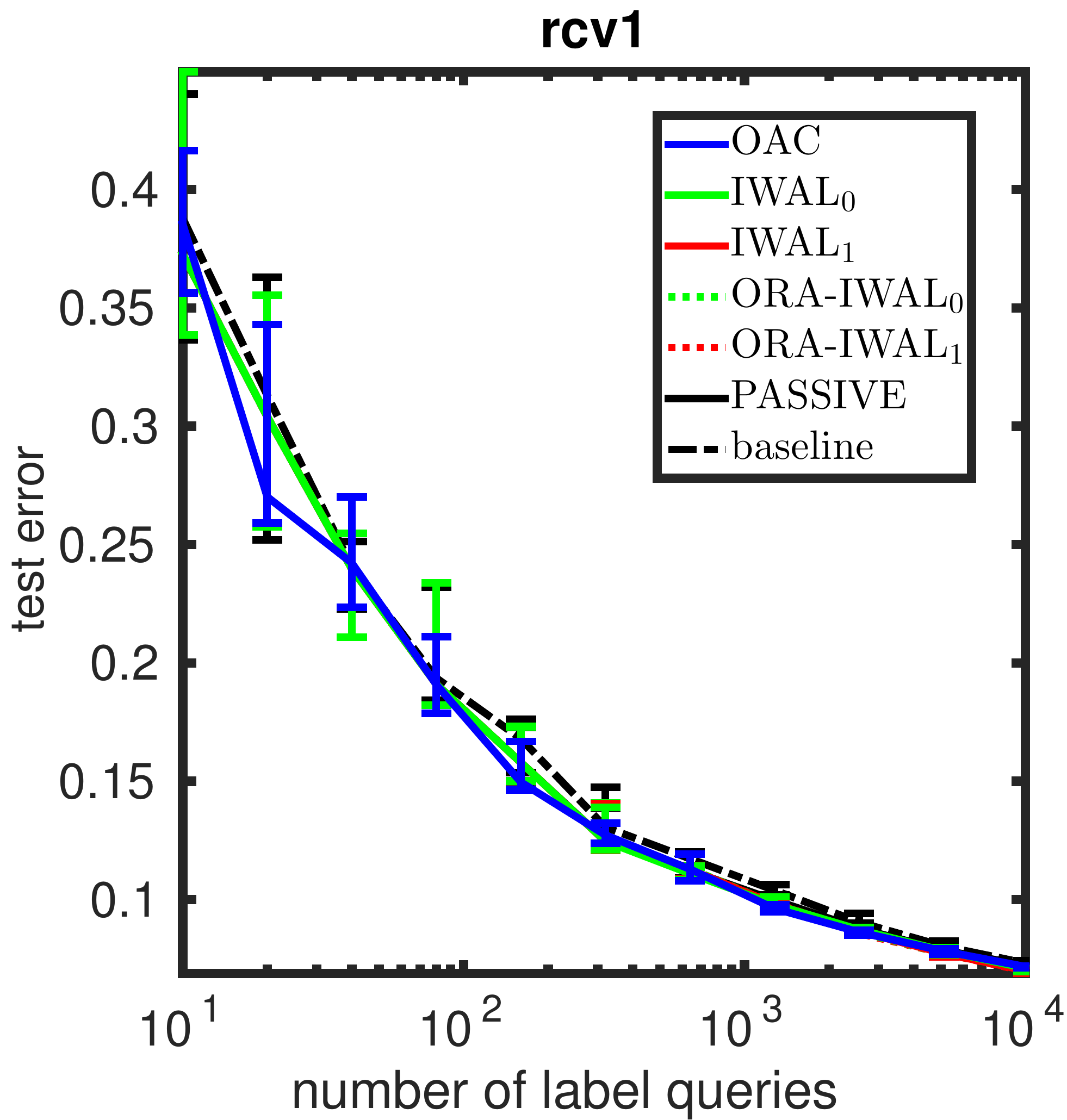} 
\end{tabular}
\caption{Test error under the best hyper-parameter setting for each dataset vs. number of label queries for datasets with more than $10^5$ examples}
\label{fig:err_vs_query_large-per-data}
\end{figure}

\end{document}